\newcommand{\Dc}{{\mathcal{D}}}
\newcommand{\Mc}{{\mathcal{M}}}
\newcommand{\Sc}{{\mathcal{S}}}
\newcommand{\Xc}{{\mathcal{X}}}
\newcommand{\Yc}{{\mathcal{Y}}}
\newcommand{\Eb}{{\mathbb{E}}}
\newcommand{\bfc}{{\mathbf{c}}}
\newcommand{\hp}{{\hat{p}}}
\newcommand{\ECE}{{\mathrm{ECE}}}
\newtheorem{definition}{Definition}
\newtheorem{theorem}{Theorem}
\newcommand{\E}{{\mathbb{E}}}
\newcommand\numberthis{\addtocounter{equation}{1}\tag{\theequation}}
\newcommand{\Acc}{{\mathrm{Acc}}}
\newcommand{\Conf}{{\mathrm{Conf}}}
\newtheorem{proposition}{Proposition}
\newcommand{\calibrator}[0]{\textcolor{purple}{[Calibrator:] }}
\newcommand{\sources}[0]{\textcolor{teal}{[Private Data Sources:] }}
\newlength{\defbaselineskip}
\title{Privacy Preserving Recalibration under Domain Shift}
\author{Rachel Luo}
\author{Shengjia Zhao}
\author{Jiaming Song}
\author{Jonathan Kuck}
\author{Stefano Ermon} 
\author{Silvio Savarese}
\affil{Stanford University}
\affil[ ]{{\texttt{\{rsluo, sjzhao, tsong, kuck, ermon, ssilvio\}@stanford.edu}}}
\date{}
\begin{document}

\maketitle

\begin{abstract}
    Classifiers deployed in high-stakes real-world applications must output calibrated confidence scores, i.e.\ their predicted probabilities should reflect empirical frequencies. Recalibration algorithms can greatly improve a model's probability estimates; however, existing algorithms are not applicable in real-world situations where the test data follows a different distribution from the training data, and privacy preservation is paramount (e.g.\ protecting patient records). We introduce a framework that abstracts out the properties of recalibration problems under differential privacy constraints. This framework allows us to adapt existing recalibration algorithms to satisfy differential privacy while remaining effective for domain-shift situations. Guided by our framework, we also design a novel recalibration algorithm, accuracy temperature scaling, that outperforms prior work on private datasets. In an extensive empirical study, we find that our algorithm improves calibration on domain-shift benchmarks under the constraints of differential privacy. On the 15 highest severity perturbations of the ImageNet-C dataset, our method achieves a median ECE of 0.029, over 2x better than the next best recalibration method and almost 5x better than without recalibration. 
\end{abstract}

\section{Introduction}
\label{sec:intro}

Machine learning classifiers are currently deployed in high stakes applications where (1) the cost of failure is high, so prediction uncertainty must be accurately calibrated (2) the test distribution does not match the training distribution, and (3) data is subject to privacy constraints.  All three of these challenges must be addressed in applications such as medical diagnosis~\cite{khan2001classification, Chen2018CalibrationOM, Kortum2018ImprovingTD, Pooch2019CanWT, Yan2019TheDS, Carpov2016PracticalPM}, financial decision making~\cite{berestycki2002asymptotics, Rasekhschaffe2019MachineLF, He2003AFF, Ge2010PrivacypreservingDA}, security and surveillance systems~\cite{sun2015deepid3, Patel2015VisualDA, Agre1994SurveillanceAC, Upmanyu2009EfficientPP}, criminal justice~\cite{berk2012criminal, Berk2019MachineLR, Rudin2018OptimizedSS, Zavrnik2020CriminalJA, Grace2019MachineLT}, and mass market autonomous driving~\cite{kendall2017uncertainties, Yang2018RealtoVirtualDU, Glancy2012PrivacyIA, Bloom2017SelfdrivingCA}. While much prior work has addressed these challenges individually, they have not been considered simultaneously. The goal of this paper is to propose a framework that formalizes challenges (1)-(3) jointly, introduce benchmark problems, and design and compare new algorithms under the framework. 

A standard approach for addressing challenge (1) is uncertainty quantification, where the classifier outputs its confidence in every prediction to indicate how likely it is that the prediction is correct. These confidence scores must be meaningful and trustworthy. A widely used criterion for good confidence scores is calibration~\cite{brier1950verification, cesa2006prediction, guo2017calibration} --- i.e.\ among the data samples for which the classifier outputs confidence $p \in (0, 1)$, exactly $p$ fraction of the samples should be classified correctly. For a calibrated classifier, when the confidence score is high ($p \approx 1$), the classifier should very rarely make incorrect classifications. 

Several methods~\cite{guo2017calibration} learn calibrated classifiers when the training distribution matches the test distribution. However, in real world applications, the classical machine learning assumption that the two distributions match is always violated, and calibration performance can significantly degrade under even small domain shifts~\cite{snoek2019can}. To address this challenge, several methods have been proposed to re-calibrate a classifier on data from the test distribution~\cite{platt1999probabilistic, guo2017calibration, kuleshov2018accurate, snoek2019can}. These methods make small adjustments to the classifier to minimize calibration error on a validation dataset drawn from the test distribution, but they are typically only applicable when they have (unrestricted) access to data from this validation set. 

Additionally, high stakes applications often require privacy. For example, it is difficult for hospitals to share patient data with machine learning providers due to legal privacy protections~\cite{hipaa}. When the data is particularly sensitive, provable differential privacy becomes necessary. Differential privacy \cite{dwork2014algorithmic} provides a mathematically rigorous definition of privacy along with algorithms that meet the requirements of this definition. For instance, the hospital may share only certain statistics of their data, where the shared statistics must have bounded mutual information with respect to individual patients. The machine learning provider can then use these shared statistics --- possibly combining statistics from many different hospitals --- to recalibrate the classifier and provide better confidence estimates. 

In particular, the situations that require calibration are often situations that require privacy. Examples include medical diagnosis, treatment effectiveness prediction, credit risk prediction, criminal judgement, etc. In these situations making the wrong decision is costly, so decisions should be based on calibrated predictions. These situations can also involve certain legally protected sensitive information (such as whether a particular patient goes to a particular hospital). 
Therefore, requirements for calibration and privacy very often co-occur.

In this paper, we present a framework that addresses all three challenges -- calibration, domain shift, and differential privacy -- and introduce a benchmark to standardize performance and compare algorithms. We show how to modify modern recalibration techniques (e.g.~\cite{zadrozny2001obtaining, guo2017calibration}) to satisfy differential privacy using this framework, and compare their empirical performance. 

We also present a novel recalibration technique, accuracy temperature scaling, that is particularly effective in this framework. This new technique requires private data sources to share only two statistics: the overall accuracy and the average confidence score for a classifier. We adjust the classifier until the average confidence equals the overall accuracy. Because only two numbers are revealed by each private data source, it is much easier to satisfy differential privacy. In our experiments, we find that without privacy requirements the new recalibration algorithm performs on par with algorithms that use the entire validation dataset, such as \cite{guo2017calibration}; with privacy requirements the new algorithm performs 2x better than the second best baseline. 

In summary, the contributions of our paper are as follows. (1) We introduce the problem of "privacy preserving calibration under domain shift" and discuss how to adapt existing recalibration techniques to this setting. (2) We introduce accuracy temperature scaling, a novel recalibration method designed with privacy concerns in mind, that requires only the overall accuracy and average confidence of the model on the validation set. (3) We empirically evaluate our method on a large set of benchmarks and show that it performs well across a wide range of situations under differential privacy.

\section{Background and Related Work}
\label{sec:background}

\subsection{Calibration}

\paragraph{Description of Calibration}
Consider a classification task from input domain (e.g.\ images) $\Xc$ to a finite set of labels $\Yc = \lbrace 1, \cdots, m \rbrace$. We assume that there is some joint distribution $P^*$ on $\Xc \times \Yc$. This could be the training distribution, or the distribution from which we draw test data. 
A classifier is a pair $(\phi, \hp)$ where $\phi: \Xc \to \Yc$ maps each input $x \in \Xc$ to a label $y \in \Yc$ and $\hp: \Xc \to [0, 1]$ maps each input $x$ to a confidence value $c$. We say that the classifier $(\phi, \hp)$ is perfectly calibrated~\cite{brier1950verification, gneiting2007probabilistic, cesa2006prediction, guo2017calibration} with respect to the distribution $P^*$ if $\forall c \in [0, 1]$
\begin{align*} 
\Pr_{P^*(x, y)}[\phi(x) = y \mid \hp(x) = c] = c.
\numberthis\label{eq:perfect_calibration} 
\end{align*} 
Note that calibration is a property not only of the classifier $(\phi, \hp)$, but also of the distribution $P^*$. A classifier $(\phi, \hp)$ can be calibrated with respect to one distribution (e.g.\ the training distribution) but not another (e.g.\ the test distribution). To simplify notation we drop the dependency on $P^*$. 

To numerically measure how well a classifier is calibrated, the commonly used metric is Expected Calibration Error (ECE)~\cite{naeini2015obtaining} defined by 
\begin{align*} 
&\ECE(\phi, \hp) := \int_{c \in [0, 1]} \Pr[\hp(x) = c] \cdot \left\lvert \Pr[\phi(x) = y \mid \hp(x) = c] - c \right\rvert.
\numberthis\label{eq:ece} 
\end{align*}
In other words, ECE measures average deviation from Eq.~\ref{eq:perfect_calibration}. In practice, the ECE is approximated by binning --- partitioning the predicted confidences into bins, and then taking a weighted average of the difference between the accuracy and average confidence for each bin (see Appendix~\ref{sec:ece_computation} for details.)

\label{sec:existing_recal_methods}
\paragraph{Recalibration Methods}
Several methods apply a post-training adjustment to a classifier $(\phi, \hp)$ to achieve calibration~\cite{platt1999probabilistic, niculescu2005predicting}.
The one most relevant to our paper is temperature scaling~\cite{guo2017calibration}. On each input $x \in \Xc$, a neural network typically first computes a logit score $l_1(x), l_2(x), \cdots, l_n(x)$ for each of the $n$ labels, then computes a confidence score or probability estimate $\hp(x)$ with a softmax function. Temperature scaling adds a temperature parameter $T \in \mathbb{R}^+$ to the softmax function
\begin{align*} \hp(x; T) = \max_i \frac{e^{l_i(x)/T}}{\sum_j e^{l_j(x)/T}}. \numberthis\label{eq:temperature_scaling} \end{align*}

A higher temperature reduces the confidence, and vice versa. $T$ is trained to minimize the standard cross entropy objective on the validation dataset, which is equivalent to maximizing log likelihood.
% \begin{align*}
%     \max_{T \in \mathbb{R}^+} \E\left[ \log \frac{e^{l_y(x)/T}}{\sum_j e^{l_j(x)/T}} \right], \numberthis\label{eq:log_likelihood}
% \end{align*}
% where $l_y(x)$ is the logit score that the neural network computes for the true label of input $x$. 
Despite its simplicity, temperature scaling performs well empirically in classification calibration for deep neural networks. 

Alternative methods for classification calibration have also been proposed. 
Histogram binning~\cite{zadrozny2001obtaining} partitions confidence scores $\in [0, 1]$ into bins $\lbrace [0, \epsilon), [\epsilon, 2\epsilon), \cdots, [1-\epsilon, 1] \rbrace$ and sorts each validation sample into a bin based on its confidence $\hp(x)$. The algorithm then resets the confidence level of each bin to match the average classification accuracy of data points in that bin. Isotonic regression methods~\cite{kuleshov2018accurate} learn an additional layer on top of the softmax output layer. This additional layer is trained on a validation dataset to fit the output confidence scores to the empirical probabilities in each bin. 
Other methods include Platt scaling~\cite{platt1999probabilistic} and Gaussian process calibration~\cite{Wenger2019NonParametricCF}.

\subsection{Robustness to Domain Shift}

Preventing massive performance degradation of machine learning models under domain shift has been a long-standing problem. There are several approaches developed in the literature. Unsupervised domain adaptation~\cite{ganin2014unsupervised, shu2018dirt} learns a joint representation between the source domain (original data) and target domain (domain shifted data). Invariance based methods~\cite{Ciss2017ParsevalNI, Miyato2018SpectralNF, Madry2017TowardsDL, lakshminarayanan2017simple, cohen2019certified} prevent the classifier output from changing significantly given small perturbations to the input. Transfer learning methods~\cite{pan2009survey, bengio2012deep, dai2007boosting} fine-tune the classifier on labeled data in the target domain. We classify our method in this category because we also fine-tune on the target domain, but with minimal data requirements (we only need the overall classifier accuracy).

\subsection{Differential Privacy} 
\label{sec:differential_privacy}

Differential privacy~\cite{dwork2014algorithmic} is a procedure for sharing information about a dataset to the public while withholding critical information about individuals in the dataset. Informally, it guarantees that an attacker can only learn a limited amount of new information about an individual. Differentially private approaches are critical in privacy sensitive applications. For example, a hospital may wish to gain medical insight or calibrate its prediction models by releasing diagnostic information to outside experts, but it cannot release information about any particular patient. 

One common notion of differential privacy is $\epsilon$-differential privacy~\cite{dwork2014algorithmic}. 
Let us define a database $D$ as a collection of data points in a universe $\mathcal{X}$, and represent it by its histogram: $D \in \mathbb{N}^{|\mathcal{X}|}$, where each entry $D_x$ represents the number of elements in the database that takes the value $x \in \mathcal{X}$. A randomized algorithm $\mathcal{M}$ is one that takes in input $D \in \mathbb{N}^{|\mathcal{X}|}$ and (stochastically) outputs some value
$\mathcal{M}(D) = b$ for $b \in \textrm{Range}(\mathcal{M})$.

\begin{definition}
\label{def:differential_privacy}
Let $\mathcal{M}$ be a randomized function 
$ \mathcal{M}: \mathbb{N}^{|\mathcal{X}|} \to \textrm{Range}(\mathcal{M}) $.  
We say that $\mathcal{M}$ is $\epsilon$-differentially private if for all $\mathcal{S} \subseteq \textrm{Range}(\mathcal{M})$ and for any two databases $D, D' \in \mathbb{N}^{|\mathcal{X}|}$ that differ by only one element, i.e.\ $\Vert D - D' \Vert_{1} \leq 1$, 
we have 
\[ \frac{\Pr[\mathcal{M}(D) \in \Sc]}{\Pr[\mathcal{M}(D') \in \Sc]} \leq e^\epsilon \]
\end{definition}

Intuitively, the output of $\mathcal{M}$ should not change much if a single data point is added or removed. An attacker that learns the output of $\mathcal{M}$ gains only limited information about any particular data point. 

Given a deterministic real valued function $f: \mathbb{N}^{|\mathcal{X}|} \to \mathbb{R}^z$, we would like to design a function $\Mc$ that remains as close as possible to $f$ but satisfies Definition~\ref{def:differential_privacy}. This can be achieved by the Laplace mechanism~\cite{mcsherry2007mechanism,dwork2008differential}. 
Let us define the $L_1$ sensitivity of $f$:
\[ \Delta f = \max_{\substack{D, D' \in \mathbb{N}^{|\mathcal{X}|} 
    \\ \Vert D - D' \Vert_{1} = 1}} \Vert f(D) - f(D') \Vert_{1} \]
    
Then the Laplace mechanism adds Laplacian random noise as in (\ref{laplace_mechanism}):
\begin{equation}
\label{laplace_mechanism}
\mathcal{M}_L(D; f, \epsilon) = f(D) + (Y_1, \ldots, Y_z)
\end{equation}
where 
$Y_i$ are i.i.d. random variables drawn from the $\mathrm{Laplace}(\mathrm{loc}=0, \mathrm{scale}=\Delta f/\epsilon$) distribution. The function $\Mc_L$ satisfies $\epsilon$-differential privacy, and we reproduce the proof in Appendix~\ref{sec:laplace_proof}.

\section{Recalibration under Differential Privacy}
\label{sec:framework}

\subsection{Example Applications}
We begin with an example scenario that illustrates the main desiderata and challenges of this problem.

Suppose you have a classifier for diagnosing a medical condition and deploy your classifier across many hospitals. The hospitals need calibrated confidences for a similar but more unusual condition (e.g.\ the original model may have been trained on an already existing virus strain but need to be recalibrated for a novel strain of the virus). There are two options: 1. Each hospital uses only their own private data to calibrate the classifier; 2. Each hospital sends some (differentially private) information to you, and you aggregate the information and calibrate the classifier. Option 2 is preferable if each hospital has only a handful of patients for the particular condition. 

\subsection{General Framework}

We propose a standard framework to handle the general situation represented by the specific example above. This two-party framework involves (1) a calibrator and (2) private data sources, and it allows us to adapt recalibration algorithms for differential privacy. 

\begin{enumerate}
\item \calibrator Input an uncalibrated classifier $(\phi, \hat{p})$.
\item \sources Each data source $i = 1, \cdots, d$ inputs private dataset $D_i$.
\item At iteration $k=1, \cdots, K$
\begin{enumerate} 
\item \calibrator The calibrator designs functions $f^k_1: \mathbb{N}^{|\Xc|} \to \mathbb{R}^s, \cdots, f^k_d: \mathbb{N}^{|\Xc|} \to \mathbb{R}^s$, where $s \in \mathbb{N}$. For each $i = 1, \cdots, d$, the calibrator sends function $f^k_i$ to private data source $i$. 
\item \sources For each $i = 1, \cdots, d$, the $i$-th private data source uses the Laplace mechanism in Eq.~\ref{laplace_mechanism} to convert $f^k_i$ to $\Mc^k_i$ that satisfy $\epsilon/K$-differential privacy, and sends $\Mc^k_i(D_i)$ back to the calibrator. 
%\item \calibrator The calibrator observes $\Mc^k_1(D_1), \cdots, \Mc^k_d(D_d)$. 
\end{enumerate}
\item \calibrator Output a new classifier $(\phi, \hat{p}')$ based on $ \Mc^k_i(D_i), k=1, \cdots, K, i=1, \cdots, d$.
\end{enumerate}

Under this framework, differential privacy is automatically satisfied: if for each $k=1, \cdots, K$, $\Mc^k_1$ is $\epsilon/K$-differentially private, then the combined function $(\Mc^1_1, \cdots, \Mc^k_1)$ is $\epsilon$-differentially private (Theorem 3.14 in \citep{dwork2014algorithmic}). %To this end, 
The differential privacy guarantees for each private data source $i$ are independent of the policy of the calibrator or other private data sources; i.e. even if the calibrator and all other private data sources collude to steal information from the $i$-th data source --- as long as the $i$-th private data source follows the protocol, its data will be protected by differential privacy. 

This framework simplifies the problem into two design choices: select the query functions $f^k_1, \cdots, f^k_d$ for $k=1, \cdots, K$, and select the mapping from observations $\Mc^k_1(D_1), \cdots, \Mc^k_d(D_d)$ at $k=1, \cdots, K$ to the calibrated confidence function $\hat{p}'$. We will discuss the most reasonable choices for several existing recalibration algorithms. Note that in general, the calibration quality degrades as the privacy level increases (i.e. $\epsilon$ decreases).

\subsection{Adapting Existing Algorithms}

In this section, we explain how we adapt algorithms introduced in Section 2 to our framework. 

\paragraph{Temperature Scaling}
Temperature scaling finds the temperature $T$ in Eq.~\ref{eq:temperature_scaling} that maximizes log likelihood. At each iteration $k=1, \cdots, K$, the functions $f_i^k$ query $D_i$ for the log likelihood at some temperature, and we average the log likelihood over all the private datasets. We would like to query as few times as possible (since a larger number of iterations $K$ increases the added noise for $\epsilon/K$-differential privacy). We observe that log likelihood is a unimodal function of the temperature in Proposition~\ref{prop:log_likelihood_unimodal}. Therefore,  
the golden section search algorithm (see Appendix~\ref{sec:golden_search} for details) can find the maximum of the unimodal function with the fewest queries. We may refer to temperature scaling as NLL-T for brevity. 
\begin{proposition}
\label{prop:log_likelihood_unimodal}
For any distribution $p^*$ on $\Xc \times \Yc$ where $\Yc = \lbrace 1, \cdots, m \rbrace$, and for any set of functions $l_1, \cdots, l_m: \Xc \to \mathbb{R}$, 
$\Eb_{x, y \sim p^*}\left[\log \frac{e^{l_y(x)/T}}{\sum_j e^{l_j(x)/T}} \right]$ is a unimodal function of $T$.
\end{proposition}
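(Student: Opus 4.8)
The plan is to pass to the inverse temperature $\beta = 1/T$ and reduce everything to a concavity statement. Write $h(T) := \Eb_{x,y\sim p^*}\big[\log \frac{e^{l_y(x)/T}}{\sum_j e^{l_j(x)/T}}\big]$ for $T \in \mathbb{R}^+$, and set $g(\beta) := h(1/\beta)$ for $\beta > 0$. The map $T \mapsto \beta = 1/T$ is a strictly decreasing bijection of $(0,\infty)$ onto itself, so it merely reflects the graph; in particular $h$ is unimodal if and only if $g$ is. I would then argue that $g$ is concave on $(0,\infty)$, and invoke the elementary fact that a concave function on an interval is automatically unimodal — it is nondecreasing up to (the left edge of) its argmax and nonincreasing thereafter, with the degenerate monotone or constant cases still counting as unimodal.

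To establish concavity of $g$, expand $g(\beta) = \Eb_{x,y\sim p^*}\big[\beta\, l_y(x) - \log\sum_j e^{\beta l_j(x)}\big]$. For each fixed $x$, the log-sum-exp function $\beta \mapsto \log\sum_j e^{\beta l_j(x)}$ is convex in $\beta$: its second derivative equals the variance of $l_J(x)$ under the Gibbs weights $e^{\beta l_j(x)}/\sum_{j'} e^{\beta l_{j'}(x)}$, which is nonnegative (equivalently, convexity follows from Cauchy--Schwarz, or from Hölder's inequality applied directly to $\log\sum_j e^{\beta l_j(x)}$). Subtracting a convex function and adding the affine term $\beta\, l_y(x)$ leaves the integrand concave in $\beta$ for every fixed pair $(x,y)$. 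Since $\Eb_{x,y\sim p^*}[\cdot]$ is a positive linear operation, it preserves concavity, so $g$ is concave, and we are done.

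The only points that need care — and the closest thing to an obstacle — are bookkeeping rather than substance. First, one must match the paper's intended meaning of ``unimodal'' to the monotone-then-monotone behaviour of a concave function; I would just state this equivalence. Second, one should check that the expectation is well defined: because $\log\frac{e^{l_y(x)/T}}{\sum_j e^{l_j(x)/T}} \le 0$ pointwise, the expectation always exists in $[-\infty,0]$, and concavity (hence unimodality) is unaffected if $g$ equals $-\infty$ on part of its domain (that part is necessarily a sub-interval). If one prefers $g$ to be finite everywhere, it suffices to assume $\Eb_{x,y}[\,|l_y(x)| + \max_j |l_j(x)|\,] < \infty$, using the bound $\log\frac{e^{l_y(x)/T}}{\sum_j e^{l_j(x)/T}} \ge -\log m + (l_y(x) - \max_j l_j(x))/T$. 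Neither issue requires real work; the heart of the argument is the one-line observation that inverse temperature turns the objective into a concave function.
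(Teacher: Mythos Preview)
Your proof is correct, and it takes a genuinely different route from the paper. The paper differentiates directly in $T$, obtaining
\[
\frac{\partial}{\partial T} h(T) = \frac{1}{T^2}\,\Eb_{x,y}\Big[-l_y(x) + \frac{\sum_j l_j(x) e^{l_j(x)/T}}{\sum_j e^{l_j(x)/T}}\Big],
\]
and then argues that the Gibbs mean $\Eb\big[\sum_j l_j e^{l_j/T}/\sum_j e^{l_j/T}\big]$ is monotone non-increasing in $T$, so the bracket (and hence the derivative) changes sign at most once; a short case analysis handles the degenerate situation where the bracket vanishes on an interval. You instead substitute $\beta = 1/T$, recognise $g(\beta) = \Eb[\beta l_y - \log\sum_j e^{\beta l_j}]$ as an average of (affine minus log-sum-exp) terms, and conclude concavity of $g$ directly. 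The two arguments are closely related at bottom --- the paper's monotonicity claim is precisely $g'' \le 0$ --- but your framing is cleaner: it replaces the sign-change-of-derivative discussion and its case analysis with a one-line structural reason (concavity in inverse temperature), and it makes the connection to convexity of the log-partition function explicit, so the result extends immediately to any tempered exponential family. The paper's approach has the minor advantage of staying in the original variable $T$ throughout.
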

\begin{proof} See Appendix~\ref{sec:proofs}. \end{proof}

\paragraph{ECE Minimization (ECE-T)}
Instead of finding a temperature that maximizes log likelihood, we find that empirically it is often better to directly minimize the discretized ECE in Eq.~\ref{eq:ece}. Adapting ECE minimization to our framework is similar to log likelihood maximization, except that we query for the necessary quantities to compute the ECE score instead of the log likelihood. In Appendix~\ref{sec:ecet}, we show how to compute the ECE score with as few queried quantities as possible. 

\paragraph{Histogram Binning}
Histogram binning can be adapted to the above protocol with only one iteration ($K = 1$). The functions $f_i^1$ query $D_i$ for the number of correct predictions in each bin and the total number of samples in each bin. We average the query results from different datasets. To compute the new confidence for a bin, we divide the average number of correct predictions by the average total number of samples in each bin.

\section{Accuracy Temperature Scaling}
\label{sec:acct}

When we add Laplace noise according to Eq.~\ref{laplace_mechanism}, the added noise increases with the number of iterations $K$ and the $L_1$ sensitivity of the query functions $f^k_1, \cdots, f^k_d$. In other words, when we adapt a calibration algorithm to our framework, we need to add more noise if the original algorithm gains a lot of information about the private datasets $D_1, \cdots, D_d$. The relative amount of noise also increases as the amount of data available decreases, as is the case when binning is used. Larger noise will degrade calibration performance. To improve performance, we propose a new recalibration algorithm called accuracy temperature scaling that acquires much less information than previous algorithms. 

Our method is a form of temperature scaling that is based on a weaker notion than calibration. Let classification accuracy and average confidence be denoted as 
\begin{align*}
    \Acc(\phi) &= \Pr[\phi(x) = y] \\
    \Conf(\hp) &= \E[\hp(x)]
\end{align*}
$\Acc$ and $\Conf$ are expectations of $[0, 1]$-bounded random variables, so they can be accurately estimated even from a relatively small quantity of data.
We say that a classifier is consistent if $\mathrm{Acc}(\phi) = \mathrm{Conf}(\hp)$. We tune the temperature parameter in Eq.~\ref{eq:temperature_scaling} until the average confidence $\Conf$ is identical to the average accuracy $\Acc$, i.e.\ until consistency is achieved. We will refer to our method as Acc-T for brevity.

Consistency is a strictly weaker condition than calibration. Surprisingly, even when there is a lot of data and no privacy requirements, optimizing for consistency achieves similar performance as directly optimizing for ECE in our experiments, as shown in Appendix~\ref{sec:experiments_without_privacy}.

\subsection{Accuracy Temperature Scaling under Differential Privacy}

Adapting Acc-T to our differential privacy framework is similar to doing so for temperature scaling in Section 3.3. As we show in Proposition~\ref{prop:acc_unimodal}, the Acc-T objective is also a unimodal function of $T$, so we can use golden section search to find the $T$ that minimizes the objective function with as few queries as possible. We provide the complete algorithm for Acc-T under differential privacy in Algorithm~\ref{alg:acc_t}. 

\begin{proposition}
\label{prop:acc_unimodal}
For any distribution $p^*$ on $\Xc \times \Yc$ where $\Yc = \lbrace 1, \cdots, m \rbrace$, and for any set of functions $l_1, \cdots, l_m: \Xc \to \mathbb{R}$, let $\hat{p}_T: x \mapsto \max_i \frac{e^{l_i(x)/T}}{\sum_j e^{l_j(x)/T}}$ and $\phi: x \mapsto \arg\max_i l_i(x)$.
$ \left\lvert \Pr_{x, y \sim p^*}[\phi(x)=y] - \Eb_{p^*} [\hp_T(x)] \right\rvert $ is a unimodal function of $T$.
\end{proposition}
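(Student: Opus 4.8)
The plan is to reduce the statement to a simple monotonicity fact about the maximum softmax probability. The key observation is that $\phi = \arg\max_i l_i(\cdot)$ does not depend on $T$ at all, so $\Pr_{x,y\sim p^*}[\phi(x)=y]$ is a constant $c \in [0,1]$ independent of $T$. Hence the target function is $h(T) := \lvert c - g(T)\rvert$ where $g(T) := \Eb_{p^*}[\hp_T(x)]$, and it suffices to show that $g$ is continuous and weakly monotone in $T$ on $(0,\infty)$: a function of the form $\lvert c - g\rvert$ with $c$ constant and $g$ monotone is automatically unimodal.

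First I would prove the pointwise claim that for every fixed $x$ the map $T \mapsto \hp_T(x)$ is non-increasing. Writing $M(x) = \max_i l_i(x)$ and dividing numerator and denominator of the softmax by $e^{M(x)/T}$ gives
\[ \hp_T(x) = \frac{1}{\sum_{j=1}^m e^{(l_j(x) - M(x))/T}}. \]
Every exponent satisfies $l_j(x) - M(x) \le 0$, so substituting $\beta = 1/T > 0$, each summand $e^{\beta(l_j(x)-M(x))}$ is non-increasing in $\beta$; therefore the denominator is non-increasing in $\beta$ and $\hp_T(x)$ is non-decreasing in $\beta$, i.e.\ non-increasing in $T$. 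Since $\hp_T(x) \in [1/m, 1]$ uniformly, the same formula shows $T \mapsto \hp_T(x)$ is continuous, and by dominated convergence $g(T) = \Eb_{p^*}[\hp_T(x)]$ is continuous and non-increasing on $(0,\infty)$ (with $g(T) \to 1$ as $T \to 0^+$ and $g(T) \to 1/m$ as $T \to \infty$ in the non-degenerate case).

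Finally I would finish with the elementary "$\lvert c-g\rvert$ is unimodal" argument by a case split: if $c \ge \sup_T g(T)$ then $h = c - g$ is non-decreasing; if $c \le \inf_T g(T)$ then $h = g - c$ is non-increasing; otherwise, by continuity and the intermediate value theorem there is a $T^{\ast}$ with $g(T^{\ast}) = c$, and $h$ equals $g - c$ (non-increasing) on $(0, T^{\ast}]$ and $c - g$ (non-decreasing) on $[T^{\ast}, \infty)$. In every case $h$ is non-increasing then non-decreasing, so it has a single minimum and is unimodal in the sense required by golden section search.

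I do not expect a serious obstacle; the "hard part" is really just the two observations that (i) $\Acc(\phi)$ is $T$-independent, which is what makes the absolute value harmless, and (ii) the maximum softmax probability is monotone in the temperature. The only points needing a little care are the degenerate cases (e.g.\ $p^*$ supported on inputs whose logits are all equal, where $g$ is constant and $h$ is trivially unimodal) and making explicit that the notion of "unimodal" being established matches the one used later for golden section search.
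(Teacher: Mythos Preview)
Your proposal is correct and follows essentially the same approach as the paper: the paper's proof is a terse three-line argument noting that $\hp_T(x)$ is monotone in $T$, hence so is its expectation, and the absolute value of a monotone function is unimodal. You simply supply the details the paper omits (the explicit monotonicity computation for the max softmax, continuity via dominated convergence, and the case split for $\lvert c-g\rvert$).
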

\begin{proof} See Appendix~\ref{sec:proofs}. \end{proof}

\begin{algorithm}
\begin{algorithmic}[1]
\STATE \textbf{Input} Private datasets $D_1, \cdots, D_d$. Logit functions $l_1, \cdots, l_m: \Xc \to \mathbb{R}$. Initial temperature range $[T^0_{-}, T^0_{+}]$. Number of iterations $K$. Define $\phi$ and $\hp_T$ as in Proposition~\ref{prop:acc_unimodal}. 
\STATE Set $T^0_0 = T^0_{+} - (T^0_{+}- T^0_{-}) * 0.618$, $T^0_1 = T^0_{-} + (T^0_{+} - T^0_{-}) * 0.618$ 
\STATE For $T^0_0$ set $\Mc_i^0: D_i \mapsto \sum_{x_i, y_i \in D_i} \left( \mathbb{I}(\phi(x_i) = y_i) - \hp_{T^0_{0}}(x_i) \right) + \mathrm{Lap}\left( \frac{ K+1}{\epsilon} \right)$ and sample $v^0_0 = \frac{1}{d} \sum_{i = 1}^d \Mc_i^0(D_i)$. Similarly set $\Mc_i^1$ for $T^0_1$ and sample $v^0_1$. 
\FOR {$k = 0, \cdots, K-1$}
    \IF { $|v^k_0| \geq |v^k_1|$} 
        \STATE Set $T^{k+1}_+ = T^k_+, T^{k+1}_{-} = T^k_0, T^{k+1}_0 = T^k_1, T^{k+1}_1 = T_{-} + (T_{+} - T_{-}) * 0.618$
        \STATE Set $v^{k+1}_0 = v^k_1$. Sample $v^{k+1}_1$ for $T^{k+1}_1$ as in line 3. 
    \ELSE
        \STATE Set $T^{k+1}_- = T^k_-, T^{k+1}_{+} = T^k_1, T^{k+1}_1 = T^k_0, T^{k+1}_0 = T_{+} - (T_{+} - T_{-}) * 0.618$
        \STATE Set $v^{k+1}_1 = v^k_0$. Sample $v^{k+1}_0$ for $T^{k+1}_0$ as in line 3. 
    \ENDIF 
\ENDFOR
\STATE Return $(T^K_- + T^K_+) / 2$ as the optimal temperature.  
\end{algorithmic}
\caption{Acc-T with differential privacy}
\label{alg:acc_t}
\end{algorithm}

\subsection{Comparison}
We will briefly discuss how our method, Acc-T, compares to others such as histogram binning, temperature scaling, or ECE-T in terms of its theoretical bias (calibration error given infinite data), worst case variance (calibration error degradation when less data is available), and adaptability to differential privacy (based on the relative amount of noise that must be added to satisfy differential privacy). Acc-T has a higher theoretical bias than the other methods, since its objective function does not directly minimize the calibration error. However, in our experiments on deep neural networks, the bias of Acc-T is only slightly worse or comparable to that of ECE-T or temperature scaling in practice. Acc-T also has a lower worst case variance than other methods because it does not use binning (so there are more data points per bin) and its objective function has a smaller range than that of temperature scaling. Overall, Acc-T has the highest adaptability to differential privacy; it has smaller $L_1$ sensitivity (Section~\ref{sec:differential_privacy}) than the other methods, so less noise is necessary to maintain differential privacy. Some additional factors that affect the calibration quality and the level of privacy are discussed in Appendix~\ref{sec:factors}.

\section{Experiments}
\label{sec:experiments}

In this section, we compare our proposed method Acc-T with 5 different baseline methods, three of which are designed with privacy concerns in mind using the general procedure in Section 3. On three datasets with various domain shifts and privacy settings, our proposed Acc-T method outperforms the other baseline methods. We also extensively validate the relationship between calibration error and several relevant factors for domain shift and privacy. Additional experimental details are included in Apppendix~\ref{sec:appendix_experiments}.

\subsection{Experimental Setup}

\paragraph{Methods} We evaluate the differentially private versions of temperature scaling, ECE-T, histogram binning, and Acc-T over an extensive range of settings that considers calibration under various domain shifts and privacy concerns. We also include two baseline methods, (1) no calibration and (2) recalibration with only one private dataset from the target domain (so data from other sources is not used; in this case privacy constraints need not be taken into account but less data is available).

\paragraph{Datasets} To simulate various domain shifts, we use the ImageNet-C, CIFAR-100-C, and CIFAR-10-C datasets~\cite{hendrycks2019Benchmarking}, which are perturbed versions of the ImageNet~\cite{deng2009imagenet}, CIFAR-100~\cite{krizhevsky2009learning}, and CIFAR-10~\cite{krizhevsky2009learning} test sets. Each -C dataset includes 15 perturbed versions of the original test set, with perturbations such as Gaussian noise, motion blur, jpeg compression, and fog. We divide each perturbed test set into a validation split containing different "private data sources" with the same number of samples, and a test split containing all of the remaining images. We then apply the recalibration algorithms over the validation split and evaluate the ECE on the test split. Note that only the unperturbed training sets were used to train the models.

\paragraph{Relevant factors} We evaluate the ECE for all of the methods while controlling the following three factors: (1) the number of private data sources, (2) the number of samples per data source, and (3) the privacy level $\epsilon$. When we vary one factor, we keep the other two factors constant. 

\paragraph{Additional details} We use $K = 5$ iterations for all experiments, and report the average ECE achieved over 500 trials with randomly divided splits for each experiment. We report other experimental setup details including the type of network used in Appendix~\ref{sec:appendix_exp_setup}.

\subsection{Results and Analysis}

In Fig.~\ref{fig:imagenet}, we plot the ECE vs.\ (\ref{fig:imagnet_sources}) the number of private data sources, (\ref{fig:imagenet_samples}) the number of samples per data source, and (\ref{fig:imagenet_epsilon}) the $\epsilon$ value, for the ImageNet "fog" perturbation. Fig.~\ref{fig:cifar100} shows a similar plot for the CIFAR-100 "jpeg compression" perturbation, and Fig.~\ref{fig:cifar10} shows a similar plot for the CIFAR-10 "motion blur" perturbation. Our proposed method, Acc-T, is shown in red, and clearly outperforms other methods under the constraints of differential privacy for these ranges of values. Full plots for all perturbations and datasets are included in Appendix~\ref{sec:experiments_with_privacy}.
Table~\ref{tab:benchmark_stats} shows the overall median and mean ECE achieved by each recalibration method on ImageNet, CIFAR-100, and CIFAR-10. These averages are computed over all perturbations, numbers of private data sources, numbers of samples per source, and $\epsilon$ settings from the suite of experiments in ~\ref{sec:experiments_with_privacy}. Our method, Acc-T, far outperforms other methods in the domain-shifted differential privacy setting.

The performance of all recalibration algorithms degrades when subjected to the constraints of differential privacy, but some are affected more than others for a given situation. Selecting a differentially private recalibration algorithm for a particular situation thus requires some consideration. To this end, we provide some analysis over these methods under the three relevant factors. %; our proposed ACC-T methods performs the best overall.

\paragraph{Number of Private Data Sources} 
As the number of sources increases, Acc-T tends to do well, even when the number of samples per source is small. Because Acc-T does not involve binning and the sensitivity of its objective function is small, there is relatively less noise for this method than for others. Therefore, it can effectively combine data from multiple sources even under the constraints of differential privacy, and is the best method in general.

\paragraph{Number of Samples Per Source} As the number of samples per source increases, Acc-T tends to do well given enough data sources. As the number of samples per source grows towards infinity, recalibration with only one source works very well since we do not need to query other sources or apply privacy constraints. Histogram binning and ECE-T may also perform quite well with many bins when the number of samples is very large. 

\paragraph{Privacy concern $\boldsymbol{\epsilon}$} When $\epsilon$ is very low (i.e.\ the privacy requirements are very high), recalibrating with only one data source works well; this method remains unaffected by the strong privacy constraints, while all other methods worsen drastically due to the increased noise. For mid-range $\epsilon$ values, Acc-T works well. When $\epsilon$ is very high, ECE-T can work well, since privacy is not much of a concern.

\begin{figure*}[!htb]
    \centering
    \begin{subfigure}[b]{0.325\textwidth}
        \centering
        \includegraphics[width=\textwidth]{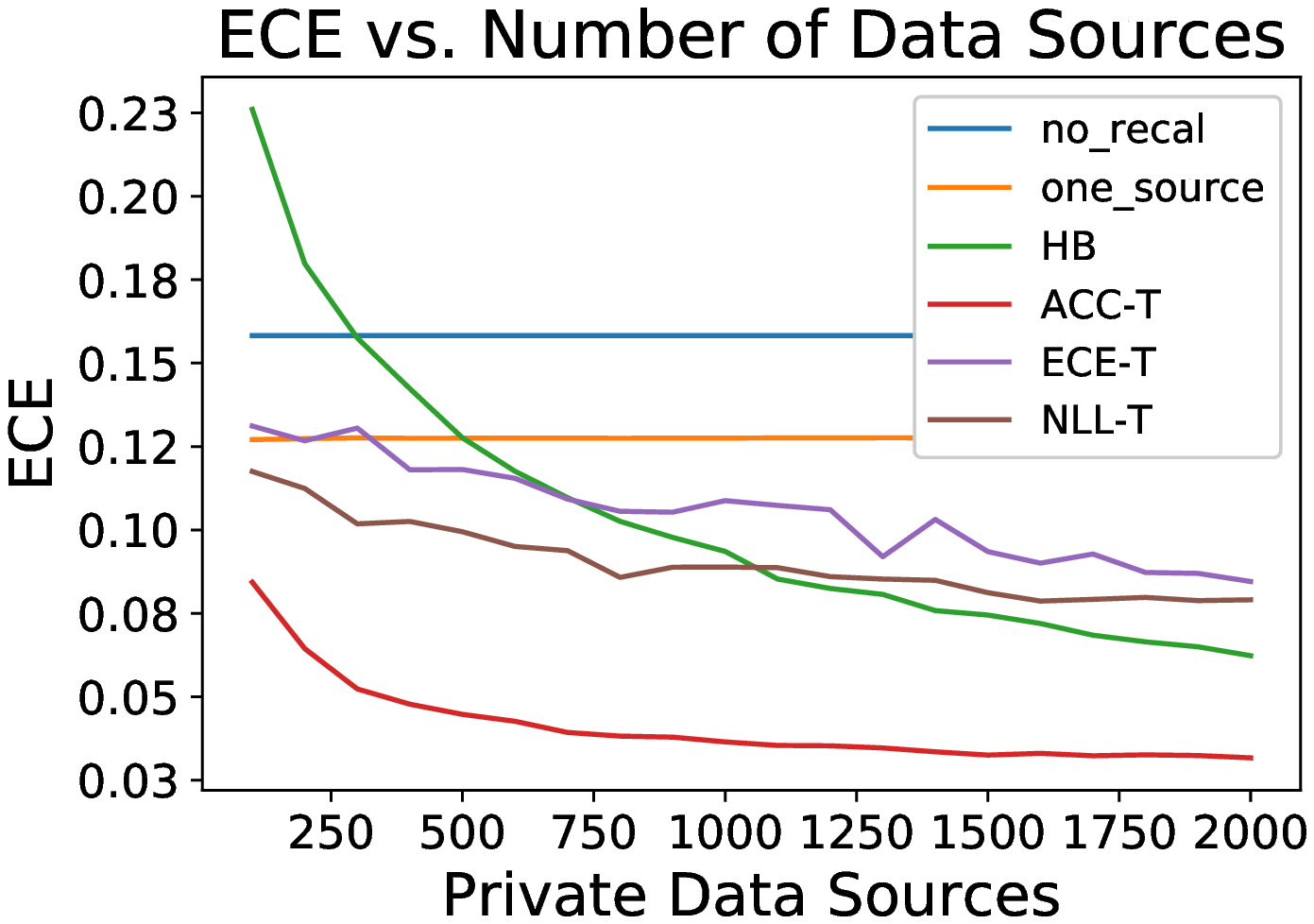}
        \caption{Samples = 10, $\epsilon = 1.0$}
        \label{fig:imagnet_sources}
    \end{subfigure}
    \hfill
    \begin{subfigure}[b]{0.325\textwidth}
        \centering
        \includegraphics[width=\textwidth]{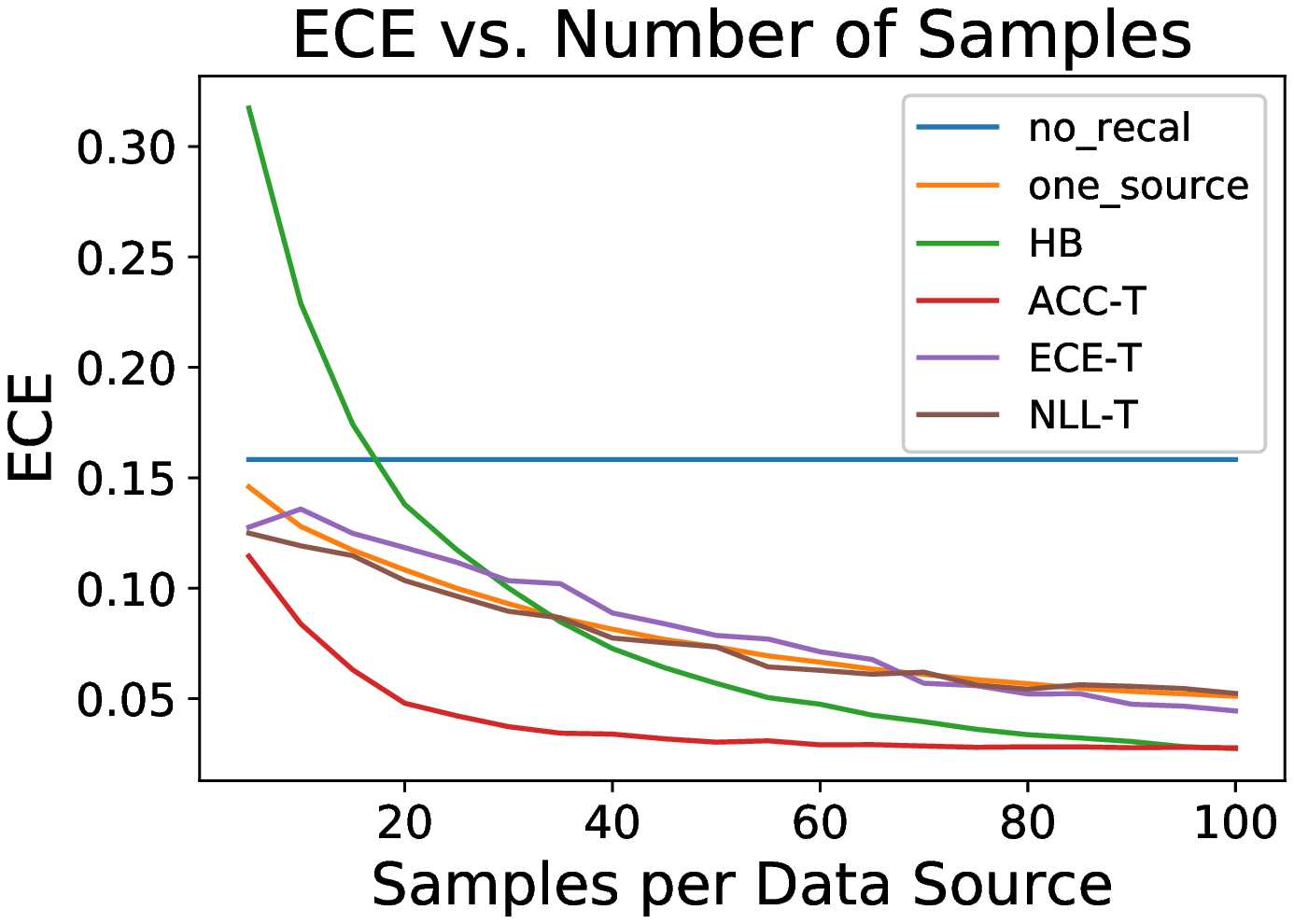}
        \caption{Sources = 100, $\epsilon = 1.0$}
        \label{fig:imagenet_samples}
    \end{subfigure}
    \hfill
    \begin{subfigure}[b]{0.325\textwidth}
        \centering
        \includegraphics[width=\textwidth]{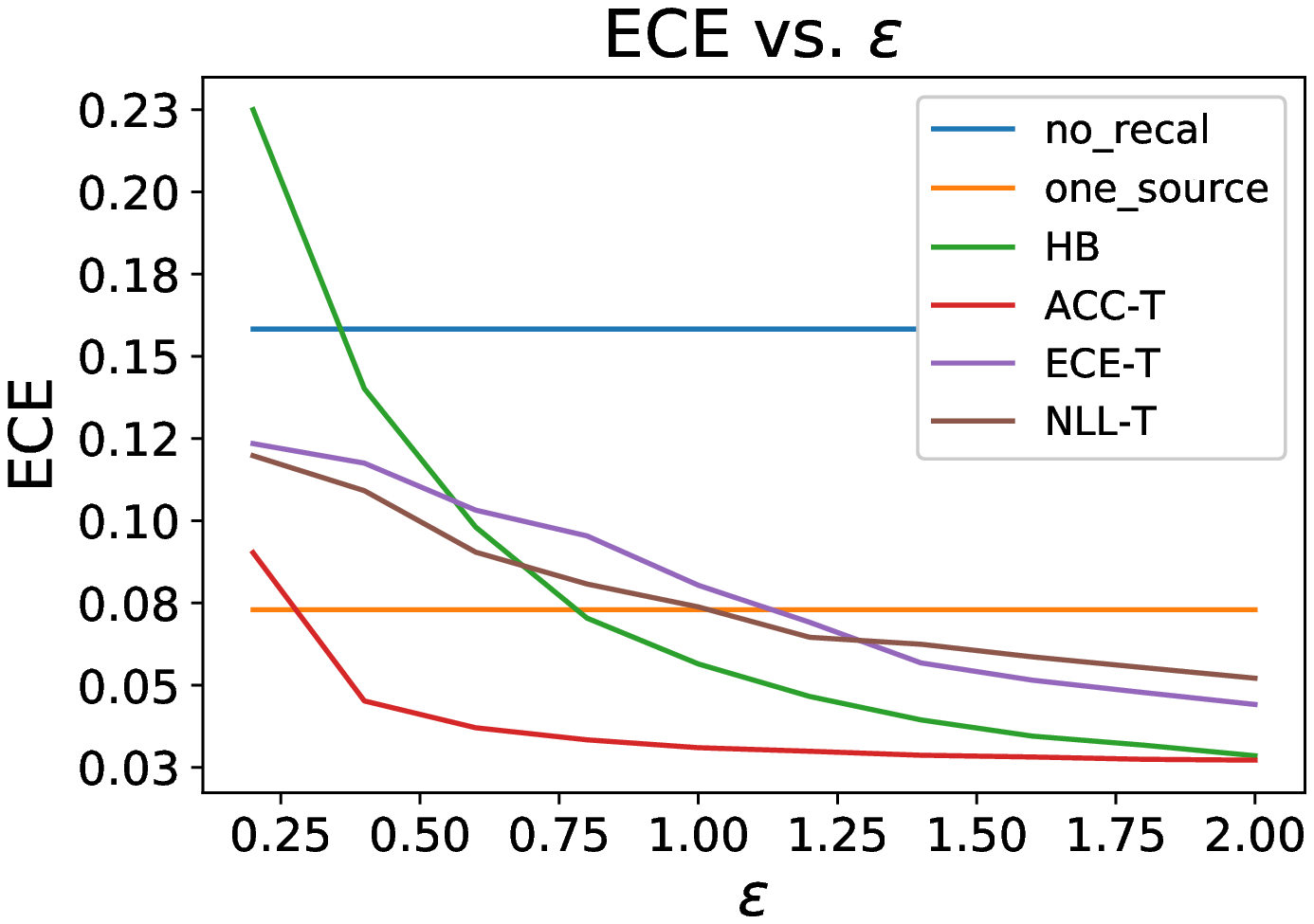}
        \caption{Samples = 50, Sources = 100}
        \label{fig:imagenet_epsilon}
    \end{subfigure}
    \caption{Recalibration results for ImageNet under the "fog" perturbation, with varying (\ref{fig:imagnet_sources}) number of private data sources, (\ref{fig:imagenet_samples}) number of samples per data source, and (\ref{fig:imagenet_epsilon}) privacy level $\epsilon$. Acc-T does best in these settings.}
    \label{fig:imagenet}
\end{figure*}

\begin{figure*}[!htb]
    \centering
    \begin{subfigure}[b]{0.325\textwidth}
        \centering
        \includegraphics[width=\textwidth]{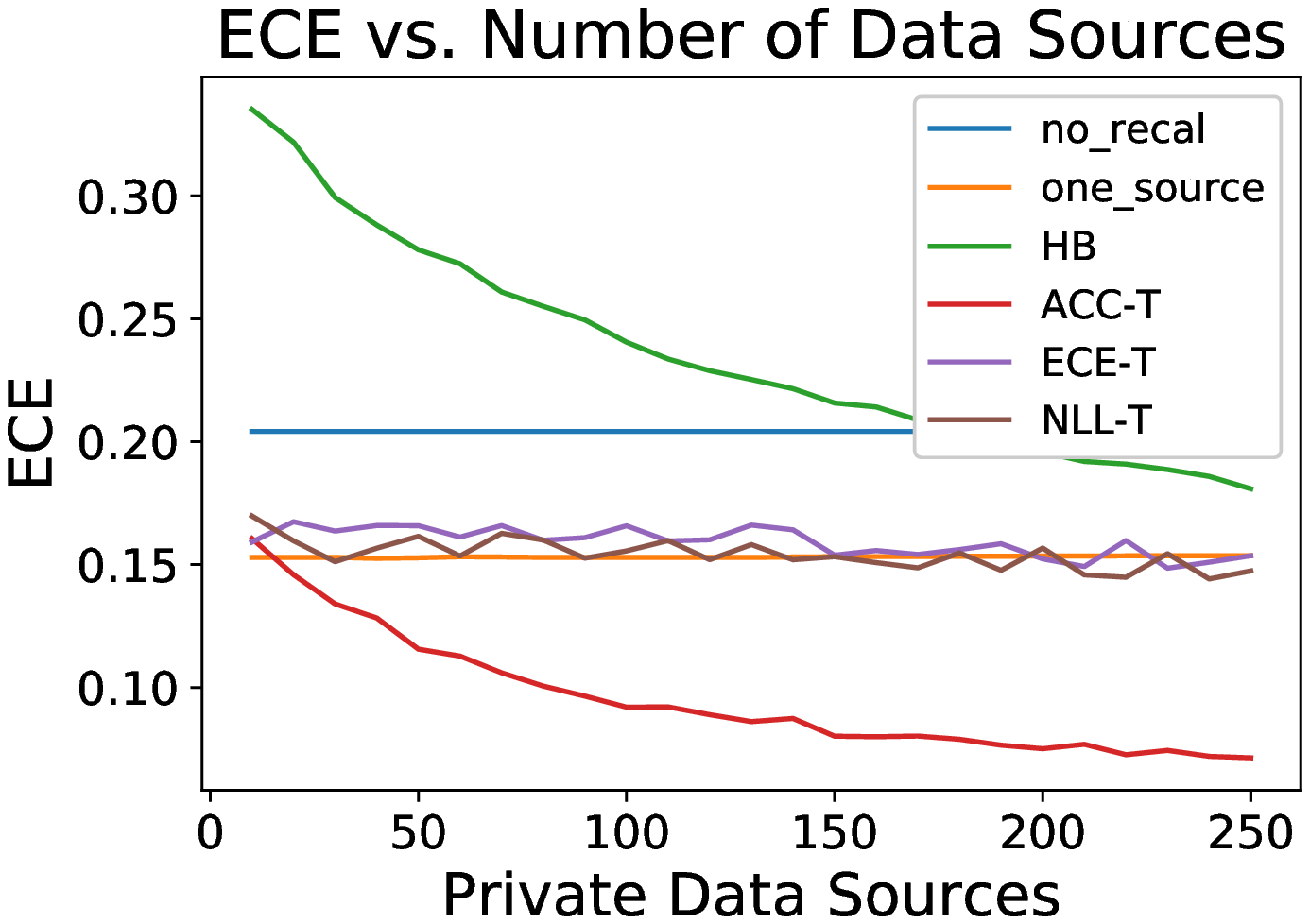}
        \caption{Samples = 10, $\epsilon = 1.0$}
        \label{fig:cifar100_sources}
    \end{subfigure}
    \hfill
    \begin{subfigure}[b]{0.325\textwidth}
        \centering
        \includegraphics[width=\textwidth]{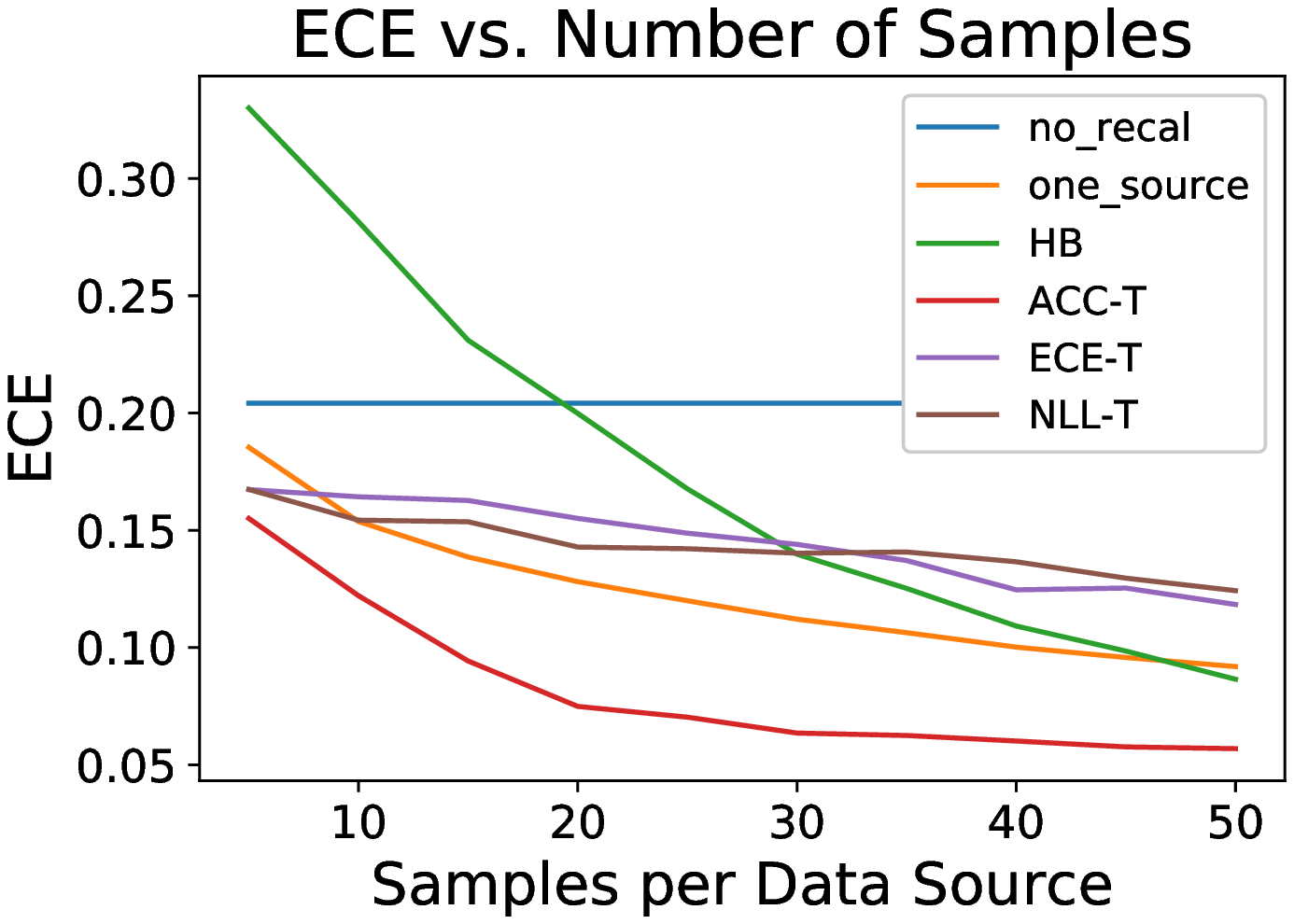}
        \caption{Sources = 50, $\epsilon = 1.0$}
        \label{fig:cifar100_samples}
    \end{subfigure}
    \hfill
    \begin{subfigure}[b]{0.325\textwidth}
        \centering
        \includegraphics[width=\textwidth]{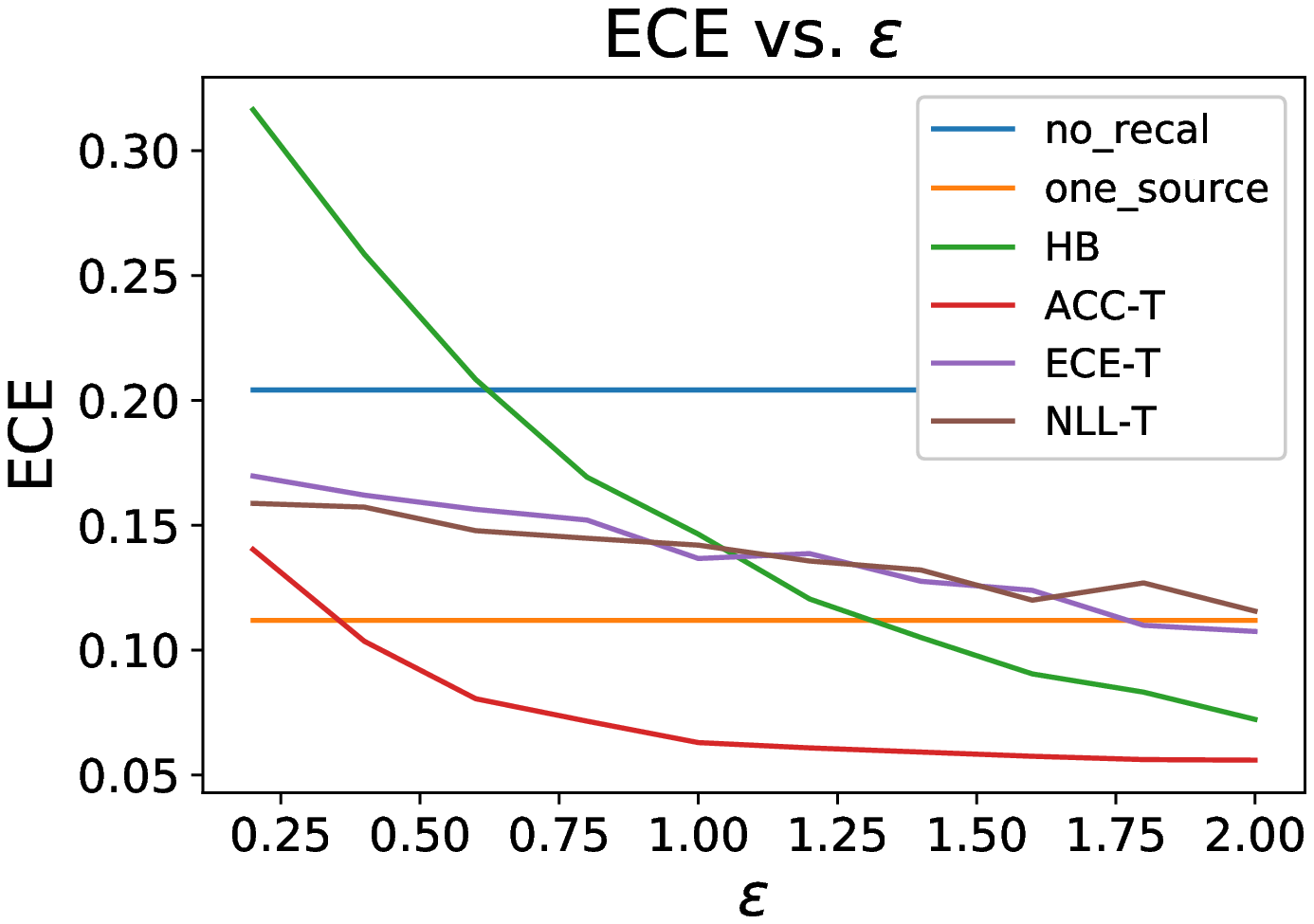}
        \caption{Samples = 30, Sources = 50}
        \label{fig:cifar100_epsilon}
    \end{subfigure}
    \caption{Recalibration results for CIFAR-100 under the "jpeg compression" perturbation, with varying (\ref{fig:cifar100_sources}) number of private data sources, (\ref{fig:cifar100_samples}) number of samples per data source, and (\ref{fig:cifar100_epsilon}) privacy level $\epsilon$. Acc-T does best in these settings.}
    \label{fig:cifar100}
\end{figure*}

\begin{figure*}[!htb]
    \centering
    \begin{subfigure}[b]{0.325\textwidth}
        \centering
        \includegraphics[width=\textwidth]{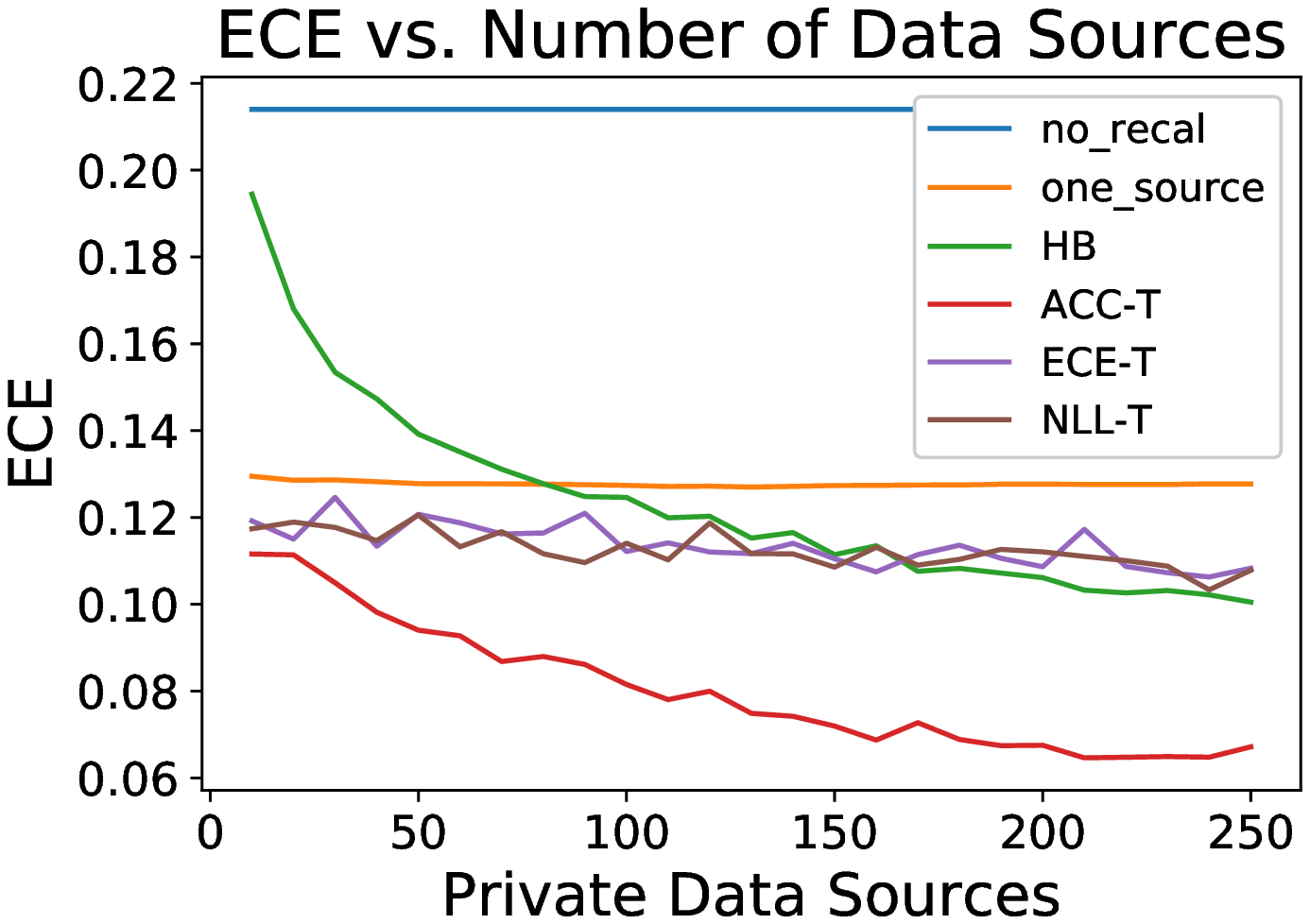}
        \caption{Samples = 10, $\epsilon = 1.0$}
        \label{fig:cifar10_sources}
    \end{subfigure}
    \hfill
    \begin{subfigure}[b]{0.325\textwidth}
        \centering
        \includegraphics[width=\textwidth]{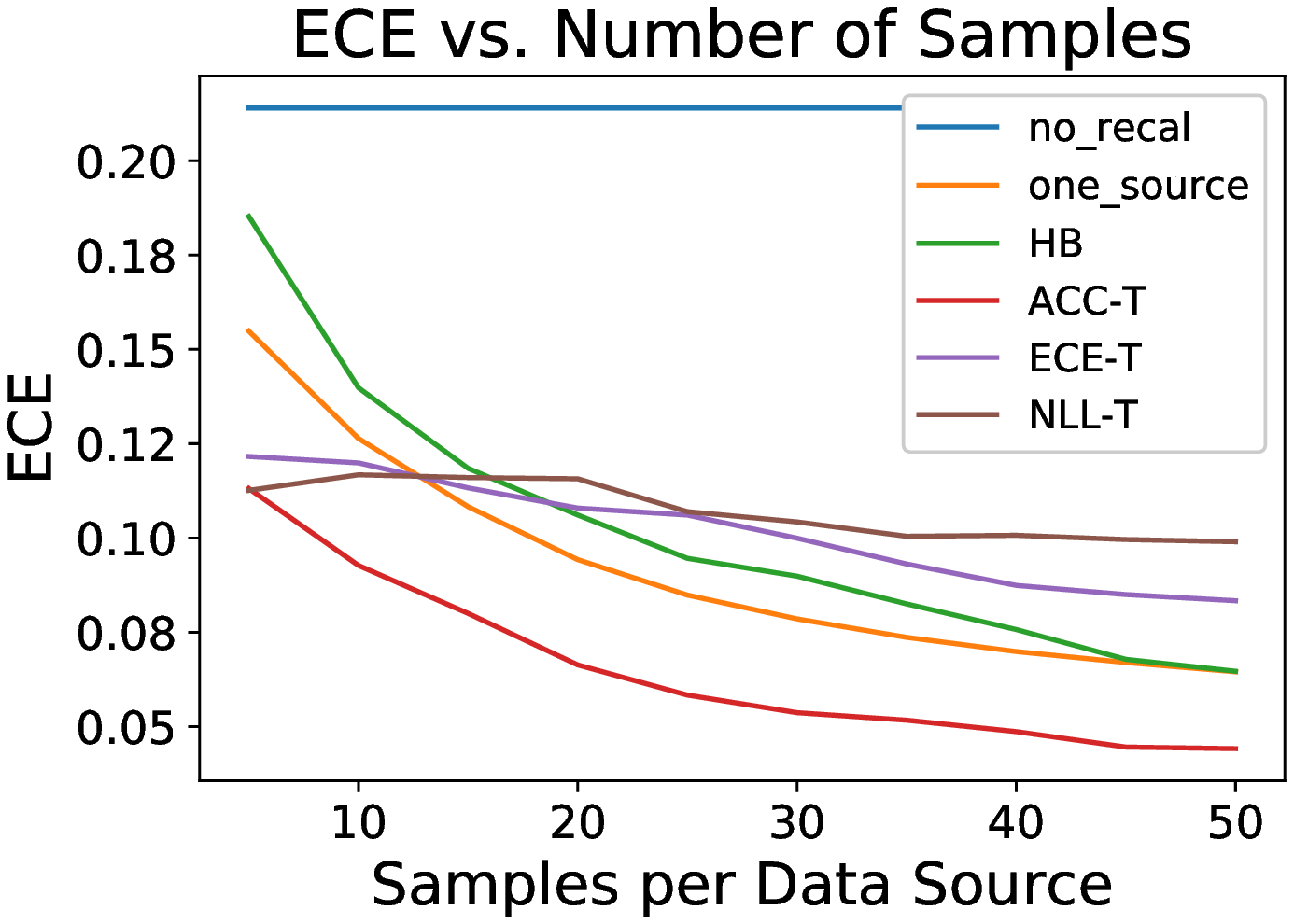}
        \caption{Sources = 50, $\epsilon = 1.0$}
        \label{fig:cifar10_samples}
    \end{subfigure}
    \hfill
    \begin{subfigure}[b]{0.325\textwidth}
        \centering
        \includegraphics[width=\textwidth]{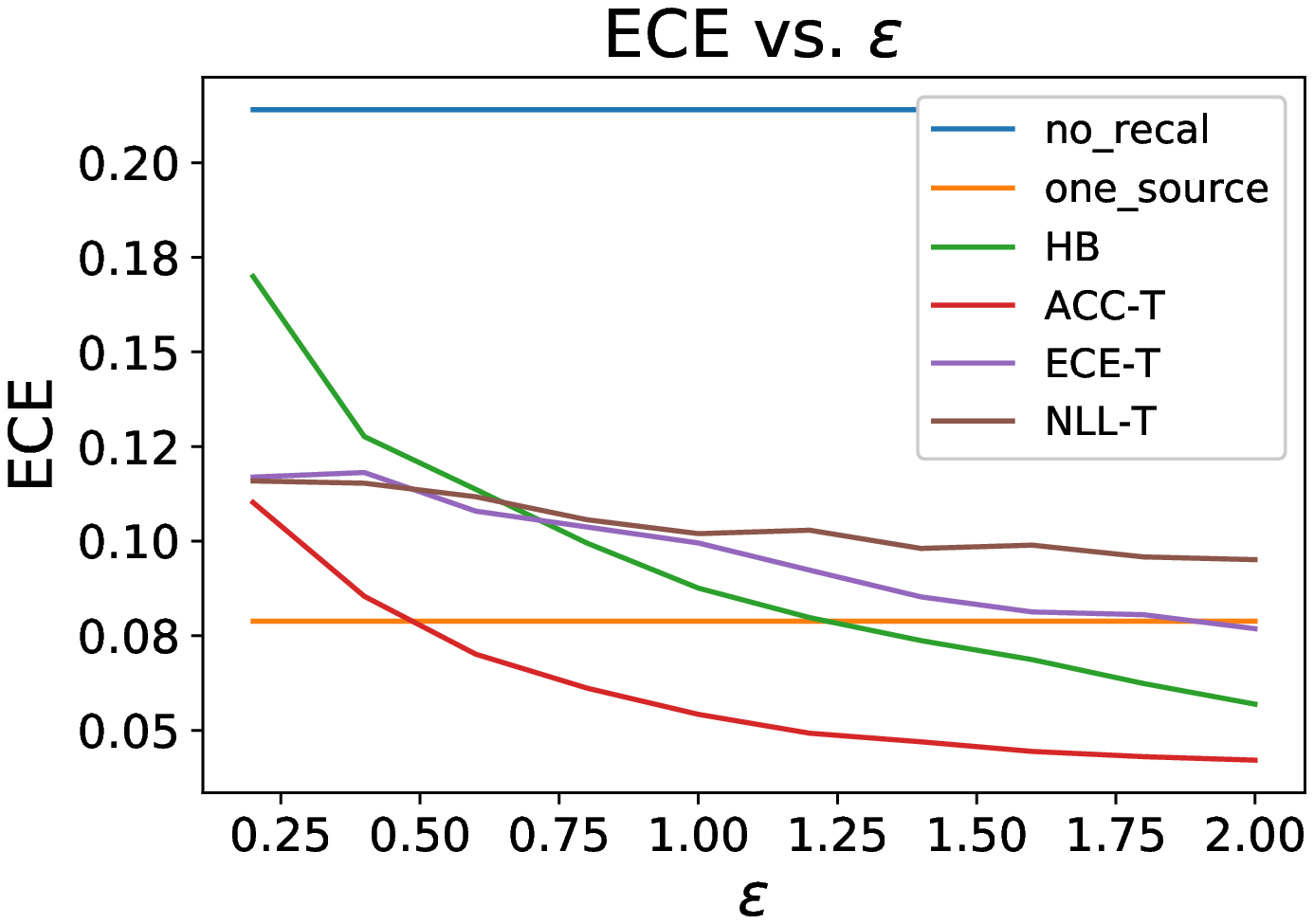}
        \caption{Samples = 30, Sources = 50}
        \label{fig:cifar10_epsilon}
    \end{subfigure}
    \caption{Recalibration results for CIFAR-10 under the "motion blur" perturbation, with varying (\ref{fig:cifar10_sources}) number of private data sources, (\ref{fig:cifar10_samples}) number of samples per data source, and (\ref{fig:cifar10_epsilon}) privacy level $\epsilon$. Acc-T does best in these settings.}
    \label{fig:cifar10}
\end{figure*}

\begin{table*}
	\begin{center}
		\begin{tabular}{l c c c}
			\toprule   
			\multicolumn{4}{c}{\textbf{Expected Calibration Error (median / mean)}} \\
            Recalibration method    & ImageNet & CIFAR-100 & CIFAR-10 \\ 
            \midrule
            No recalibration	& 0.1343 / 0.1334 & 0.2718 / 0.3124 & 0.2187 / 0.2924 \\
            One source	        & 0.0657 / 0.0700 & 0.1204 / 0.1209 & 0.1241 / 0.1359 \\
            Histogram binning	& 0.0656 / 0.0787 & 0.1867 / 0.1850 & 0.1168 / \textbf{0.1181} \\
            ECE-T	            & 0.0684 / 0.0739 & 0.1655 / 0.1721 & 0.1160 / 0.1668 \\
            NLL-T	            & 0.0597 / 0.0624 & 0.1583 / 0.1607 & 0.1157 / 0.1653 \\
            Acc-T	            & \textbf{0.0289} / \textbf{0.0325} & \textbf{0.0890} / \textbf{0.0973} & \textbf{0.0836} / 0.1199 \\
			\bottomrule
		\end{tabular}
	\end{center}
	\vspace{-3mm}
	\caption{\label{tab:benchmark_stats}  Median and mean expected calibration error (ECE) achieved for domain-shifted data under differential privacy. Columns from left to right show the median/mean ECE achieved over all perturbations, number of private data sources, number of samples per source, and $\epsilon$ for ImageNet, CIFAR-100, and CIFAR-10.
	Best calibration is shown in \textbf{bold}.}
\end{table*}

\section{Conclusion}

Simultaneously addressing the challenges of calibration, domain shift, and privacy is extremely important in many environments. In this paper, we introduced a framework for recalibration on domain-shifted data under the constraints of differential privacy. Within this framework, we designed a novel algorithm to handle all three challenges. Our method demonstrated impressive performance across a wide range of settings on a large suite of benchmarks. In future work, we are interested in investigating recalibration under different types of privacy mechanisms.

\subsection*{Acknowledgements}
We thank Tri Dao, Aditya Grover, and Ananya Kumar for their helpful discussions and feedback. Research supported by the Stanford Computer Science department and JD (f\_25198128).

{\small
\bibliographystyle{ieeetr}
\bibliography{references}
}

\clearpage
\newpage
\appendix
\section{Additional Background Information}

\subsection{Computation of ECE}
\label{sec:ece_computation}

To compute the ECE, discretization is necessary. We first divide $[0, 1]$ into bins $\bfc = (\bfc_1, \cdots, \bfc_k)$ such that $0 < \bfc_1 < \cdots < \bfc_k = 1$, and then we compute the average accuracy $\Acc$ and average confidence $\Conf$ in each bin (for convenience, denote $\bfc_0 = 0$)
\begin{align*}
\Acc(f, \bfc, i) &= \Pr\left[f(x)=y \vert \hp(x) \in [\bfc_{i-1}, \bfc_i) \right] \\
\Conf(f, \bfc, i) &= \E[\hp(x) \vert \hp(x) \in [\bfc_{i-1}, \bfc_i)]
\end{align*}

Then the ECE defined in Eq. \ref{eq:ece} can be approximated by a discretized version 
\begin{align*}
    \ECE(f, \hp) &\approx \ECE(f, \hp; \bfc) \\
    &:= \sum_{i=1}^k \Pr\left[\hp(x) \in [\bfc_{i-1}, \bfc_i) \right] \cdot 
    \left\lvert \Acc(f, \bfc, i) - \Conf(f, \bfc, i) \right\rvert
\end{align*}
% It is easy to (informally) see that as $\bfc$ becomes infinitely dense on $[0, 1]$, $\ECE(f, \hp; \bfc) \to \ECE(f, \hp)$. 

Given empirical data $\Dc = \lbrace x_{1:n}, y_{1:n} \rbrace$ we can estimate $\ECE(f, \hp; \bfc)$ as \begin{align*}
        \ECE(f, \hp; \bfc) &\approx \hat{\ECE}(f, \hp; \bfc, \Dc) \\
        &:= \sum_{i=1}^k \frac{1}{n} \left\lvert \sum_{x_i \in [\bfc_{i-1}, \bfc_{i})} \mathbb{I}(f(x_i) = y_i) - \hp(x_i) \right\rvert
\end{align*}

Note that there are two approximations: we first discretize the ECE, and then use finite data to approximate the discretized expression 
\[ \ECE(f, \hp) \approx \ECE(f, \hp; \bfc) \approx \hat{\ECE}(f, \hp; \bfc, \Dc) \]
In practice, if the first approximation is better (more bins are used), then the second approximation must be worse (there will be less data in each bin)~\cite{kumar2019verified}. In other words, with finite data, there is a tradeoff between calibration error and estimation error. Note that newer estimators, e.g.~\cite{kumar2019verified}, can measure the ECE even more accurately, particularly when there are more bins. 

\subsection{Laplace Mechanism Proof}
\label{sec:laplace_proof}

% Laplace mechanism, as defined in Eq. (\ref{laplace_mechanism}):
% \[\mathcal{M}_L(D; f, \epsilon) = f(D) + (Y_1, \ldots, Y_z)\]
% where $Y_i$ are i.i.d. random variables drawn from the $\mathrm{Laplace}(\mathrm{loc}=0, \mathrm{scale}=\Delta f/\epsilon$) distribution. 

\begin{theorem}
The Laplace mechanism~\cite{dwork2014algorithmic} preserves $\epsilon$-differential privacy.
\end{theorem}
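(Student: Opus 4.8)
The plan is to prove the standard result that the Laplace mechanism $\Mc_L(D; f, \epsilon) = f(D) + (Y_1, \dots, Y_z)$, with $Y_i$ i.i.d.\ $\mathrm{Laplace}(0, \Delta f/\epsilon)$, satisfies Definition~\ref{def:differential_privacy}. First I would reduce the set-based condition to a pointwise density comparison: since the added noise is continuous, $\Mc_L(D)$ has a density $p_D$ on $\Rb^z$, and it suffices to show $p_D(b)/p_{D'}(b) \le e^\epsilon$ for all $b \in \Rb^z$ and all neighboring databases $D, D'$; integrating this inequality over any measurable $\Sc \subseteq \mathrm{Range}(\Mc_L)$ then yields $\Pr[\Mc_L(D) \in \Sc] \le e^\epsilon \Pr[\Mc_L(D') \in \Sc]$.

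Next I would write out the densities explicitly. Because the Laplace density at scale $\beta = \Delta f/\epsilon$ is $\frac{1}{2\beta} e^{-|t|/\beta}$ and the $z$ coordinates are independent, we have
\begin{align*}
\frac{p_D(b)}{p_{D'}(b)} = \prod_{i=1}^{z} \frac{\exp\!\left(-\frac{\epsilon\,|f(D)_i - b_i|}{\Delta f}\right)}{\exp\!\left(-\frac{\epsilon\,|f(D')_i - b_i|}{\Delta f}\right)} = \exp\!\left(\frac{\epsilon}{\Delta f} \sum_{i=1}^{z} \bigl(|f(D')_i - b_i| - |f(D)_i - b_i|\bigr)\right).
\end{align*}
Then I would bound the exponent using the reverse triangle inequality, $|f(D')_i - b_i| - |f(D)_i - b_i| \le |f(D)_i - f(D')_i|$, so that the sum is at most $\sum_{i=1}^z |f(D)_i - f(D')_i| = \|f(D) - f(D')\|_1 \le \Delta f$ by the definition of $L_1$ sensitivity (since $\|D - D'\|_1 \le 1$). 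Substituting gives $p_D(b)/p_{D'}(b) \le \exp(\epsilon \cdot \Delta f / \Delta f) = e^\epsilon$, which is the desired bound.

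There is no serious obstacle here — the argument is a textbook computation — but the one point that warrants care is the measure-theoretic reduction at the start: one should note that $\Mc_L(D')$ has full support on $\Rb^z$ so the ratio $\Pr[\Mc_L(D) \in \Sc]/\Pr[\Mc_L(D') \in \Sc]$ is well-defined (denominator nonzero whenever $\Sc$ has positive measure), and that the pointwise density inequality integrates correctly to the set inequality. I would also remark that the bound is symmetric in $D$ and $D'$, so the same inequality holds with the roles reversed, matching the definition exactly. The proof would close by assembling these pieces into the chain $\Pr[\Mc_L(D) \in \Sc] = \int_\Sc p_D(b)\,db \le e^\epsilon \int_\Sc p_{D'}(b)\,db = e^\epsilon \Pr[\Mc_L(D') \in \Sc]$.
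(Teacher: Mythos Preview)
Your proposal is correct and follows essentially the same approach as the paper: form the ratio of the product Laplace densities, apply the triangle inequality coordinatewise, and bound the resulting $\|f(D)-f(D')\|_1$ by the sensitivity $\Delta f$. The only difference is that you spell out the measure-theoretic step from the pointwise density bound to the set inequality, which the paper leaves implicit.
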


\begin{proof}[Proof]
Let $D \in \mathbb{N}^{|\mathcal{X}|}$ and $D' \in \mathbb{N}^{|\mathcal{X}|}$ be two databases that differ by up to one element, i.e. $\Vert D - D' \Vert_{1} \le 1$. Let function $f: \mathbb{N}^{|\mathcal{X}|} \to \mathbb{R}^z$, and let $p_D$ and $p_D'$ denote the probability density functions of $\mathcal{M}_L(D; f, \epsilon)$ and $\mathcal{M}_L(D'; f, \epsilon)$, respectively. Then we can take the ratio of $p_D$ to $p_D'$ at an arbitrary point $x \in \mathbb{R}^z$:
\begin{align*}
    \frac{p_D(x)}{p_D'(x)} &= \prod_{i=1}^z \left( \frac{\exp(\frac{-\epsilon \lvert f(D)_i - x_i \rvert}{\Delta f})} {\exp(\frac{-\epsilon \lvert f(D')_i - x_i \rvert}{\Delta f})} \right) \\
    &= \prod_{i=1}^z \exp \left( \frac{\epsilon (\lvert f(D')_i - x_i \rvert - \lvert f(D)_i - x_i \rvert)}{\Delta f} \right) \\
    &\le \prod_{i=1}^z \exp \left( \frac{\epsilon \lvert f(D)_i - f(D')_i \rvert} {\Delta f} \right) \\
    &= \exp \left( \frac{\epsilon \cdot \Vert f(D) - f(D') \Vert_{1}} {\Delta f} \right) \\
    &\le \exp(\epsilon)
\end{align*}
where the first inequality follows from the triangle inequality, and the second inequality follows from the definition of sensitivity~\cite{dwork2014algorithmic}.

\end{proof}

\section{Proofs}
\label{sec:proofs}

\begin{proof}[Proof of Proposition~\ref{prop:log_likelihood_unimodal}]
\begin{align*}
    \frac{\partial}{\partial T} \Eb_{x, y \sim p^*}\left[\log \frac{e^{l_y(x)/T}}{\sum_j e^{l_j(x)/T}} \right] &= \Eb_{x, y \sim p^*}\left[ \frac{\partial}{\partial T} l_y(x)/T - \frac{\partial}{\partial T} \log{\sum_j e^{l_j(x)/T}} \right] \\
    &= \Eb_{x, y \sim p^*}\left[ -l_y(x)/T^2 - \frac{-\sum_j e^{l_j(x)/T} l_j(x)/T^2}{\sum_j e^{l_j(x)/T}}  \right] \\
    &= \frac{1}{T^2} \Eb_{x, y \sim p^*}\left[ -l_y(x) + \frac{\sum_j l_j(x) e^{l_j(x)/T} }{\sum_j e^{l_j(x)/T}}  \right]
\end{align*}
Let us set the derivative equal to $0$. Suppose there are multiple solutions $T_1 > T_2$; this implies that 
\begin{align*}
\Eb_{x, y \sim p^*}\left[ \frac{\sum_j l_j(x) e^{l_j(x)/T_1} }{\sum_j e^{l_j(x)/T_1}} \right] = \Eb_{x, y \sim p^*}\left[ \frac{\sum_j l_j(x) e^{l_j(x)/T_2} }{\sum_j e^{l_j(x)/T_2}} \right] \numberthis\label{eq:prop1_eq}.
\end{align*}
%but this is not possible because $\Eb_{x, y \sim p^*}\left[ \frac{\sum_j l_j(x) e^{l_j(x)/T} }{\sum_j e^{l_j(x)/T}}  \right]$ is strictly monotonically decreasing.
$\Eb_{x, y \sim p^*}\left[ \frac{\sum_j l_j(x) e^{l_j(x)/T} }{\sum_j e^{l_j(x)/T}}  \right]$ is monotonically non-increasing. Therefore, if there are 0 or 1 solutions to Eq.~\ref{eq:prop1_eq}, the original function must be unimodal. If there are at least 2 solutions $T_1 < T_2$, then $\Eb_{x, y \sim p^*}\left[ \frac{\sum_j l_j(x) e^{l_j(x)/T} }{\sum_j e^{l_j(x)/T}}  \right]$ must be a constant function $\forall T \in [T_1, T_2]$, which implies that $\Eb_{x, y \sim p^*}\left[ \frac{ e^{l_y(x)/T} }{\sum_j e^{l_j(x)/T}}  \right]$ is a constant function of $T \in [T_1, T_2]$. This further implies that $\Eb_{x, y \sim p^*}\left[ \frac{ e^{l_y(x)/T} }{\sum_j e^{l_j(x)/T}}  \right]$ is a constant function for all $T \in \mathbb{R}$, which is also unimodal. 
%in $T$ if $l(x)$ is not a constant . Therefore there are either 0 or 1 solutions, which implies that the original function must be unimodal. 
\end{proof}

\begin{proof}[Proof of Proposition~\ref{prop:acc_unimodal}]
Because $\hp_T(x)$ is a monotonically decreasing function of $T$, $\Eb_{p^*}[\hp_T(x)]$ is also a monotonically decreasing function of $T$. This means that $\Pr_{x, y \sim p^*}[f(x)=y] - \Eb_{p^*}[\hp_T(x)]$ is a monotonically decreasing function of $T$. The absolute value of a monotonic function must be monotonic or unimodal. 
\end{proof}

\section{Additional Details for Section~\ref{sec:framework}}

\subsection{Golden Section Search}
\label{sec:golden_search}

The golden section search is an algorithm for finding the extremum of a unimodal function within a specified interval. It is an iterative method that reduces the search interval with each iteration. The algorithm is described below. Note that we describe the algorithm for a minimization problem, but it also works for maximization problems. 
\begin{enumerate}
    \item Specify the function to be minimized, $g(\cdot)$, and specify an interval over which to minimize $g$, $[T_{min}, T_{max}]$.
    \item Select two interior points $T_1$ and $T_2$, with $T_1 < T_2$, such that $T_1 = T_{max} - \frac{\sqrt{5} - 1}{2}(T_{max} - T_{min})$ and $T_2 = T_{min} + \frac{\sqrt{5} - 1}{2}(T_{max} - T_{min})$. Evaluate $g(T_1)$ and $g(T_2)$.
    \item If $g(T_1) > g(T_2)$, then determine a new $T_{min}, T_1, T_2, T_{max}$ as follows: 
    \begin{align*}
        T_{min} &= T_1 \\
        T_{max} &= T_{max} \\
        T_1 &= T_2 \\
        T_2 &= T_{min} + \frac{\sqrt{5} - 1}{2}(T_{max} - T_{min})
    \end{align*}
    If $g(T_1) < g(T_2)$, determine a new $T_{min}, T_1, T_2, T_{max}$ as follows:
    \begin{align*}
        T_{min} &= T_{min} \\
        T_{max} &= T_2 \\
        T_2 &= T_1 \\
        T_1 &= T_{max} - \frac{\sqrt{5} - 1}{2}(T_{max} - T_{min})
    \end{align*}
    Note that in either case, only one new calculation is performed. 
    \item If the interval is sufficiently small, i.e. $T_{max} - T_{min} < \delta$, then the maximum occurs at $(T_{min} + T_{max})/2$. Otherwise, repeat Step 3. 
\end{enumerate}

\subsection{Adapting Existing Recalibration Methods for Differential Privacy}
\label{sec:example_methods}

In this section, we go into more detail about how to adapt several existing recalibration algorithms for the differential privacy setting with our framework. 

\subsubsection{Temperature Scaling}

Temperature scaling optimizes over the temperature parameter $T$ using the negative log likelihood loss, and thus requires multiple iterations to query the databases $D_i$ at different temperature values using golden section search. In this case, the objective function $g(\cdot)$ is the negative log-likelihood (NLL) loss over all samples. In the standard NLL formulation, the overall loss is the average of the samples' NLL losses, but summing these losses for each database rather than taking the average is equivalent except for a constant scale factor (the total number of samples in the database). Thus, the functions $f_i^k$ query each $D_i$ for its summed NLL loss. The sensitivity $\Delta f$ is technically infinite, since the range of the NLL function is infinite, but in practice we can choose some sufficiently large value (we chose $\Delta f = 10$, since that was approximately the largest NLL value we saw among the images that we checked). We chose $T_{min} = 0.5$ and $T_{max} = 3.0$, since empirically the optimal temperature always seems to fall within this range, and used $K = 5$ iterations. To aggregate information from different $D_i$, we simply average the $\Mc^k_1(D_1), \cdots, \Mc^k_d(D_d)$. The new classifier $(\phi, \hat{p}')$ outputs probabilities that are recalibrated with the (noisy) optimal temperature.

\subsubsection{Temperature Scaling by ECE Minimization}
\label{sec:ecet}

The standard recalibration objective when applying temperature scaling is to maximize the log likelihood of a validation dataset.  This objective is given in both recent papers~\cite{guo2017calibration} and established textbooks~\cite{smola2000advances}. An alternative, but surprisingly overlooked, objective is to minimize the discretized ECE directly. To adapt this method to differential privacy, we must again use multiple iterations to query the databases $D_i$ at different temperature values using golden section search. Here we want to find the temperature that minimizes the discretized ECE:
\begin{equation}
\label{eq:ecet_minimization}
    \min_T \sum_{bins} \left\lvert \Acc - \Conf \right\rvert \cdot pr = 
    \min_T \sum_{bins} \left\lvert \frac{n_{correct}}{n_{bin}} - \frac{\sum_i c_i}{n_{bin}} \right\rvert \cdot \frac{n_{bin}}{n_{total}}
\end{equation}
where $pr$ is the proportion of samples in the bin, $n_{correct}$ is the number of correct predictions in the bin, $n_{bin}$ is the total number of samples in the bin, $\sum_i c_i$ is the sum of the confidence scores for all samples in the bin, and $n_{total}$ is the total number of samples across all bins. 

Simplifying Eq.~\ref{eq:ecet_minimization} and ignoring $n_{total}$ as a constant, our objective function $g(\cdot)$ becomes 
\[ g(\phi, T) = \sum_{bins} \lvert n_{correct} - \sum_i c_i \rvert \]
The functions $f_i^k$ query each $D_i$ for the quantity $(n_{correct} - \sum_i c_i)$ in each bin. The sensitivity $\Delta f = 1$, since this quantity could change by up to 1 with the addition or removal of one sample to a database. We use $T_{min} = 0.5$, $T_{max} = 3.0$, and $K = 5$ iterations. We use 15 bins (since we also evaluate the discretized ECE with 15 bins), so the $\Mc^k_i$ are vectors $\in \mathbb{R}^{15}$. To aggregate information from different $D_i$, we average the $\Mc^k_1(D_1), \cdots, \Mc^k_d(D_d)$, take the absolute value of this average, and then sum this absolute value vector over all bins. In the absence of noise, this aggregation process will yield the correct overall $g(\cdot)$ exactly, using all samples from all sources. The new classifier $(\phi, \hat{p}')$ outputs probabilities that are recalibrated with the (noisy) optimal temperature. Unsurprisingly, ECE-T performs very well without the constraints of differential privacy, so this method may be a good choice when $\epsilon$ is high.

\subsubsection{Histogram Binning}

Histogram binning is a relatively simple, non-parametric recalibration method that can be adapted to differential privacy with a single iteration (i.e. $K = 1$). The functions $f_i^1$ query $D_i$ for the number of correct predictions in each bin and the total number of samples in each bin. $\Delta f = 2$ because if exactly one entry is added or removed from a database, the number of correct predictions can change by at most 1 for exactly one of the bins, and the total number of samples can change by at most 1 for exactly one of the bins. We use 15 bins in our experiments, so the $\Mc^k_i$ are vectors $\in \mathbb{R}^{30}$. To aggregate information from different $D_i$, we average the $\Mc^k_1(D_1), \cdots, \Mc^k_d(D_d)$. The new confidence for each bin is the average number of correct predictions divided by the average total number of samples for that bin.

\section{Additional Details for Section~\ref{sec:acct}}

\subsection{Factors that Affect Calibration Quality and Privacy}
\label{sec:factors}

\begin{table*}[!htbp]
	\begin{center}
	\caption{\label{tab:recal_factors}  The impact of various factors on recalibration quality and privacy preservation.}
		\begin{tabular}{l c c c c c}
			\toprule   % {\textbf{Factors that Affect Recalibration}} \\
                & $\uparrow$ Data & $\uparrow$ Iterations & $\uparrow$ Bins & $\uparrow$ Sensitivity & $\uparrow$ $\epsilon$ \\ 
            \midrule
            Calibration quality     & $\nearrow$ & $\nearrow$ & $\nearrow$ & -{}- & -{}- \\
            Privacy preservation    & $\nearrow$ & $\searrow$ & $\searrow$ & $\searrow$ & $\searrow$ \\
			\bottomrule
		\end{tabular}
	\end{center}
\end{table*}

Table~\ref{tab:recal_factors} shows several factors and hyperparameter choices that affect the calibration quality and the level of privacy for all recalibration methods. More data improves both calibration and privacy. More iterations improves calibration when privacy is not required (e.g. running more iterations of gradient descent), but hurts privacy (making multiple queries in a parametric optimization setting with the same amount of added noise increases $\epsilon$). Using more bins for methods that involve binning improves calibration when enough data is available, but may hurt privacy. Higher sensitivity of the $f_i^k$ functions hurts privacy, and higher $\epsilon$ represents less privacy. We discuss each of these in more detail below. 

\paragraph{Data} Differentially private recalibration algorithms require sufficient data in order to work well. We cannot trivially combine data from different private datasets because each dataset holder must honor its agreement with the individuals whose information is in that dataset. Our framework describes a method for pooling data from different private datasets while allowing each one to respect differential privacy for its users, which is necessary for improved calibration while preserving privacy. 
% \vspace{-3mm}

\paragraph{Number of iterations} For parametric optimization recalibration methods, multiple iterations are generally needed to search the parameter space. Using additional iterations improves the calibration without differential privacy (e.g. running more iterations of gradient descent), but hurts the calibration when differential privacy is required. With multiple iterations, a worst-case bound on the overall $L_1$ sensitivity of the $f_i^k$ is $K$ times the sensitivity of a single query $\Delta f_{single}$, since a single database entry may change the response to each query by up to $\Delta f_{single}$. Thus, the amount of noise added to the true query responses must follow a $L(0, K \cdot \Delta f_{single} / \epsilon)$ distribution to maintain $\epsilon$-differential privacy. Because using more iterations increases the amount of noise added, it is best to search through the parameter space while minimizing the number of iterations needed for the desired granularity. We use golden section search to do this. Each iteration of the golden section search narrows the range of possible values of the extremum, but increases the amount of noise added to the data; in general, we select $K$ such that the granularity and the noise are balanced. 
% \vspace{-3mm}

\paragraph{Binning} Several of the recalibration methods discussed use binning, where all of the confidence estimates are divided into mutually-exclusive bins. Without differential privacy, using more bins generally improves calibration when a lot of data is available (i.e. above a "data threshold"), but hurts calibration below this data threshold. When not enough data is available, using more bins increases the estimation error since there are too few samples in each bin. In the differential privacy setting, using more bins may degrade the calibration. In this setting, one query may request a summary statistic from each bin. Because a single database entry can be in exactly one bin, the remaining bins are unaffected and the sensitivity does not increase with more bins. However, although the number of bins does not affect the absolute amount of noise, it can affect the relative amount of noise. When more bins are used, there are fewer elements in each bin on average. Thus, the summary statistics involved tend to be lower, and the noise is relatively higher.

Note that when multiple equal-width bins are involved, as in temperature scaling by ECE minimization (see Section~\ref{sec:ecet}), the optimization problem may not be strictly unimodal since samples can change bins as the temperature changes. Using bins with equal numbers of samples, rather than equal widths, ensures unimodality in temperature scaling but makes it difficult to combine information from different private data sources (since different sources will have different bin endpoints). Thus, we elected to use equal-width bins in our experiments. Although the function to be minimized is not necessarily unimodal, it is generally a close enough approximation that golden section search returns reasonably good results with few queries, and empirically performs better than grid search. 
% \vspace{-3mm}

\paragraph{Sensitivity of $f_i^k$} An $f_i^k$ function with a large range has a detrimental effect on the amount of noise added. For instance, the range of the negative log-likelihood is technically infinite (although in practice we used some sufficiently large value). Thus, the sensitivity of a method with the negative log-likelihood in the objective function is quite high, and the amount of noise needed to preserve differential privacy is large.
% \vspace{-3mm}

\paragraph{$\boldsymbol{\epsilon}$ value} Calibration is worse when $\epsilon$ is smaller, i.e. when there is a higher privacy level with stronger differential privacy constraints.

\section{Additional Experimental Details and Results}
\label{sec:appendix_experiments}

\subsection{Experimental Setup}
\label{sec:appendix_exp_setup}
We simulated the problem of recalibration with multiple private datasets on domain-shifted data using the ImageNet-C, CIFAR-100-C, and CIFAR-10-C datasets~\cite{hendrycks2019Benchmarking}, which are perturbed versions of the ImageNet~\cite{deng2009imagenet}, CIFAR-100~\cite{krizhevsky2009learning}, and CIFAR-10~\cite{krizhevsky2009learning} test sets respectively. We randomly divided each perturbed test set into $n_{sources}$ validation sets of size $n_{samples}$ and a test set comprising the remaining images, where $n_{sources}$ represents the number of private data sources and $n_{samples}$ represents the number of samples per source. We computed each ECE value by binning with 15 equal-width bins. 

For ImageNet, we varied the number of private data sources from 100 to 2000 in step sizes of 100, with 10 samples per data source and $\epsilon = 1$. We varied the number of samples per data source from 5 to 100 in step sizes of 5, with 100 private data sources and $\epsilon = 1$. We varied $\epsilon$ from 0.2 to 2.0 in step sizes of 0.2, with 50 samples per data source and 100 private data sources. For CIFAR-100 and CIFAR-10, we varied the number of private data sources from 10 to 250 in step sizes of 10, with 10 samples per data source and $\epsilon = 1$. We varied the number of samples per data source from 5 to 50 in step sizes of 5, with 50 private data sources and $\epsilon = 1$. We varied $\epsilon$ from 0.2 to 2.0 in step sizes of 0.2, with 30 samples per data source and 50 private data sources. We used $K = 5$ iterations for all experiments. We reported the average ECE achieved over 500 randomly divided trials for each experiment.

All models were trained on only the unperturbed training sets. For ImageNet, we trained a ResNet50 network~\cite{he2015deep} for 90 epochs with an SGD optimizer~\cite{sutskever2013importance} with an initial learning rate of 0.1, and decayed the learning rate according to a cosine annealing schedule~\cite{loshchilov2016sgdr}. For CIFAR-100 and CIFAR-10, we trained Wide ResNet-28-10 networks~\cite{zagoruyko2016wide} for 200 epochs with an SGD optimizer with an initial learning rate of 0.1, and again decayed the learning rate with a cosine annealing schedule. For each dataset, we tested both the unperturbed accuracy and the perturbed accuracy on each of 15 perturbation types in~\cite{hendrycks2019Benchmarking} at multiple severity levels to ensure sharpness. These accuracy tables can be found in~\ref{sec:experiments_without_privacy}.
\clearpage

\subsection{Experiments without Differential Privacy Constraints}
\label{sec:experiments_without_privacy}

\begin{table}[!htbp]
	\begin{center}
		\begin{tabular}{ l c c c}
			\toprule
			\multicolumn{4}{c}{\textbf{Classification Accuracy}}\\
			Perturbation Type & CIFAR-10 & CIFAR-100 & ImageNet\\  
			\midrule 
            Brightness	        & 0.9290 & 0.7107 & 0.5570 \\
            Contrast	        & 0.4656 & 0.2967 & 0.0422 \\
            Defocus Blur	    & 0.6402 & 0.4008 & 0.1506 \\
            Elastic Transform	& 0.7616 & 0.5214 & 0.1477 \\
            Fog	                & 0.7639 & 0.4808 & 0.2270 \\
            Frost	            & 0.6907 & 0.4196 & 0.2064 \\
            Gaussian Noise	    & 0.2889 & 0.1046 & 0.0447 \\
            Glass Blur	        & 0.5313 & 0.2212 & 0.0834 \\
            Impulse Noise	    & 0.2940 & 0.0642 & 0.0463 \\
            Jpeg Compression	& 0.7056 & 0.4190 & 0.3318 \\
            Motion Blur	        & 0.7062 & 0.4997 & 0.1337 \\
            Pixelate	        & 0.5137 & 0.2994 & 0.2260 \\
            Shot Noise	        & 0.3581 & 0.1190 & 0.0507 \\
            Snow	            & 0.7975 & 0.5268 & 0.1594 \\
            Zoom Blur	        & 0.7163 & 0.4708 & 0.2287 \\
            \midrule
            Unperturbed	        & 0.9613 & 0.8050 & 0.7613 \\
            \bottomrule
		\end{tabular}
	\end{center}
	\caption{Classification accuracies for CIFAR-10, CIFAR-100, and ImageNet under the highest severity perturbations of the CIFAR-10-C, CIFAR-100-C, and ImageNet-C test sets. The classification models used achieve the expected state-of-the-art results for accuracy on the unperturbed test sets.}
	\label{tab:acc_table}
\end{table}

\begin{table*}[!htbp]
	\begin{center}
		\begin{tabular}{l c c c c}
			\toprule {\textbf{CIFAR-10}} \\
            Perturbation Severity = 5       & Base & NLL-T & Acc-T & ECE-T \\ 
            \midrule
            Brightness	        & 0.0456 & 0.0194 & 0.0278 & \textbf{0.0182} \\
            Contrast	        & 0.4202 & 0.0503 & \textbf{0.0348} & 0.0368 \\
            Defocus Blur	    & 0.2431 & 0.0381 & 0.0372 & \textbf{0.0366} \\
            Elastic Transform	& 0.1555 & 0.0287 & \textbf{0.0264} & 0.0319 \\
            Fog	                & 0.1813 & 0.0463 & \textbf{0.0433} & 0.0435 \\
            \midrule
            Frost	            & 0.2207 & 0.0636 & \textbf{0.0570} & 0.0581 \\
            Gaussian Noise	    & 0.6052 & 0.0624 & \textbf{0.0364} & 0.0530 \\
            Glass Blur	        & 0.3434 & 0.0426 & 0.0393 & \textbf{0.0389} \\
            Impulse Noise	    & 0.4963 & 0.0570 & \textbf{0.0499} & 0.0566 \\
            Jpeg Compression	& 0.2000 & 0.0423 & 0.0344 & \textbf{0.0338} \\
            \midrule
            Motion Blur	        & 0.2153 & 0.0430 & \textbf{0.0395} & 0.0427 \\
            Pixelate	        & 0.3840 & 0.0676 & \textbf{0.0620} & 0.0649 \\
            Shot Noise	        & 0.5282 & 0.0503 & \textbf{0.0464} & 0.0484 \\
            Snow	            & 0.1412 & 0.0390 & \textbf{0.0321} & 0.0391 \\
            Zoom Blur	        & 0.1931 & 0.0382 & \textbf{0.0343} & 0.0363 \\
            \midrule
            Unperturbed         & 0.0251 & 0.0078 & 0.0089 & \textbf{0.0075} \\
			\bottomrule
		\end{tabular}
	\end{center}
	\vspace{-3mm}
	\caption{\label{tab:cifar10_ece}  Expected calibration error (ECE) on CIFAR-10 without privacy constraints is shown. Columns from left to right show ECE for the baseline without calibration, recalibration with temperature scaling by minimizing the negative log likelihood, recalibration with temperature scaling by matching predictive confidence to accuracy (our method), and recalibration by minimizing the ECE directly. Rows indicate the type of perturbation applied. These results correspond to a perturbation severity of 5. Best calibration for each perturbation is shown in \textbf{bold}.}
\end{table*}
\vspace{-3mm}

\begin{table*}[!htbp]
	\begin{center}
		\begin{tabular}{l c c c c}
			\toprule {\textbf{CIFAR-100}} \\
            Perturbation Severity = 5       & Base & NLL-T & Acc-T & ECE-T \\ 
            \midrule
            Brightness	        & 0.1087 & 0.0602 & 0.0460 & \textbf{0.0449} \\
            Contrast	        & 0.3817 & 0.0839 & 0.0574 & \textbf{0.0518} \\
            Defocus Blur	    & 0.2707 & 0.0847 & 0.0785 & \textbf{0.0780} \\
            Elastic Transform	& 0.1776 & 0.0626 & \textbf{0.0589} & 0.0639 \\
            Fog	                & 0.2217 & 0.0656 & \textbf{0.0560} & 0.0574 \\
            \midrule
            Frost	            & 0.2929 & 0.0872 & 0.0791 & \textbf{0.0771} \\
            Gaussian Noise	    & 0.6313 & 0.0518 & 0.0423 & \textbf{0.0316} \\
            Glass Blur	        & 0.4438 & 0.0777 & \textbf{0.0658} & 0.0700 \\
            Impulse Noise	    & 0.3574 & 0.0160 & \textbf{0.0061} & 0.0081 \\
            Jpeg Compression	& 0.2042 & 0.0667 & 0.0512 & \textbf{0.0492} \\
            \midrule
            Motion Blur	        & 0.2040 & 0.0707 & 0.0548 & \textbf{0.0546} \\
            Pixelate	        & 0.3639 & 0.0599 & \textbf{0.0322} & 0.0327 \\
            Shot Noise	        & 0.6200 & 0.0654 & 0.0468 & \textbf{0.0391} \\
            Snow	            & 0.1912 & 0.0674 & \textbf{0.0575} & 0.0585 \\
            Zoom Blur	        & 0.2189 & 0.0805 & 0.0726 & \textbf{0.0724} \\
            \midrule
            Unperturbed         & 0.0793 & 0.0456 & 0.0319 & \textbf{0.0305} \\
			\bottomrule
		\end{tabular}
	\end{center}
	\vspace{-3mm}
	\caption{\label{tab:cifar100_ece}  Expected calibration error (ECE) on CIFAR-100 without privacy constraints is shown. Columns from left to right show ECE for the baseline without calibration, recalibration with temperature scaling by minimizing the negative log likelihood, recalibration with temperature scaling by matching predictive confidence to accuracy (our method), and recalibration by minimizing the ECE directly. Rows indicate the type of perturbation applied. These results correspond to a perturbation severity of 5. Best calibration for each perturbation is shown in \textbf{bold}.}
\end{table*}

\begin{table*}[!hbtp]
	\begin{center}
		\begin{tabular}{l c c c c}
			\toprule {\textbf{ImageNet}} \\
            Perturbation Severity = 5			& Base & NLL-T & Acc-T & ECE-T \\ 
            \midrule
            Brightness	        & 0.0413 & 0.0325 & \textbf{0.0298} & 0.0307 \\
            Contrast	        & 0.0651 & \textbf{0.0083} & 0.0083 & 0.0116 \\
            Defocus Blur	    & 0.0618 & 0.0235 & \textbf{0.0230} & 0.0239 \\
            Elastic Transform	& 0.2426 & \textbf{0.0287} & 0.0308 & 0.0308 \\
            Fog	                & 0.1572 & 0.0255 & \textbf{0.0231} & 0.0232 \\
            \midrule
            Frost	            & 0.1430 & 0.0254 & 0.0253 & \textbf{0.0247} \\
            Gaussian Noise	    & 0.1501 & \textbf{0.0070} & 0.0080 & 0.0092 \\
            Glass Blur	        & 0.1340 & \textbf{0.0160} & 0.0164 & 0.0168 \\
            Impulse Noise	    & 0.1555 & 0.0084 & 0.0069 & \textbf{0.0066} \\
            Jpeg Compression	& 0.0855 & \textbf{0.0188} & 0.0228 & 0.0189 \\
            \midrule
            Motion Blur	        & 0.1254 & \textbf{0.0180} & 0.0183 & 0.0194 \\
            Pixelate	        & 0.1306 & 0.0175 & 0.0172 & \textbf{0.0170} \\
            Shot Noise	        & 0.1820 & 0.0085 & \textbf{0.0081} & 0.0109 \\
            Snow	            & 0.1895 & 0.0327 & 0.0323 & \textbf{0.0321} \\
            Zoom Blur	        & 0.1343 & 0.0200 & \textbf{0.0191} & 0.0193 \\
            \midrule 
            Unperturbed         & 0.0390 & 0.0240 & \textbf{0.0239} & 0.0261 \\
			\bottomrule
		\end{tabular}
	\end{center}
	\vspace{-3mm}
	\caption{\label{tab:imagenet_ece} Expected calibration error (ECE) on ImageNet without privacy constraints is shown. Columns from left to right show ECE for the baseline without calibration, recalibration with temperature scaling by minimizing the negative log likelihood, recalibration with temperature scaling by matching predictive confidence to accuracy (our method), and recalibration by minimizing the ECE directly. Rows indicate the type of perturbation applied. These results correspond to a perturbation severity of 5. Best calibration for each perturbation is shown in \textbf{bold}.}
\end{table*}
\clearpage

Table~\ref{tab:acc_table} shows the classification accuracy achieved by our models on each of the 15 perturbations of the CIFAR-10-C, CIFAR-100-C, and ImageNet-C test sets, as well as on the unperturbed test set. Note that the models are trained only on unperturbed training data. The accuracies achieved are in line with reported state-of-the-art numbers.

Tables~\ref{tab:cifar10_ece},~\ref{tab:cifar100_ece}, and~\ref{tab:imagenet_ece} summarize our calibration results without differential privacy constraints for CIFAR-10, CIFAR-100, and ImageNet, respectively. Our Acc-T algorithm generally improves the model's calibration compared to the standard temperature scaling method NLL-T. Despite its simplicity, Acc-T also performs on par with ECE-T, generally achieving similar ECEs, even when privacy is not required.

\subsection{Experiments with Differential Privacy Constraints}
\label{sec:experiments_with_privacy}

The figures in this section show recalibration results for ImageNet, CIFAR-100, and CIFAR-10. In the left panel of each figure, we vary the number of private data sources. In the middle panel, we vary the number of samples per data source. In the right panel, we vary the privacy level $\epsilon$. Our method, Acc-T, generally does best in these settings.

\subsubsection*{ImageNet Results}

\begin{figure}[H]
    \centering
    \captionsetup{labelformat=empty}
    \caption{ImageNet, unperturbed}
    \begin{subfigure}[b]{0.325\textwidth}
        \centering
        \includegraphics[width=\textwidth]{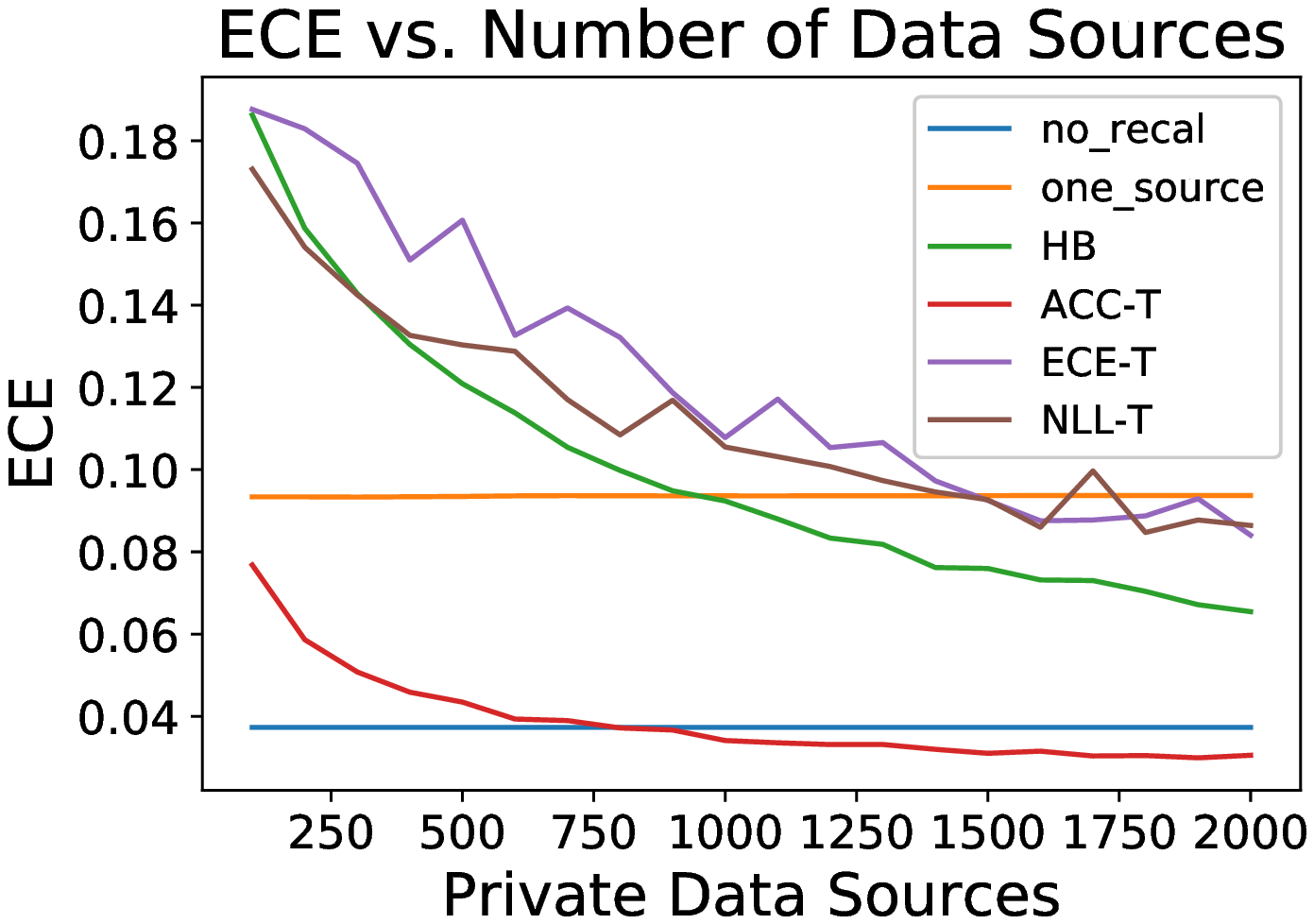}
        \caption{Samples = 10, $\epsilon = 1.0$}
    \end{subfigure}
    \hfill
    \begin{subfigure}[b]{0.325\textwidth}
        \centering
        \includegraphics[width=\textwidth]{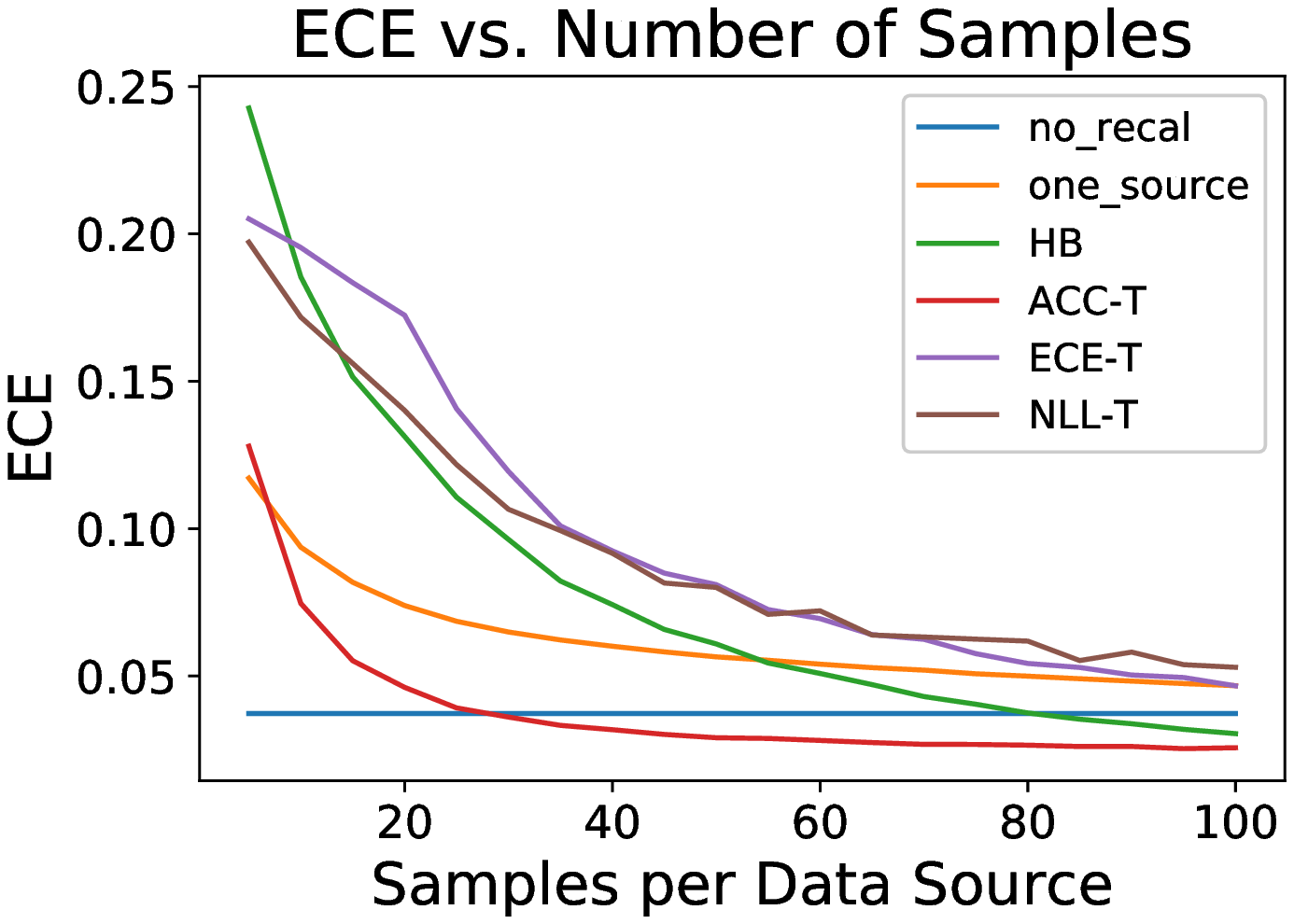}
        \caption{Sources = 100, $\epsilon = 1.0$}
    \end{subfigure}
    \hfill
    \begin{subfigure}[b]{0.325\textwidth}
        \centering
        \includegraphics[width=\textwidth]{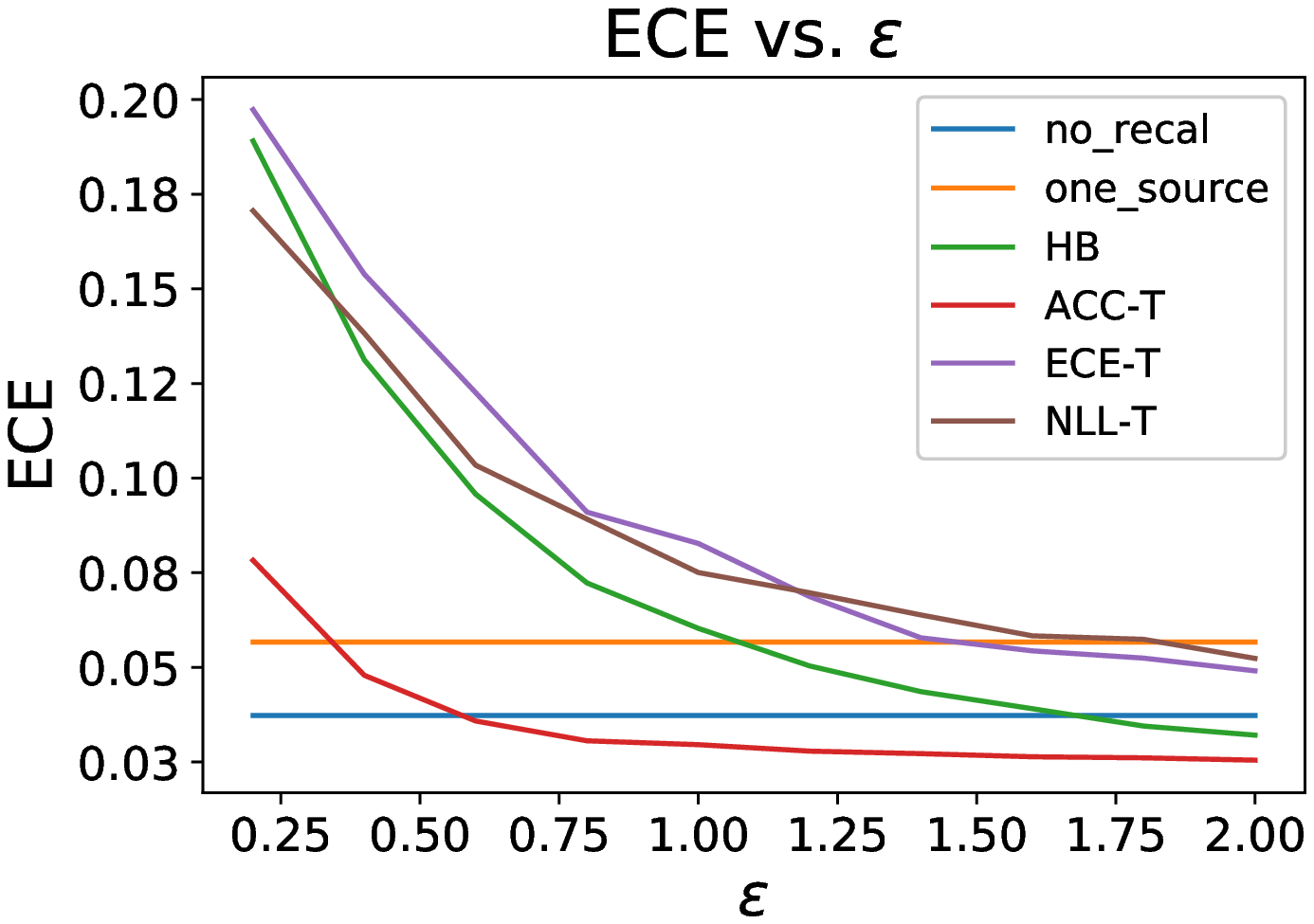}
        \caption{Samples = 50, Sources = 100}
    \end{subfigure}
\end{figure}

\begin{figure}[H]
    \centering
    \captionsetup{labelformat=empty}
    \caption{ImageNet, brightness perturbation}
    \begin{subfigure}[b]{0.325\textwidth}
        \centering
        \includegraphics[width=\textwidth]{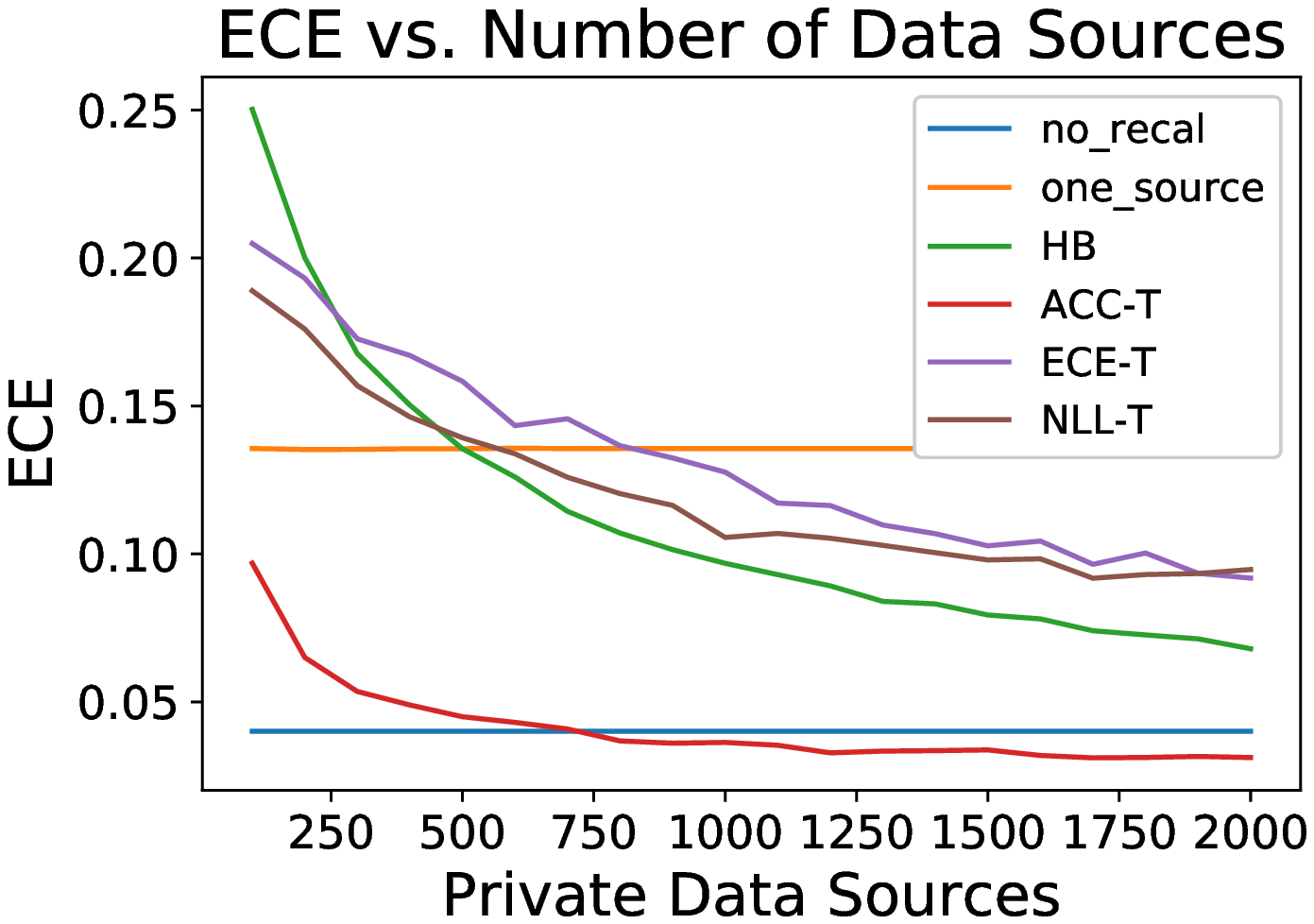}
        \caption{Samples = 10, $\epsilon = 1.0$}
    \end{subfigure}
    \hfill
    \begin{subfigure}[b]{0.325\textwidth}
        \centering
        \includegraphics[width=\textwidth]{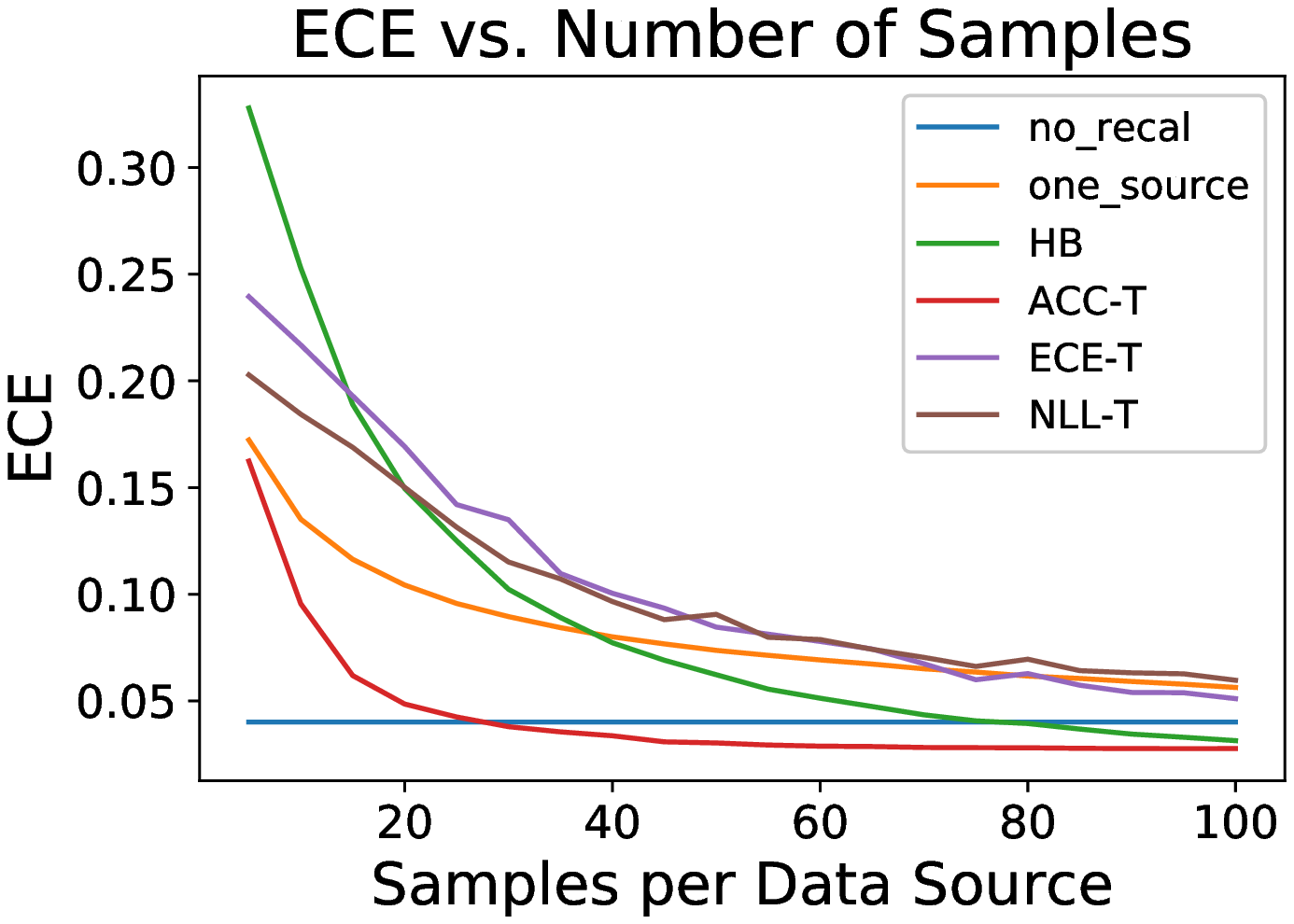}
        \caption{Sources = 100, $\epsilon = 1.0$}
    \end{subfigure}
    \hfill
    \begin{subfigure}[b]{0.325\textwidth}
        \centering
        \includegraphics[width=\textwidth]{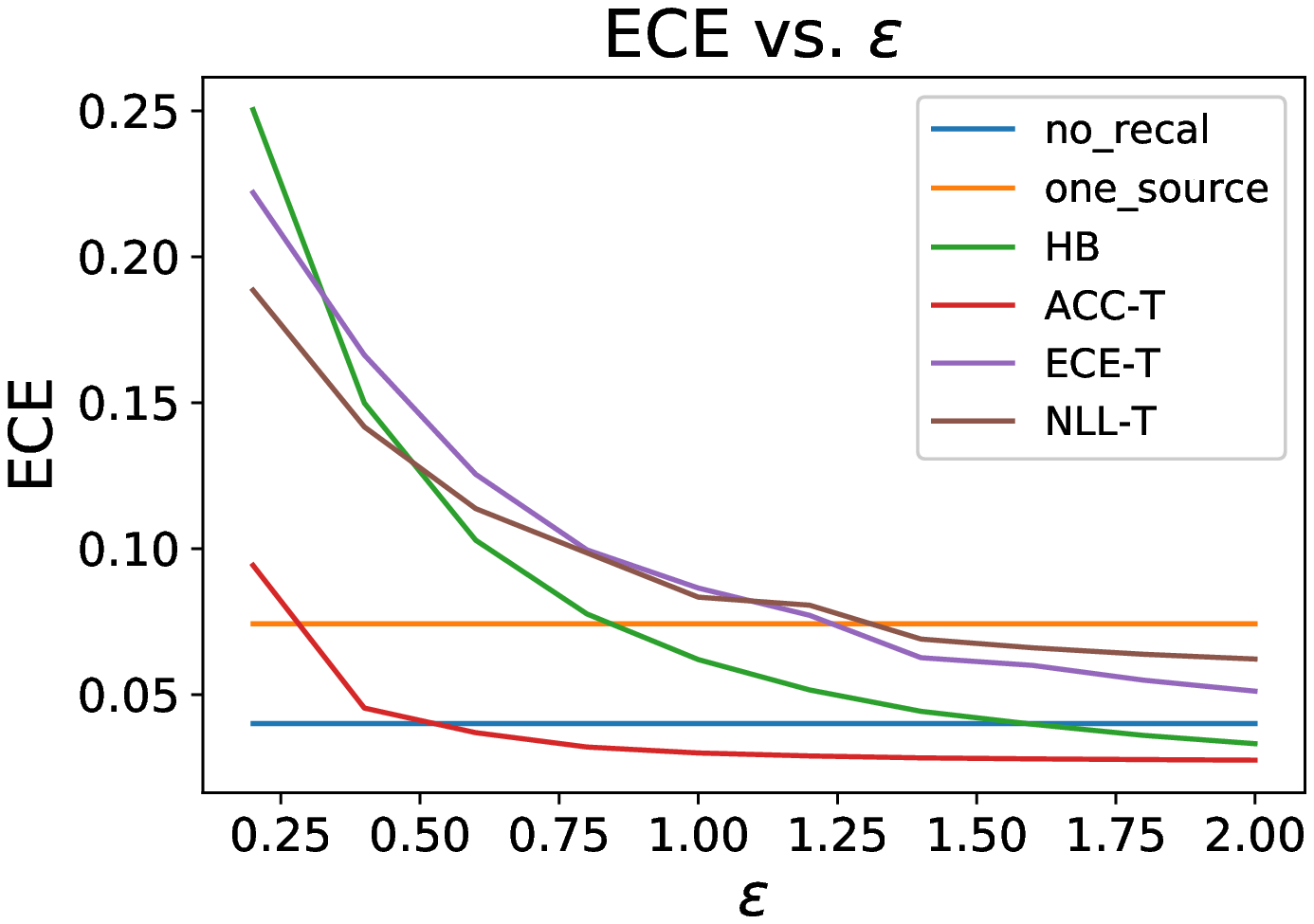}
        \caption{Samples = 50, Sources = 100}
    \end{subfigure}
\end{figure}

\begin{figure}[H]
    \centering
    \captionsetup{labelformat=empty}
    \caption{ImageNet, contrast perturbation}
    \begin{subfigure}[b]{0.325\textwidth}
        \centering
        \includegraphics[width=\textwidth]{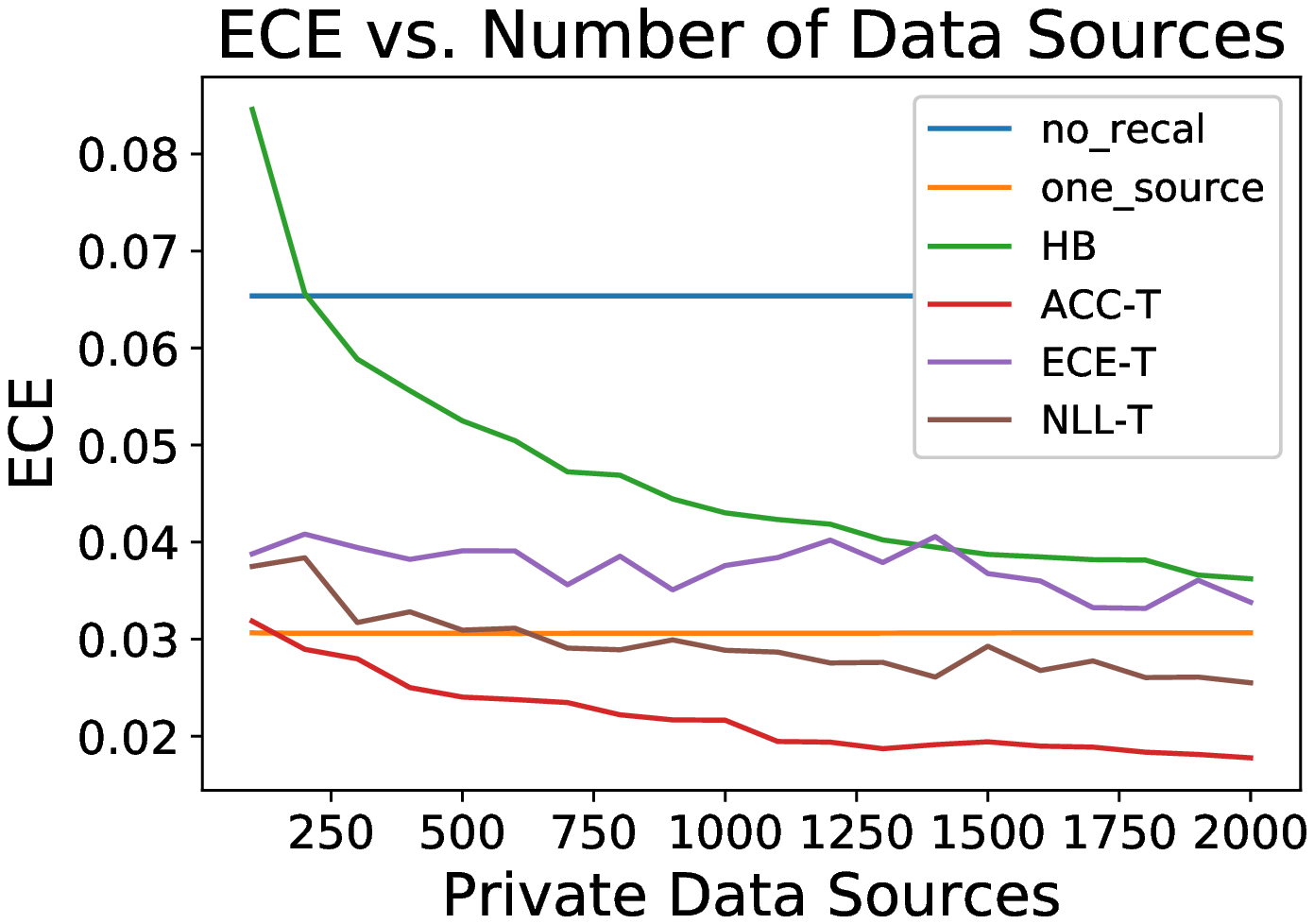}
        \caption{Samples = 10, $\epsilon = 1.0$}
    \end{subfigure}
    \hfill
    \begin{subfigure}[b]{0.325\textwidth}
        \centering
        \includegraphics[width=\textwidth]{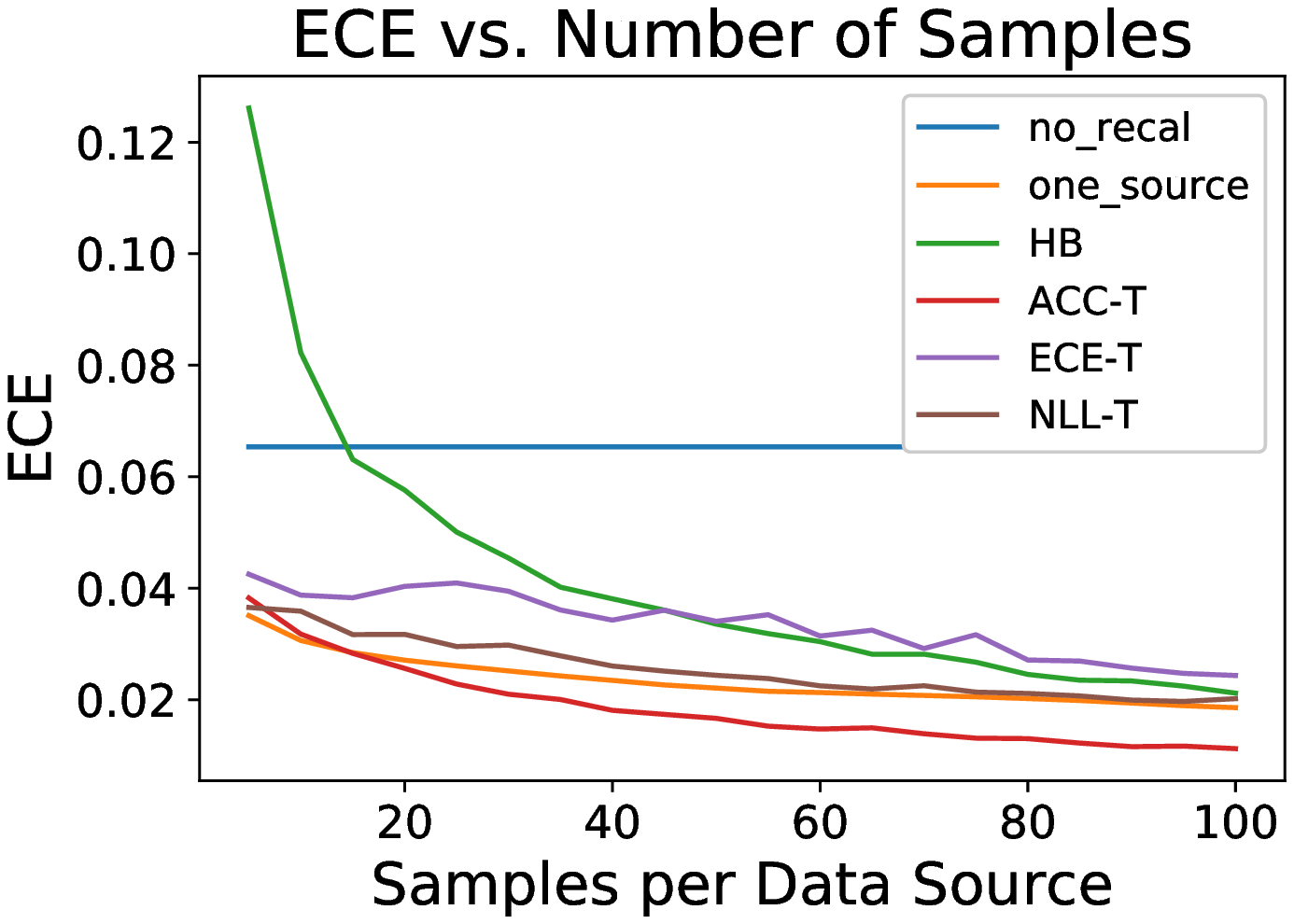}
        \caption{Sources = 100, $\epsilon = 1.0$}
    \end{subfigure}
    \hfill
    \begin{subfigure}[b]{0.325\textwidth}
        \centering
        \includegraphics[width=\textwidth]{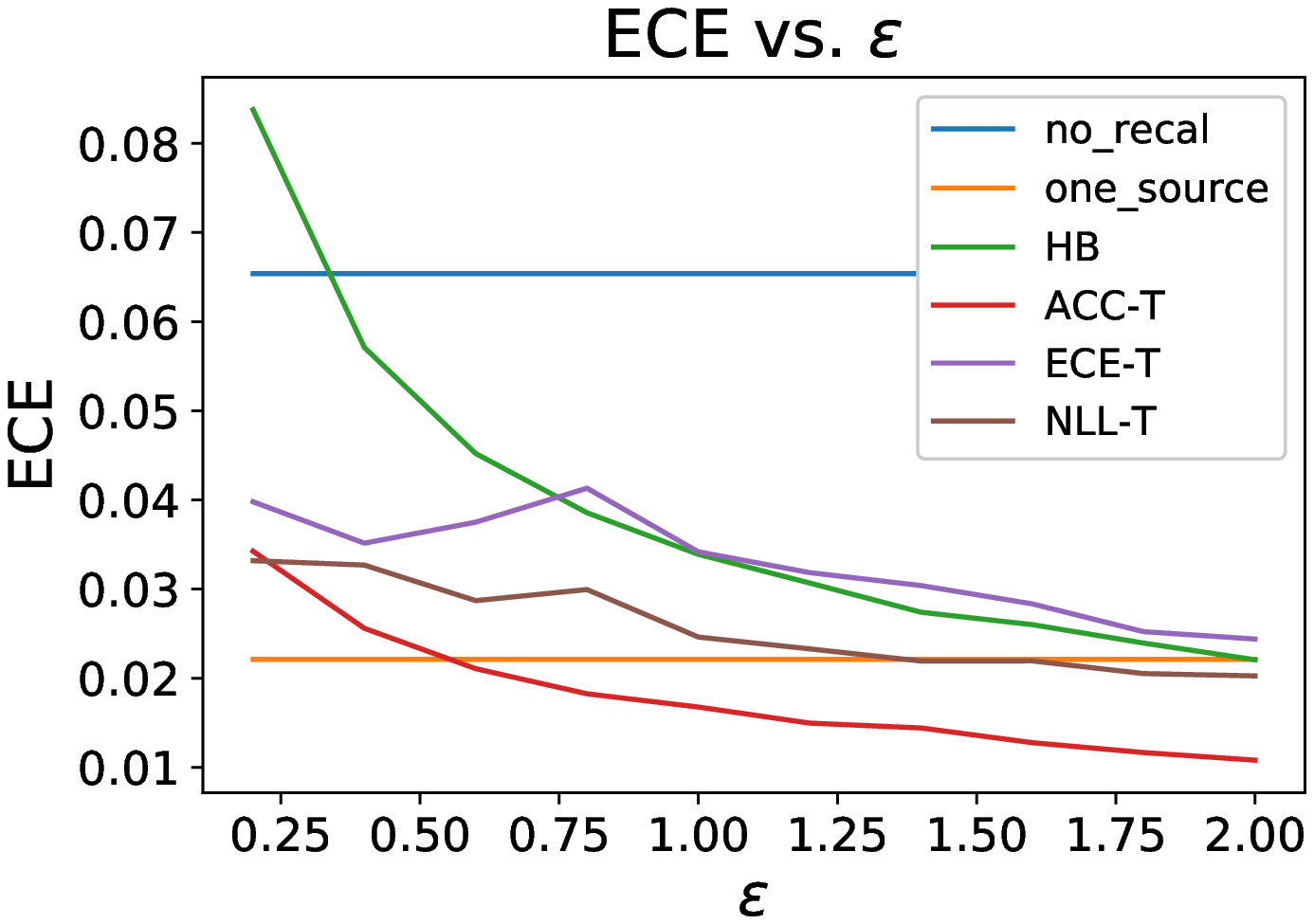}
        \caption{Samples = 50, Sources = 100}
    \end{subfigure}
\end{figure}

\begin{figure}[H]
    \centering
    \captionsetup{labelformat=empty}
    \caption{ImageNet, defocus blur perturbation}
    \begin{subfigure}[b]{0.325\textwidth}
        \centering
        \includegraphics[width=\textwidth]{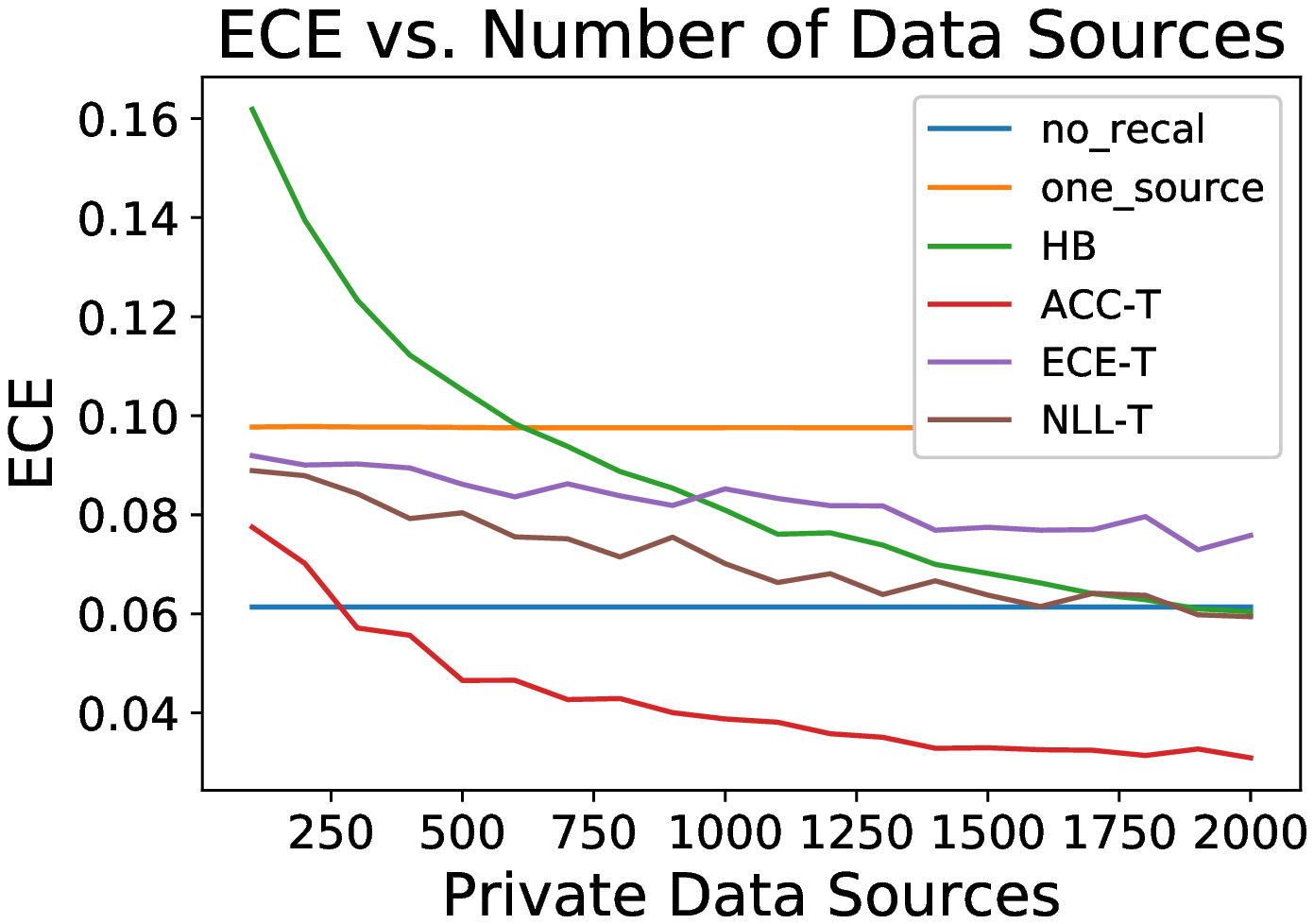}
        \caption{Samples = 10, $\epsilon = 1.0$}
    \end{subfigure}
    \hfill
    \begin{subfigure}[b]{0.325\textwidth}
        \centering
        \includegraphics[width=\textwidth]{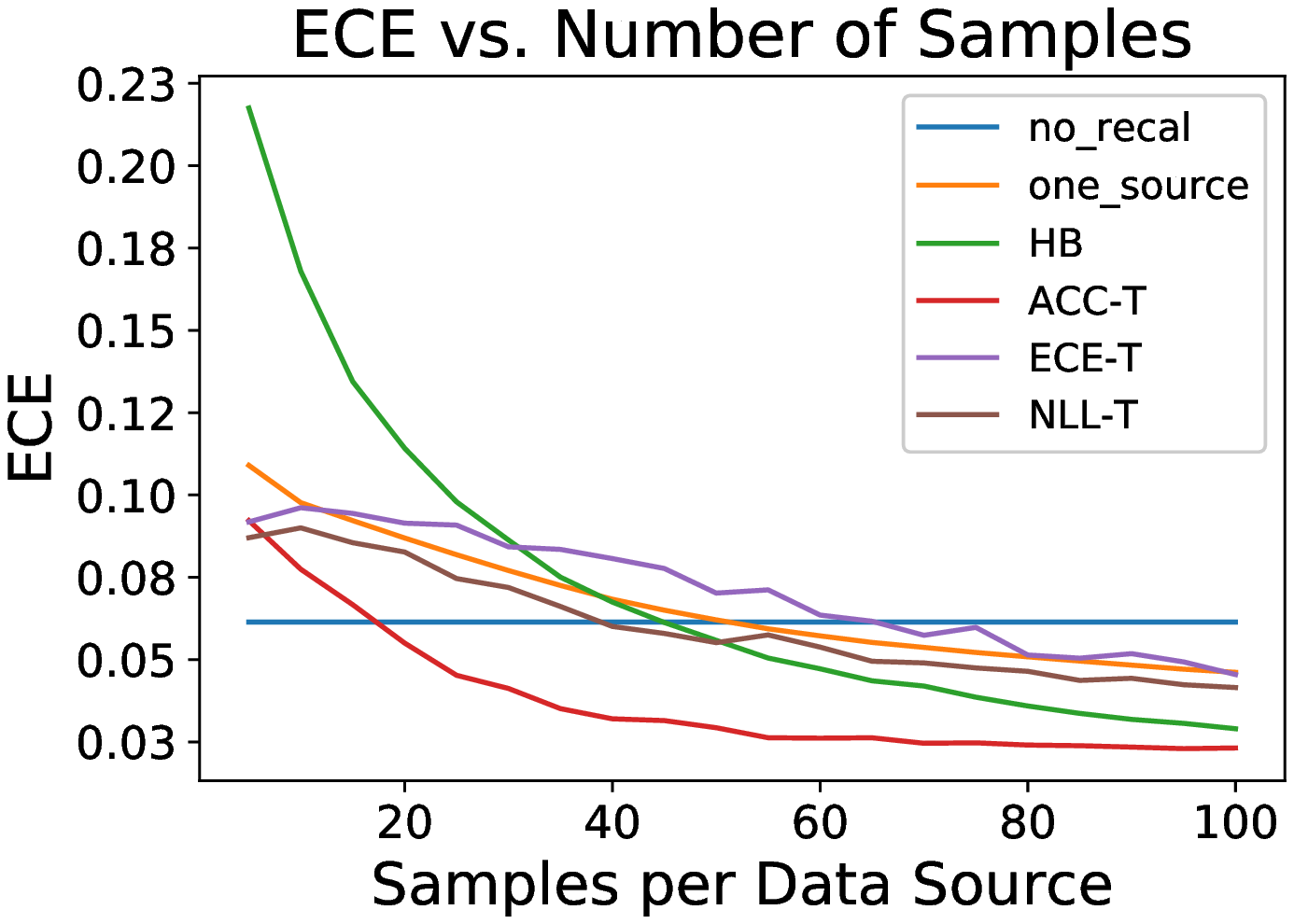}
        \caption{Sources = 100, $\epsilon = 1.0$}
    \end{subfigure}
    \hfill
    \begin{subfigure}[b]{0.325\textwidth}
        \centering
        \includegraphics[width=\textwidth]{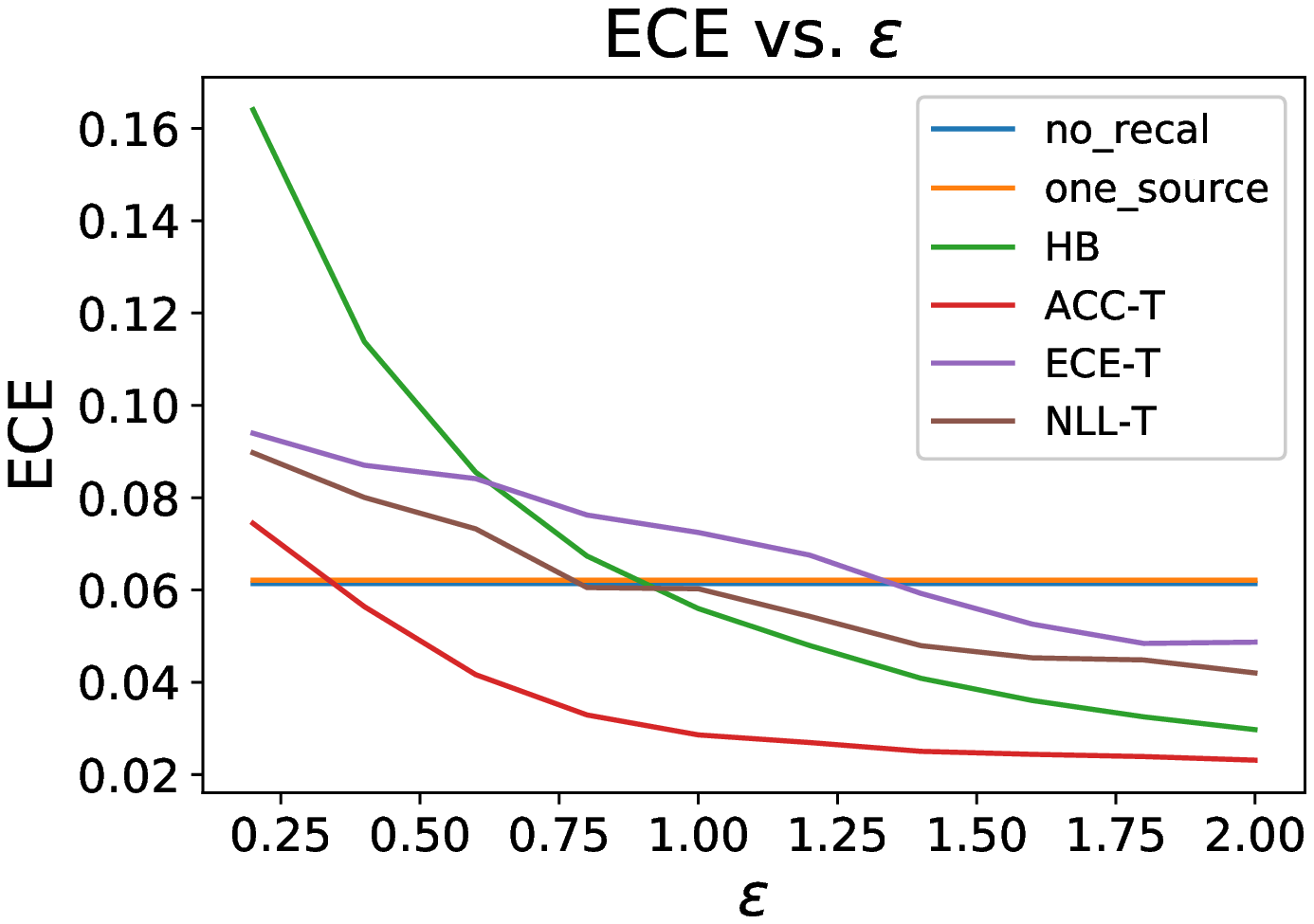}
        \caption{Samples = 50, Sources = 100}
    \end{subfigure}
\end{figure}

\begin{figure}[H]
    \centering
    \captionsetup{labelformat=empty}
    \caption{ImageNet, elastic transform perturbation}
    \begin{subfigure}[b]{0.325\textwidth}
        \centering
        \includegraphics[width=\textwidth]{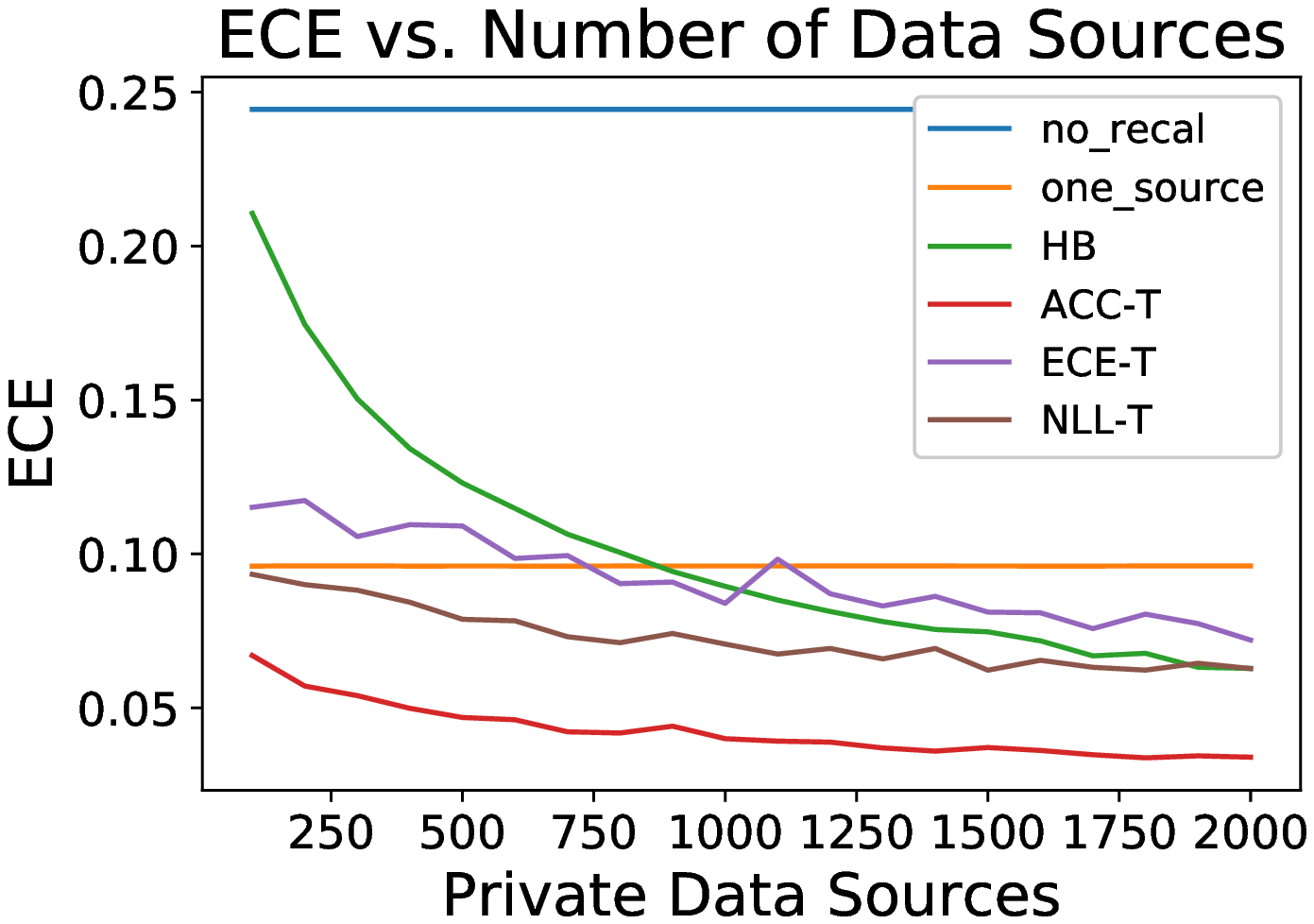}
        \caption{Samples = 10, $\epsilon = 1.0$}
    \end{subfigure}
    \hfill
    \begin{subfigure}[b]{0.325\textwidth}
        \centering
        \includegraphics[width=\textwidth]{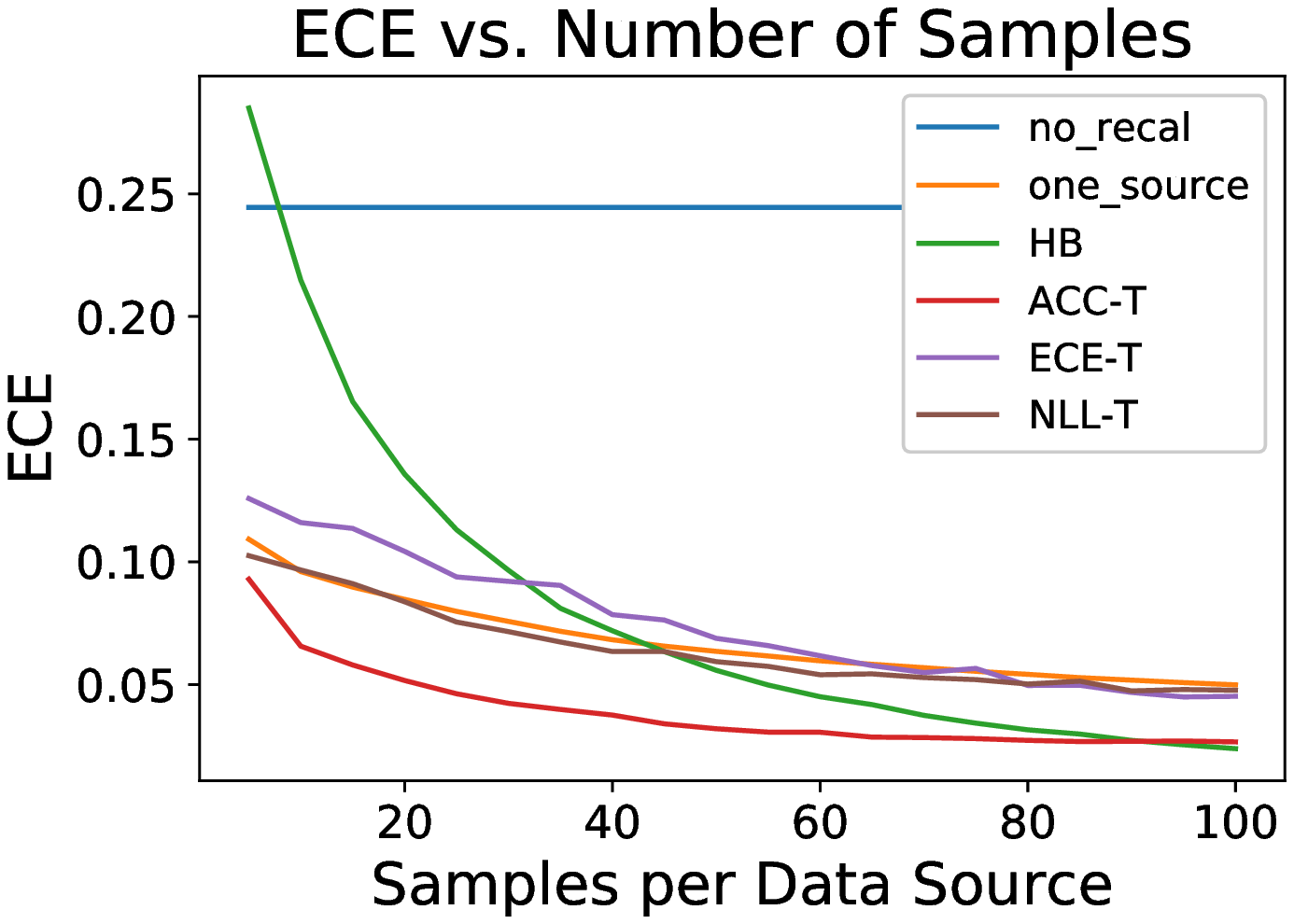}
        \caption{Sources = 100, $\epsilon = 1.0$}
    \end{subfigure}
    \hfill
    \begin{subfigure}[b]{0.325\textwidth}
        \centering
        \includegraphics[width=\textwidth]{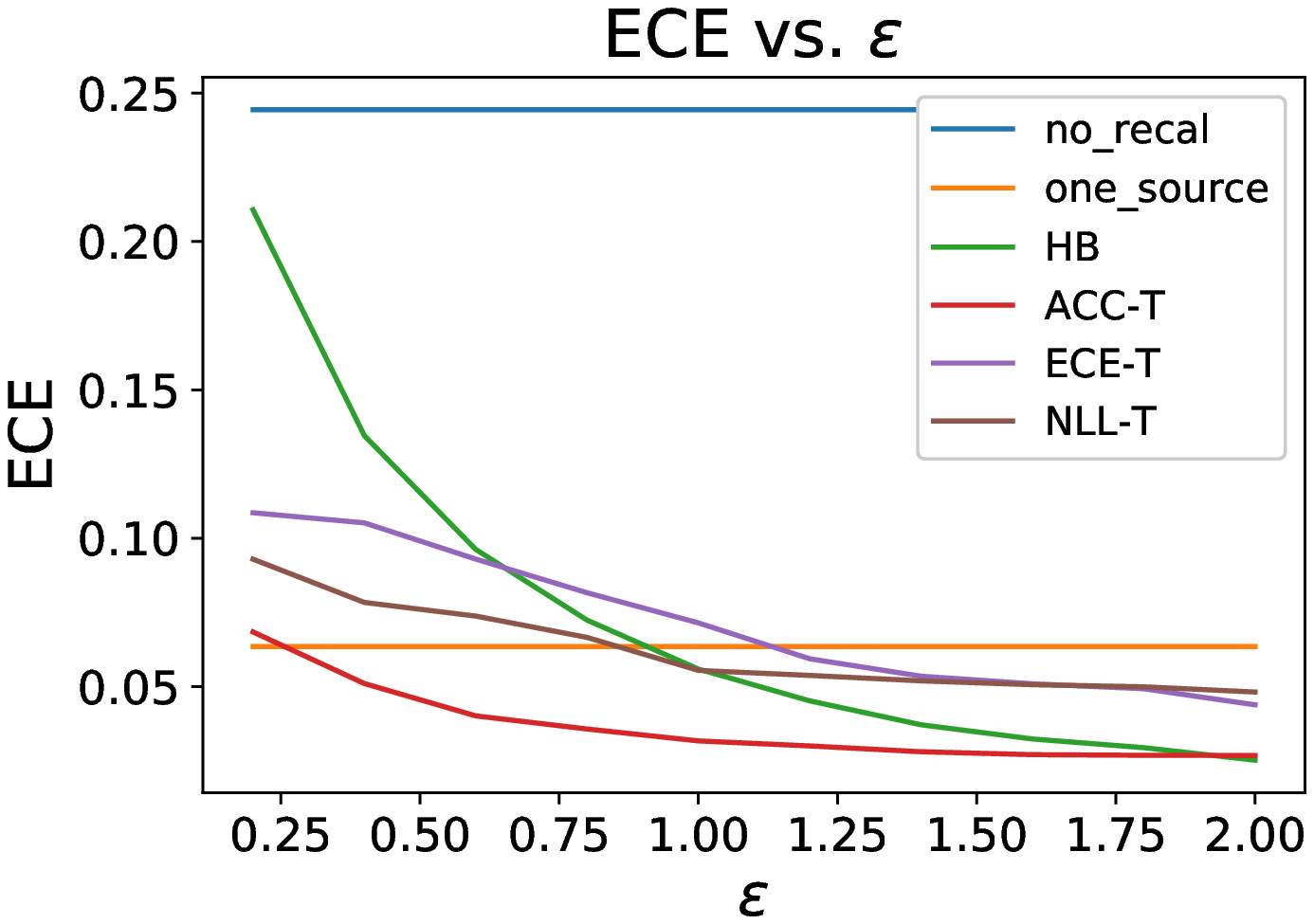}
        \caption{Samples = 50, Sources = 100}
    \end{subfigure}
\end{figure}

\begin{figure}[H]
    \centering
    \captionsetup{labelformat=empty}
    \caption{ImageNet, fog perturbation}
    \begin{subfigure}[b]{0.325\textwidth}
        \centering
        \includegraphics[width=\textwidth]{figures/ImageNet/ImageNet_vary-hospitals_fog.eps}
        \caption{Samples = 10, $\epsilon = 1.0$}
    \end{subfigure}
    \hfill
    \begin{subfigure}[b]{0.325\textwidth}
        \centering
        \includegraphics[width=\textwidth]{figures/ImageNet/ImageNet_vary-samples_fog.eps}
        \caption{Sources = 100, $\epsilon = 1.0$}
    \end{subfigure}
    \hfill
    \begin{subfigure}[b]{0.325\textwidth}
        \centering
        \includegraphics[width=\textwidth]{figures/ImageNet/ImageNet_vary-epsilon_fog.eps}
        \caption{Samples = 50, Sources = 100}
    \end{subfigure}
\end{figure}

\begin{figure}[H]
    \centering
    \captionsetup{labelformat=empty}
    \caption{ImageNet, frost perturbation}
    \begin{subfigure}[b]{0.325\textwidth}
        \centering
        \includegraphics[width=\textwidth]{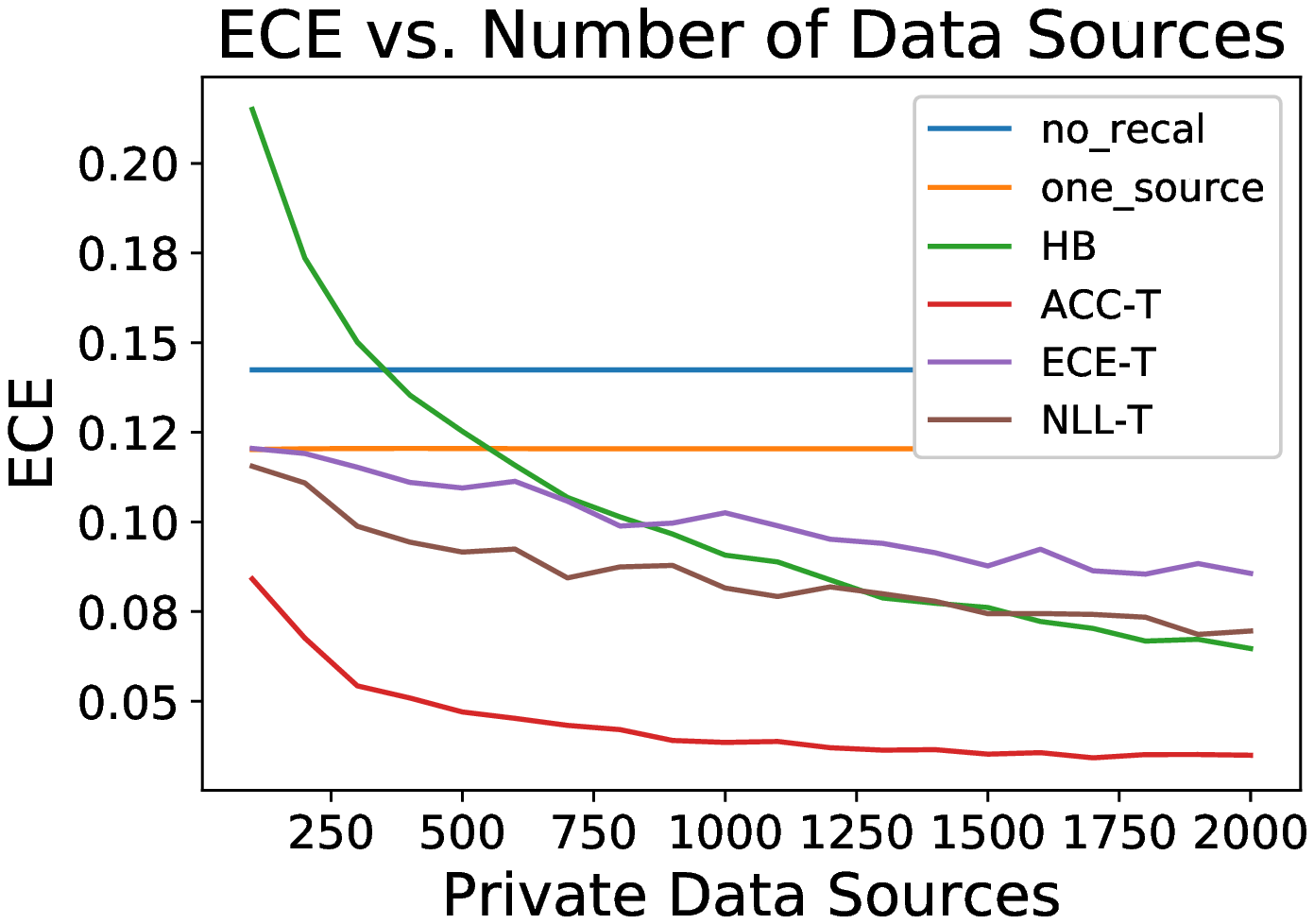}
        \caption{Samples = 10, $\epsilon = 1.0$}
    \end{subfigure}
    \hfill
    \begin{subfigure}[b]{0.325\textwidth}
        \centering
        \includegraphics[width=\textwidth]{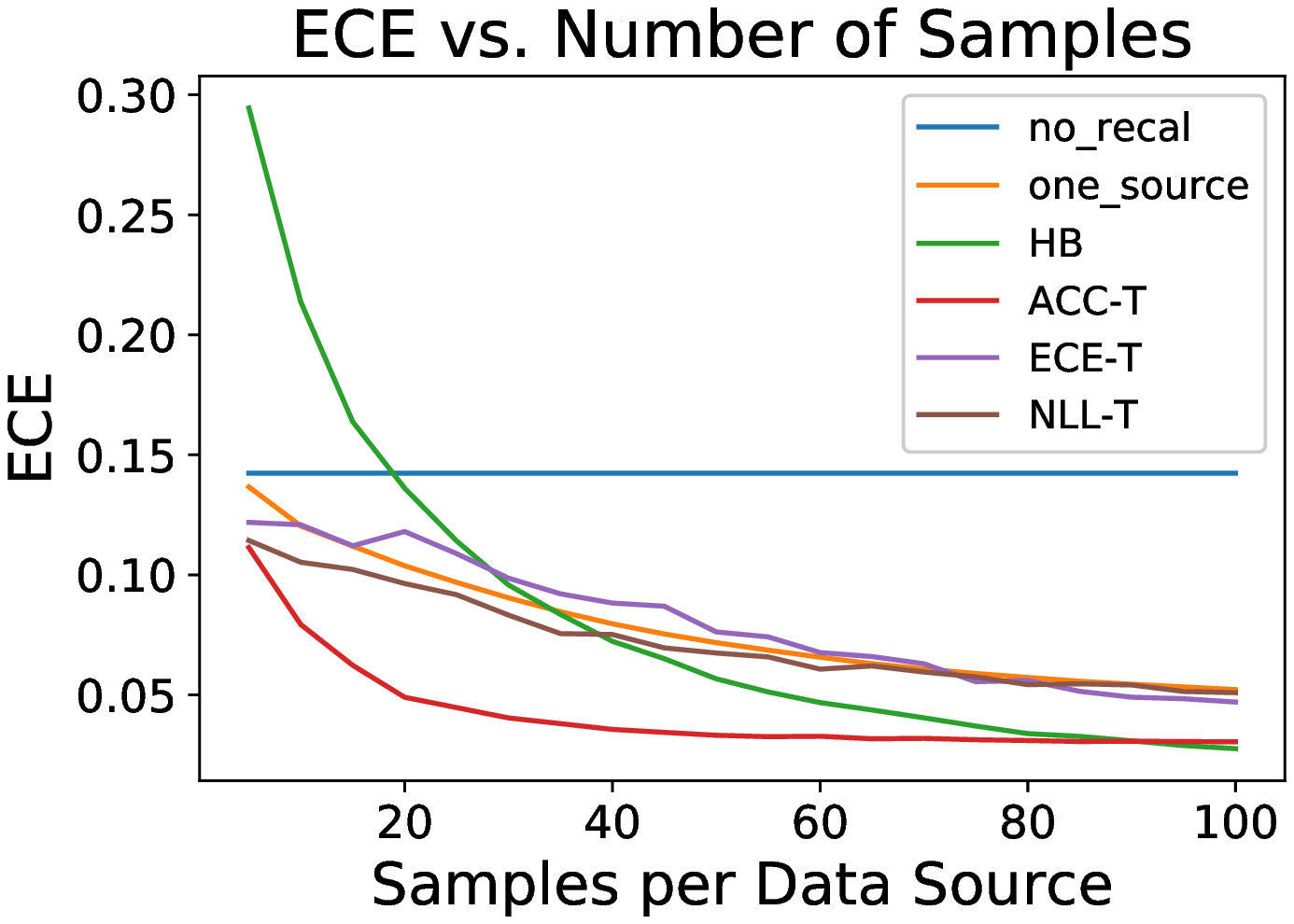}
        \caption{Sources = 100, $\epsilon = 1.0$}
    \end{subfigure}
    \hfill
    \begin{subfigure}[b]{0.325\textwidth}
        \centering
        \includegraphics[width=\textwidth]{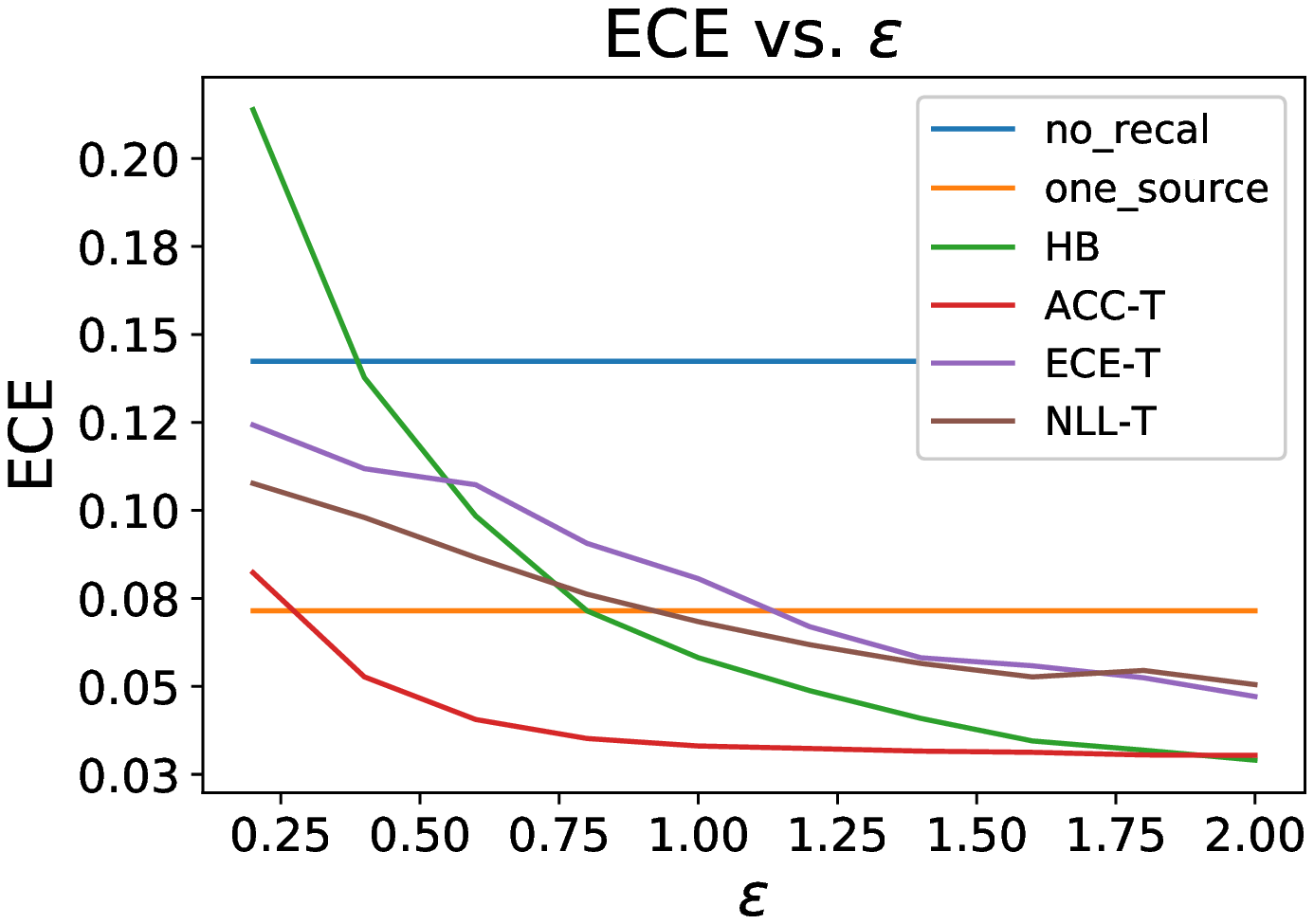}
        \caption{Samples = 50, Sources = 100}
    \end{subfigure}
\end{figure}

\begin{figure}[H]
    \centering
    \captionsetup{labelformat=empty}
    \caption{ImageNet, Gaussian noise perturbation}
    \begin{subfigure}[b]{0.325\textwidth}
        \centering
        \includegraphics[width=\textwidth]{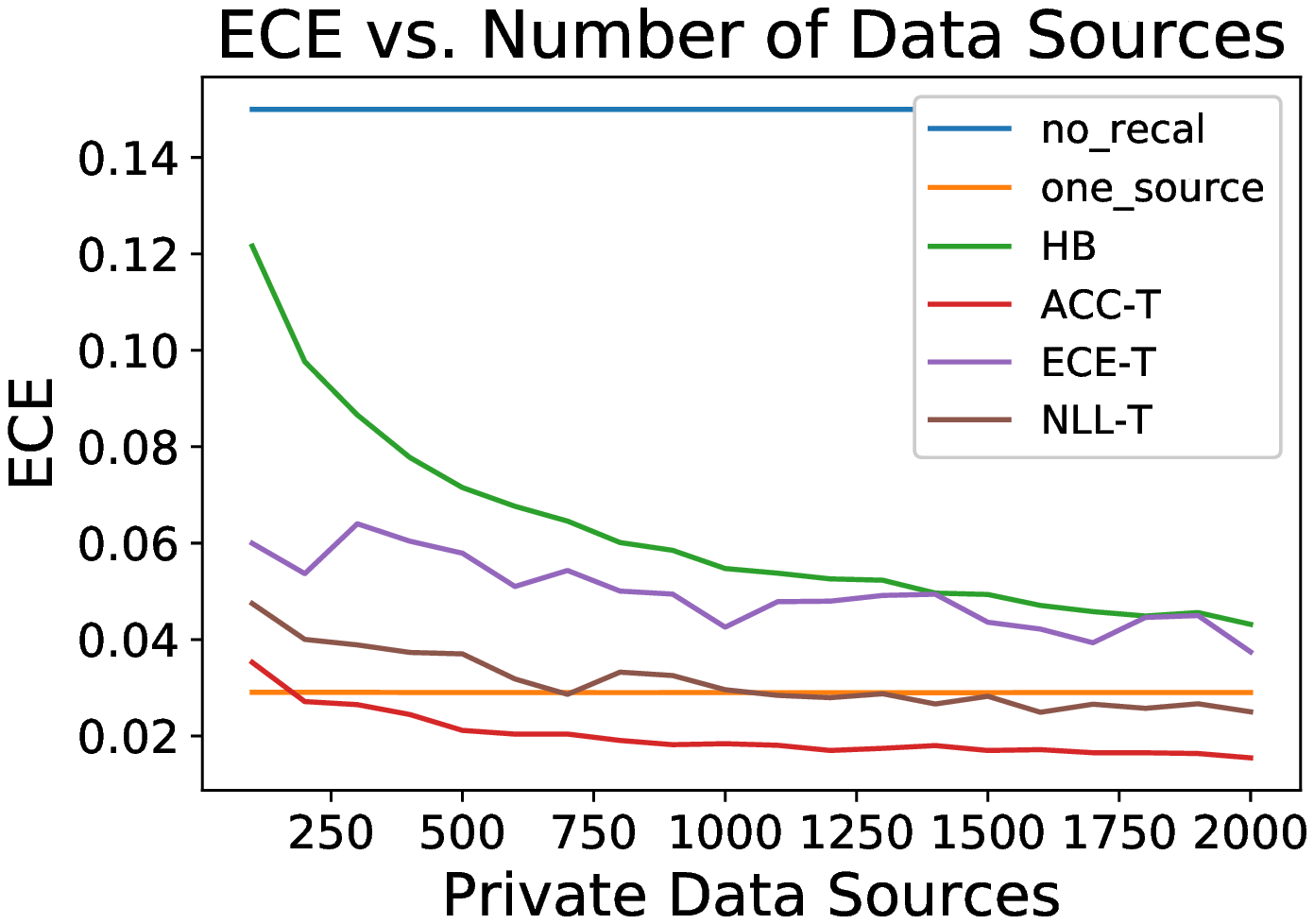}
        \caption{Samples = 10, $\epsilon = 1.0$}
    \end{subfigure}
    \hfill
    \begin{subfigure}[b]{0.325\textwidth}
        \centering
        \includegraphics[width=\textwidth]{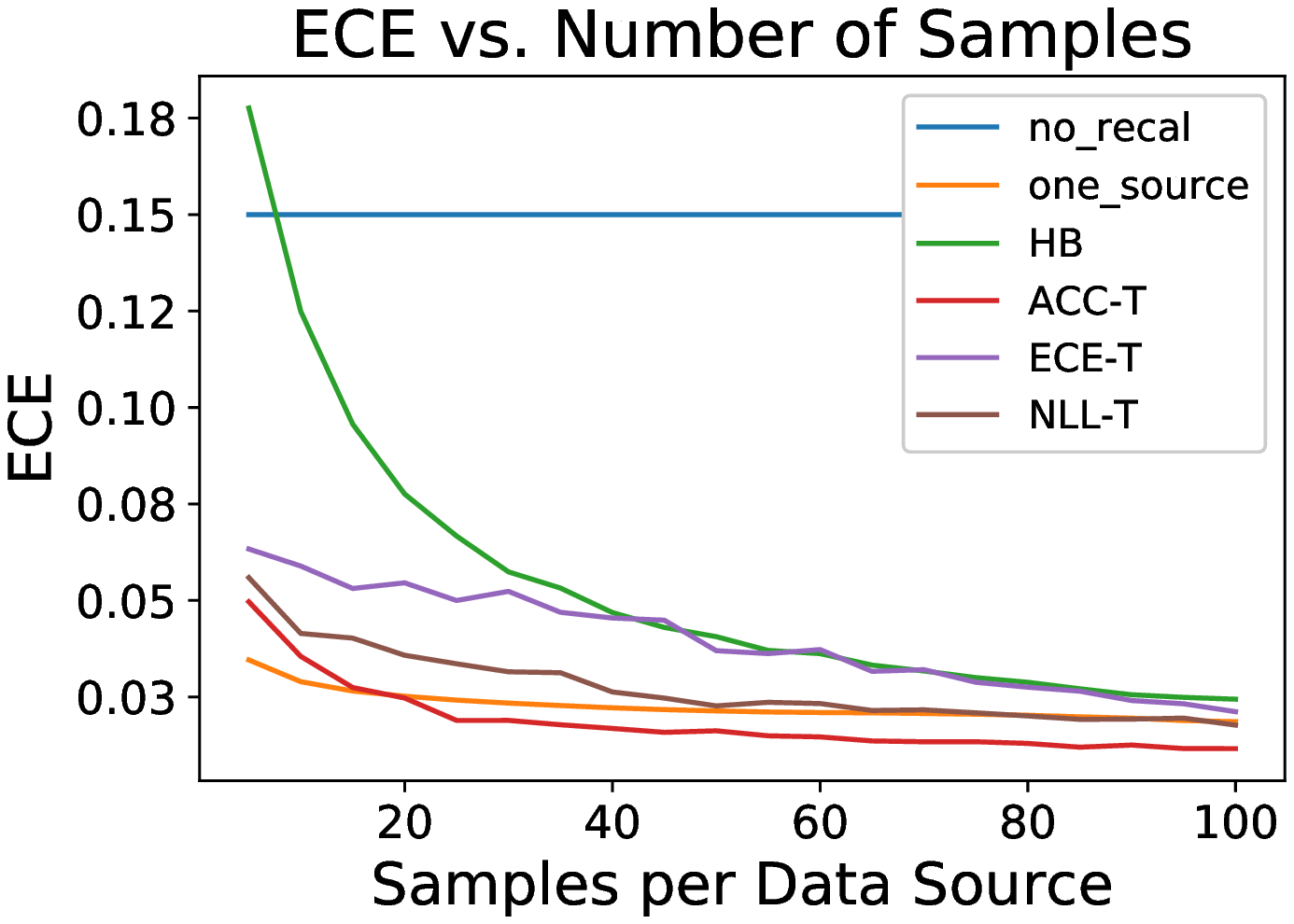}
        \caption{Sources = 100, $\epsilon = 1.0$}
    \end{subfigure}
    \hfill
    \begin{subfigure}[b]{0.325\textwidth}
        \centering
        \includegraphics[width=\textwidth]{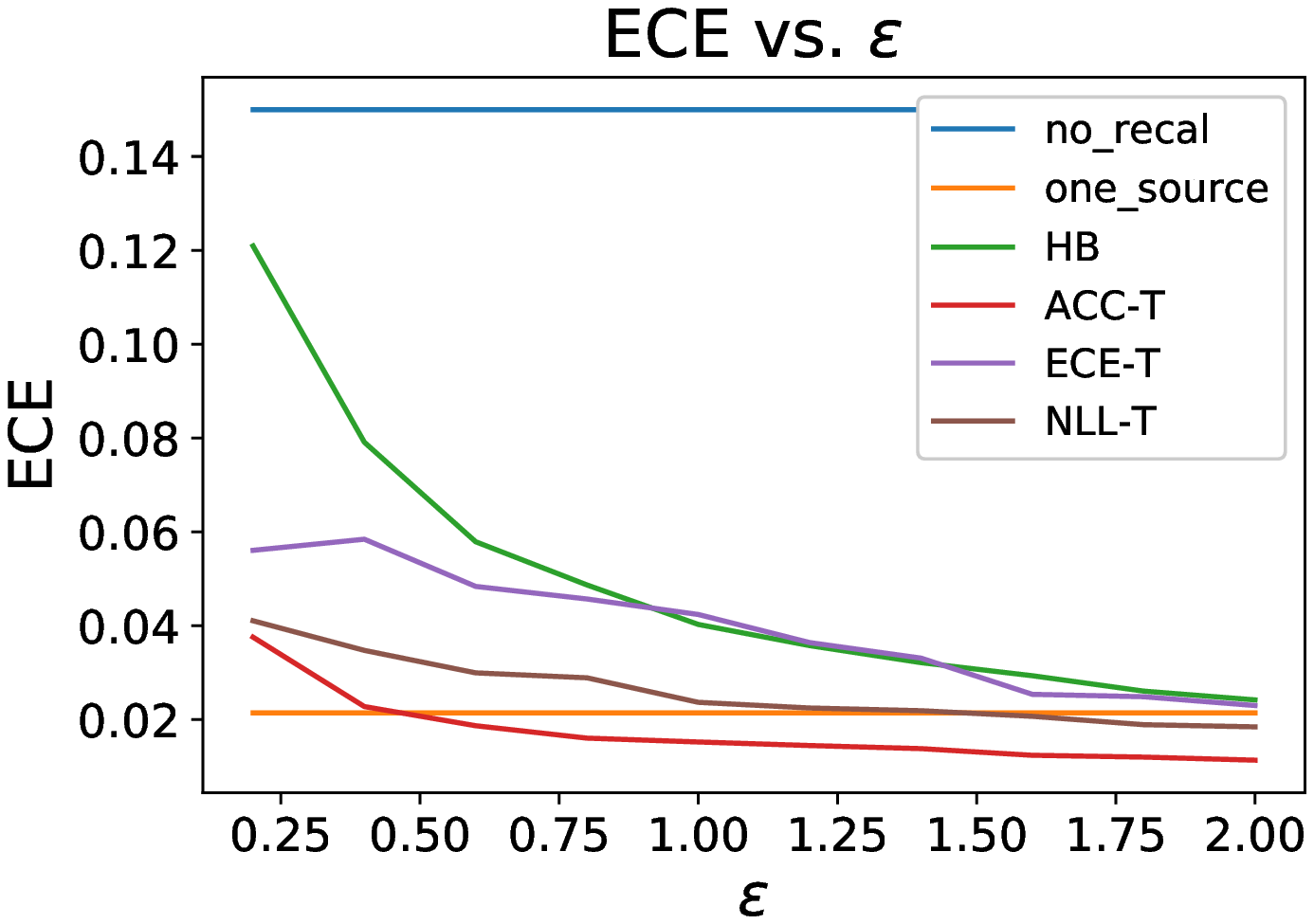}
        \caption{Samples = 50, Sources = 100}
    \end{subfigure}
\end{figure}

\begin{figure}[H]
    \centering
    \captionsetup{labelformat=empty}
    \caption{ImageNet, glass blur perturbation}
    \begin{subfigure}[b]{0.325\textwidth}
        \centering
        \includegraphics[width=\textwidth]{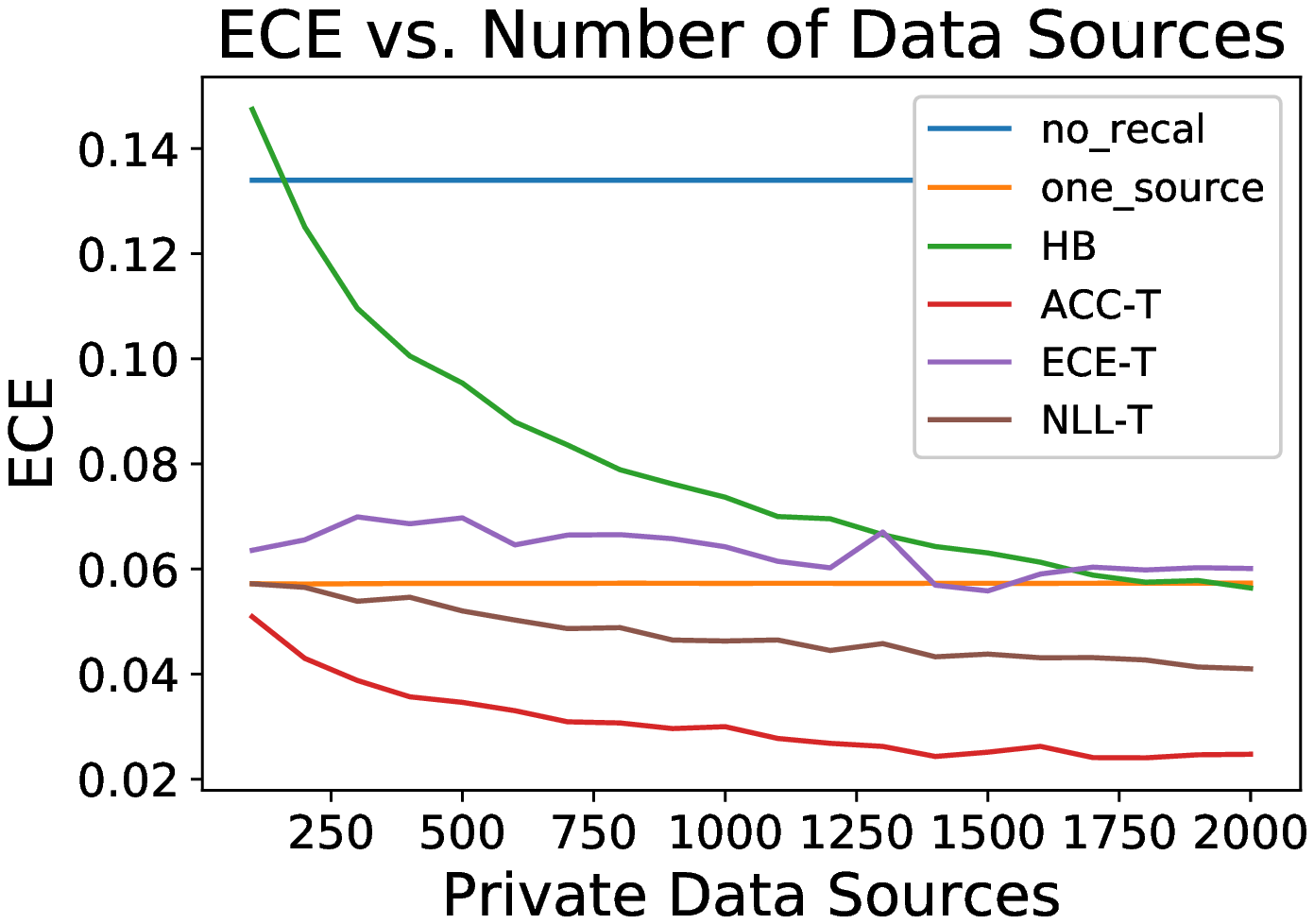}
        \caption{Samples = 10, $\epsilon = 1.0$}
    \end{subfigure}
    \hfill
    \begin{subfigure}[b]{0.325\textwidth}
        \centering
        \includegraphics[width=\textwidth]{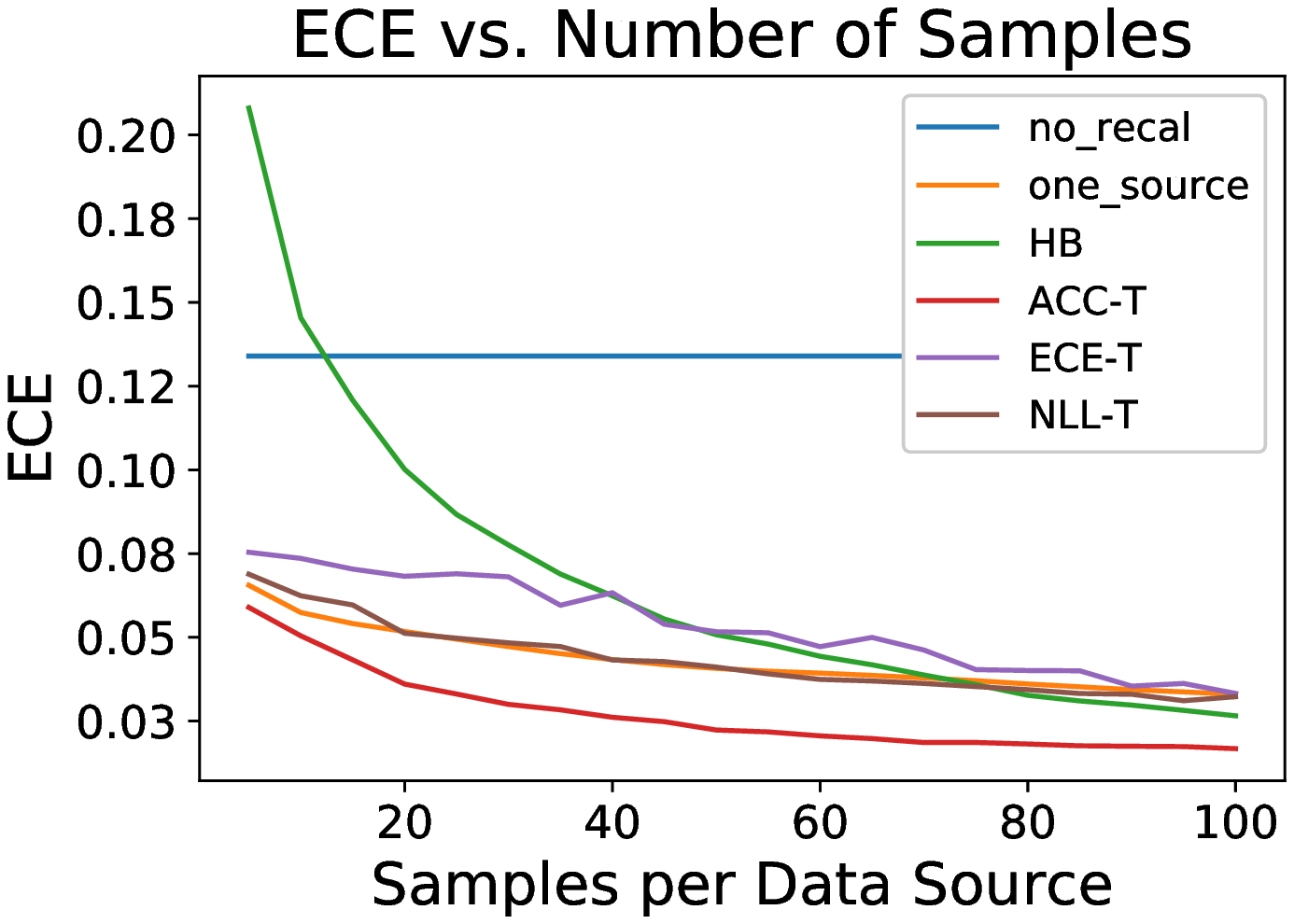}
        \caption{Sources = 100, $\epsilon = 1.0$}
    \end{subfigure}
    \hfill
    \begin{subfigure}[b]{0.325\textwidth}
        \centering
        \includegraphics[width=\textwidth]{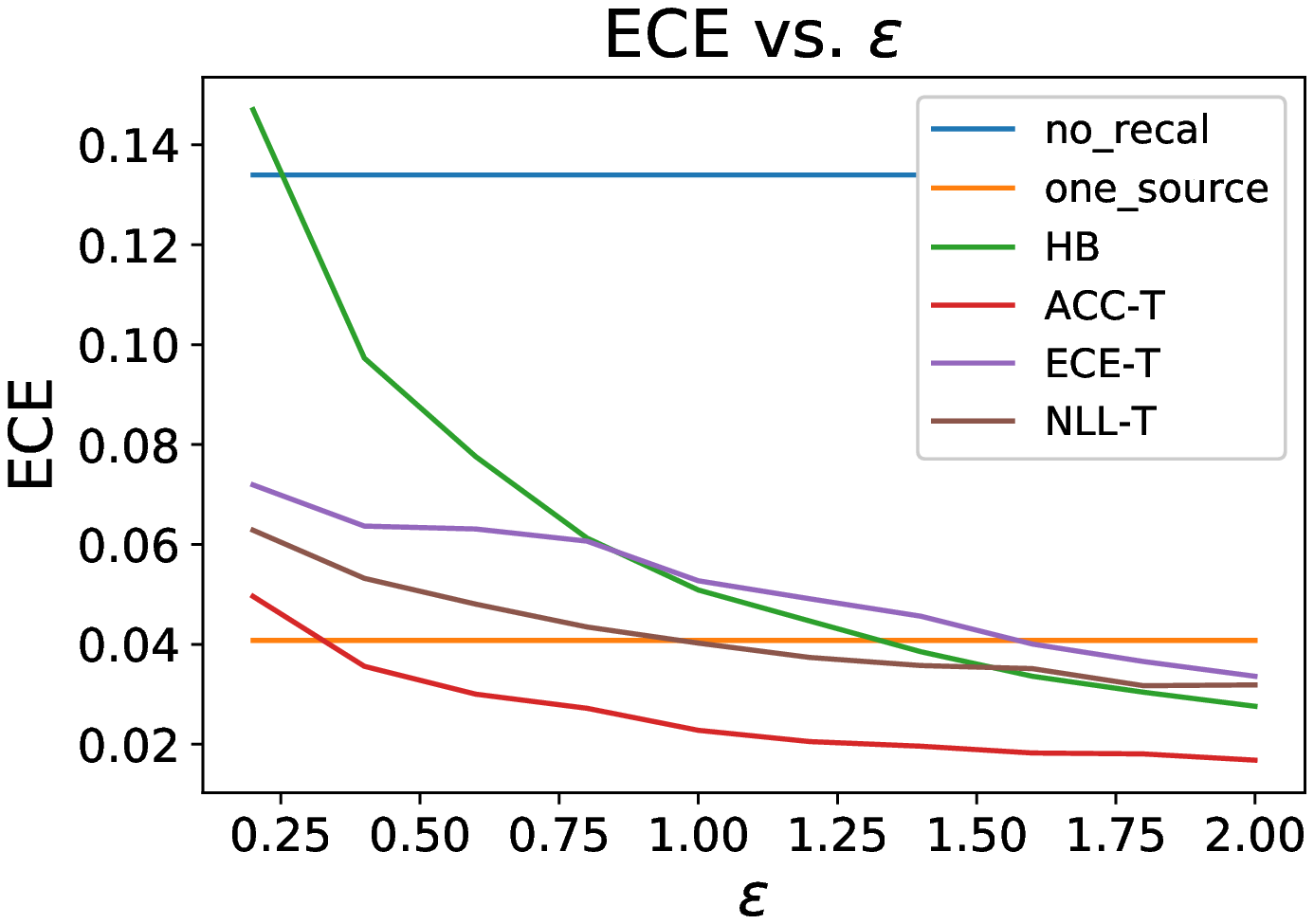}
        \caption{Samples = 50, Sources = 100}
    \end{subfigure}
\end{figure}

\begin{figure}[H]
    \centering
    \captionsetup{labelformat=empty}
    \caption{ImageNet, impulse noise perturbation}
    \begin{subfigure}[b]{0.325\textwidth}
        \centering
        \includegraphics[width=\textwidth]{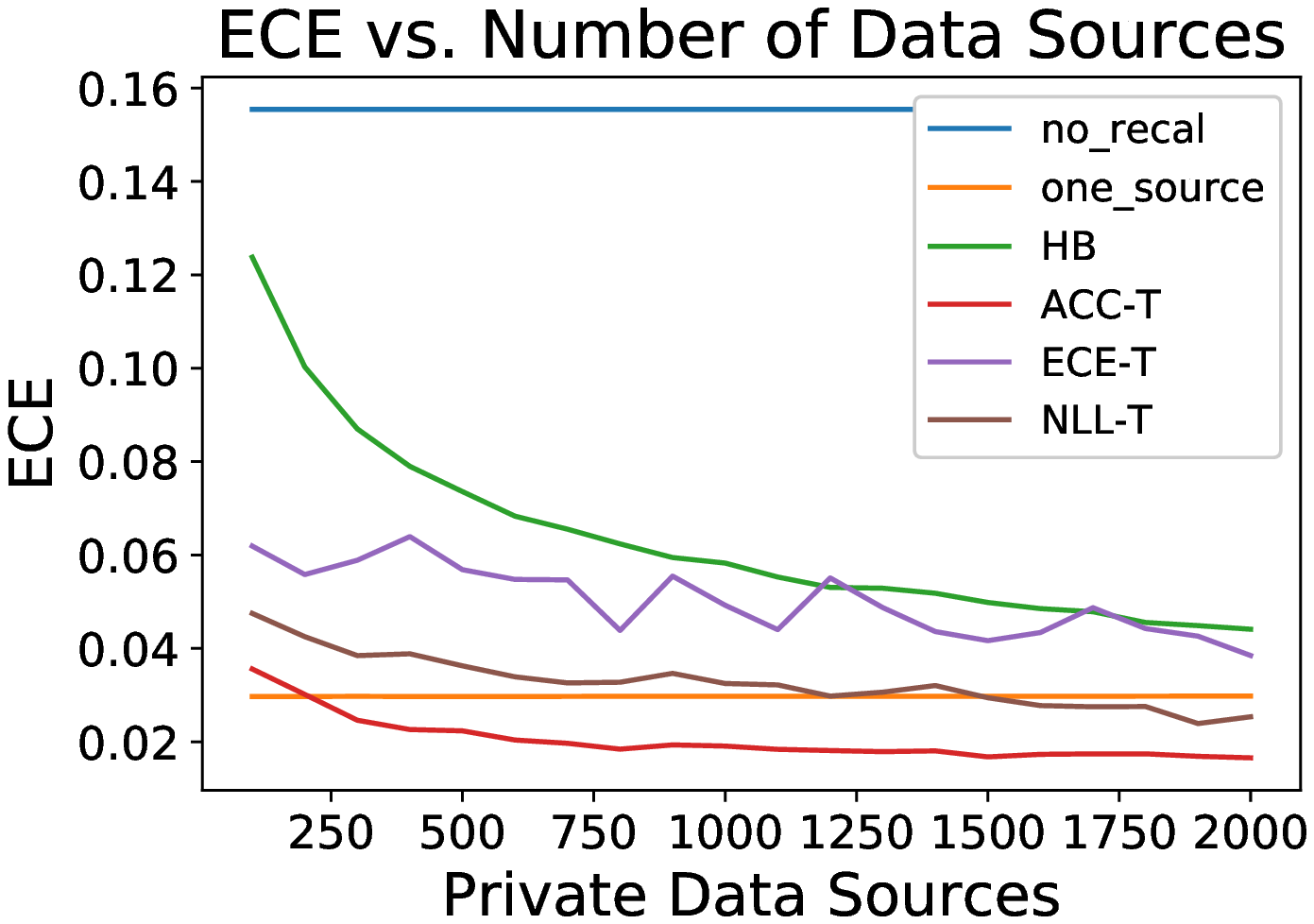}
        \caption{Samples = 10, $\epsilon = 1.0$}
    \end{subfigure}
    \hfill
    \begin{subfigure}[b]{0.325\textwidth}
        \centering
        \includegraphics[width=\textwidth]{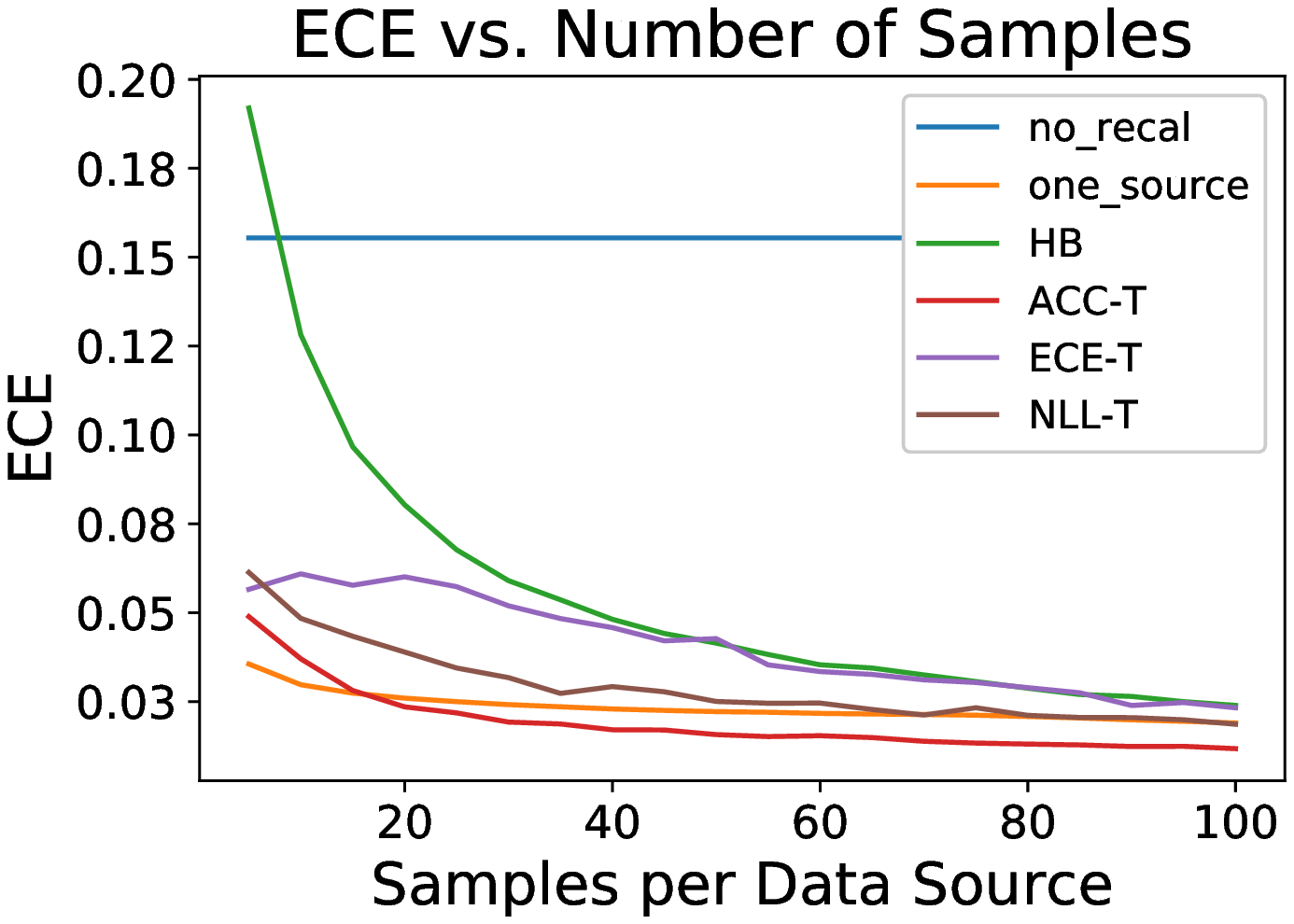}
        \caption{Sources = 100, $\epsilon = 1.0$}
    \end{subfigure}
    \hfill
    \begin{subfigure}[b]{0.325\textwidth}
        \centering
        \includegraphics[width=\textwidth]{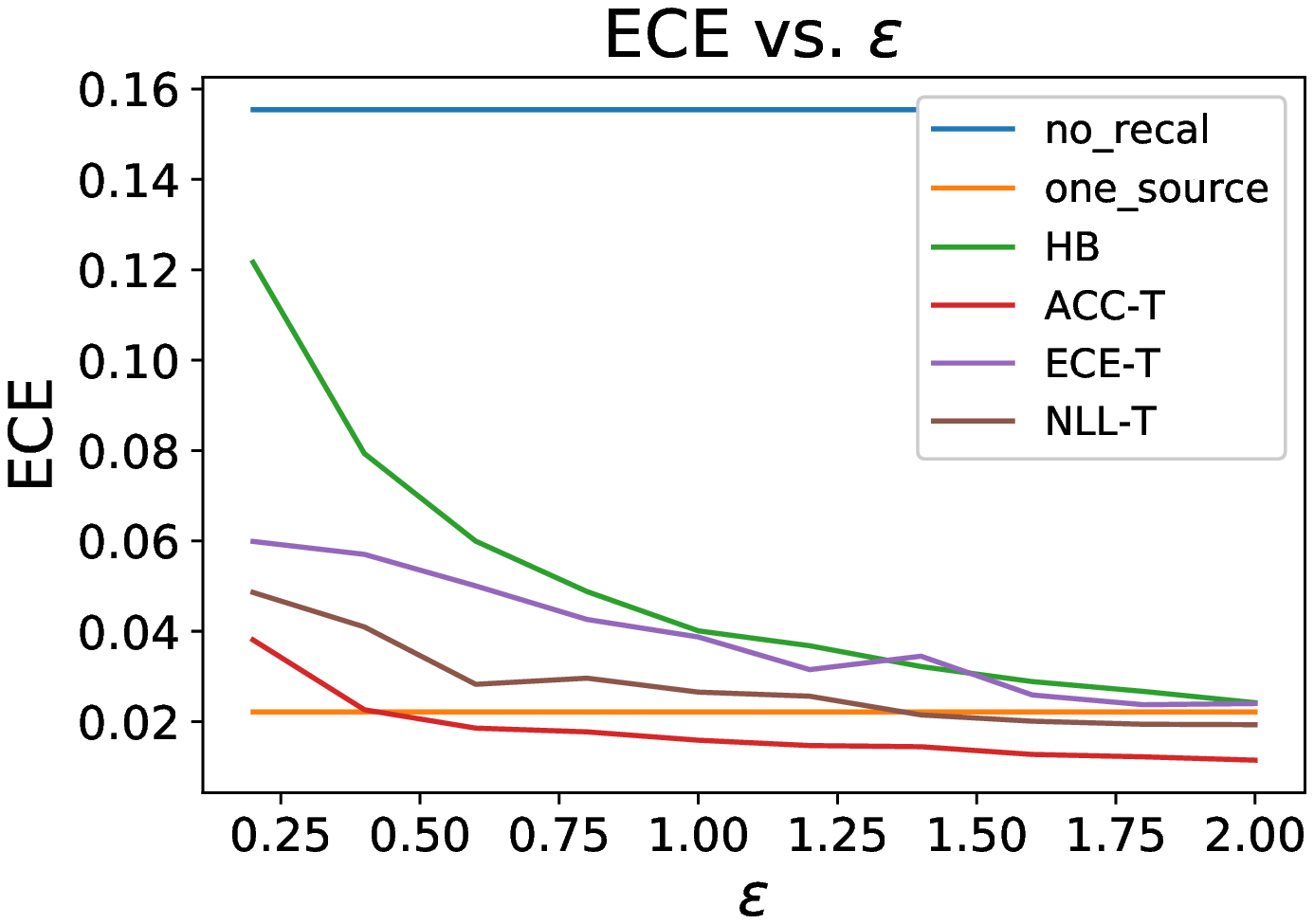}
        \caption{Samples = 50, Sources = 100}
    \end{subfigure}
\end{figure}

\begin{figure}[H]
    \centering
    \captionsetup{labelformat=empty}
    \caption{ImageNet, jpeg compression perturbation}
    \begin{subfigure}[b]{0.325\textwidth}
        \centering
        \includegraphics[width=\textwidth]{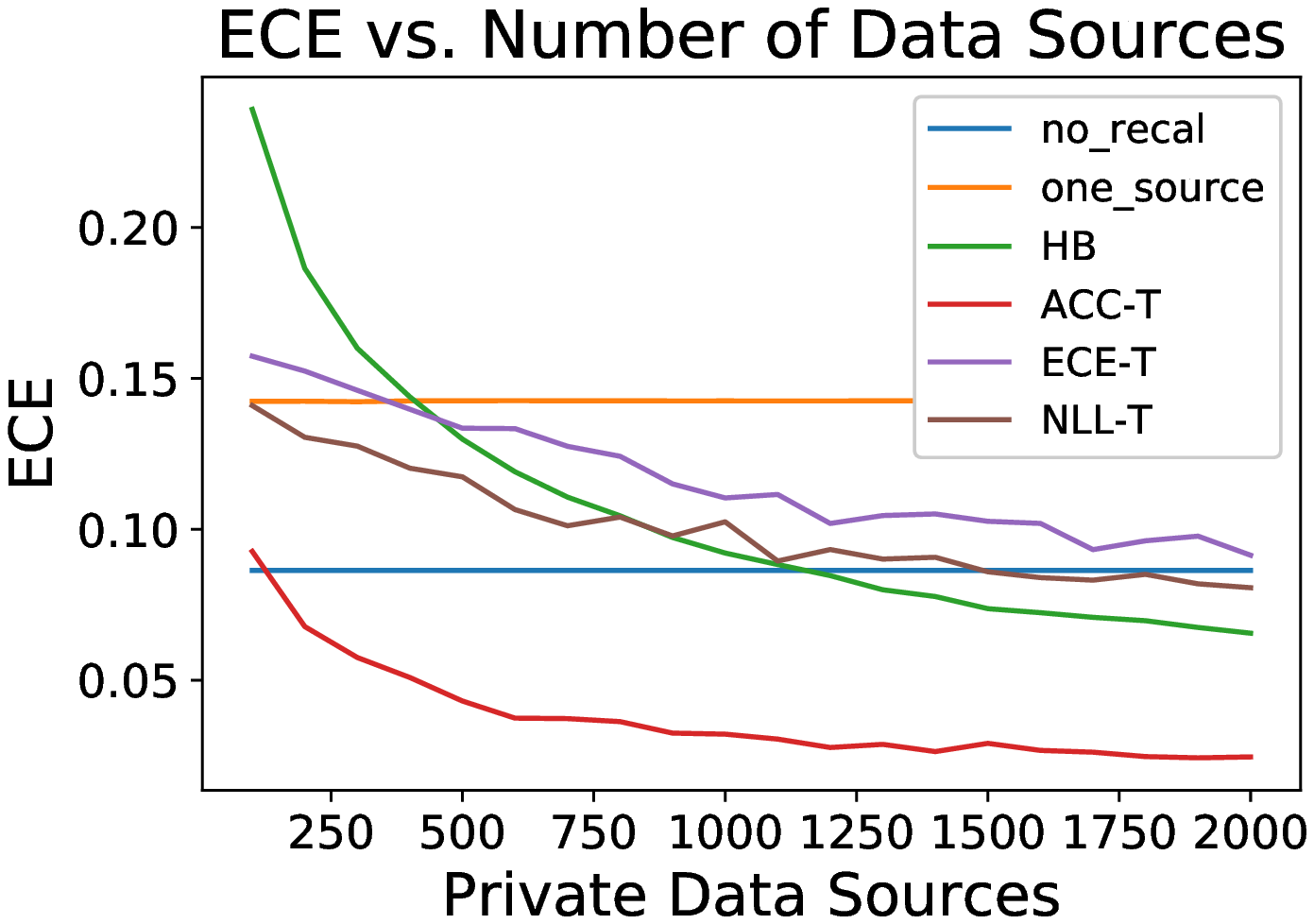}
        \caption{Samples = 10, $\epsilon = 1.0$}
    \end{subfigure}
    \hfill
    \begin{subfigure}[b]{0.325\textwidth}
        \centering
        \includegraphics[width=\textwidth]{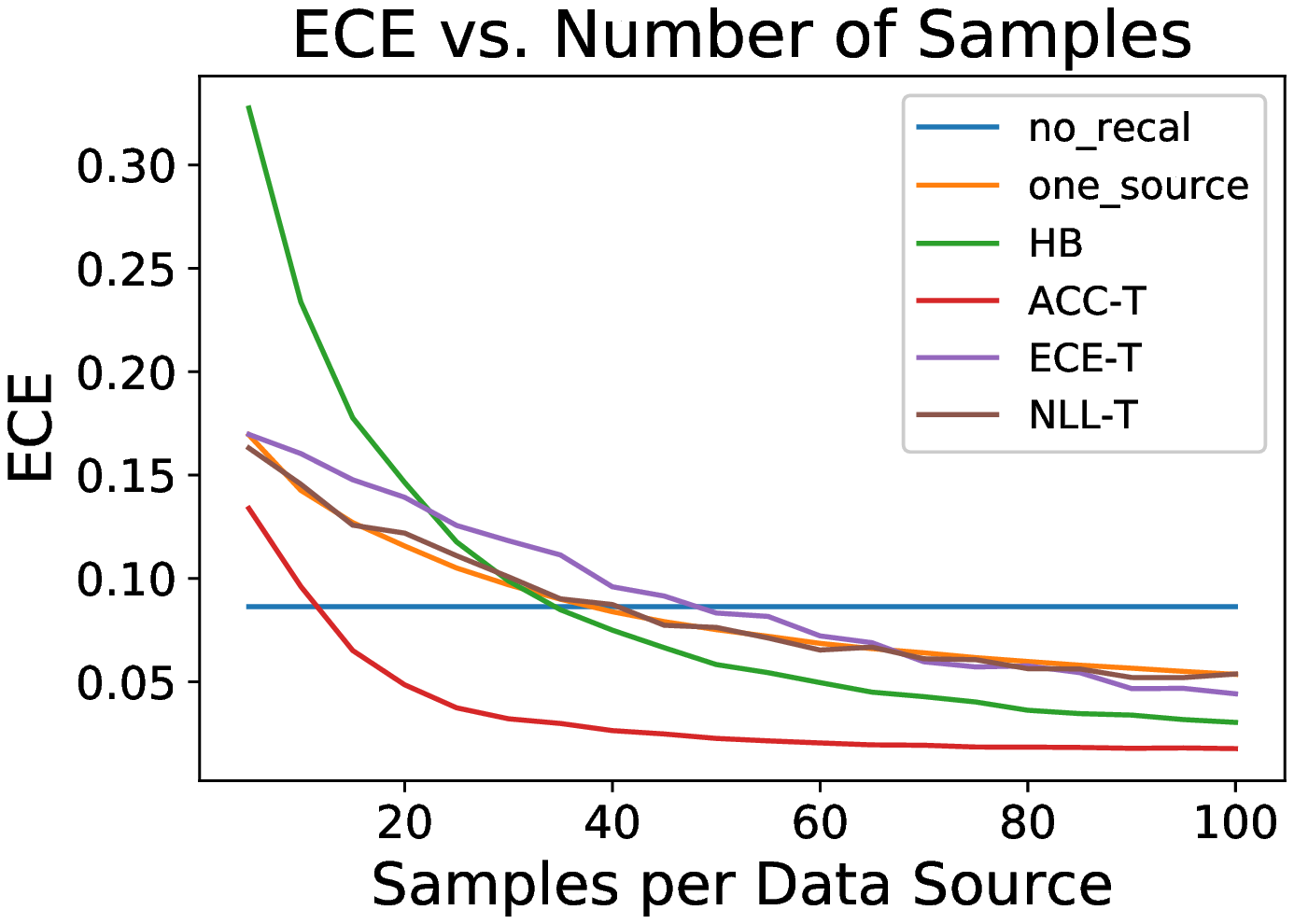}
        \caption{Sources = 100, $\epsilon = 1.0$}
    \end{subfigure}
    \hfill
    \begin{subfigure}[b]{0.325\textwidth}
        \centering
        \includegraphics[width=\textwidth]{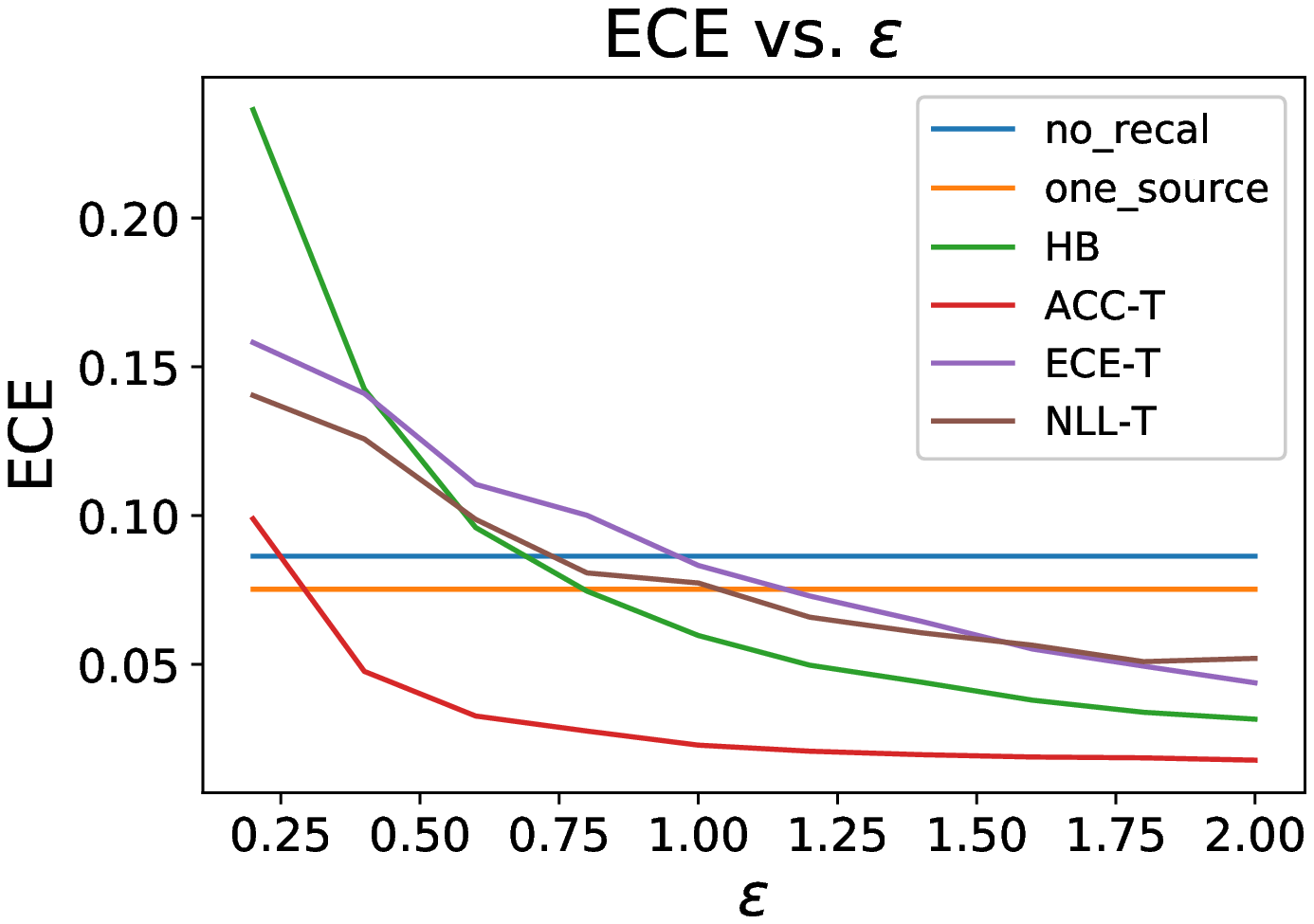}
        \caption{Samples = 50, Sources = 100}
    \end{subfigure}
\end{figure}

\begin{figure}[H]
    \centering
    \captionsetup{labelformat=empty}
    \caption{ImageNet, motion blur perturbation}
    \begin{subfigure}[b]{0.325\textwidth}
        \centering
        \includegraphics[width=\textwidth]{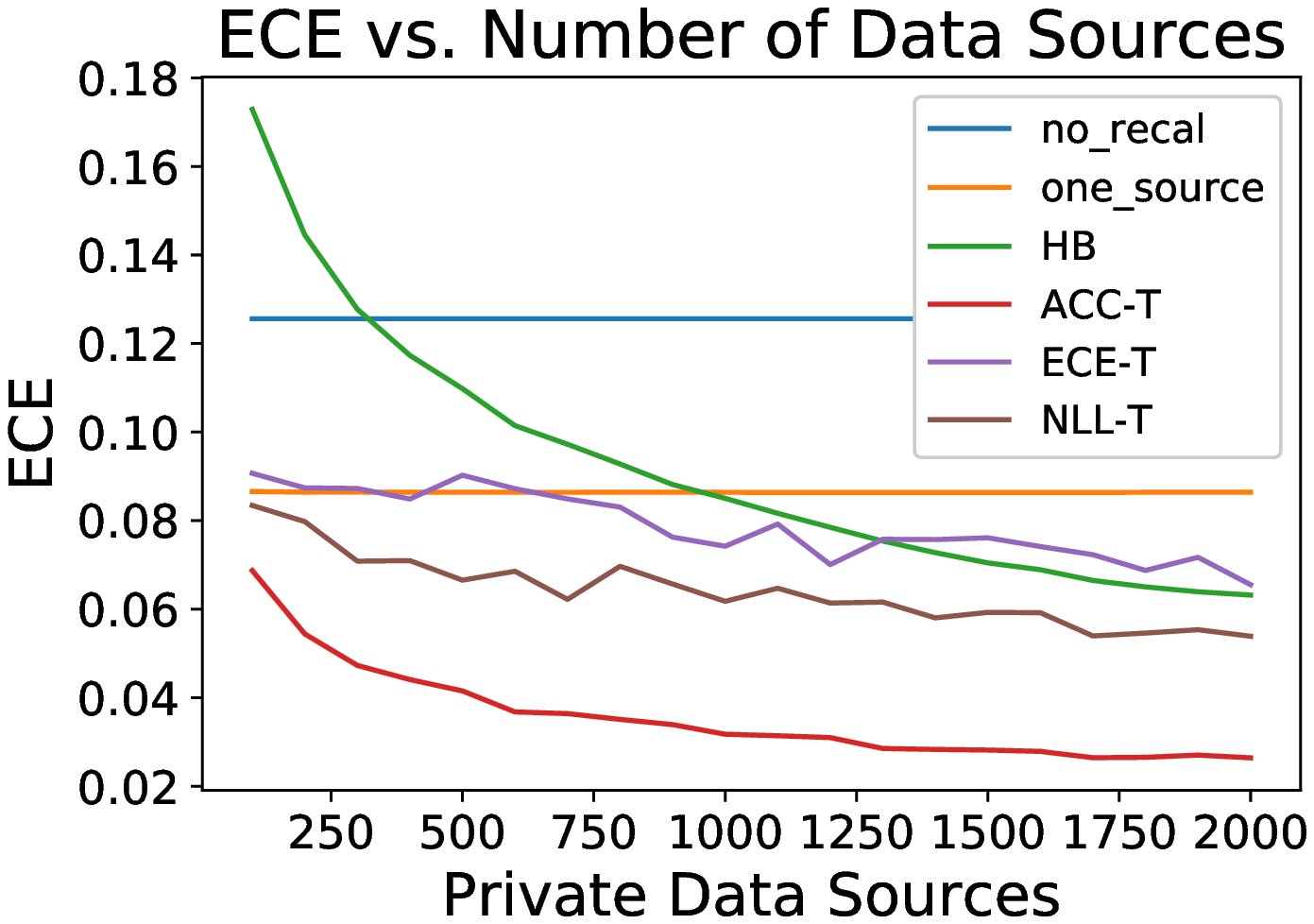}
        \caption{Samples = 10, $\epsilon = 1.0$}
    \end{subfigure}
    \hfill
    \begin{subfigure}[b]{0.325\textwidth}
        \centering
        \includegraphics[width=\textwidth]{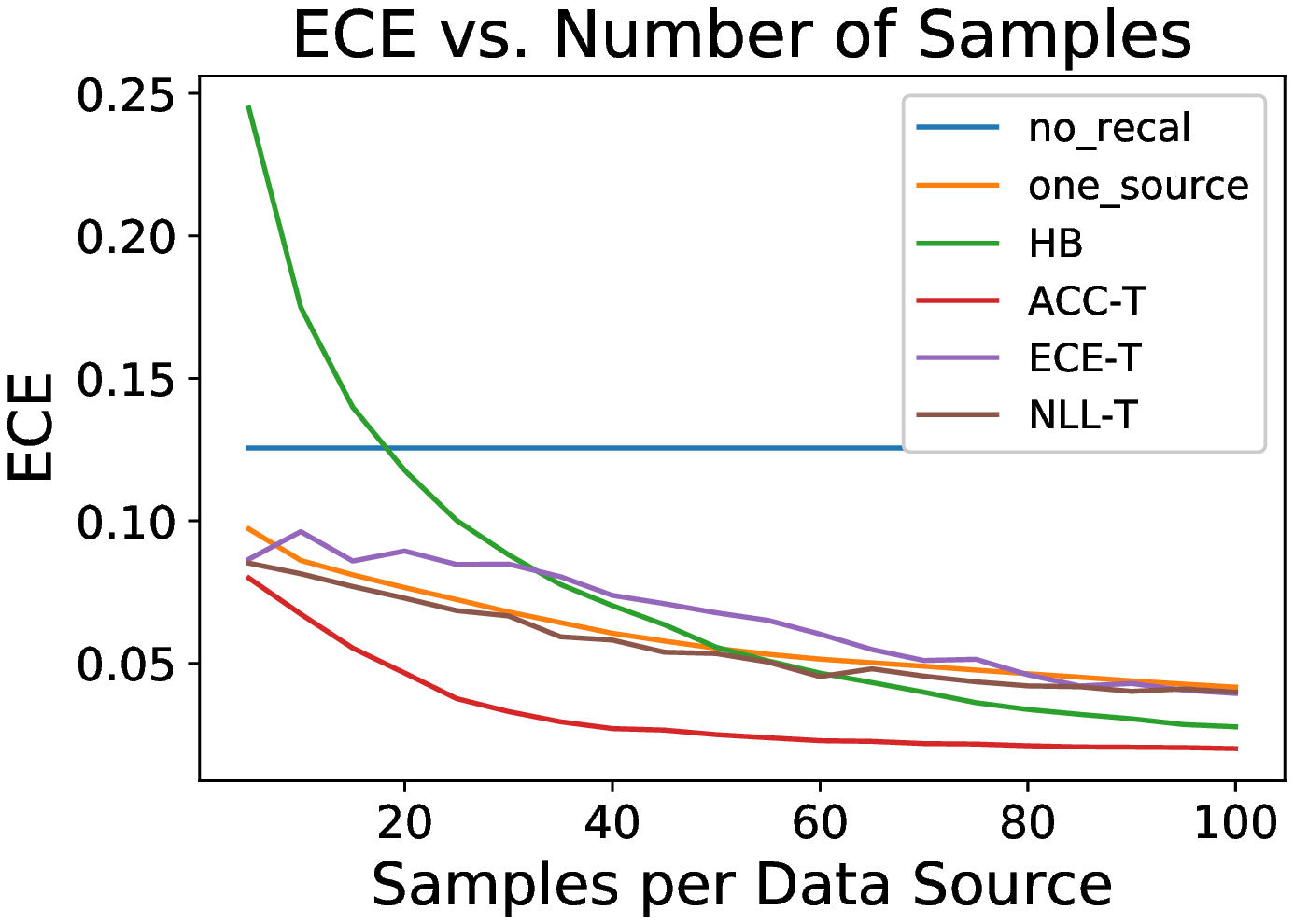}
        \caption{Sources = 100, $\epsilon = 1.0$}
    \end{subfigure}
    \hfill
    \begin{subfigure}[b]{0.325\textwidth}
        \centering
        \includegraphics[width=\textwidth]{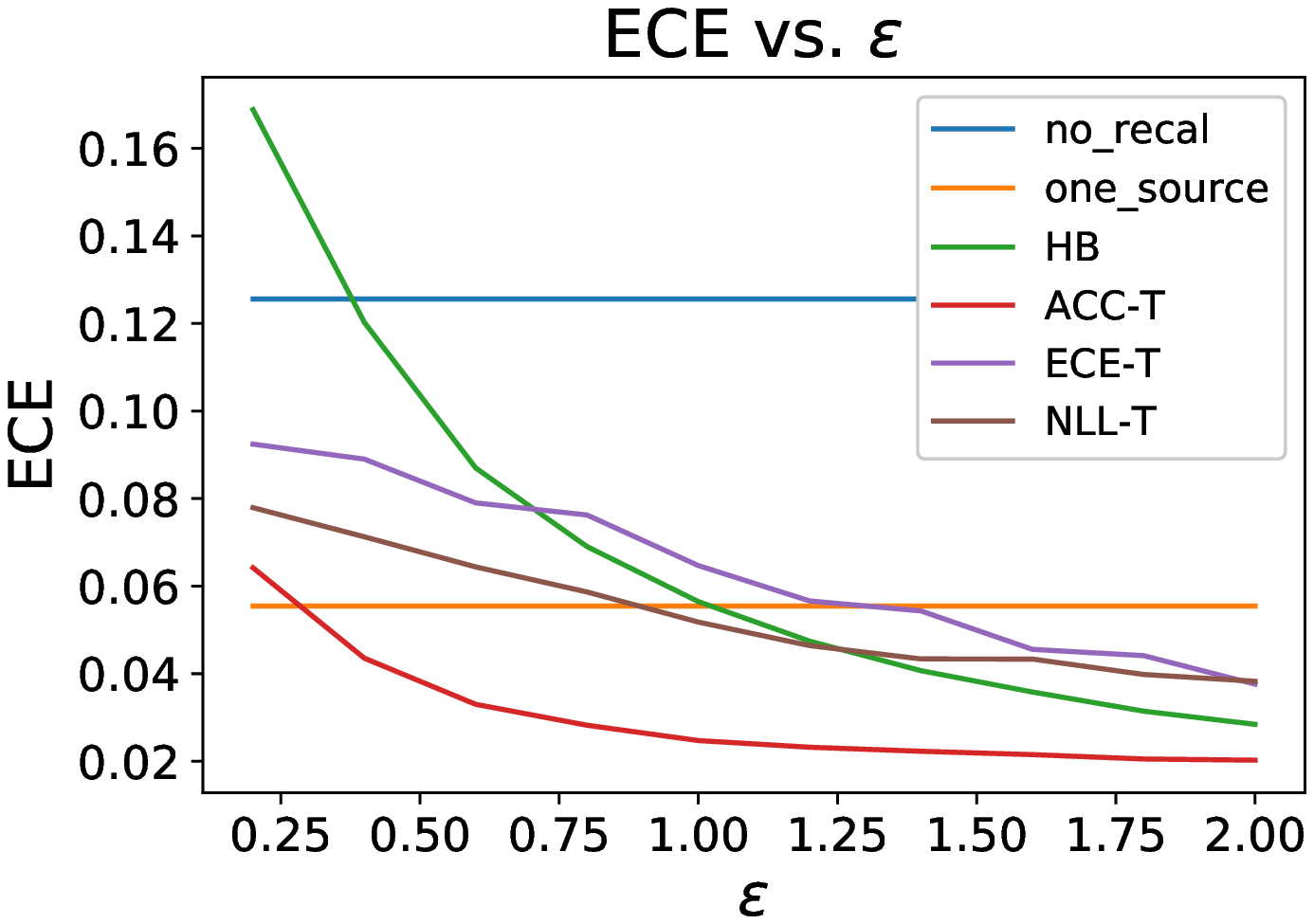}
        \caption{Samples = 50, Sources = 100}
    \end{subfigure}
\end{figure}

\begin{figure}[H]
    \centering
    \captionsetup{labelformat=empty}
    \caption{ImageNet, pixelate perturbation}
    \begin{subfigure}[b]{0.325\textwidth}
        \centering
        \includegraphics[width=\textwidth]{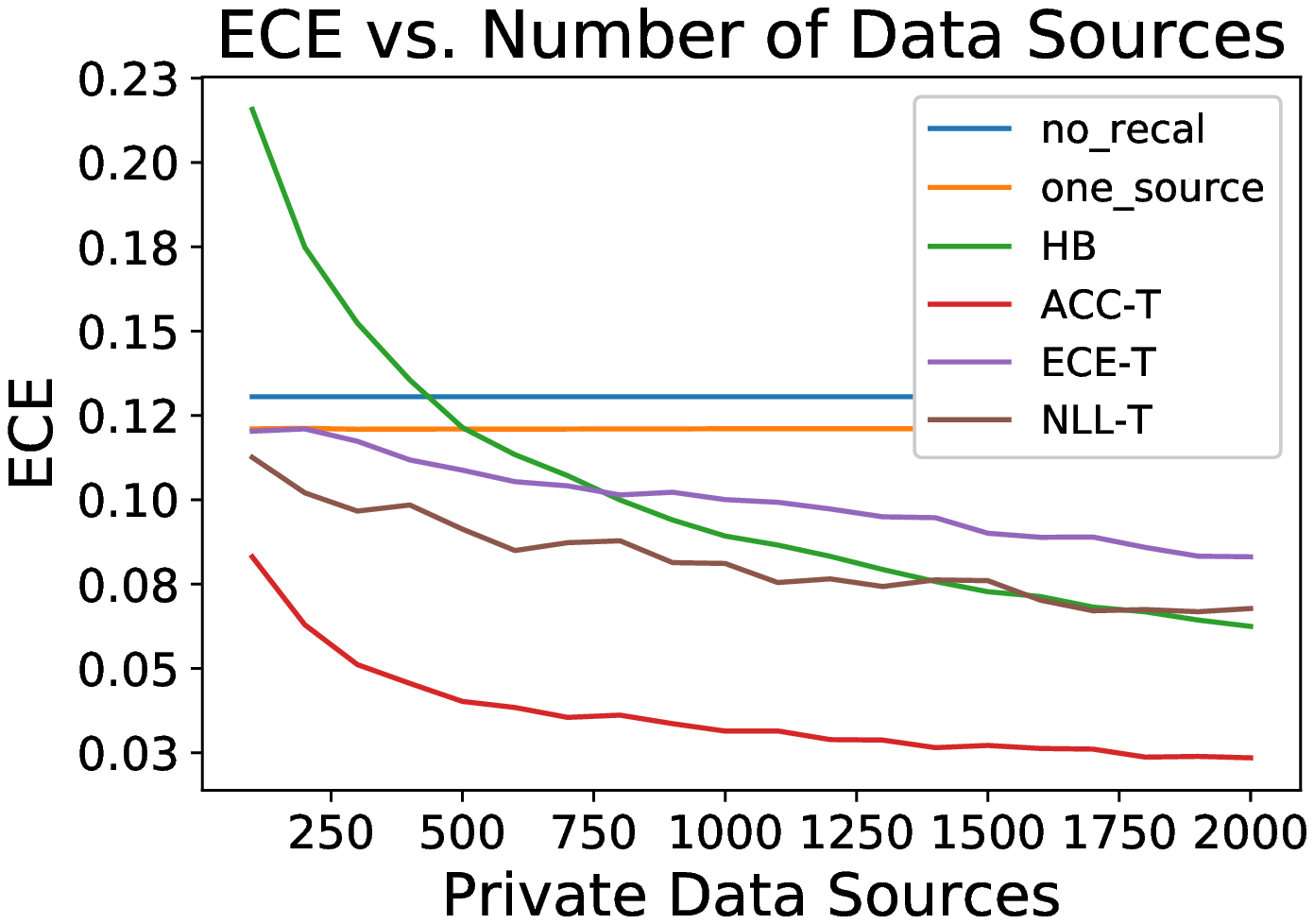}
        \caption{Samples = 10, $\epsilon = 1.0$}
    \end{subfigure}
    \hfill
    \begin{subfigure}[b]{0.325\textwidth}
        \centering
        \includegraphics[width=\textwidth]{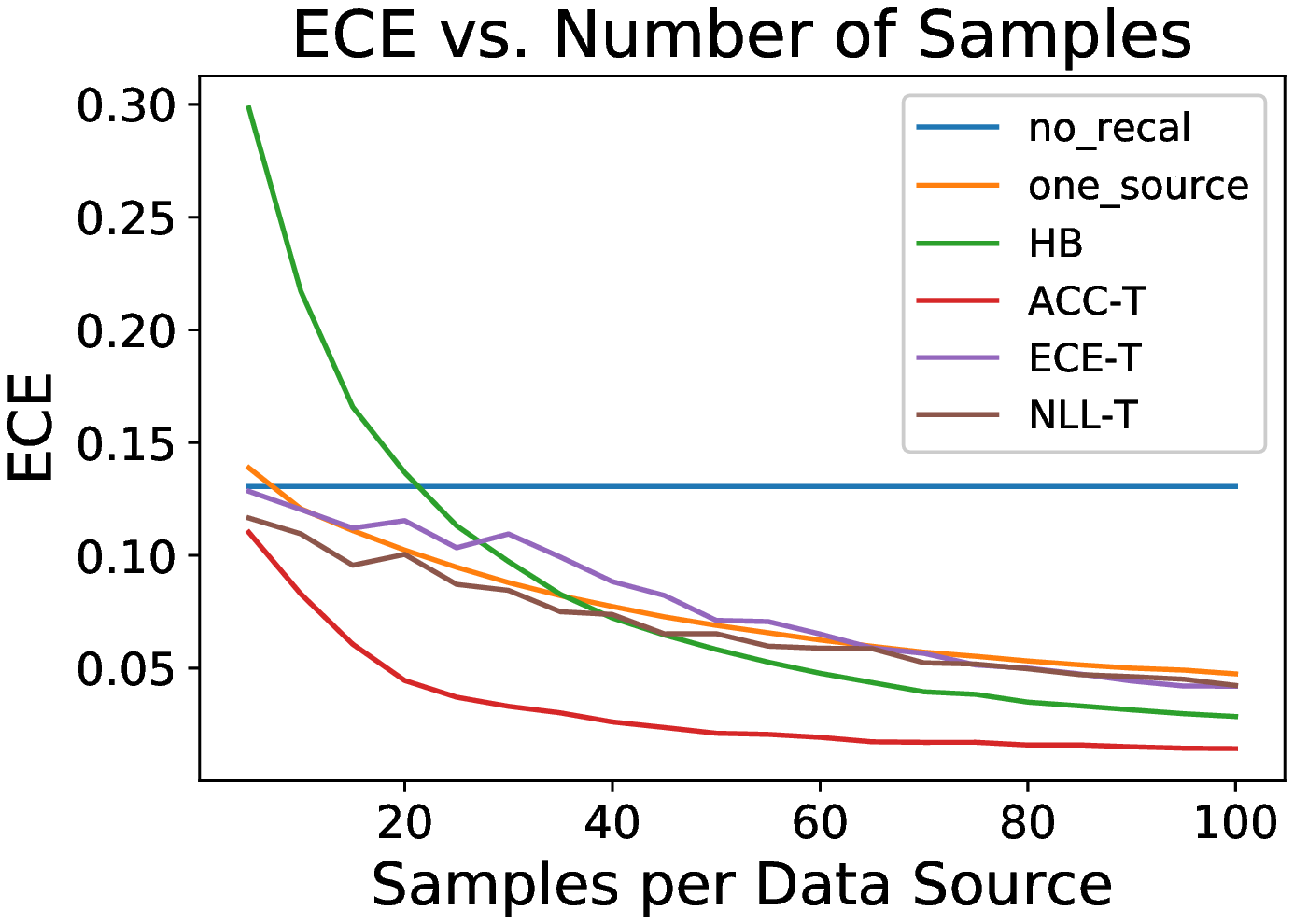}
        \caption{Sources = 100, $\epsilon = 1.0$}
    \end{subfigure}
    \hfill
    \begin{subfigure}[b]{0.325\textwidth}
        \centering
        \includegraphics[width=\textwidth]{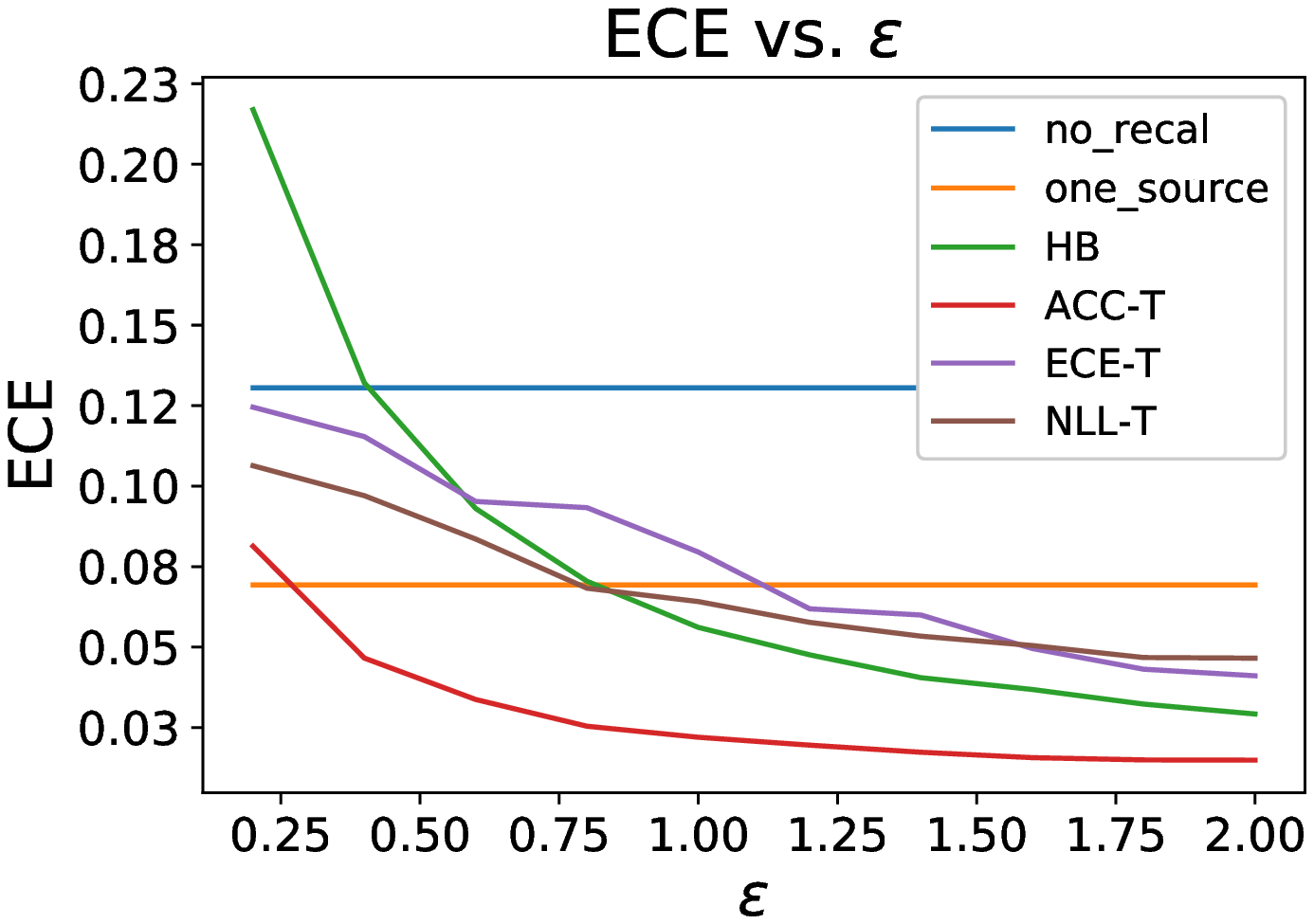}
        \caption{Samples = 50, Sources = 100}
    \end{subfigure}
\end{figure}

\begin{figure}[H]
    \centering
    \captionsetup{labelformat=empty}
    \caption{ImageNet, shot noise perturbation}
    \begin{subfigure}[b]{0.325\textwidth}
        \centering
        \includegraphics[width=\textwidth]{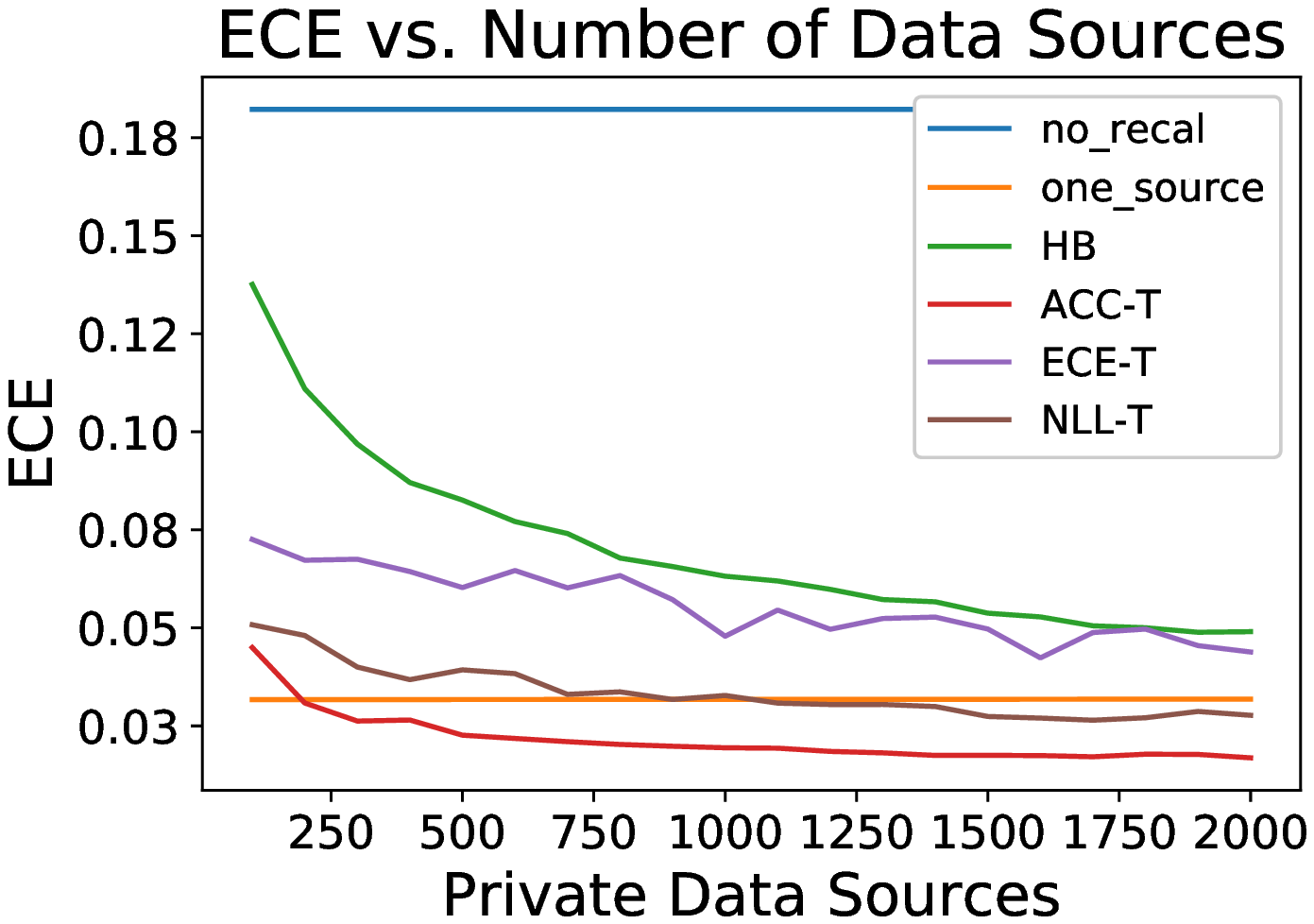}
        \caption{Samples = 10, $\epsilon = 1.0$}
    \end{subfigure}
    \hfill
    \begin{subfigure}[b]{0.325\textwidth}
        \centering
        \includegraphics[width=\textwidth]{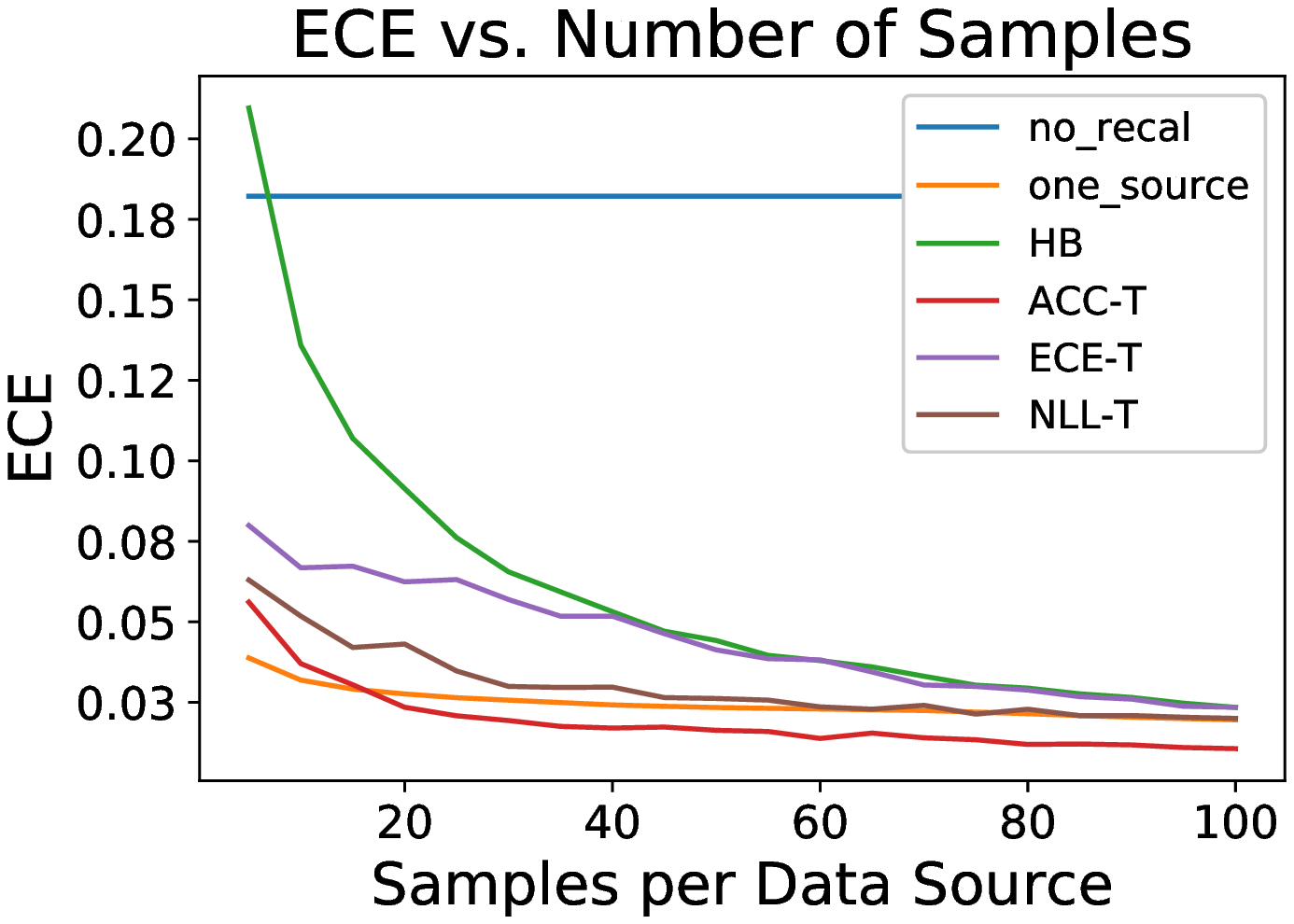}
        \caption{Sources = 100, $\epsilon = 1.0$}
    \end{subfigure}
    \hfill
    \begin{subfigure}[b]{0.325\textwidth}
        \centering
        \includegraphics[width=\textwidth]{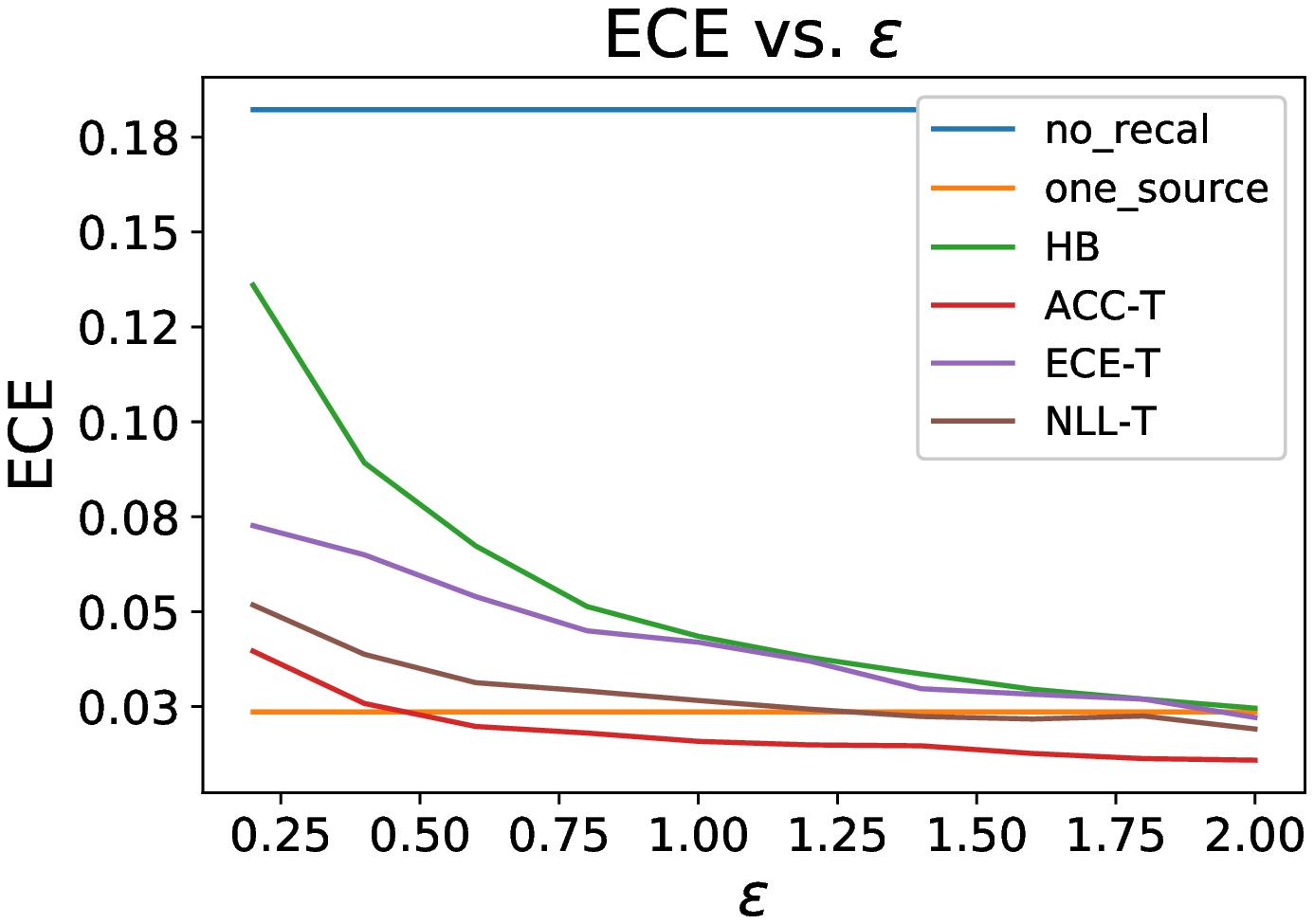}
        \caption{Samples = 50, Sources = 100}
    \end{subfigure}
\end{figure}

\begin{figure}[H]
    \centering
    \captionsetup{labelformat=empty}
    \caption{ImageNet, snow perturbation}
    \begin{subfigure}[b]{0.325\textwidth}
        \centering
        \includegraphics[width=\textwidth]{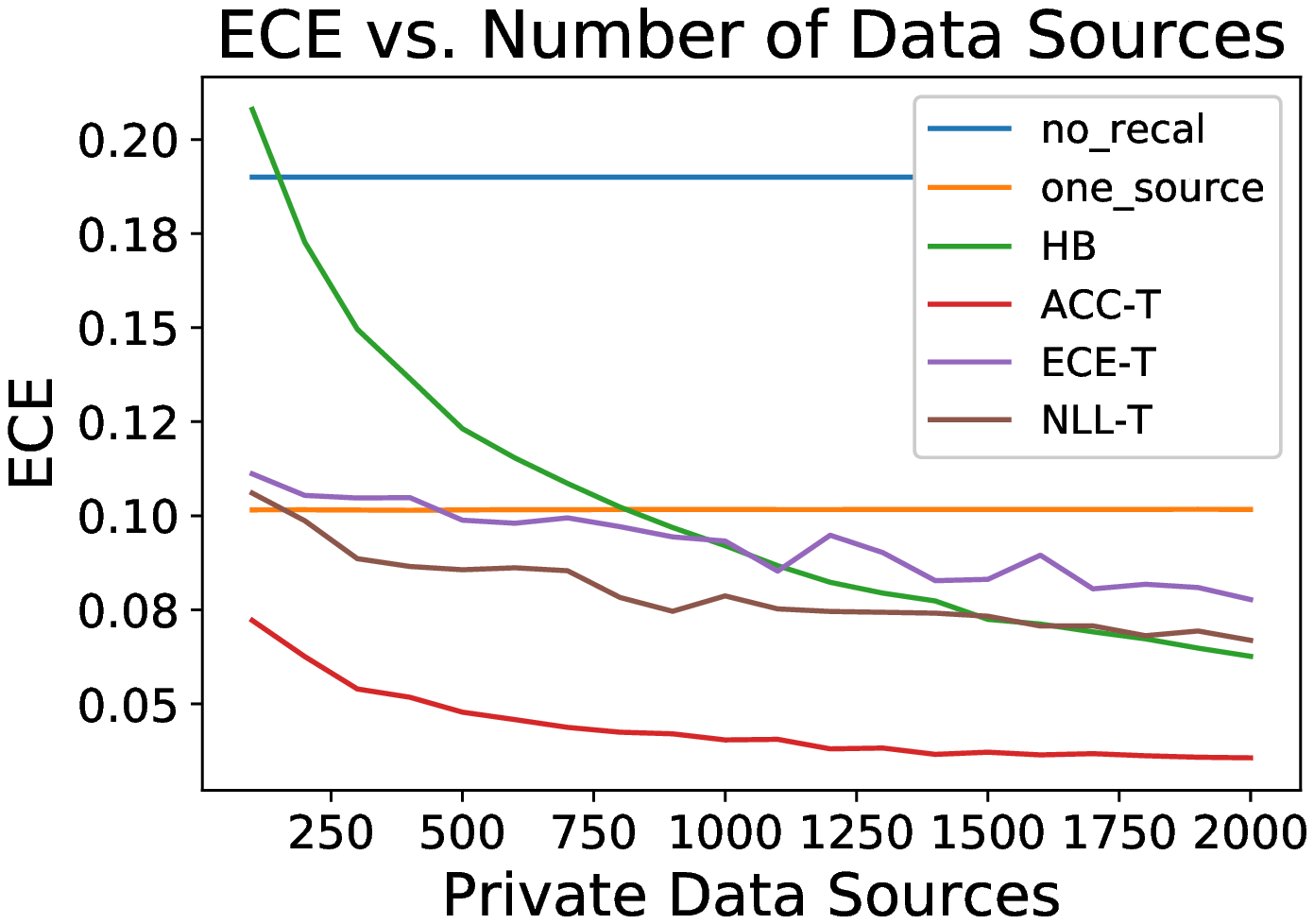}
        \caption{Samples = 10, $\epsilon = 1.0$}
    \end{subfigure}
    \hfill
    \begin{subfigure}[b]{0.325\textwidth}
        \centering
        \includegraphics[width=\textwidth]{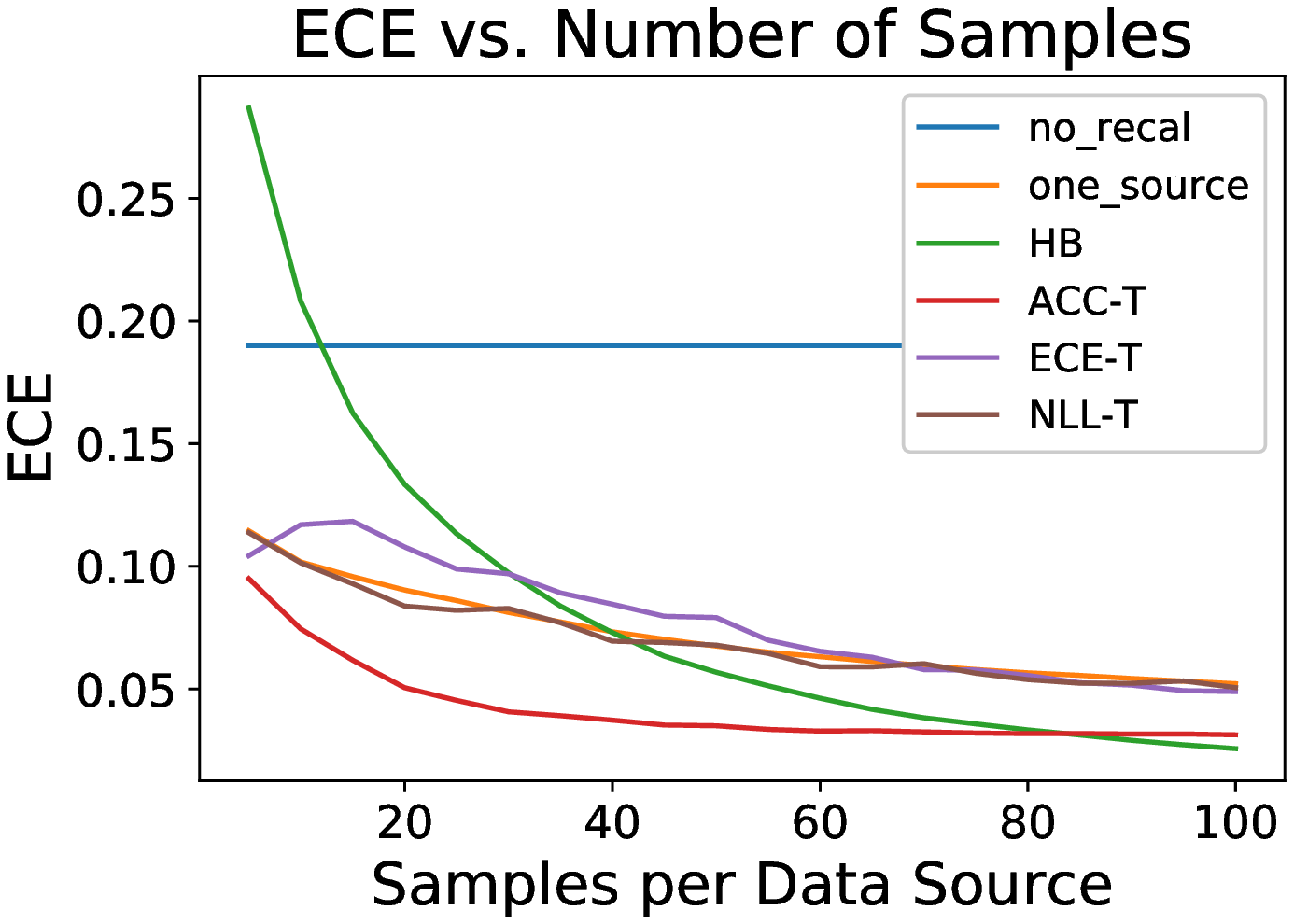}
        \caption{Sources = 100, $\epsilon = 1.0$}
    \end{subfigure}
    \hfill
    \begin{subfigure}[b]{0.325\textwidth}
        \centering
        \includegraphics[width=\textwidth]{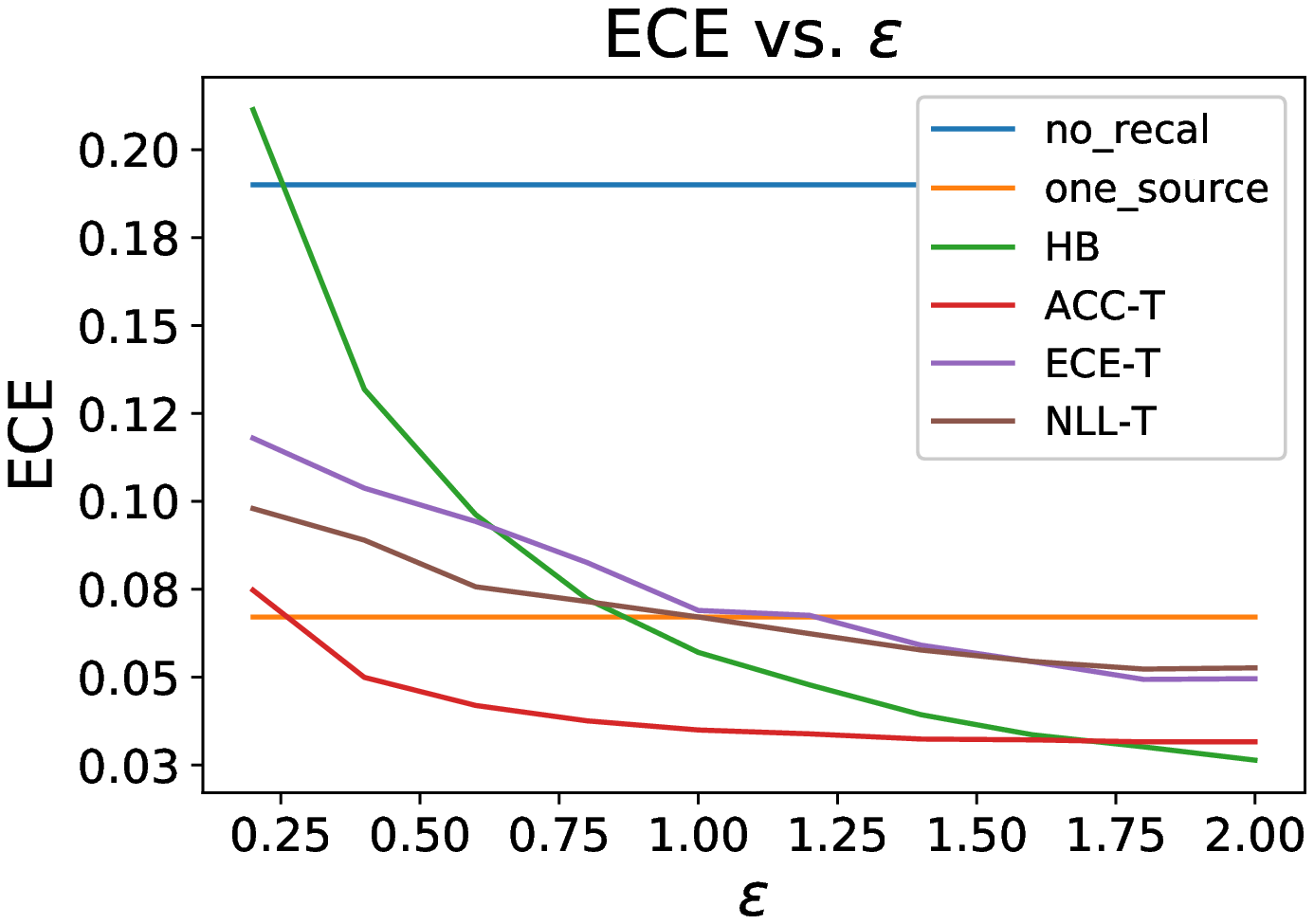}
        \caption{Samples = 50, Sources = 100}
    \end{subfigure}
\end{figure}

\begin{figure}[H]
    \centering
    \captionsetup{labelformat=empty}
    \caption{ImageNet, zoom blur perturbation}
    \begin{subfigure}[b]{0.325\textwidth}
        \centering
        \includegraphics[width=\textwidth]{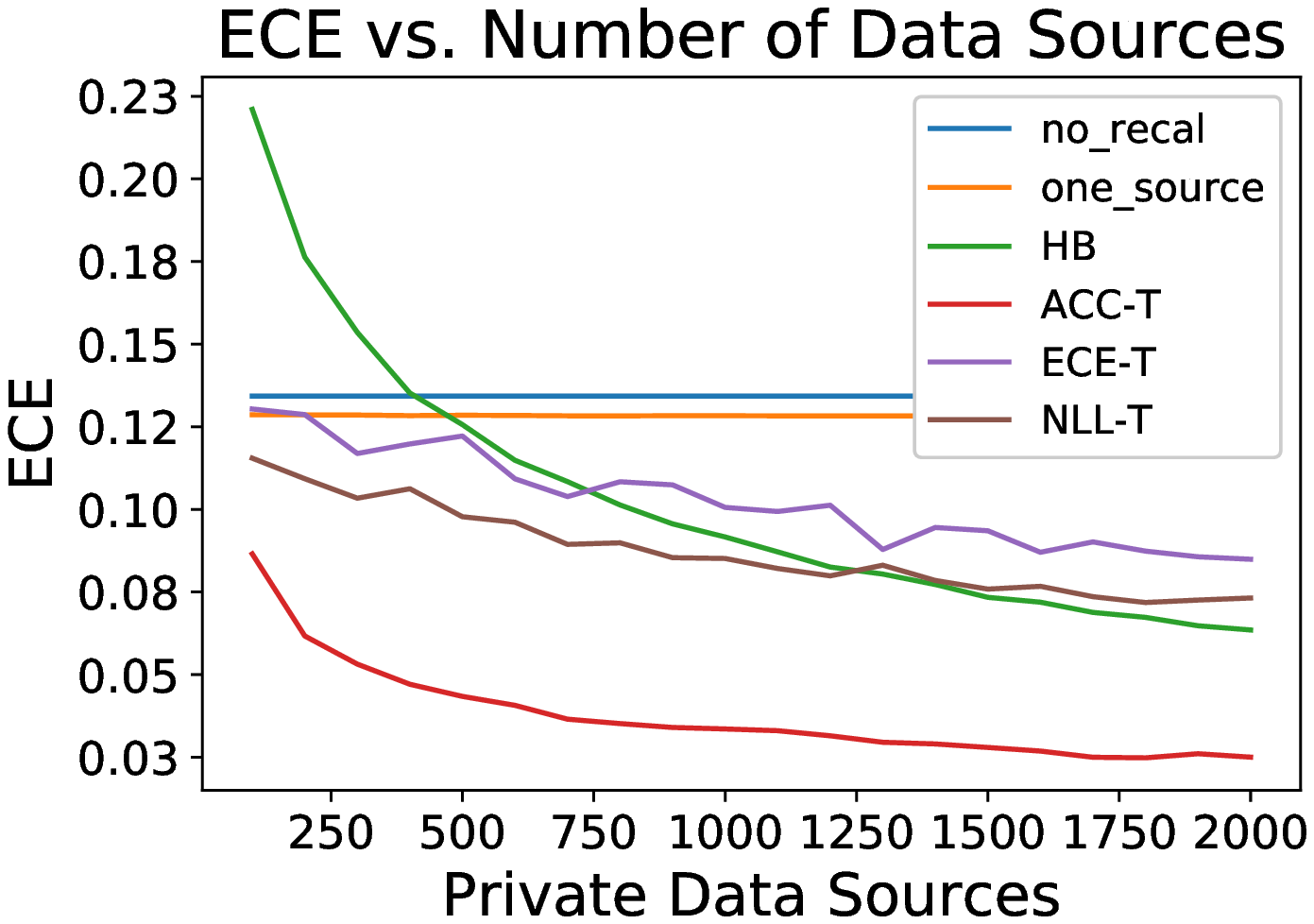}
        \caption{Samples = 10, $\epsilon = 1.0$}
    \end{subfigure}
    \hfill
    \begin{subfigure}[b]{0.325\textwidth}
        \centering
        \includegraphics[width=\textwidth]{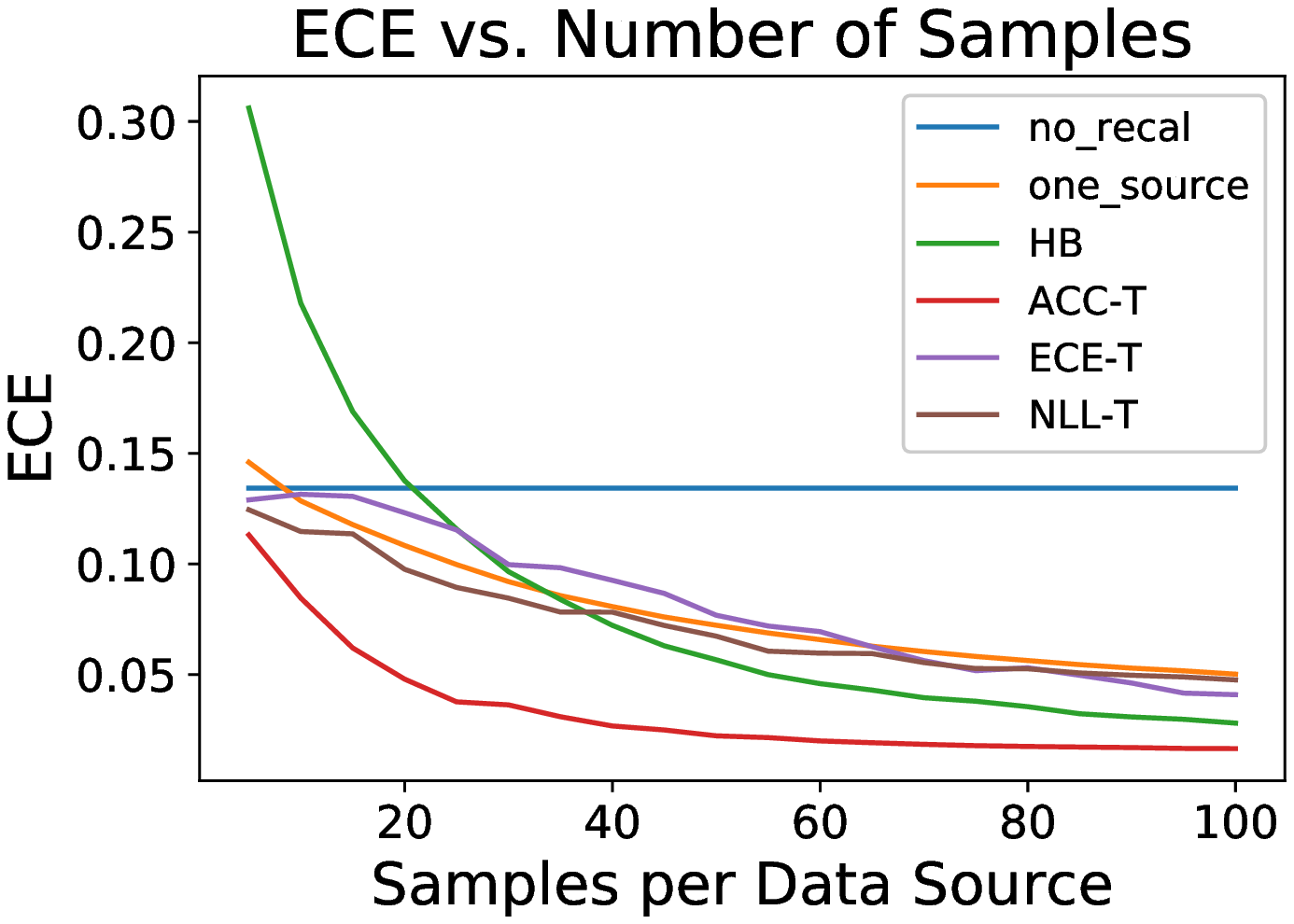}
        \caption{Sources = 100, $\epsilon = 1.0$}
    \end{subfigure}
    \hfill
    \begin{subfigure}[b]{0.325\textwidth}
        \centering
        \includegraphics[width=\textwidth]{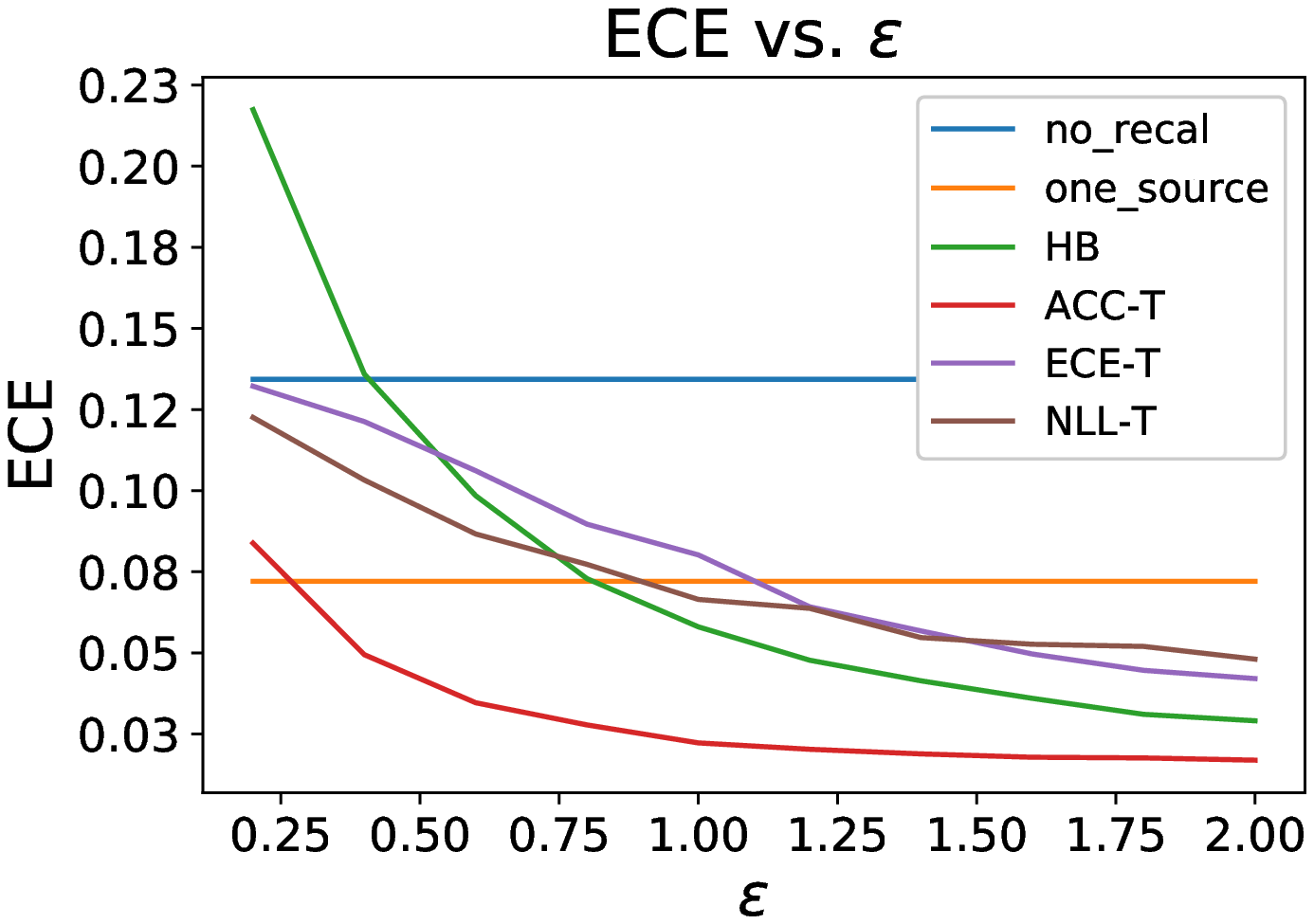}
        \caption{Samples = 50, Sources = 100}
    \end{subfigure}
\end{figure}

\subsubsection*{CIFAR-100 Results}

\begin{figure}[H]
    \centering
    \captionsetup{labelformat=empty}
    \caption{CIFAR-100, unperturbed}
    \begin{subfigure}[b]{0.325\textwidth}
        \centering
        \includegraphics[width=\textwidth]{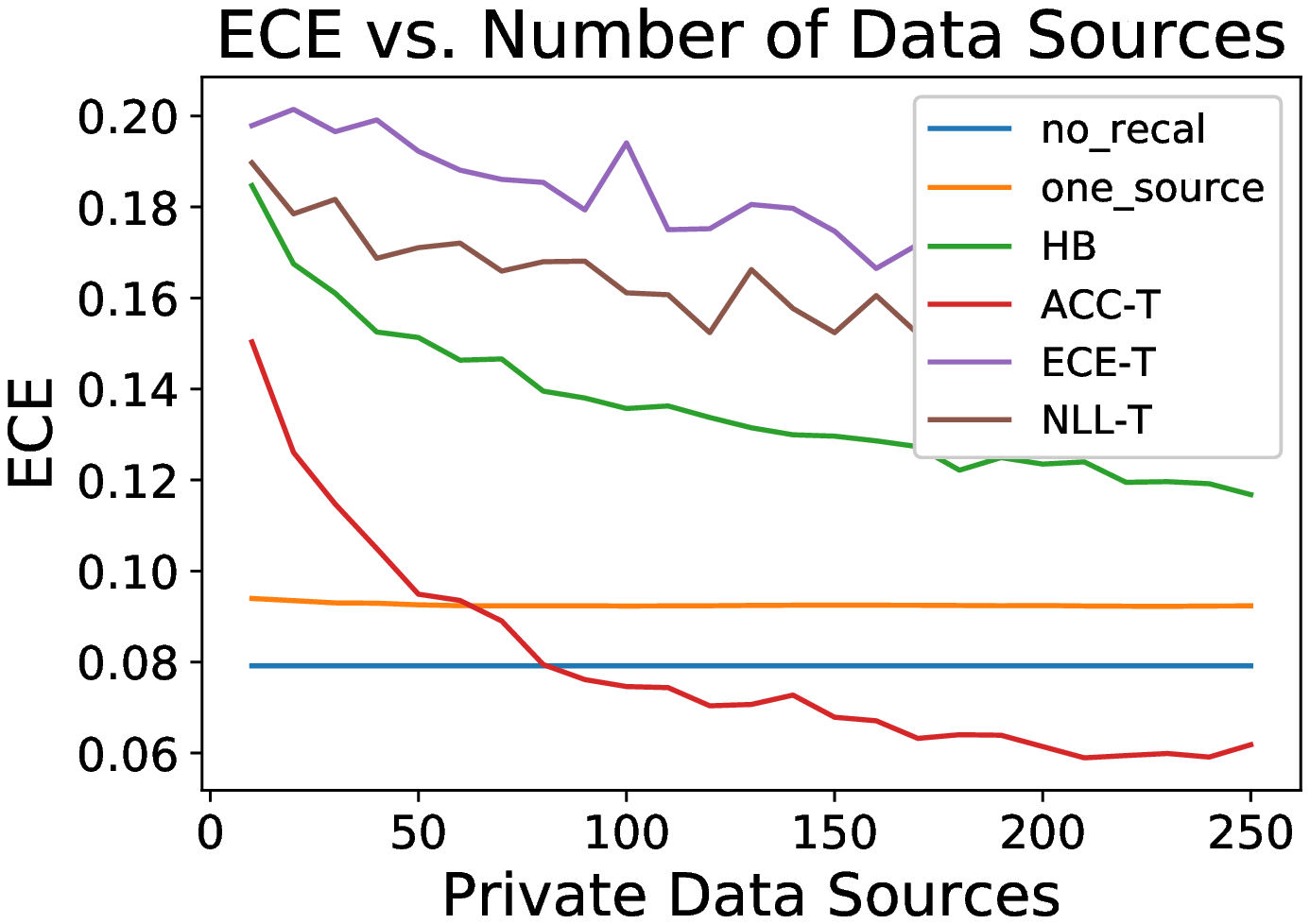}
        \caption{Samples = 10, $\epsilon = 1.0$}
    \end{subfigure}
    \hfill
    \begin{subfigure}[b]{0.325\textwidth}
        \centering
        \includegraphics[width=\textwidth]{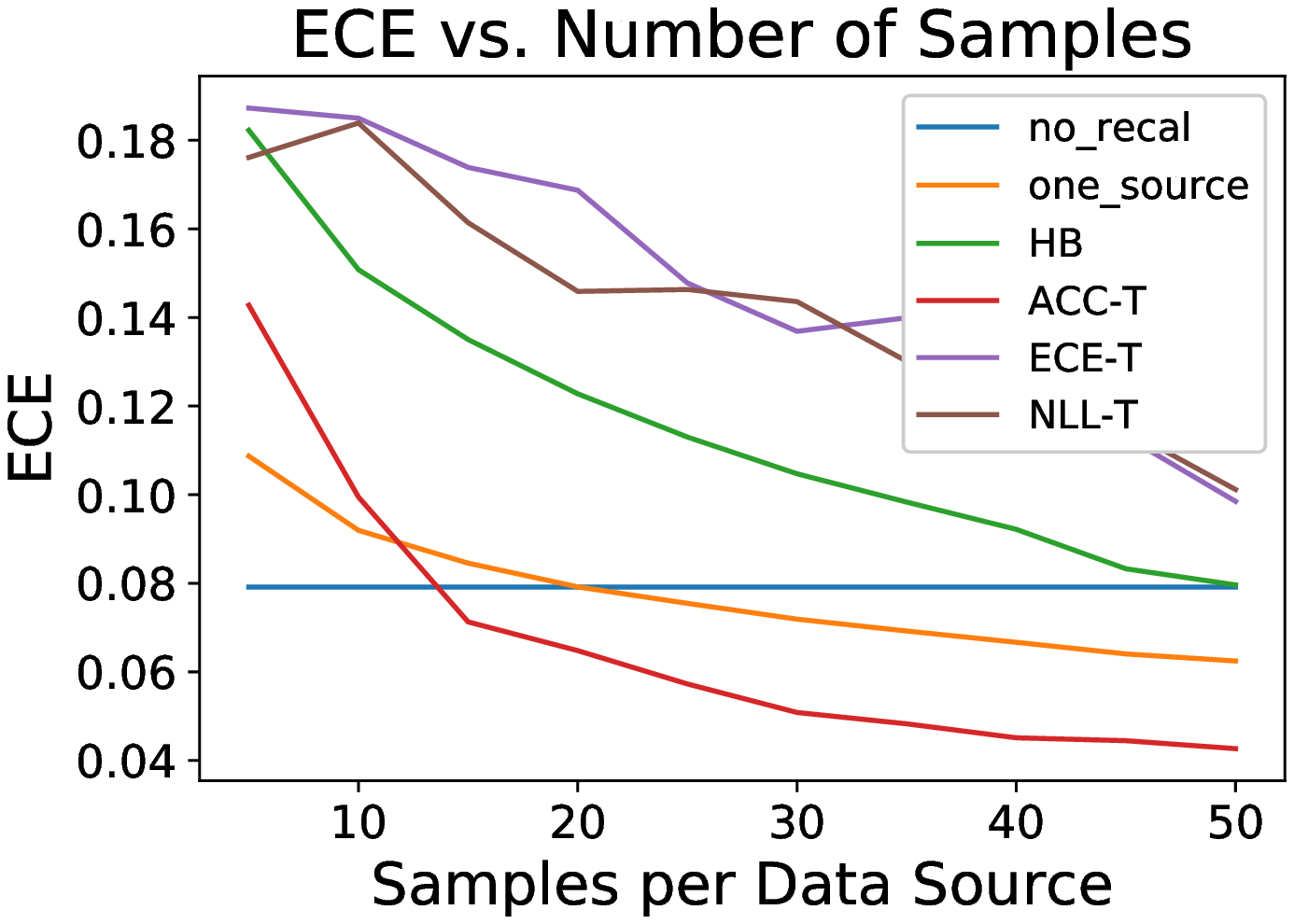}
        \caption{Sources = 50, $\epsilon = 1.0$}
    \end{subfigure}
    \hfill
    \begin{subfigure}[b]{0.325\textwidth}
        \centering
        \includegraphics[width=\textwidth]{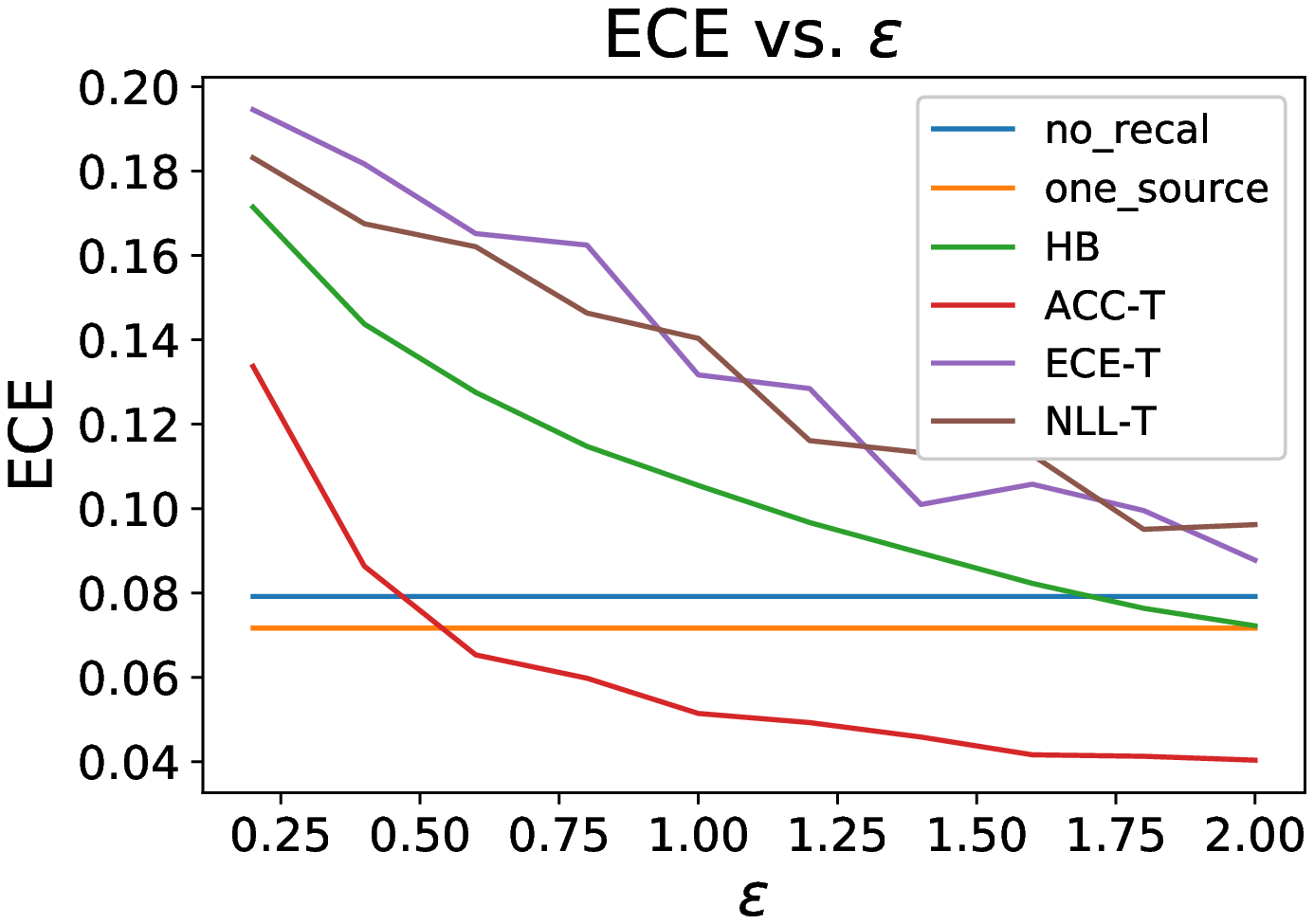}
        \caption{Samples = 30, Sources = 50}
    \end{subfigure}
\end{figure}

\begin{figure}[H]
    \centering
    \captionsetup{labelformat=empty}
    \caption{CIFAR-100, brightness perturbation}
    \begin{subfigure}[b]{0.325\textwidth}
        \centering
        \includegraphics[width=\textwidth]{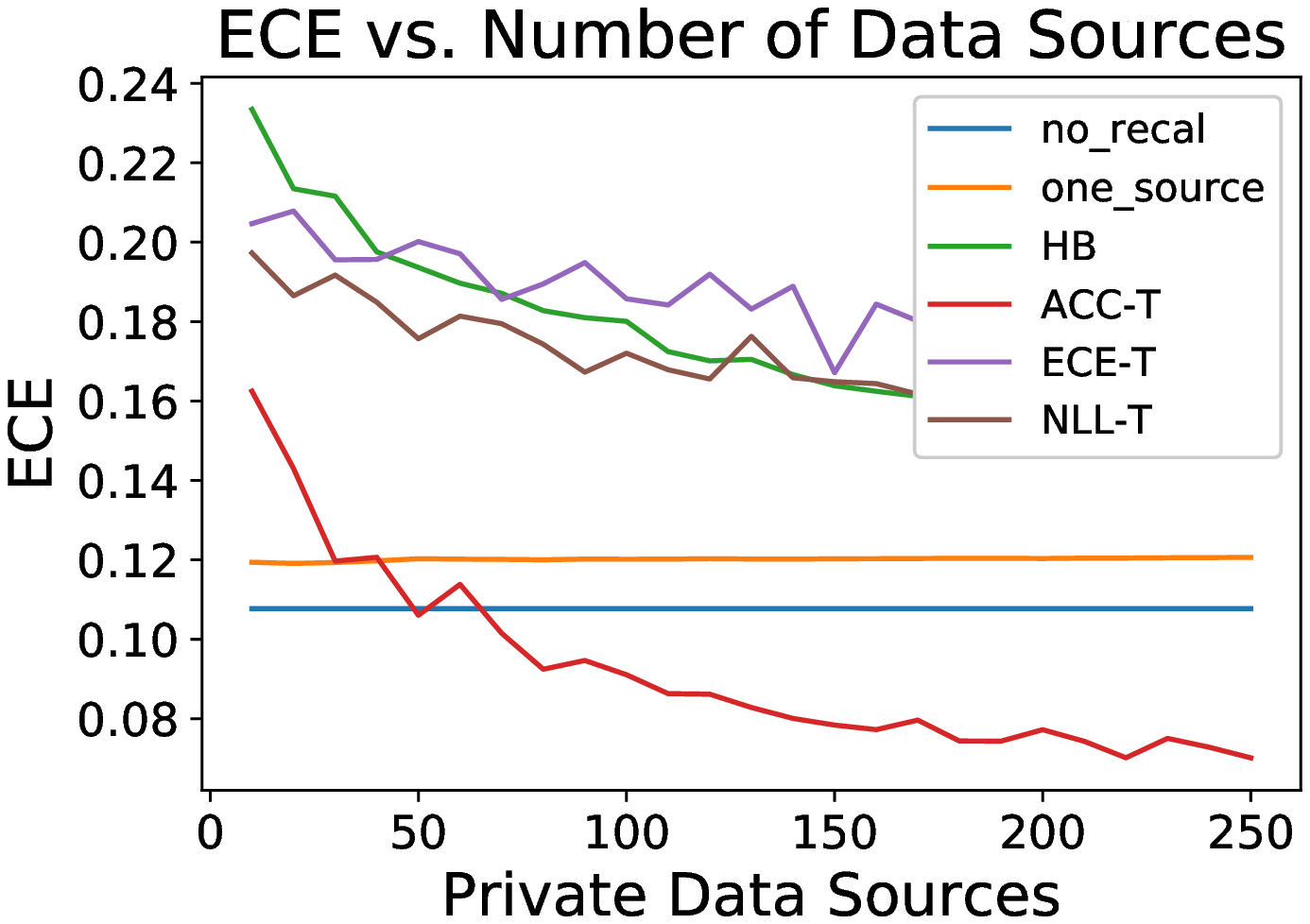}
        \caption{Samples = 10, $\epsilon = 1.0$}
    \end{subfigure}
    \hfill
    \begin{subfigure}[b]{0.325\textwidth}
        \centering
        \includegraphics[width=\textwidth]{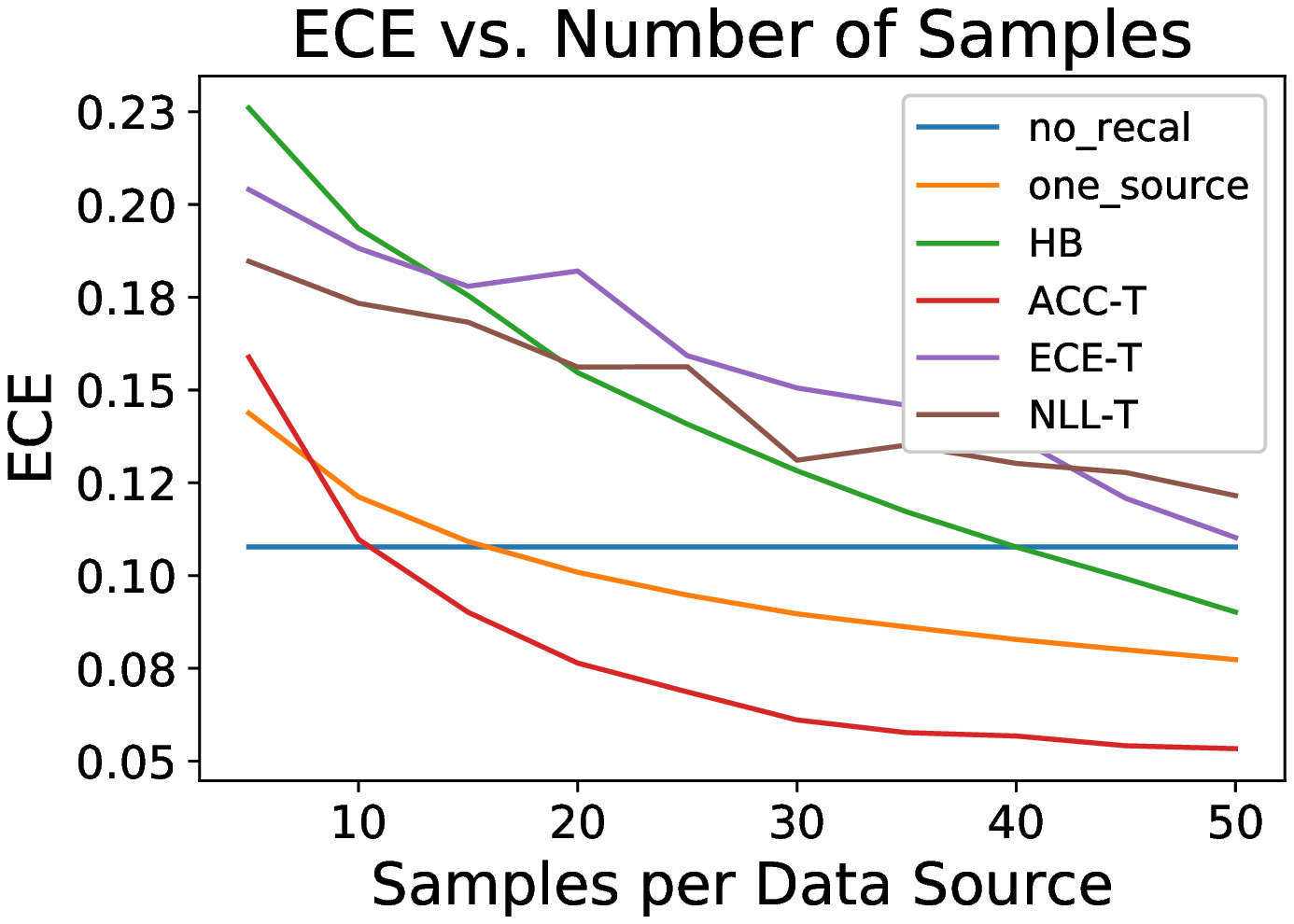}
        \caption{Sources = 50, $\epsilon = 1.0$}
    \end{subfigure}
    \hfill
    \begin{subfigure}[b]{0.325\textwidth}
        \centering
        \includegraphics[width=\textwidth]{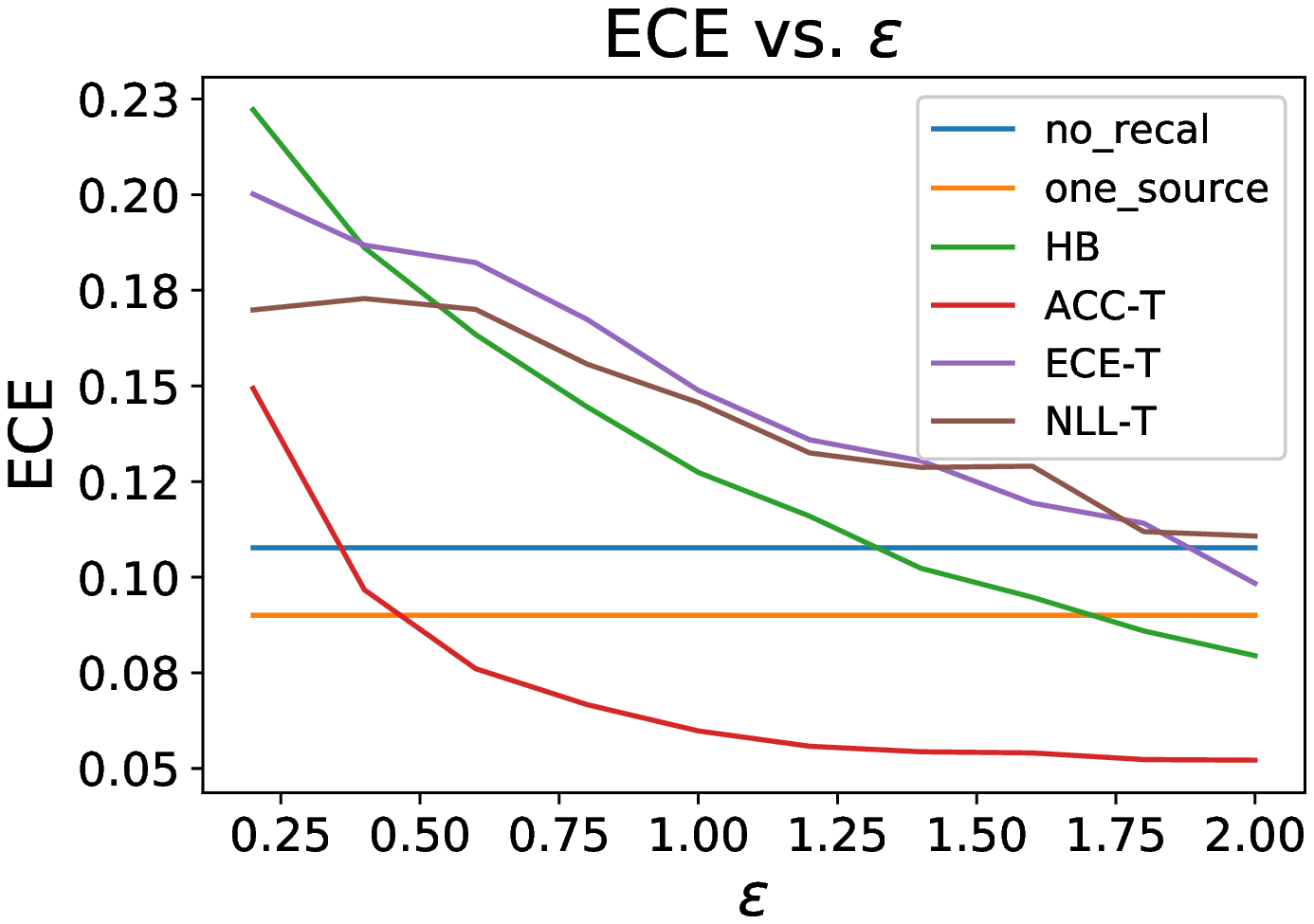}
        \caption{Samples = 30, Sources = 50}
    \end{subfigure}
\end{figure}

\begin{figure}[H]
    \centering
    \captionsetup{labelformat=empty}
    \caption{CIFAR-100, contrast perturbation}
    \begin{subfigure}[b]{0.325\textwidth}
        \centering
        \includegraphics[width=\textwidth]{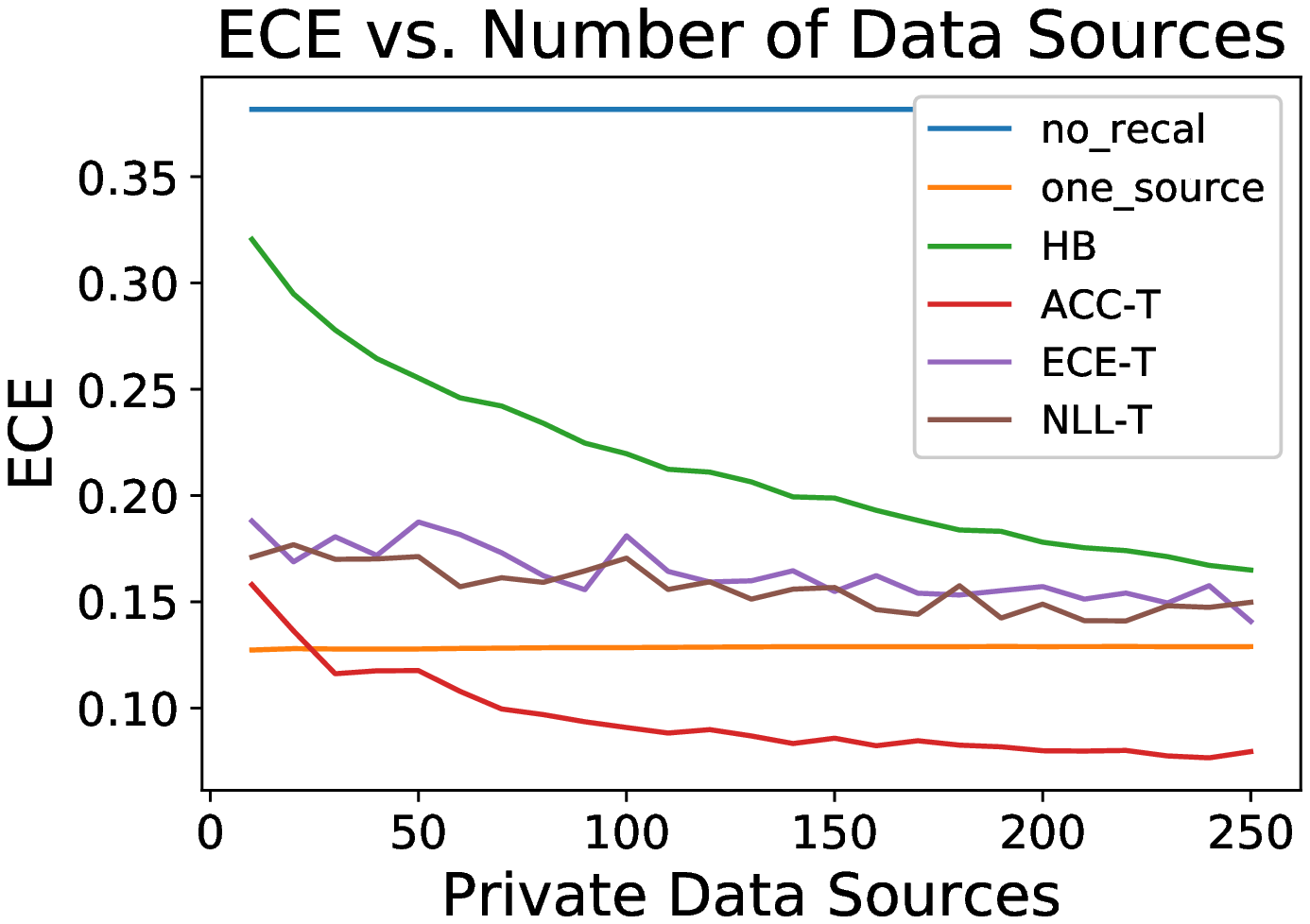}
        \caption{Samples = 10, $\epsilon = 1.0$}
    \end{subfigure}
    \hfill
    \begin{subfigure}[b]{0.325\textwidth}
        \centering
        \includegraphics[width=\textwidth]{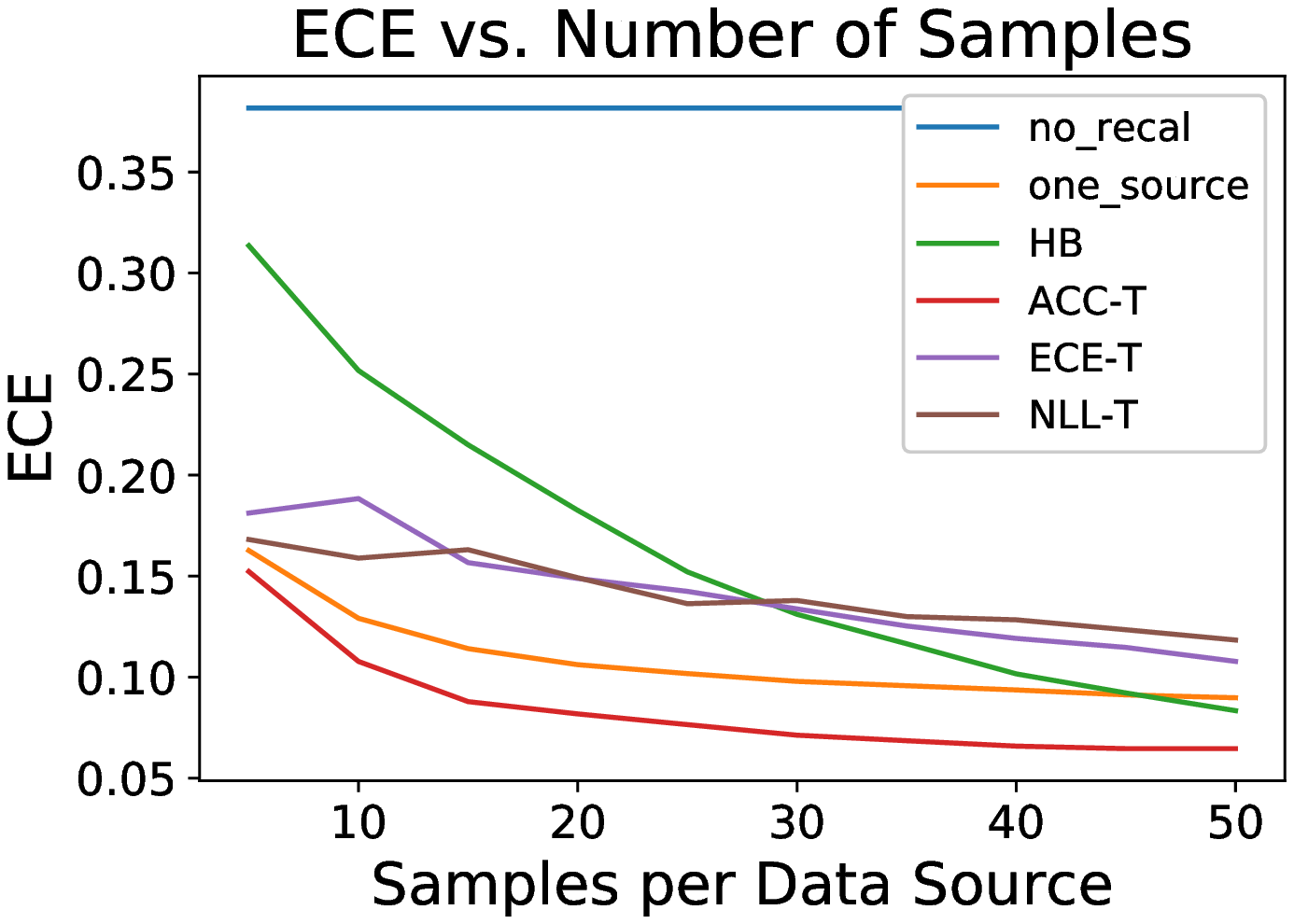}
        \caption{Sources = 50, $\epsilon = 1.0$}
    \end{subfigure}
    \hfill
    \begin{subfigure}[b]{0.325\textwidth}
        \centering
        \includegraphics[width=\textwidth]{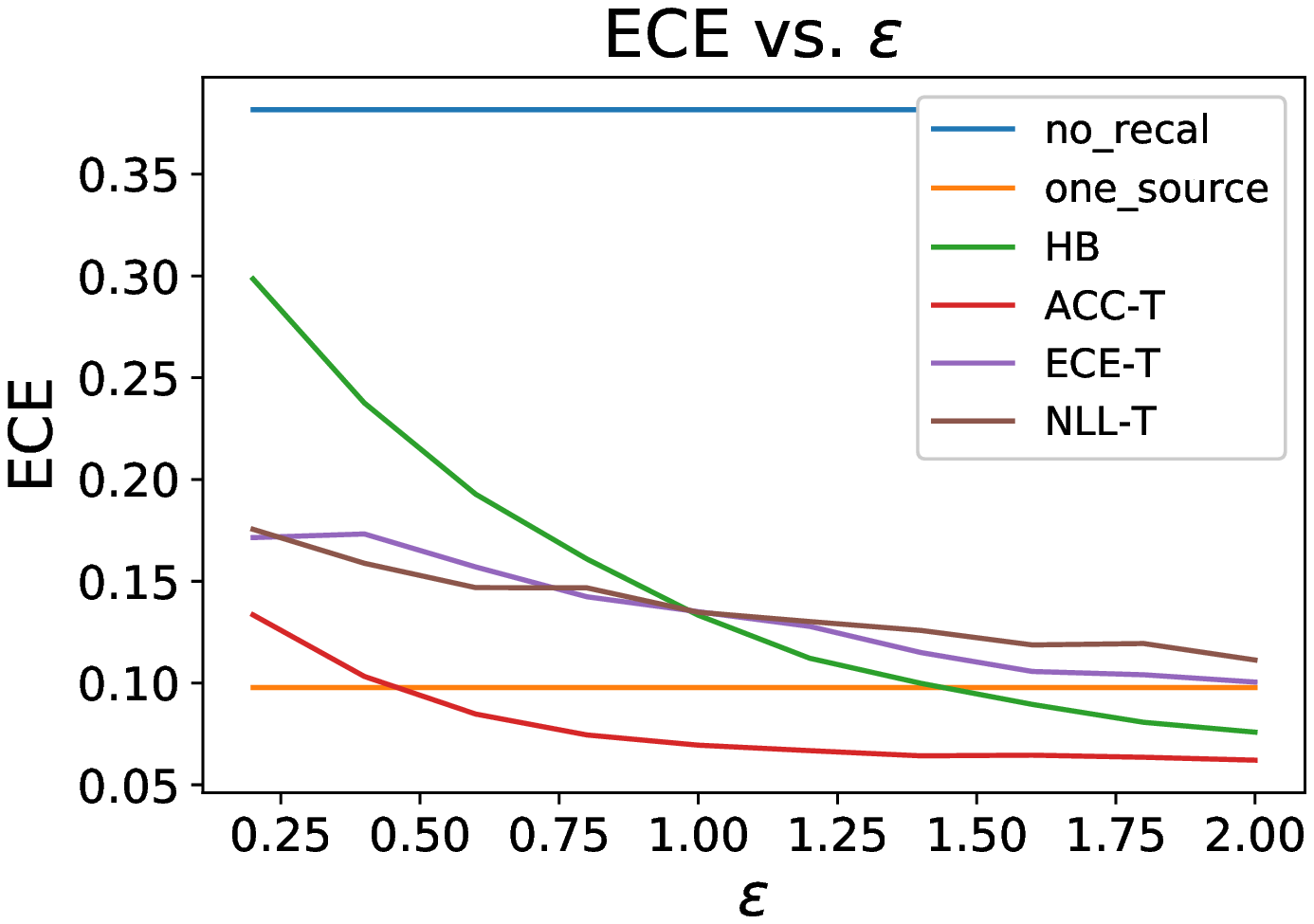}
        \caption{Samples = 30, Sources = 50}
    \end{subfigure}
\end{figure}

\begin{figure}[H]
    \centering
    \captionsetup{labelformat=empty}
    \caption{CIFAR-100, defocus blur perturbation}
    \begin{subfigure}[b]{0.325\textwidth}
        \centering
        \includegraphics[width=\textwidth]{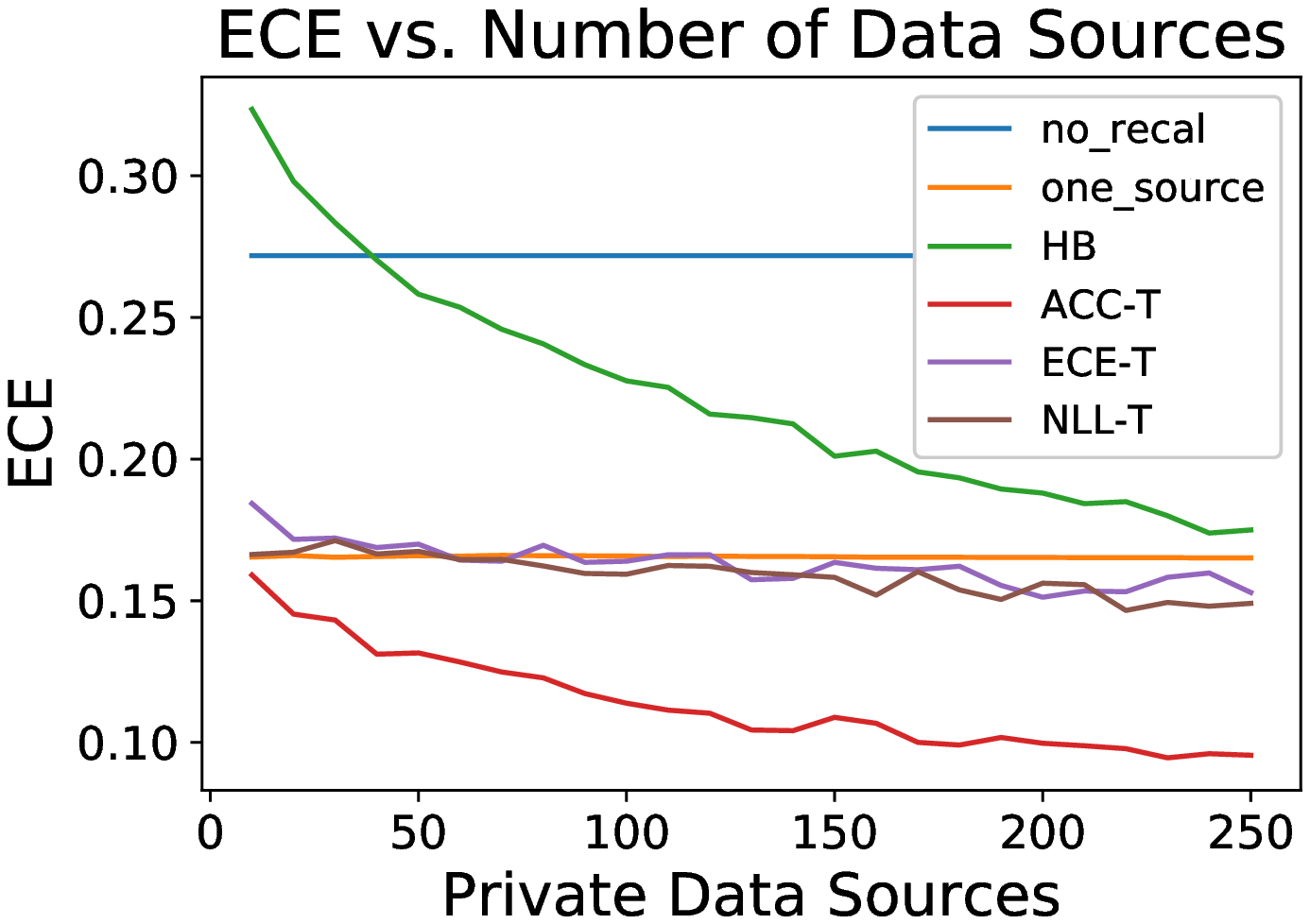}
        \caption{Samples = 10, $\epsilon = 1.0$}
    \end{subfigure}
    \hfill
    \begin{subfigure}[b]{0.325\textwidth}
        \centering
        \includegraphics[width=\textwidth]{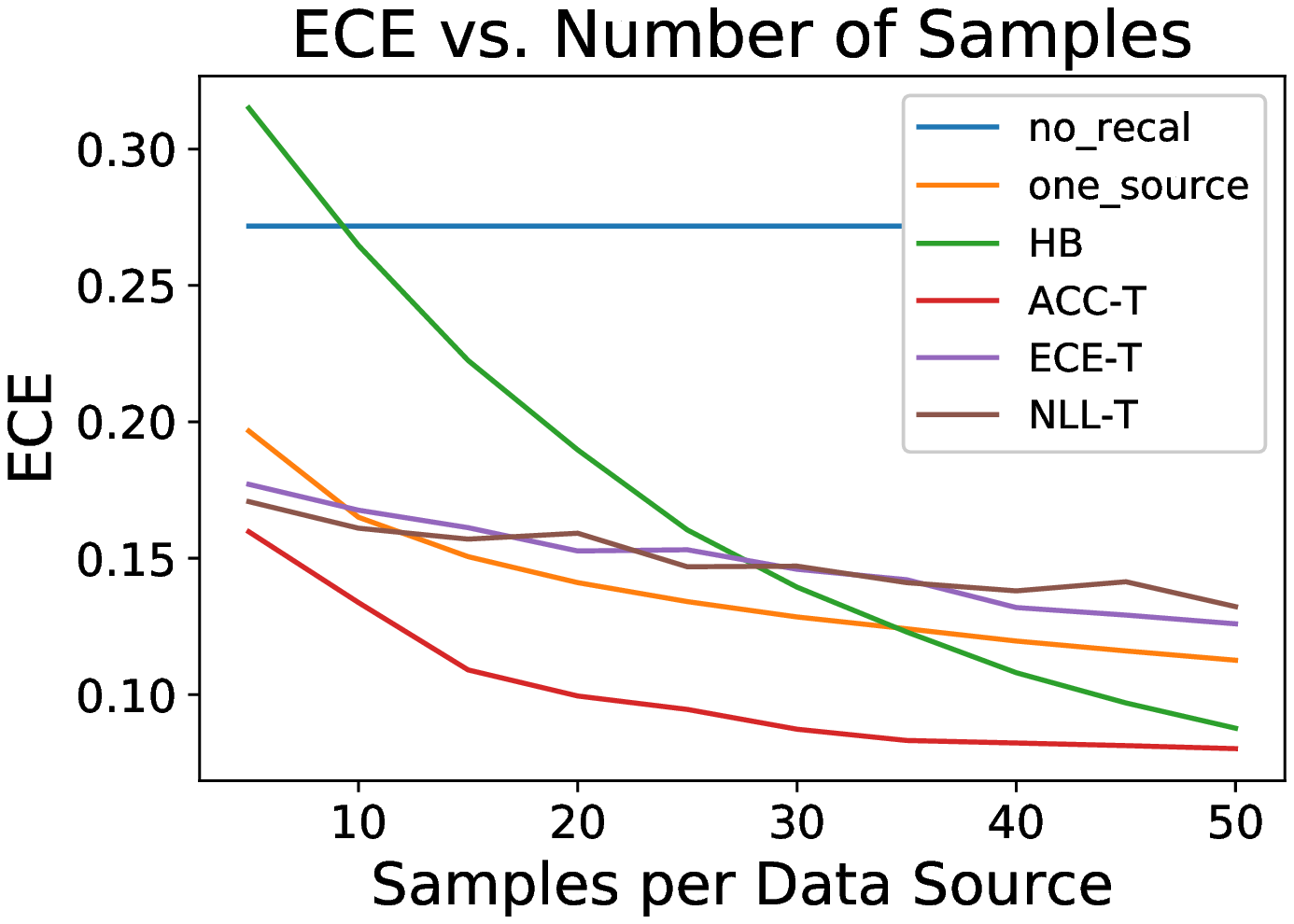}
        \caption{Sources = 50, $\epsilon = 1.0$}
    \end{subfigure}
    \hfill
    \begin{subfigure}[b]{0.325\textwidth}
        \centering
        \includegraphics[width=\textwidth]{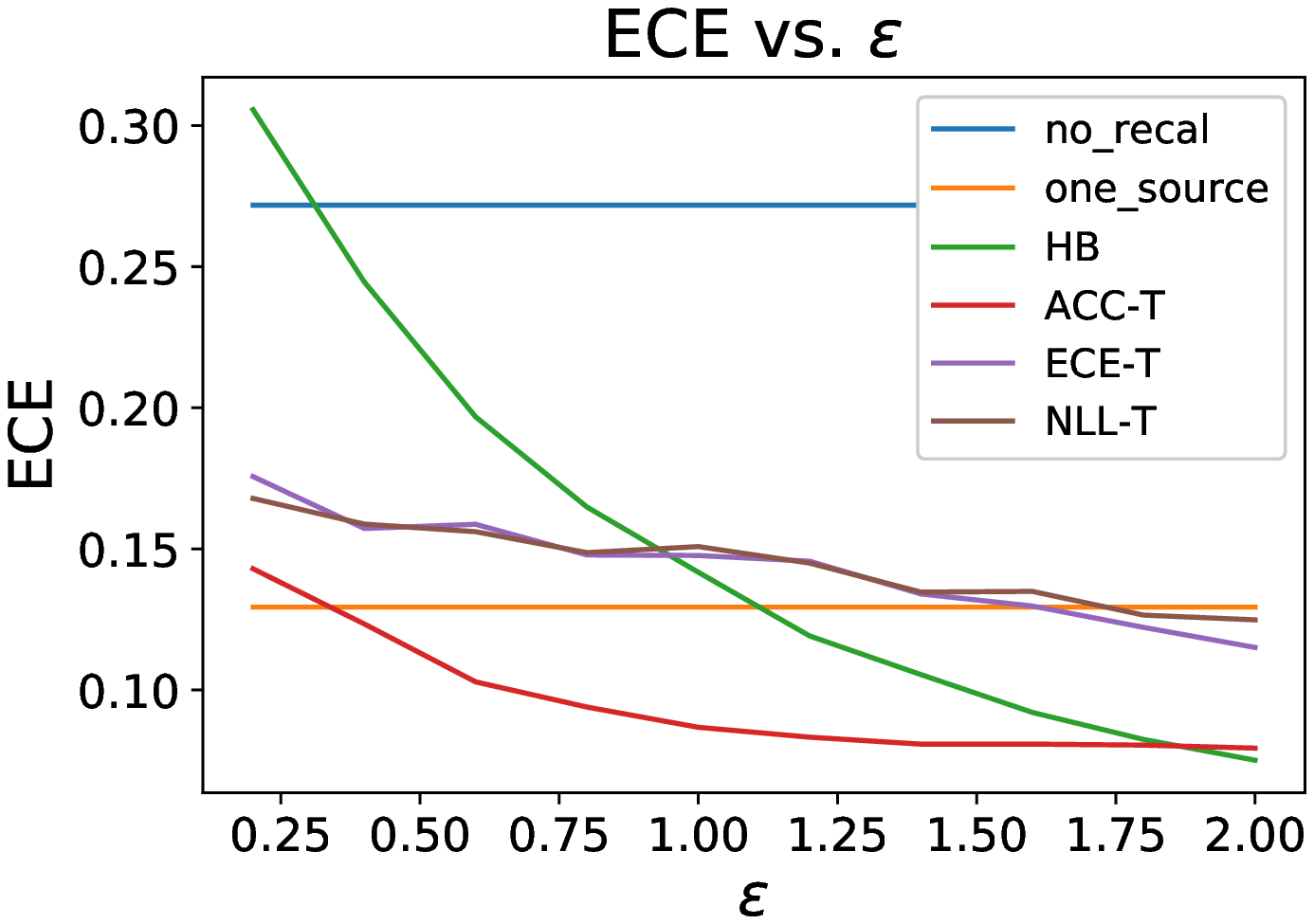}
        \caption{Samples = 30, Sources = 50}
    \end{subfigure}
\end{figure}

\begin{figure}[H]
    \centering
    \captionsetup{labelformat=empty}
    \caption{CIFAR-100, elastic transform perturbation}
    \begin{subfigure}[b]{0.325\textwidth}
        \centering
        \includegraphics[width=\textwidth]{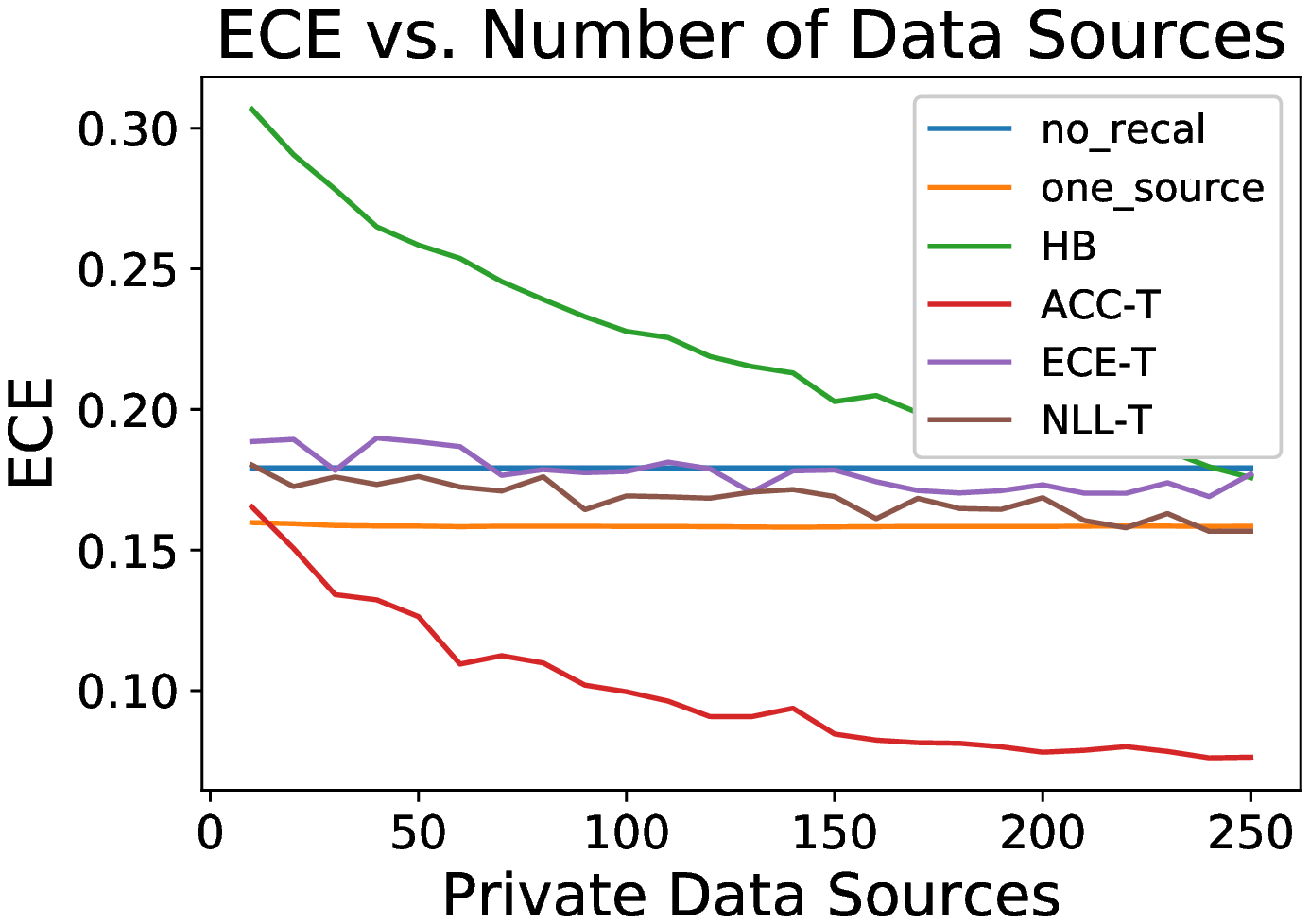}
        \caption{Samples = 10, $\epsilon = 1.0$}
    \end{subfigure}
    \hfill
    \begin{subfigure}[b]{0.325\textwidth}
        \centering
        \includegraphics[width=\textwidth]{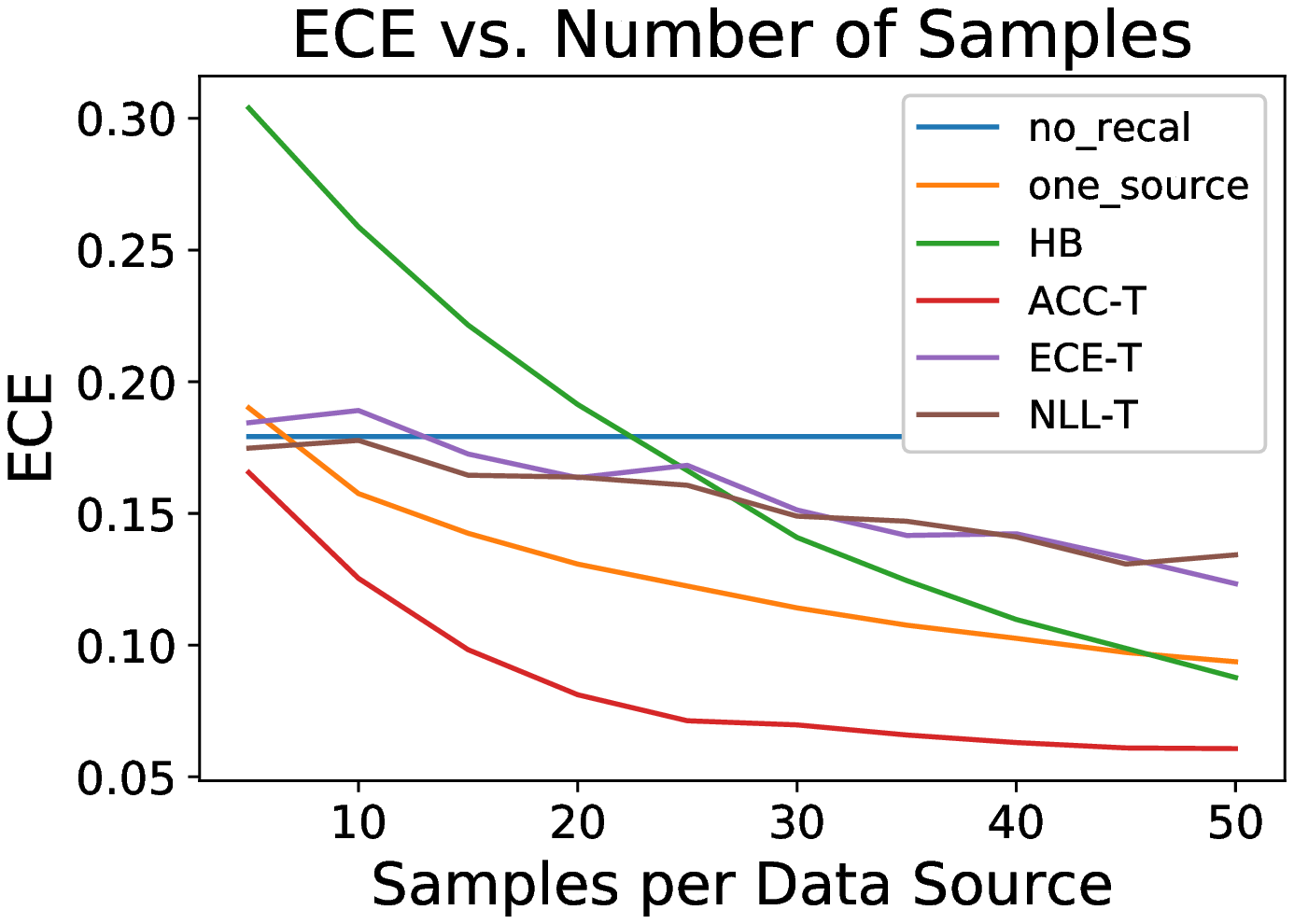}
        \caption{Sources = 50, $\epsilon = 1.0$}
    \end{subfigure}
    \hfill
    \begin{subfigure}[b]{0.325\textwidth}
        \centering
        \includegraphics[width=\textwidth]{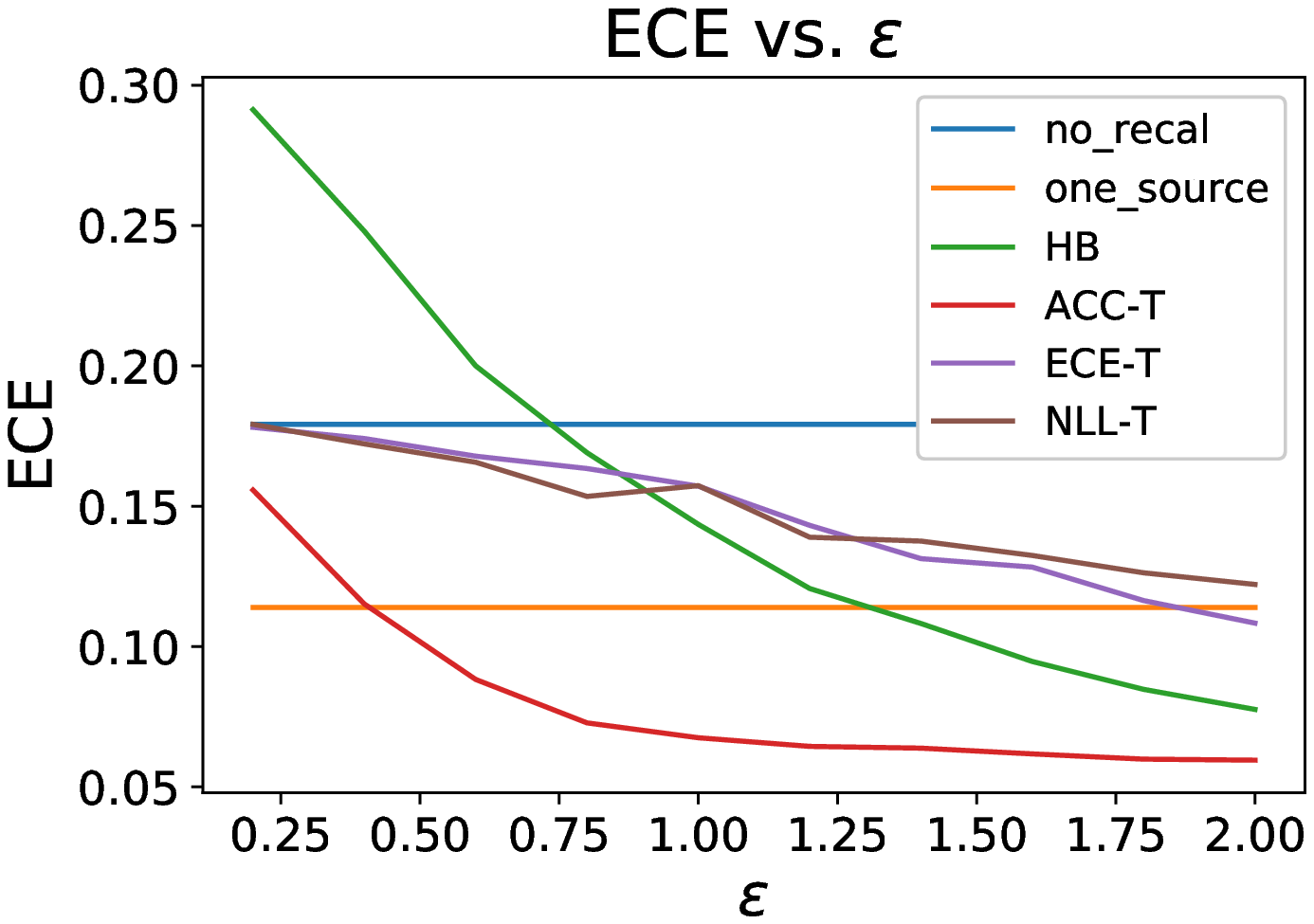}
        \caption{Samples = 30, Sources = 50}
    \end{subfigure}
\end{figure}

\begin{figure}[H]
    \centering
    \captionsetup{labelformat=empty}
    \caption{CIFAR-100, fog perturbation}
    \begin{subfigure}[b]{0.325\textwidth}
        \centering
        \includegraphics[width=\textwidth]{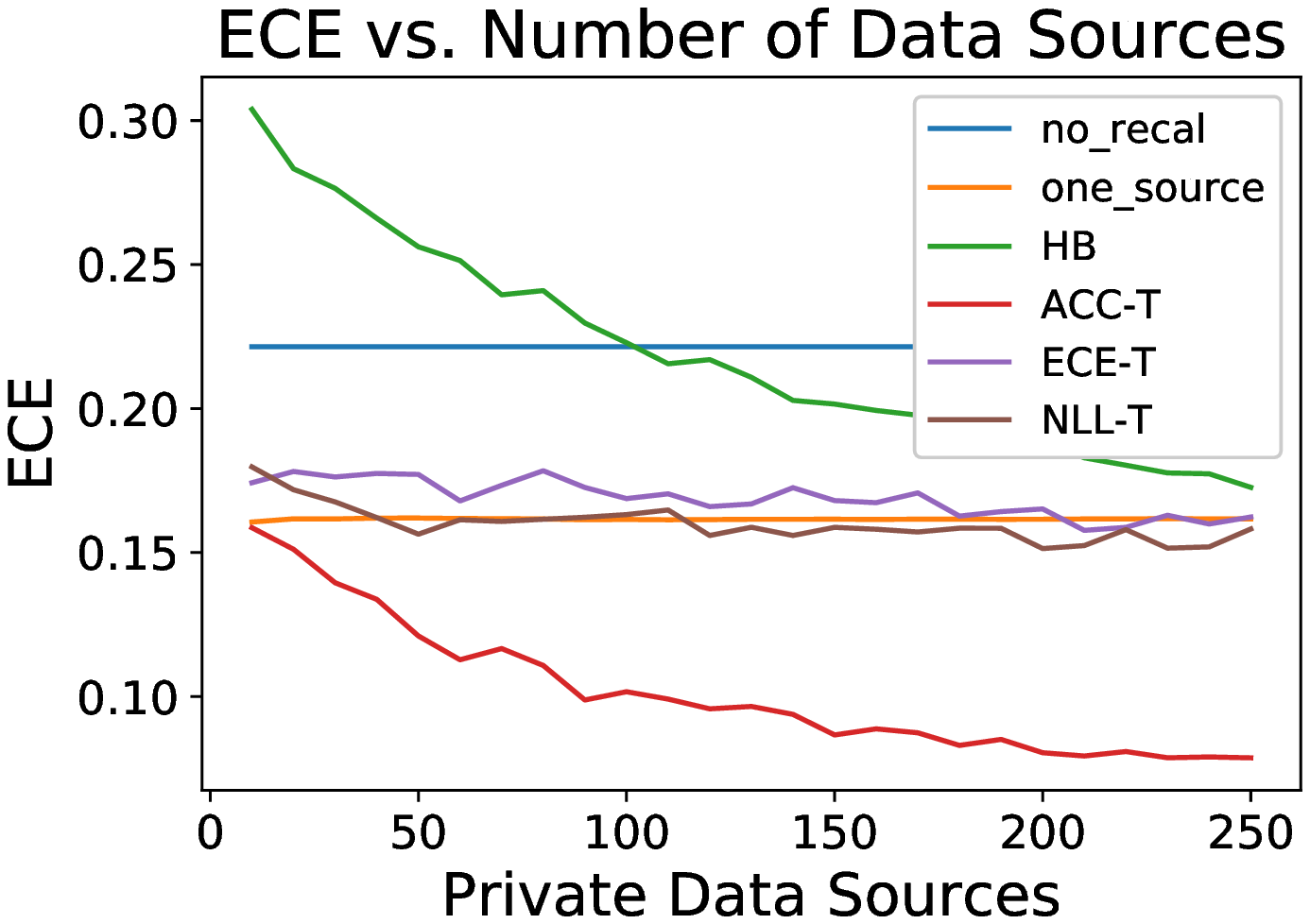}
        \caption{Samples = 10, $\epsilon = 1.0$}
    \end{subfigure}
    \hfill
    \begin{subfigure}[b]{0.325\textwidth}
        \centering
        \includegraphics[width=\textwidth]{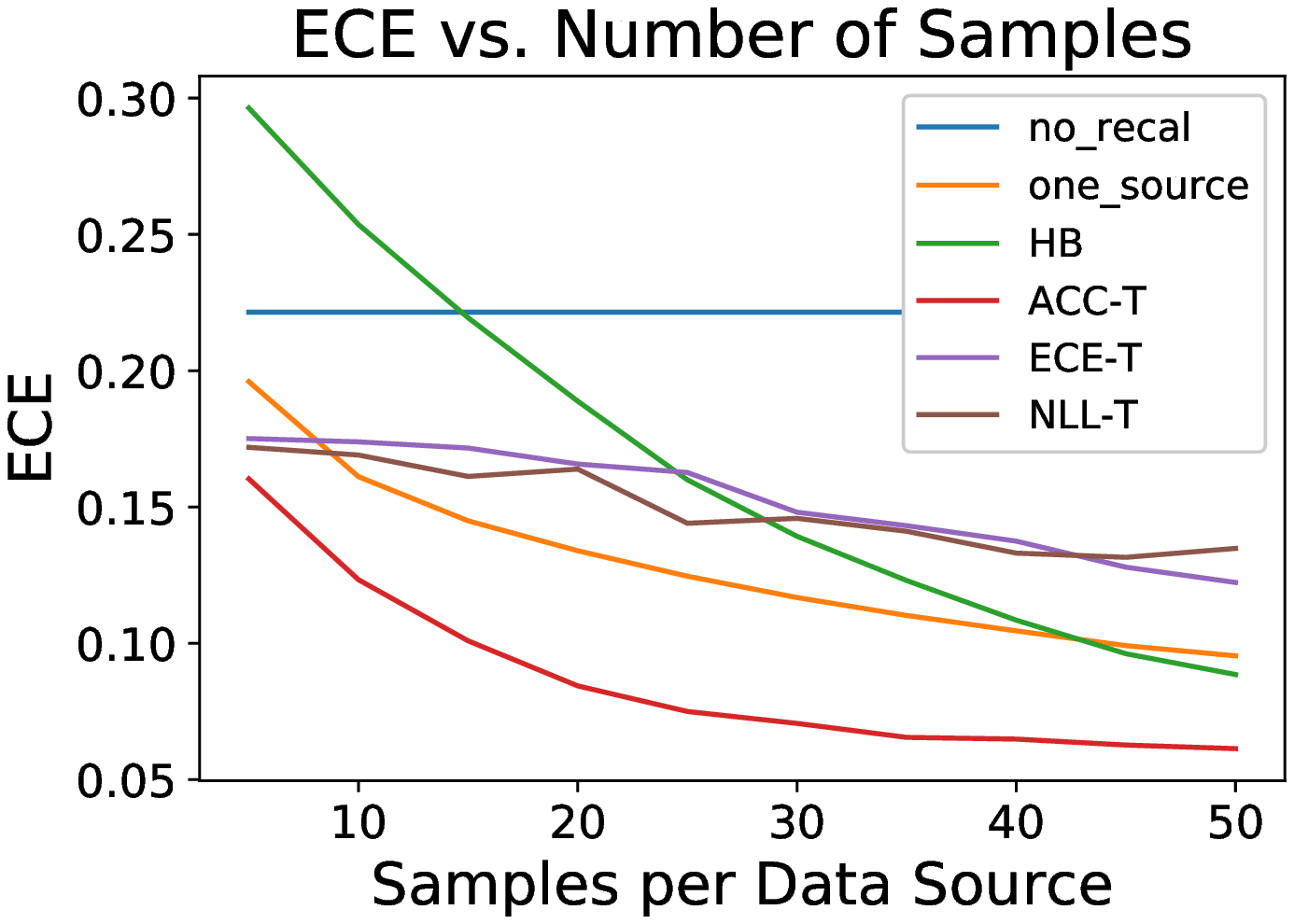}
        \caption{Sources = 50, $\epsilon = 1.0$}
    \end{subfigure}
    \hfill
    \begin{subfigure}[b]{0.325\textwidth}
        \centering
        \includegraphics[width=\textwidth]{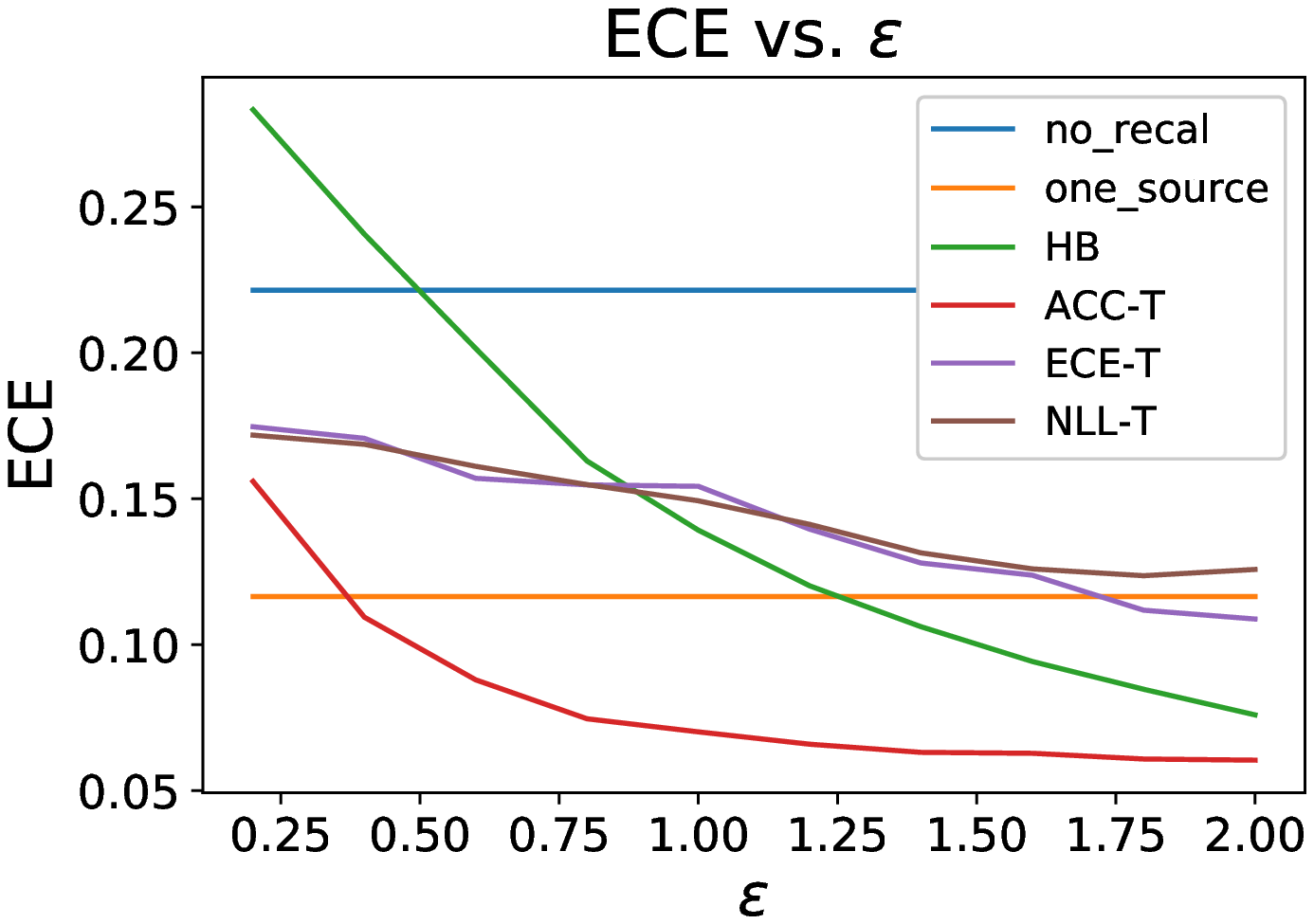}
        \caption{Samples = 30, Sources = 50}
    \end{subfigure}
\end{figure}

\begin{figure}[H]
    \centering
    \captionsetup{labelformat=empty}
    \caption{CIFAR-100, frost perturbation}
    \begin{subfigure}[b]{0.325\textwidth}
        \centering
        \includegraphics[width=\textwidth]{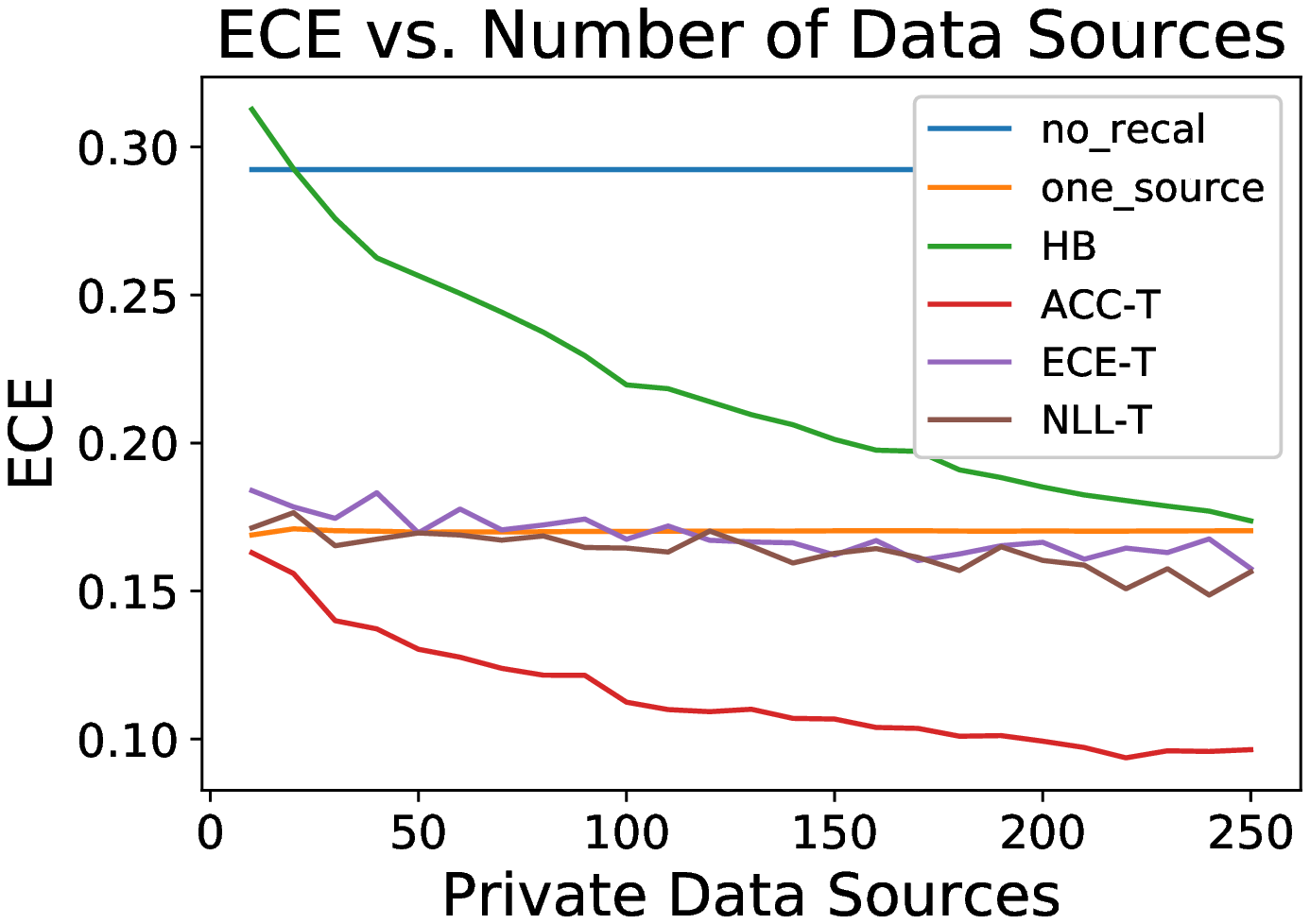}
        \caption{Samples = 10, $\epsilon = 1.0$}
    \end{subfigure}
    \hfill
    \begin{subfigure}[b]{0.325\textwidth}
        \centering
        \includegraphics[width=\textwidth]{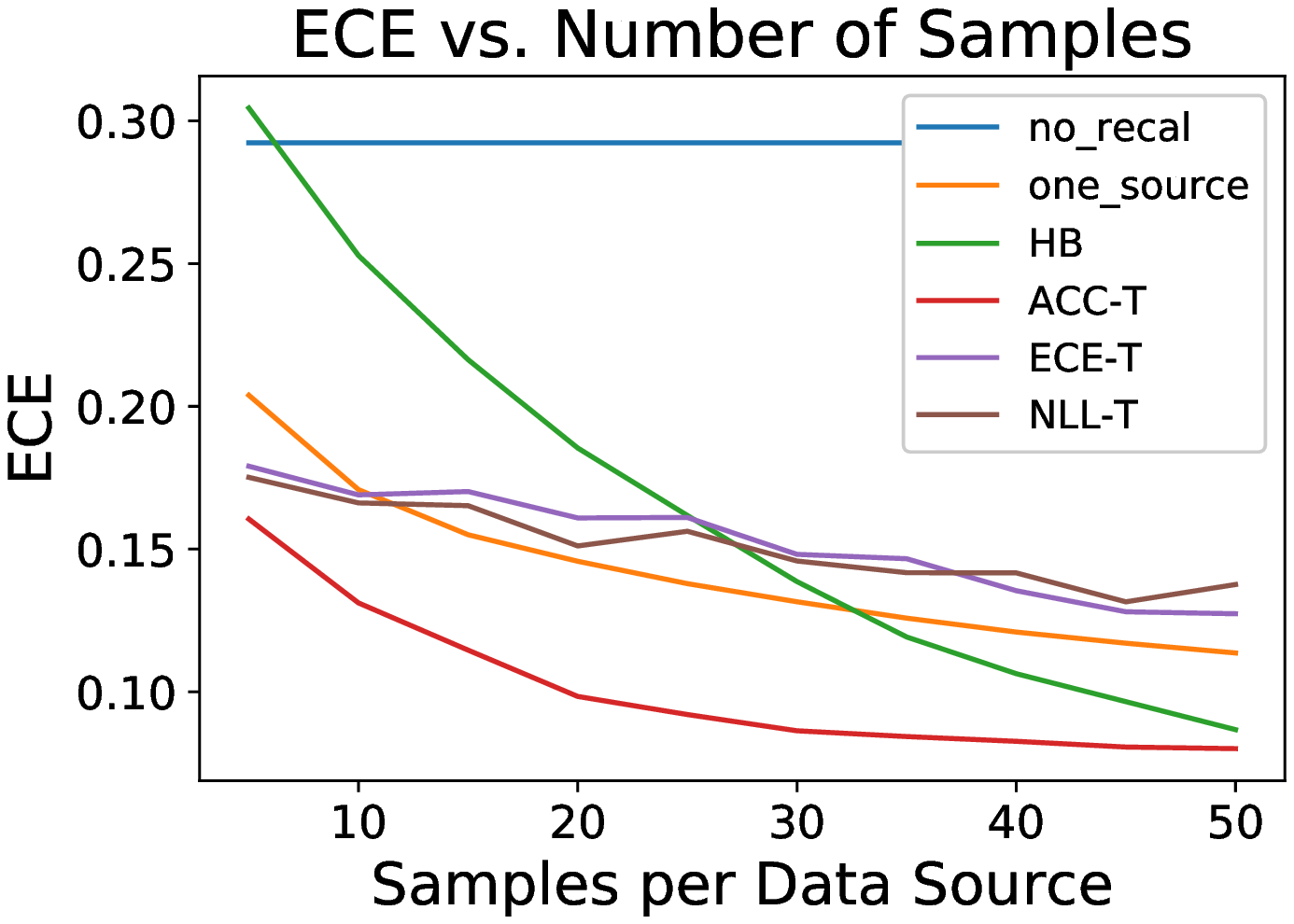}
        \caption{Sources = 50, $\epsilon = 1.0$}
    \end{subfigure}
    \hfill
    \begin{subfigure}[b]{0.325\textwidth}
        \centering
        \includegraphics[width=\textwidth]{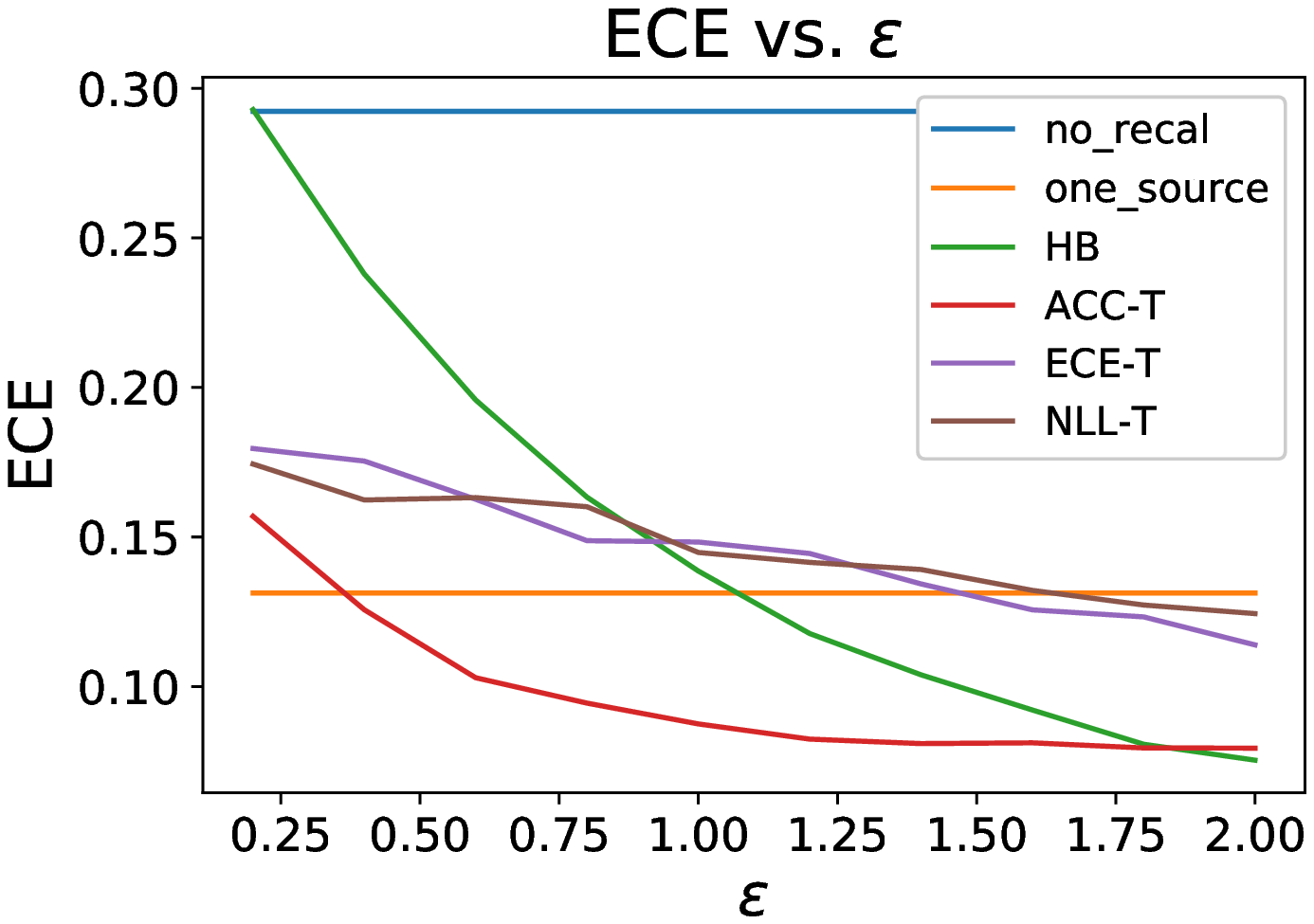}
        \caption{Samples = 30, Sources = 50}
    \end{subfigure}
\end{figure}

\begin{figure}[H]
    \centering
    \captionsetup{labelformat=empty}
    \caption{CIFAR-100, Gaussian noise perturbation}
    \begin{subfigure}[b]{0.325\textwidth}
        \centering
        \includegraphics[width=\textwidth]{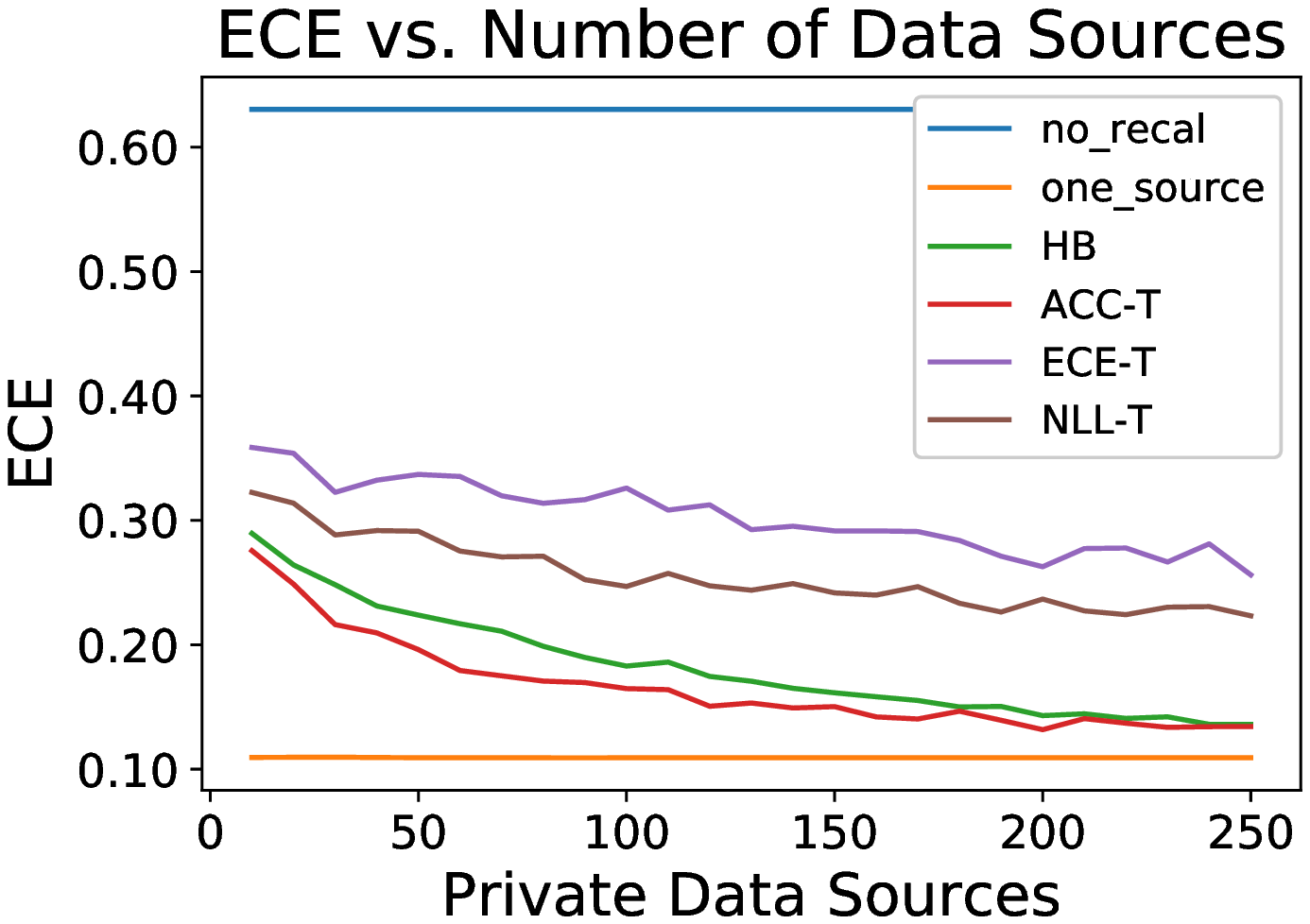}
        \caption{Samples = 10, $\epsilon = 1.0$}
    \end{subfigure}
    \hfill
    \begin{subfigure}[b]{0.325\textwidth}
        \centering
        \includegraphics[width=\textwidth]{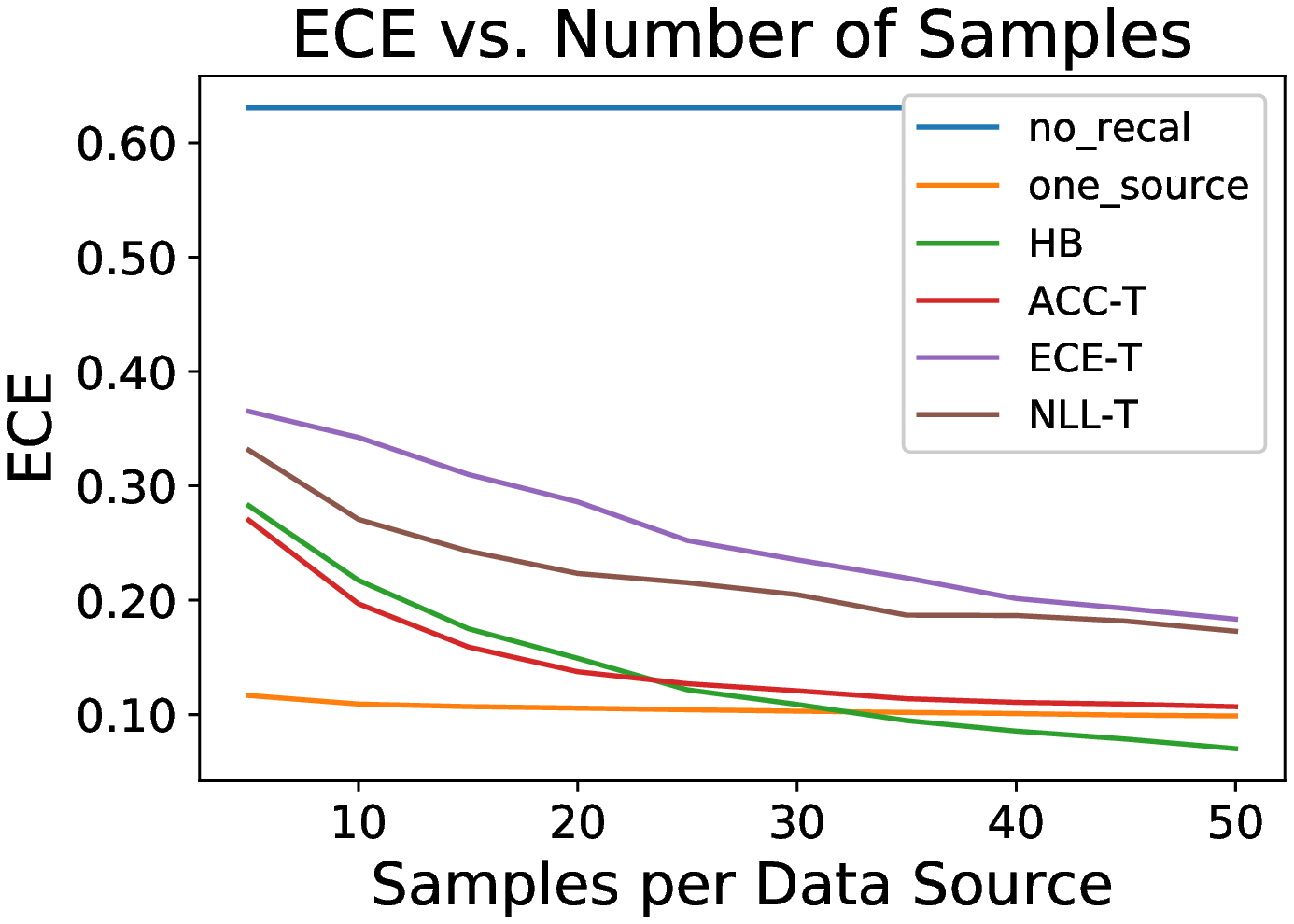}
        \caption{Sources = 50, $\epsilon = 1.0$}
    \end{subfigure}
    \hfill
    \begin{subfigure}[b]{0.325\textwidth}
        \centering
        \includegraphics[width=\textwidth]{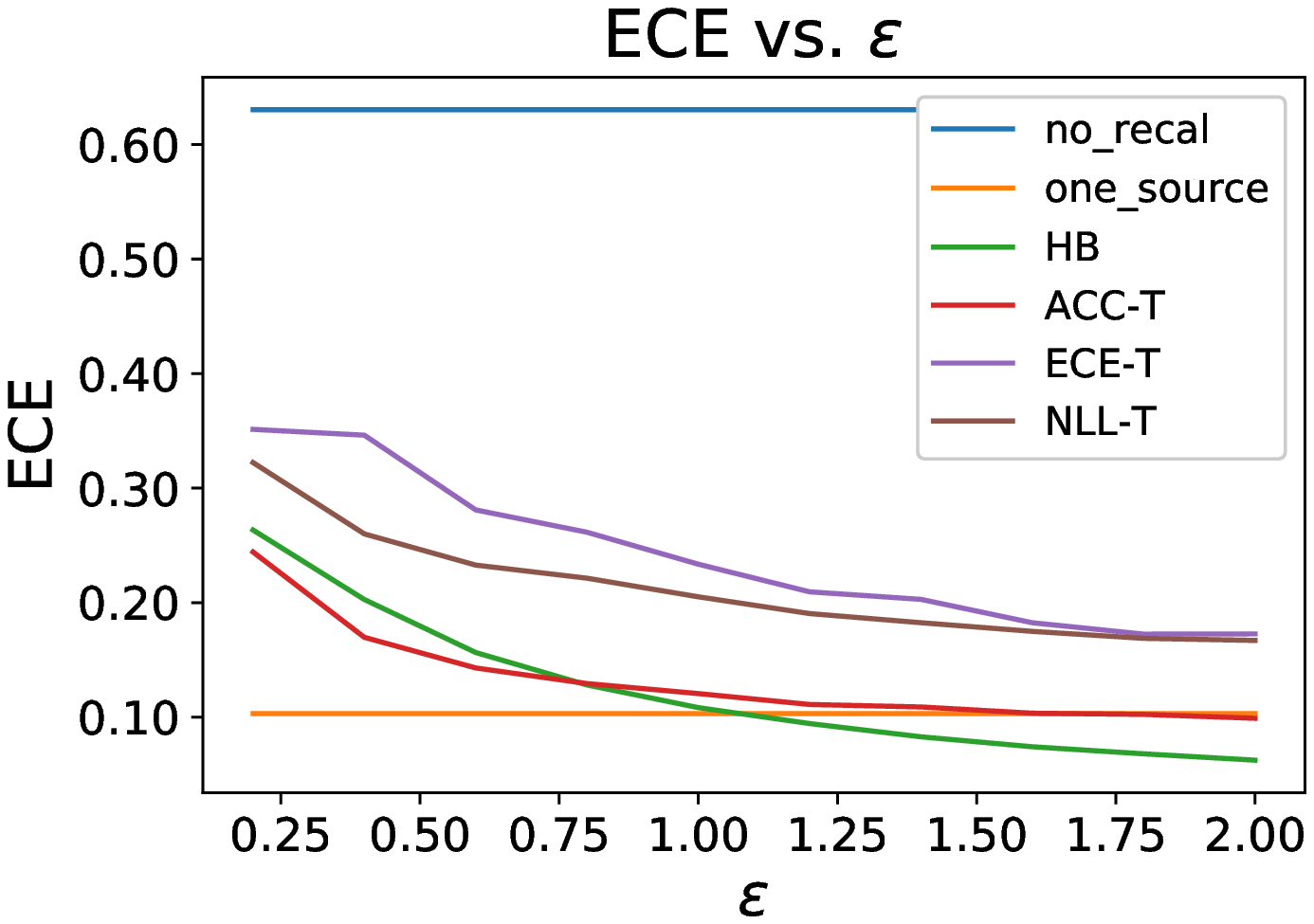}
        \caption{Samples = 30, Sources = 50}
    \end{subfigure}
\end{figure}

\begin{figure}[H]
    \centering
    \captionsetup{labelformat=empty}
    \caption{CIFAR-100, glass blur perturbation}
    \begin{subfigure}[b]{0.325\textwidth}
        \centering
        \includegraphics[width=\textwidth]{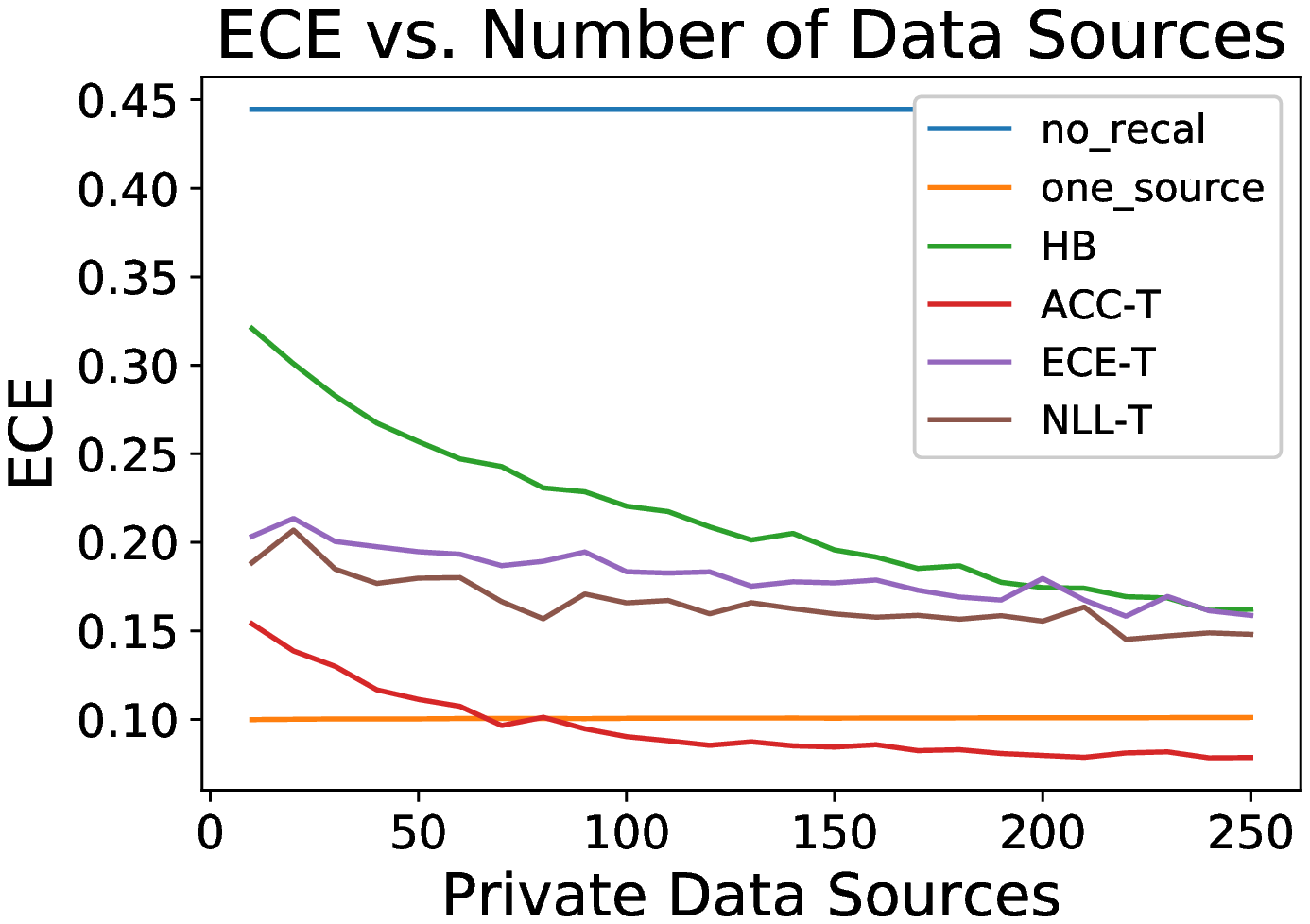}
        \caption{Samples = 10, $\epsilon = 1.0$}
    \end{subfigure}
    \hfill
    \begin{subfigure}[b]{0.325\textwidth}
        \centering
        \includegraphics[width=\textwidth]{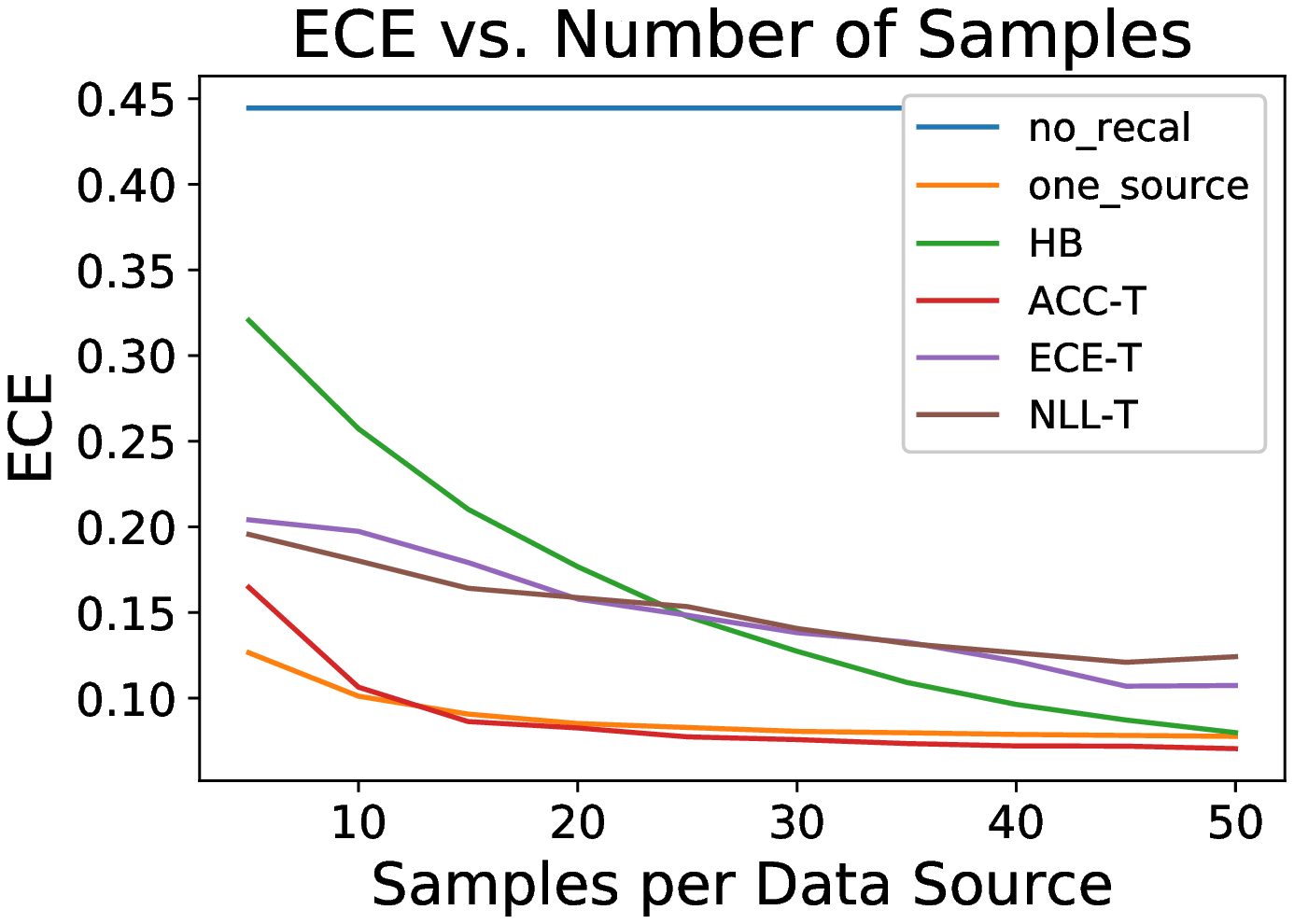}
        \caption{Sources = 50, $\epsilon = 1.0$}
    \end{subfigure}
    \hfill
    \begin{subfigure}[b]{0.325\textwidth}
        \centering
        \includegraphics[width=\textwidth]{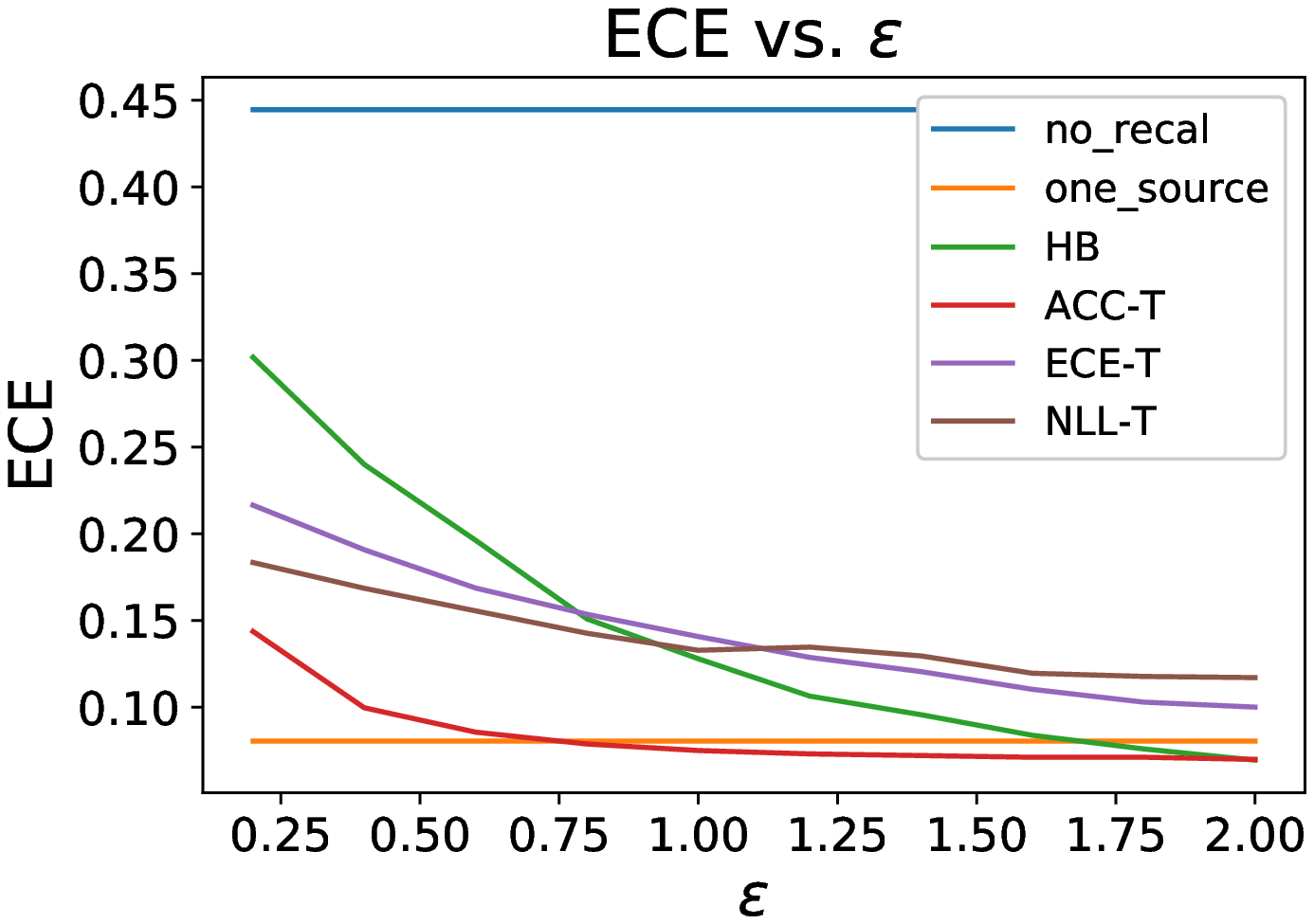}
        \caption{Samples = 30, Sources = 50}
    \end{subfigure}
\end{figure}

\begin{figure}[H]
    \centering
    \captionsetup{labelformat=empty}
    \caption{CIFAR-100, impulse noise perturbation}
    \begin{subfigure}[b]{0.325\textwidth}
        \centering
        \includegraphics[width=\textwidth]{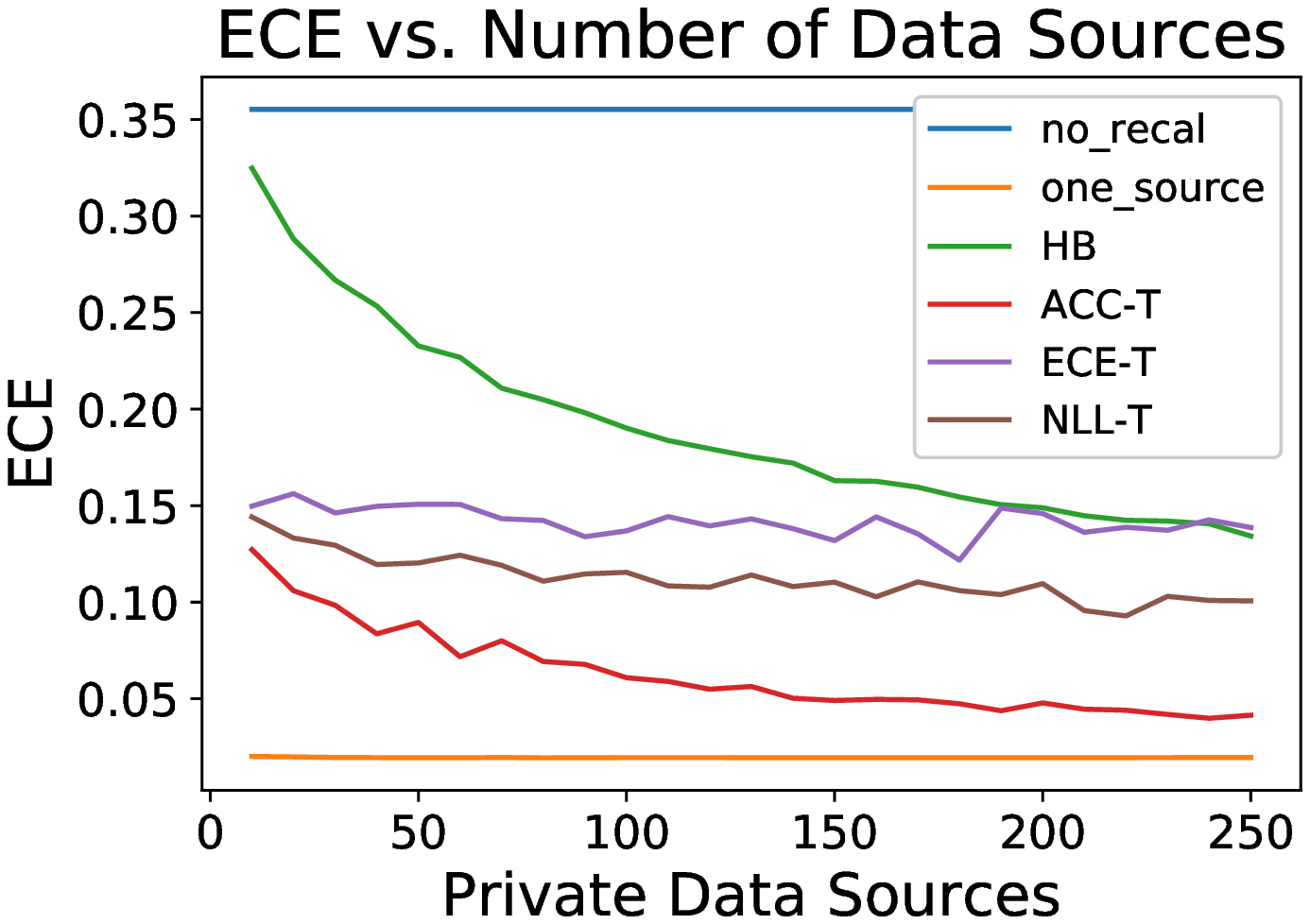}
        \caption{Samples = 10, $\epsilon = 1.0$}
    \end{subfigure}
    \hfill
    \begin{subfigure}[b]{0.325\textwidth}
        \centering
        \includegraphics[width=\textwidth]{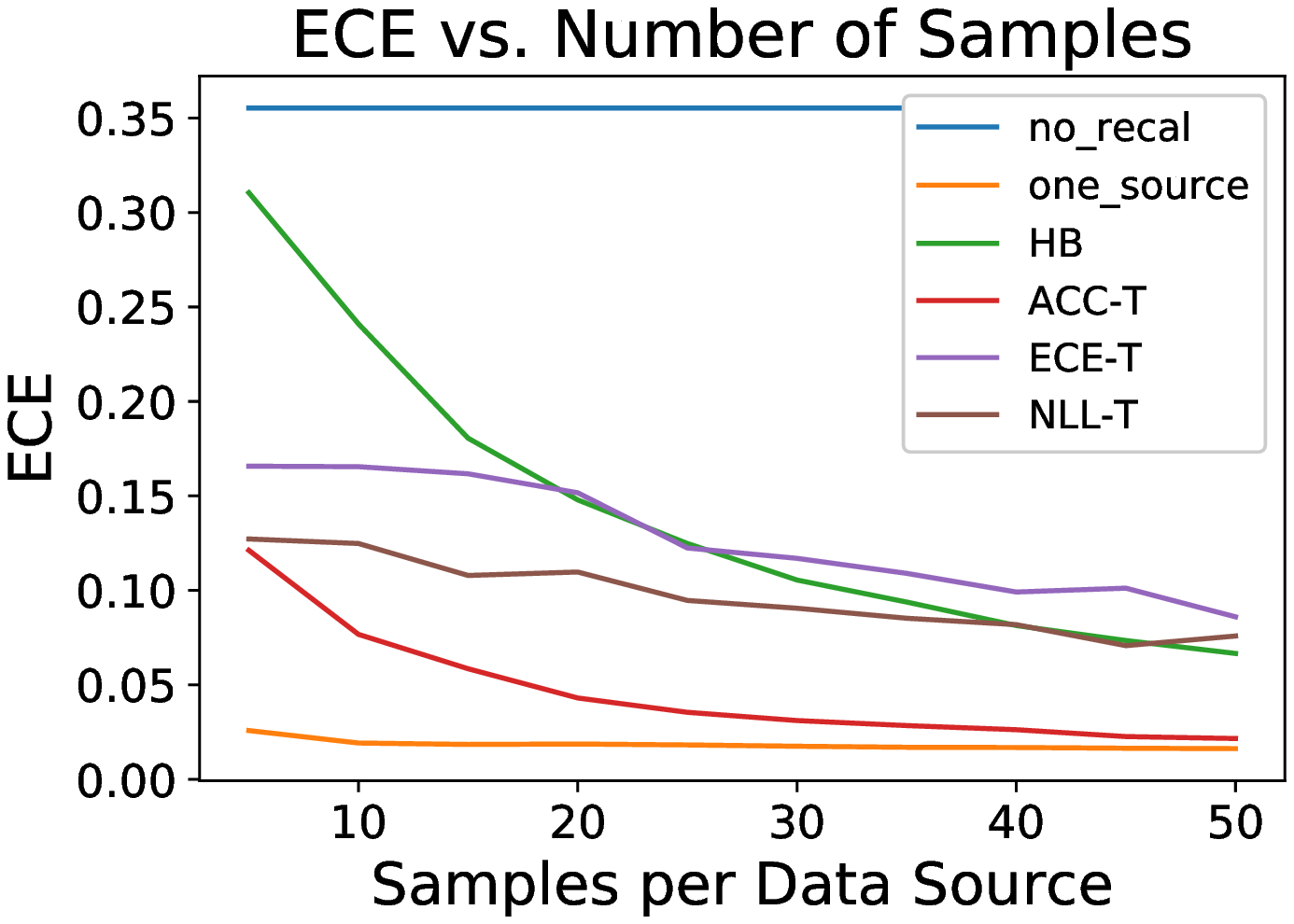}
        \caption{Sources = 50, $\epsilon = 1.0$}
    \end{subfigure}
    \hfill
    \begin{subfigure}[b]{0.325\textwidth}
        \centering
        \includegraphics[width=\textwidth]{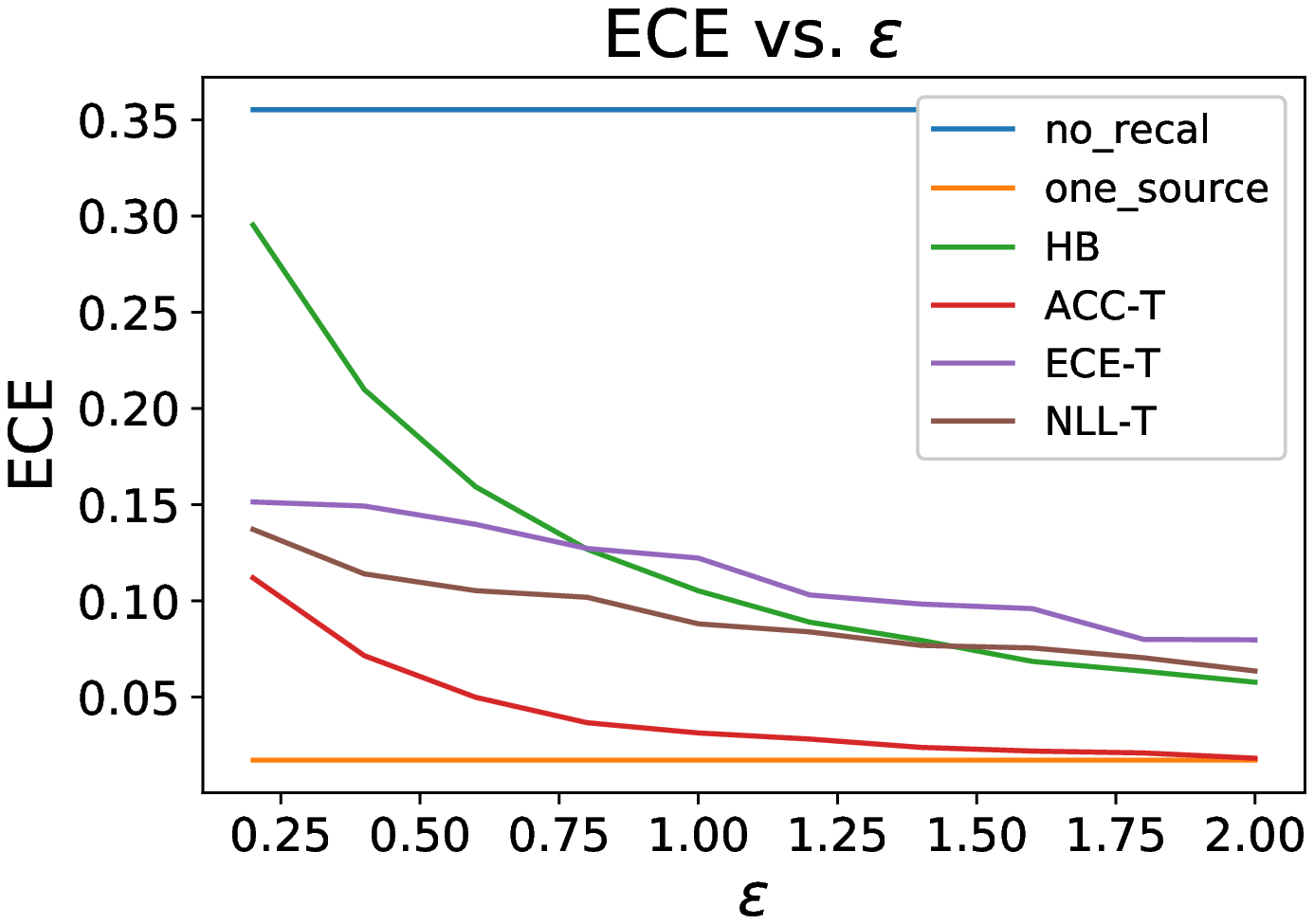}
        \caption{Samples = 30, Sources = 50}
    \end{subfigure}
\end{figure}

\begin{figure}[H]
    \centering
    \captionsetup{labelformat=empty}
    \caption{CIFAR-100, jpeg compression perturbation}
    \begin{subfigure}[b]{0.325\textwidth}
        \centering
        \includegraphics[width=\textwidth]{figures/CIFAR-100/CIFAR-100_vary-hospitals_jpeg_compression.eps}
        \caption{Samples = 10, $\epsilon = 1.0$}
    \end{subfigure}
    \hfill
    \begin{subfigure}[b]{0.325\textwidth}
        \centering
        \includegraphics[width=\textwidth]{figures/CIFAR-100/CIFAR-100_vary-samples_jpeg_compression.eps}
        \caption{Sources = 50, $\epsilon = 1.0$}
    \end{subfigure}
    \hfill
    \begin{subfigure}[b]{0.325\textwidth}
        \centering
        \includegraphics[width=\textwidth]{figures/CIFAR-100/CIFAR-100_vary-epsilon_jpeg_compression.eps}
        \caption{Samples = 30, Sources = 50}
    \end{subfigure}
\end{figure}

\begin{figure}[H]
    \centering
    \captionsetup{labelformat=empty}
    \caption{CIFAR-100, motion blur perturbation}
    \begin{subfigure}[b]{0.325\textwidth}
        \centering
        \includegraphics[width=\textwidth]{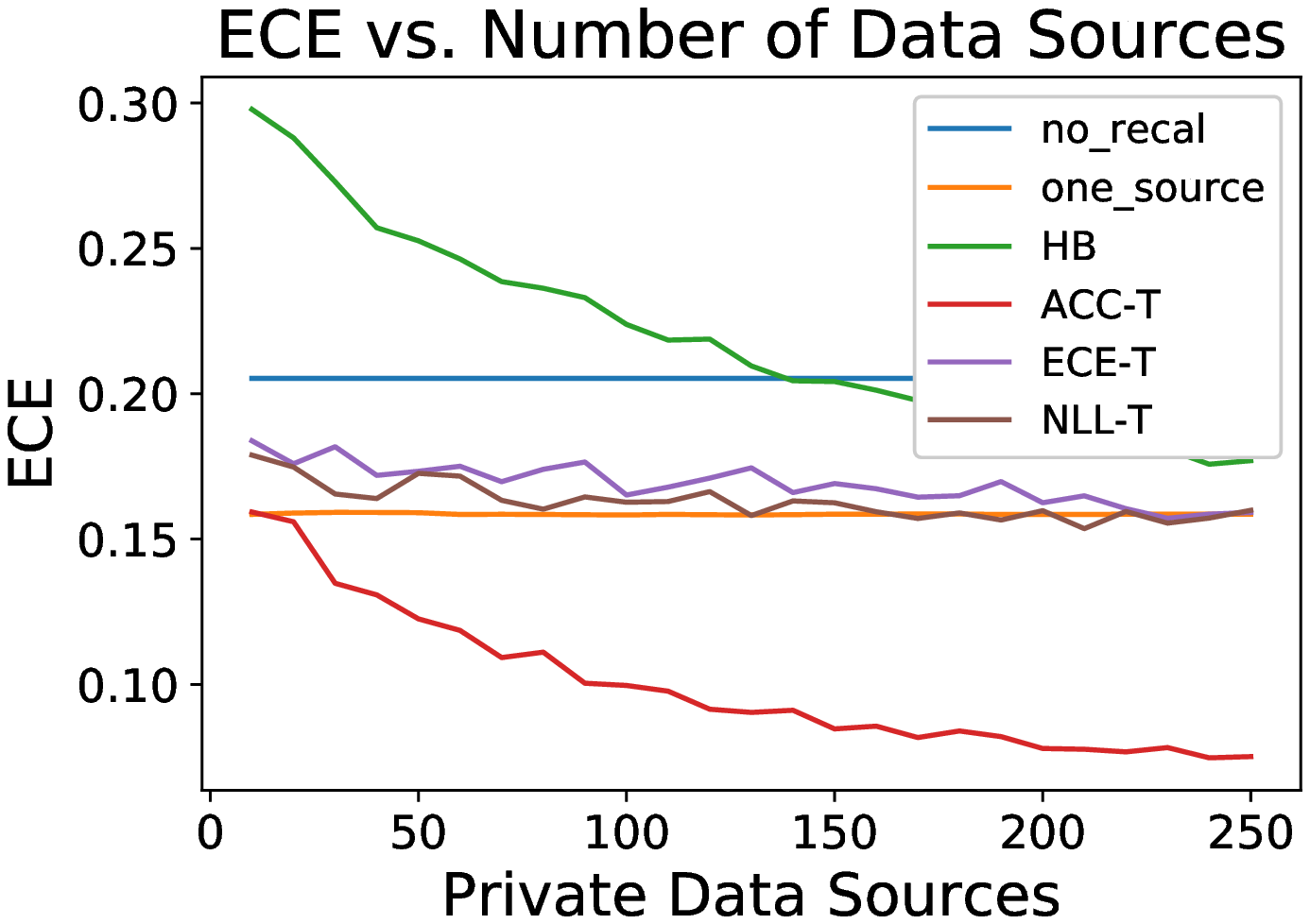}
        \caption{Samples = 10, $\epsilon = 1.0$}
    \end{subfigure}
    \hfill
    \begin{subfigure}[b]{0.325\textwidth}
        \centering
        \includegraphics[width=\textwidth]{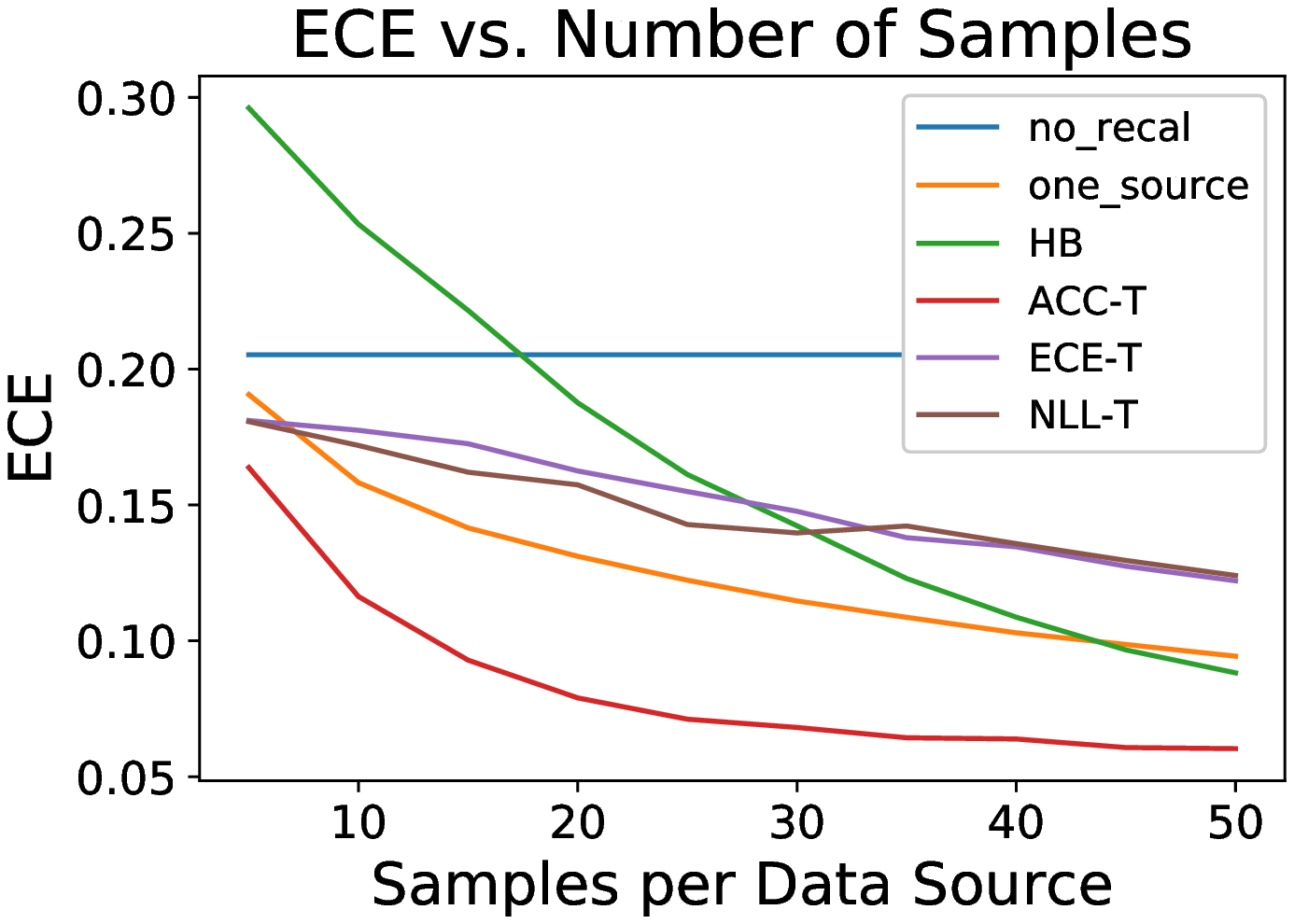}
        \caption{Sources = 50, $\epsilon = 1.0$}
    \end{subfigure}
    \hfill
    \begin{subfigure}[b]{0.325\textwidth}
        \centering
        \includegraphics[width=\textwidth]{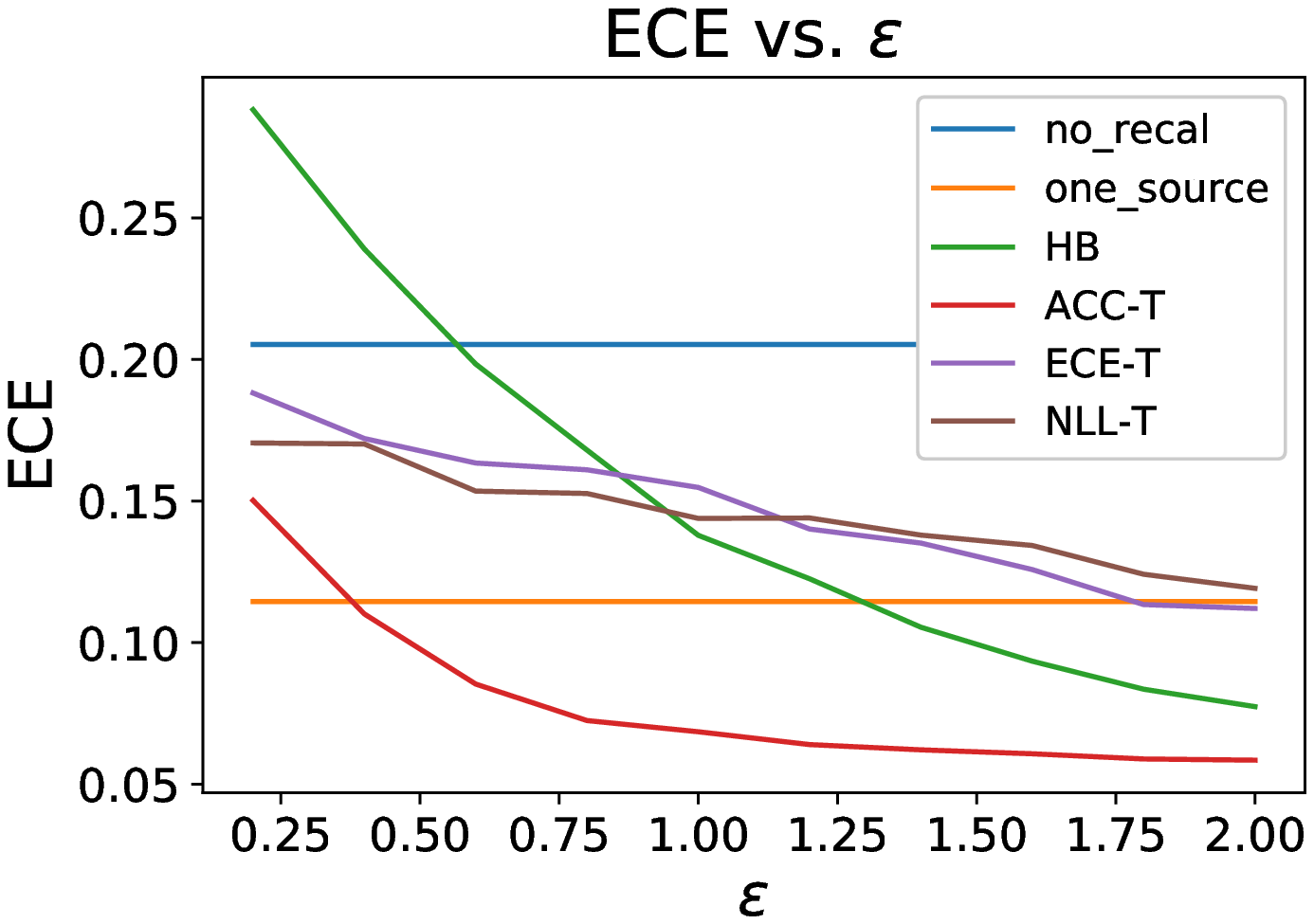}
        \caption{Samples = 30, Sources = 50}
    \end{subfigure}
\end{figure}

\begin{figure}[H]
    \centering
    \captionsetup{labelformat=empty}
    \caption{CIFAR-100, pixelate perturbation}
    \begin{subfigure}[b]{0.325\textwidth}
        \centering
        \includegraphics[width=\textwidth]{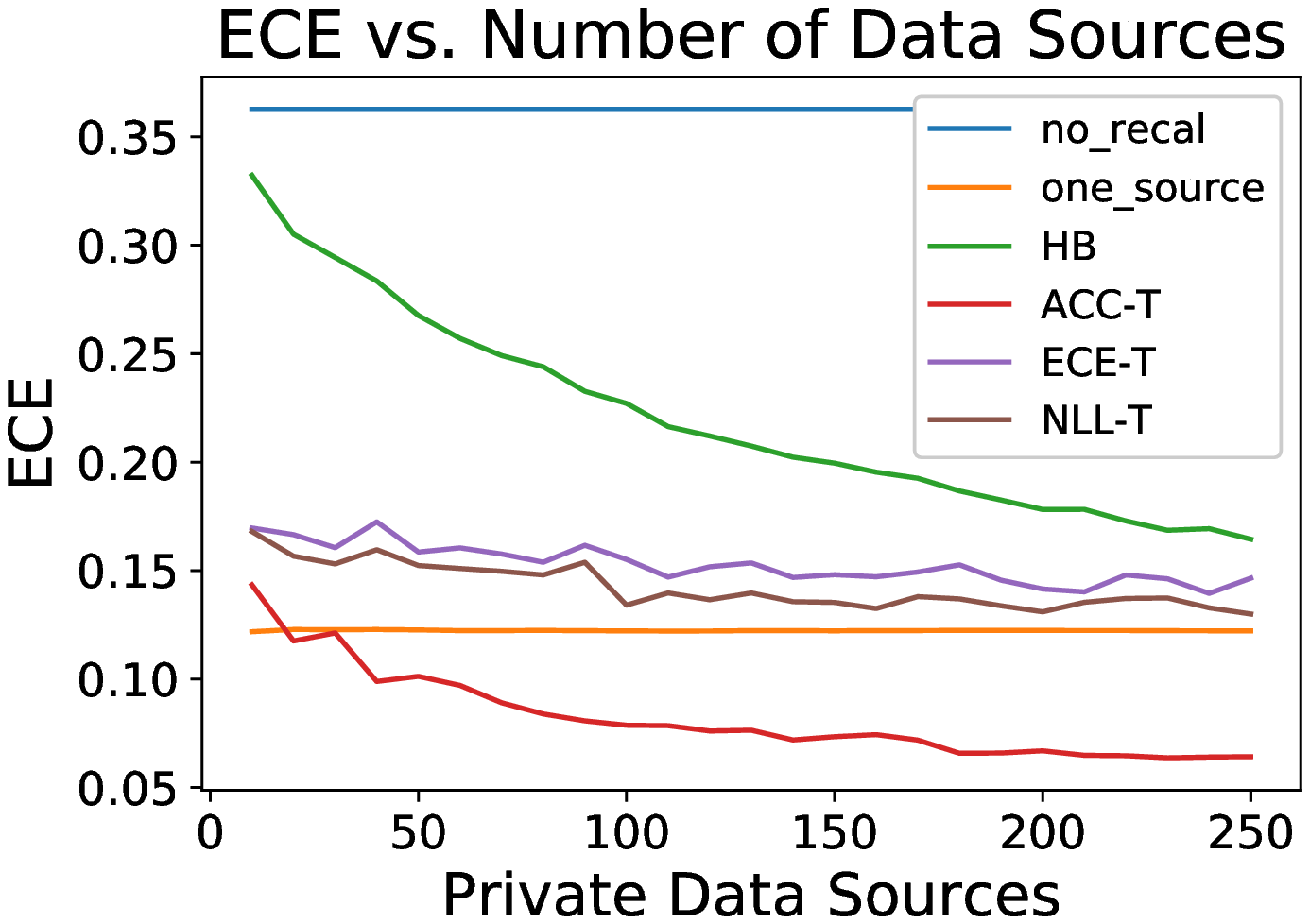}
        \caption{Samples = 10, $\epsilon = 1.0$}
    \end{subfigure}
    \hfill
    \begin{subfigure}[b]{0.325\textwidth}
        \centering
        \includegraphics[width=\textwidth]{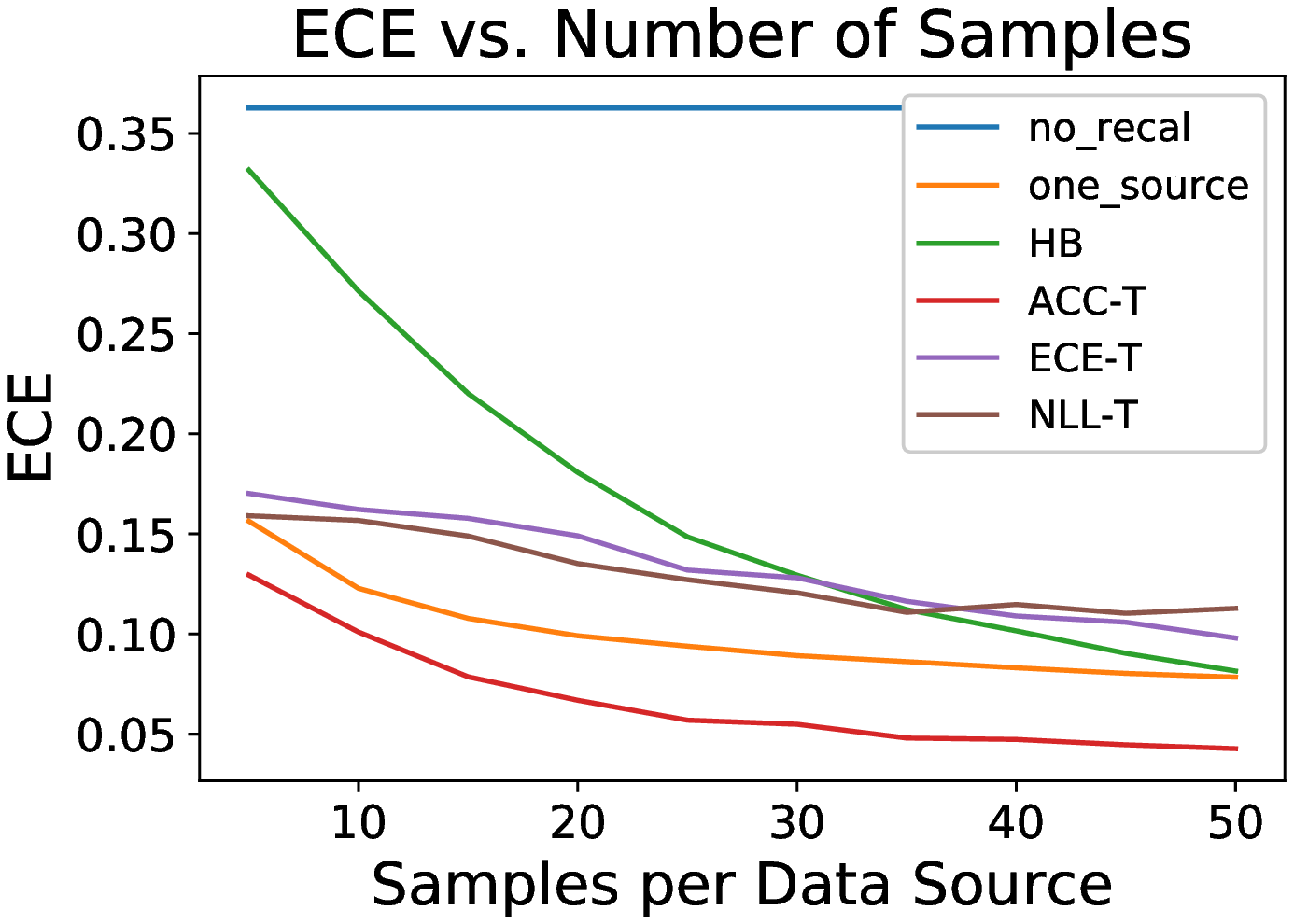}
        \caption{Sources = 50, $\epsilon = 1.0$}
    \end{subfigure}
    \hfill
    \begin{subfigure}[b]{0.325\textwidth}
        \centering
        \includegraphics[width=\textwidth]{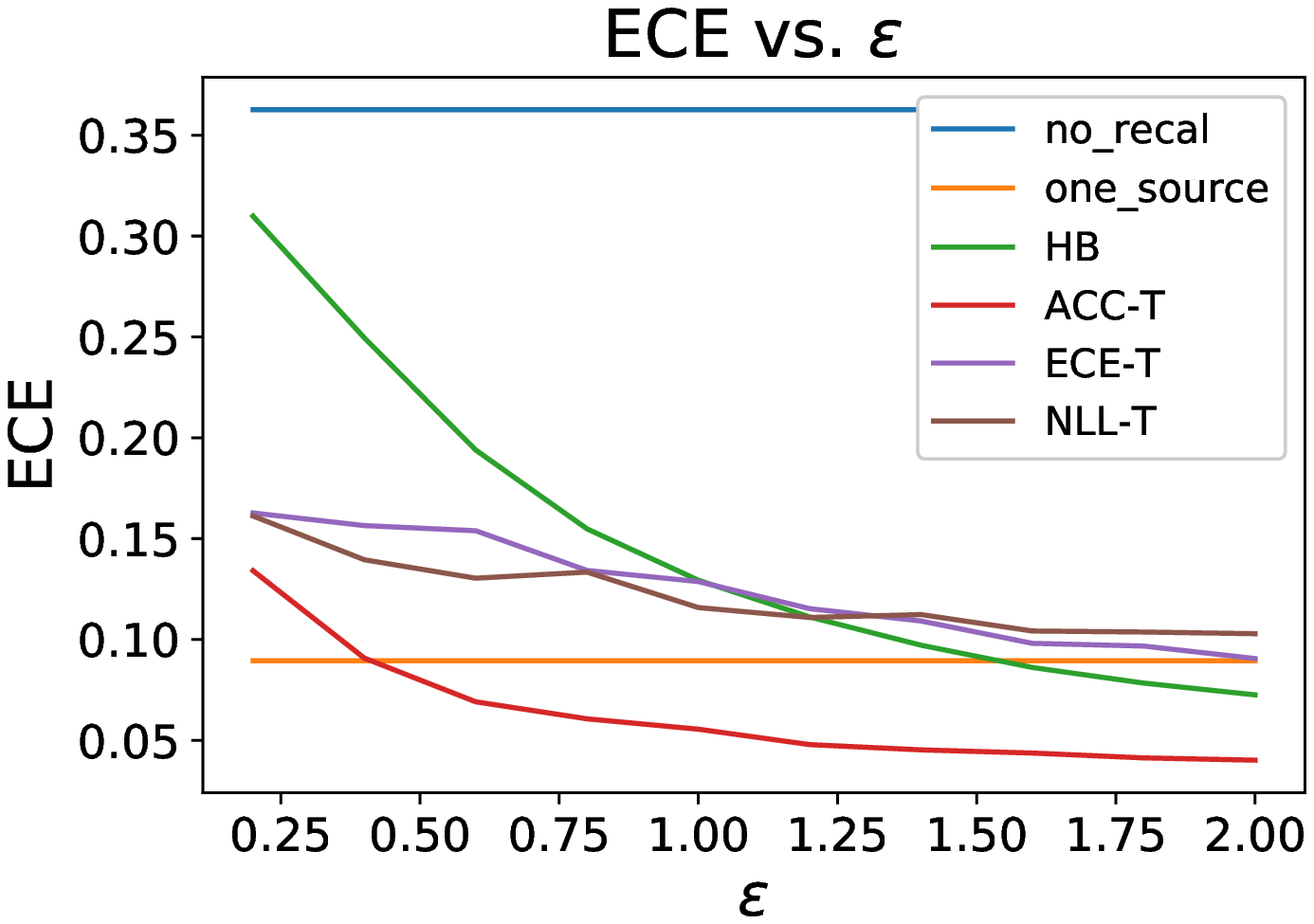}
        \caption{Samples = 30, Sources = 50}
    \end{subfigure}
\end{figure}

\begin{figure}[H]
    \centering
    \captionsetup{labelformat=empty}
    \caption{CIFAR-100, shot noise perturbation}
    \begin{subfigure}[b]{0.325\textwidth}
        \centering
        \includegraphics[width=\textwidth]{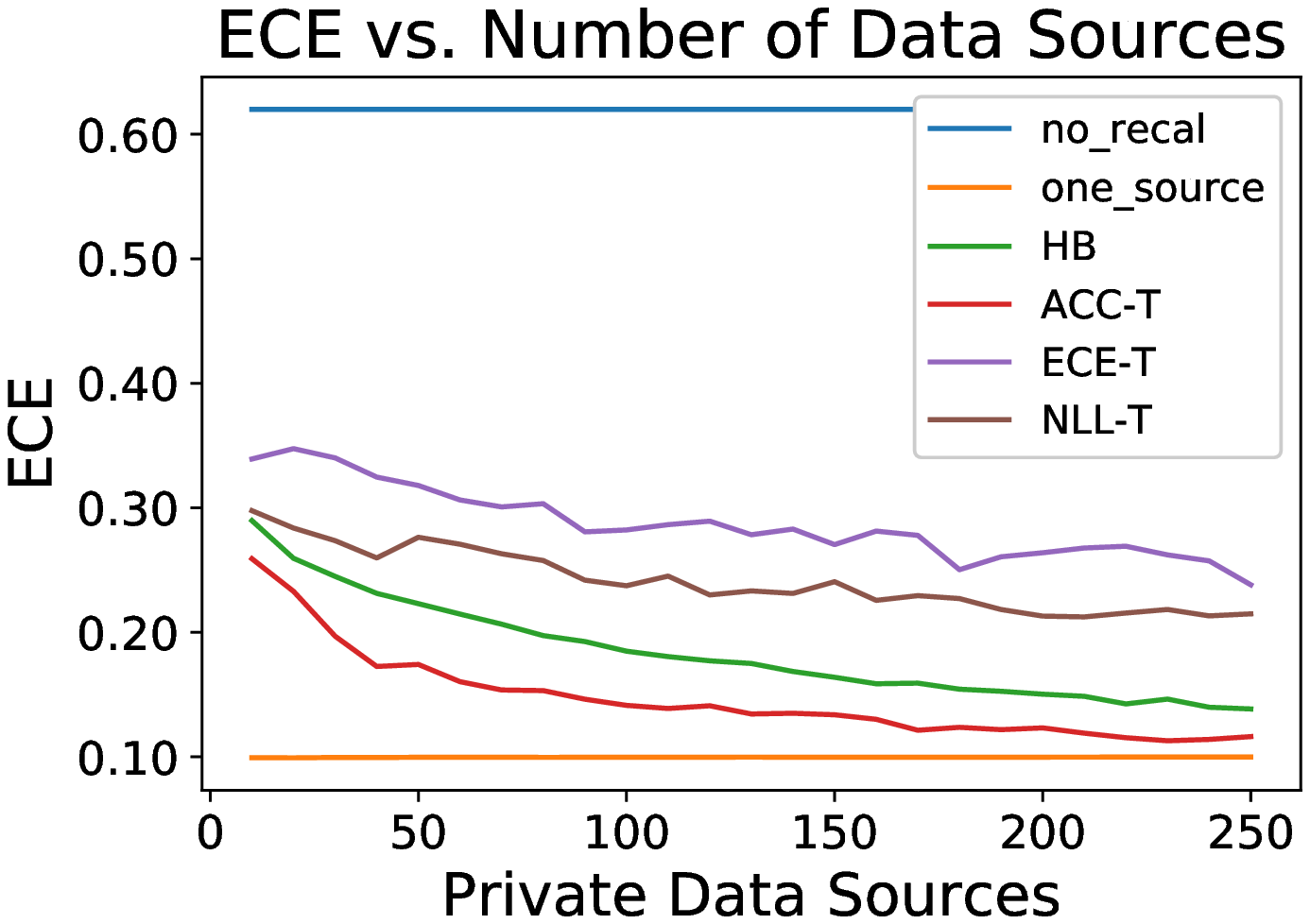}
        \caption{Samples = 10, $\epsilon = 1.0$}
    \end{subfigure}
    \hfill
    \begin{subfigure}[b]{0.325\textwidth}
        \centering
        \includegraphics[width=\textwidth]{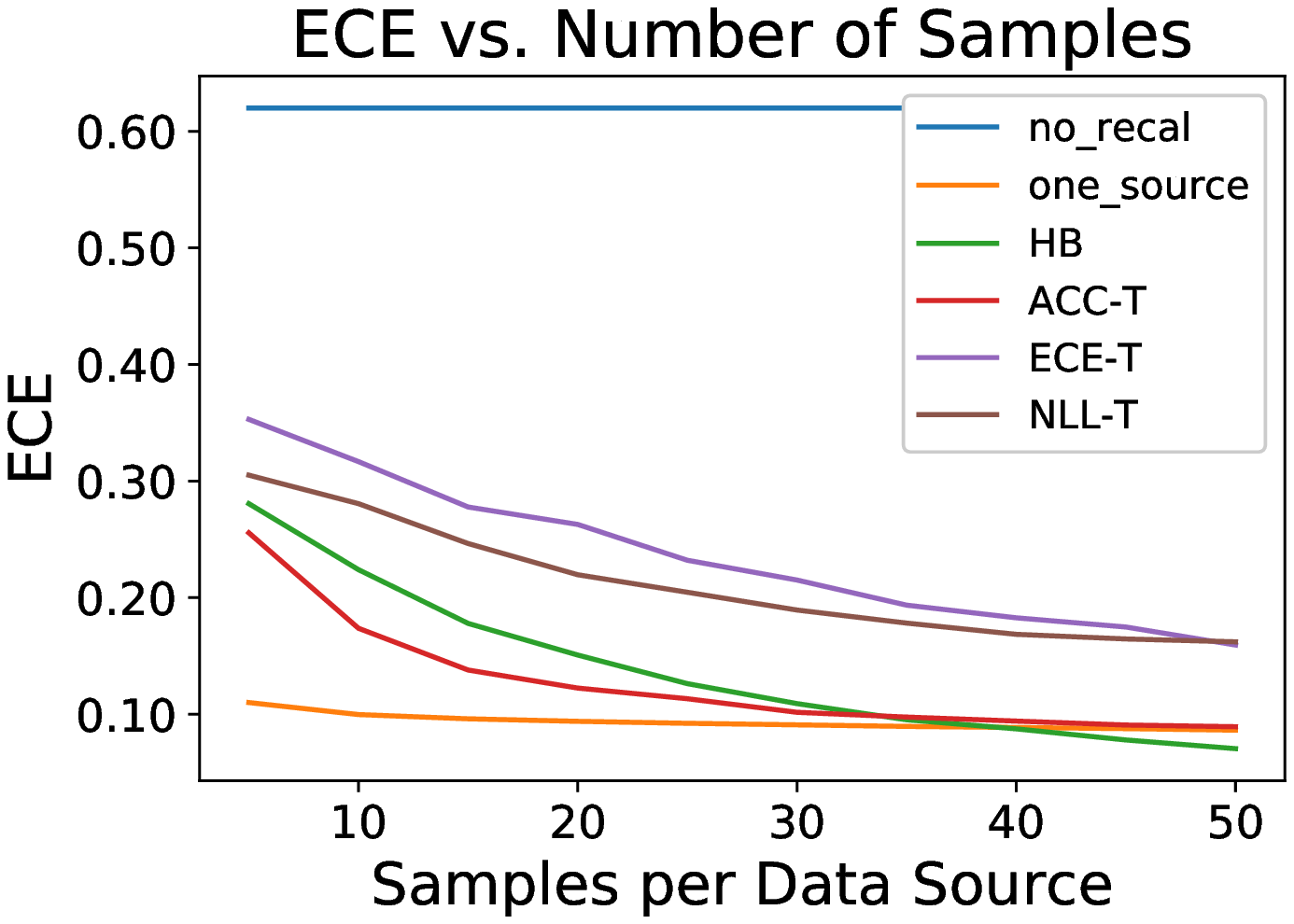}
        \caption{Sources = 50, $\epsilon = 1.0$}
    \end{subfigure}
    \hfill
    \begin{subfigure}[b]{0.325\textwidth}
        \centering
        \includegraphics[width=\textwidth]{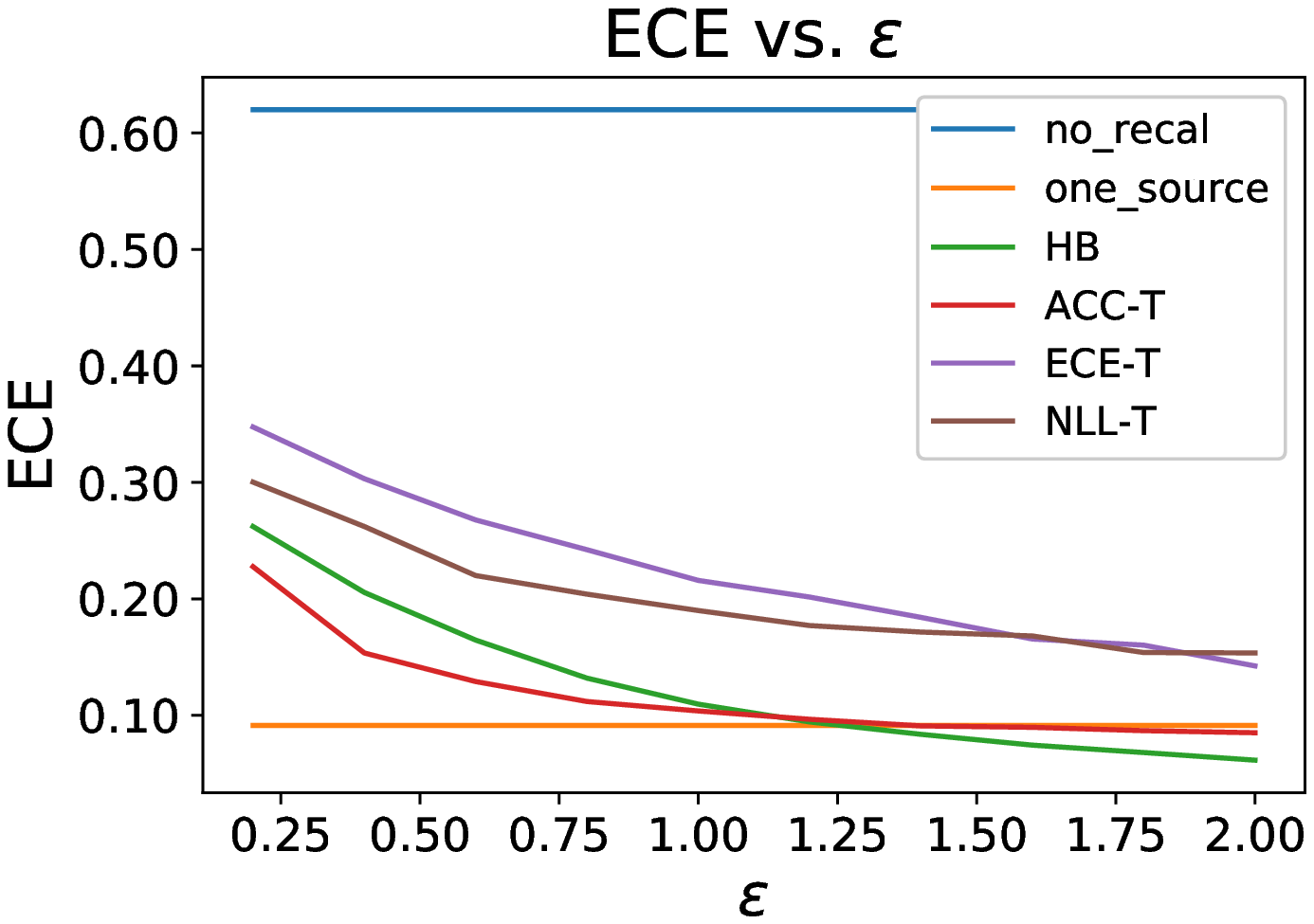}
        \caption{Samples = 30, Sources = 50}
    \end{subfigure}
\end{figure}

\begin{figure}[H]
    \centering
    \captionsetup{labelformat=empty}
    \caption{CIFAR-100, snow perturbation}
    \begin{subfigure}[b]{0.325\textwidth}
        \centering
        \includegraphics[width=\textwidth]{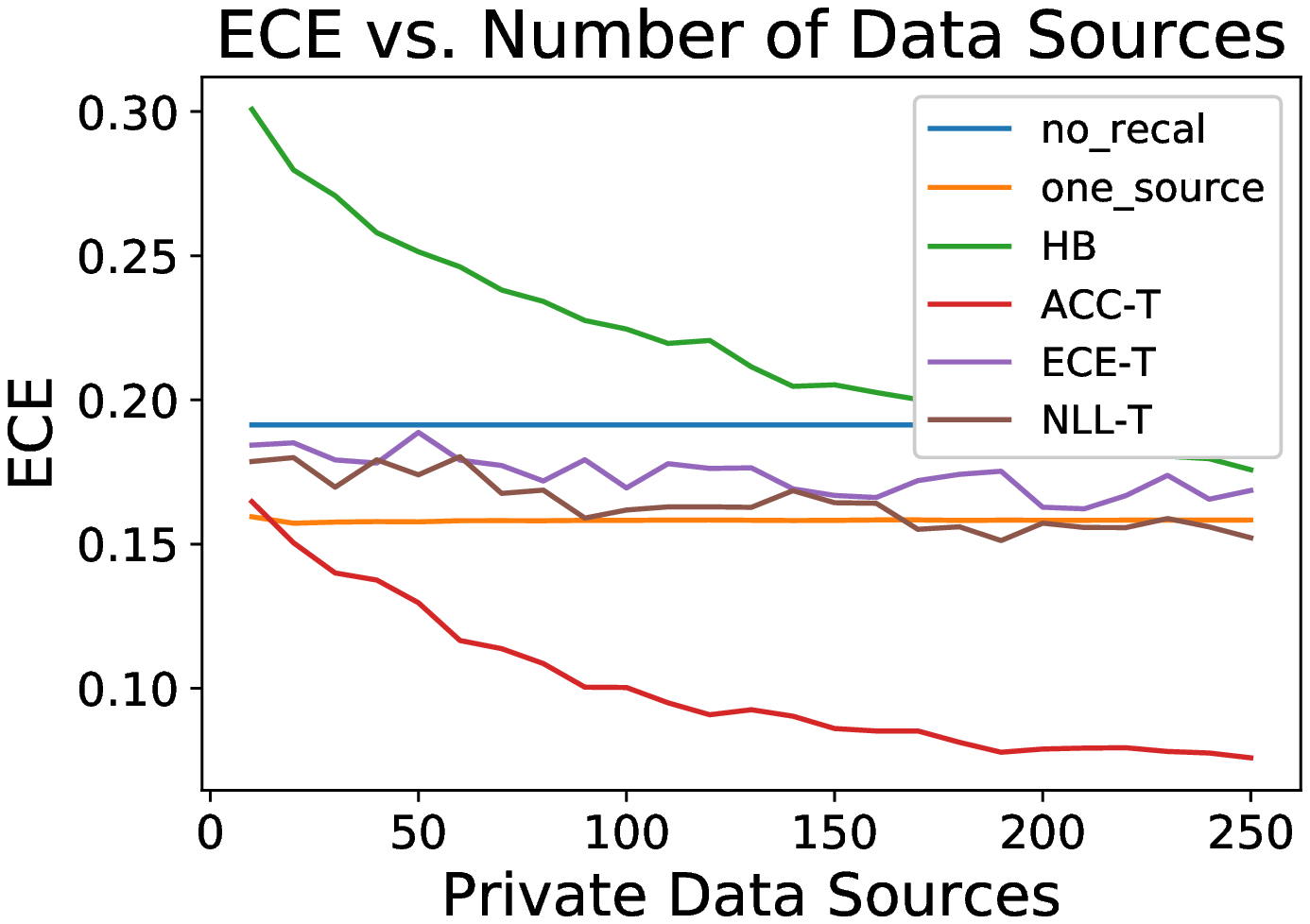}
        \caption{Samples = 10, $\epsilon = 1.0$}
    \end{subfigure}
    \hfill
    \begin{subfigure}[b]{0.325\textwidth}
        \centering
        \includegraphics[width=\textwidth]{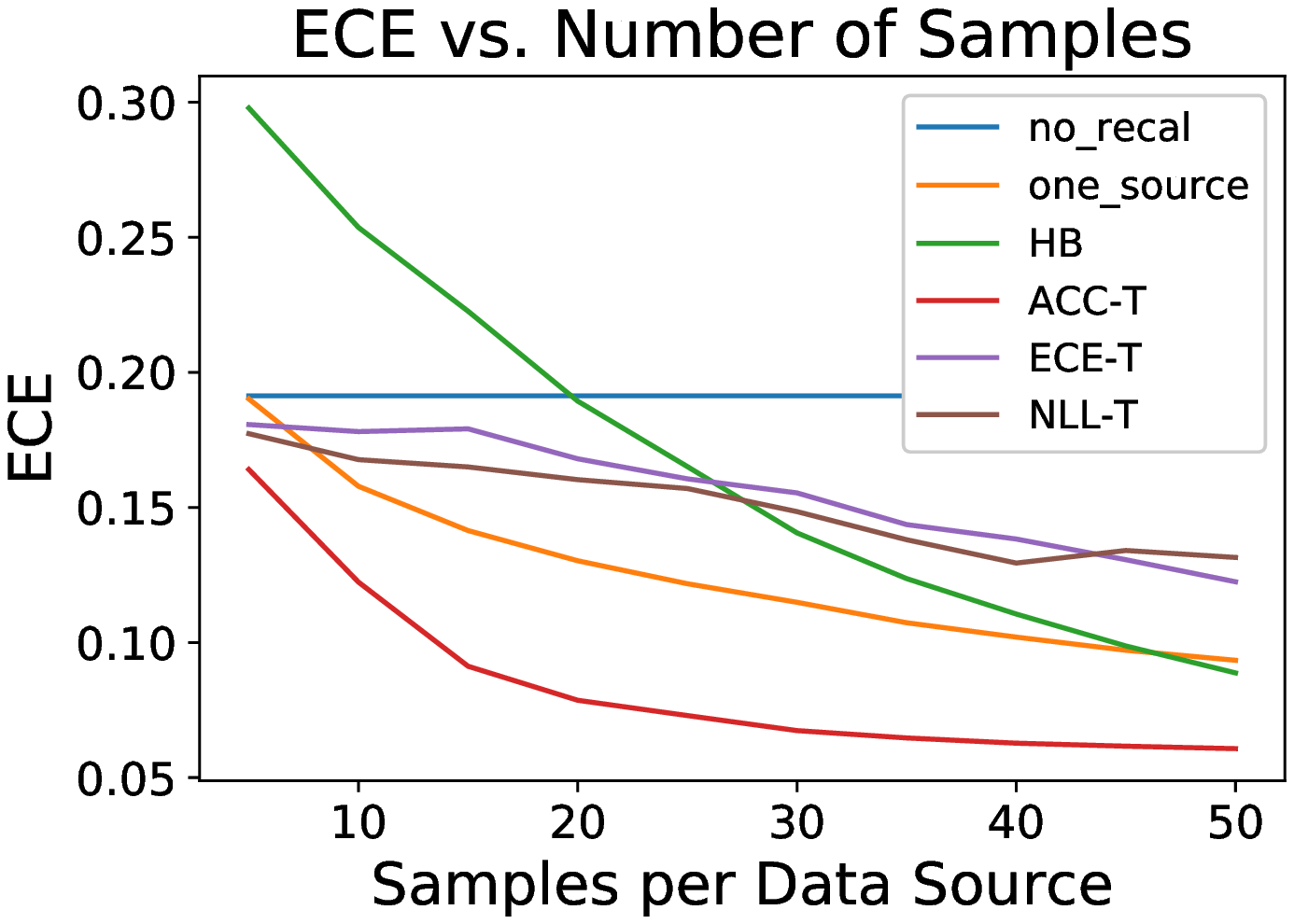}
        \caption{Sources = 50, $\epsilon = 1.0$}
    \end{subfigure}
    \hfill
    \begin{subfigure}[b]{0.325\textwidth}
        \centering
        \includegraphics[width=\textwidth]{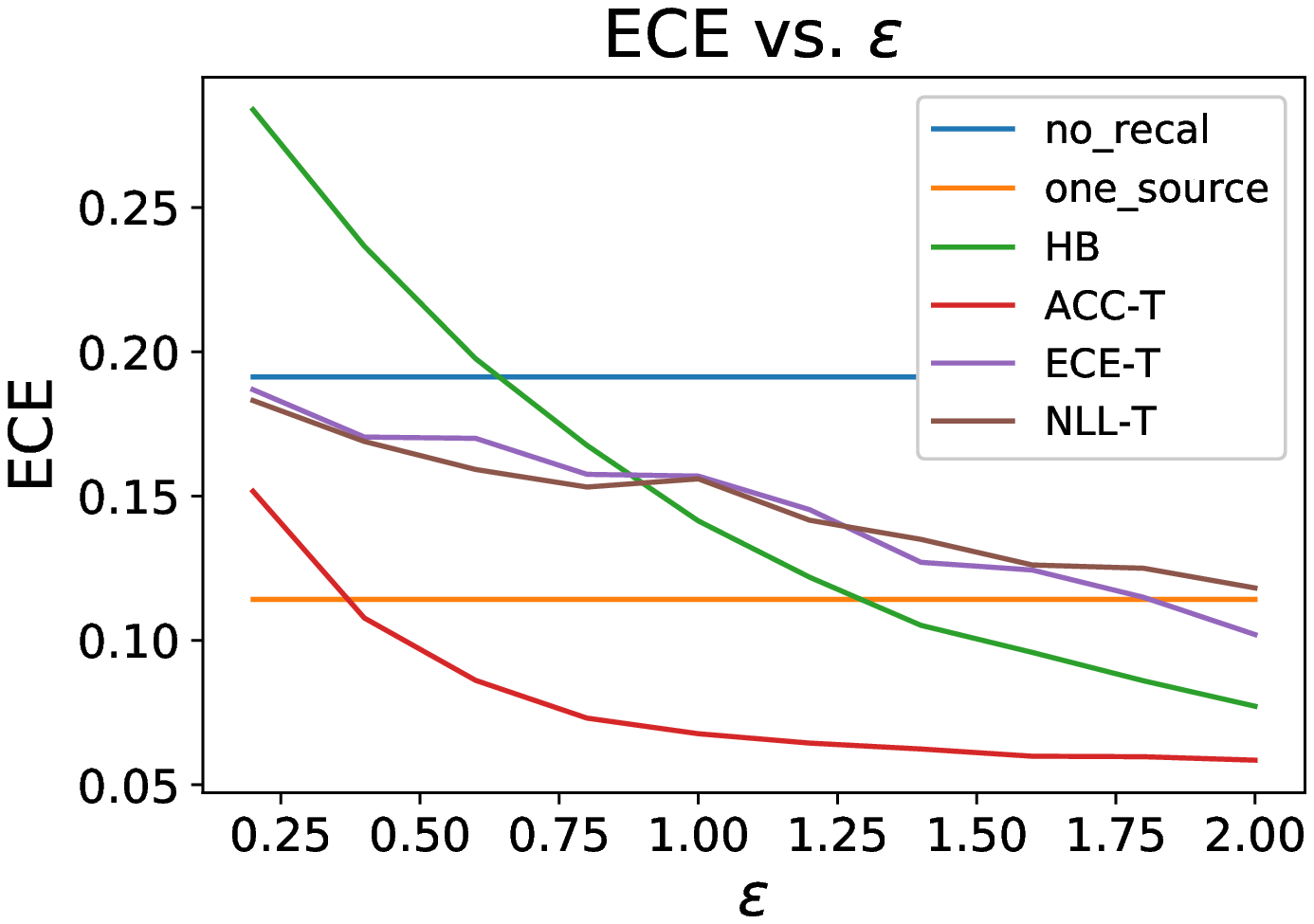}
        \caption{Samples = 30, Sources = 50}
    \end{subfigure}
\end{figure}

\begin{figure}[H]
    \centering
    \captionsetup{labelformat=empty}
    \caption{CIFAR-100, zoom blur perturbation}
    \begin{subfigure}[b]{0.325\textwidth}
        \centering
        \includegraphics[width=\textwidth]{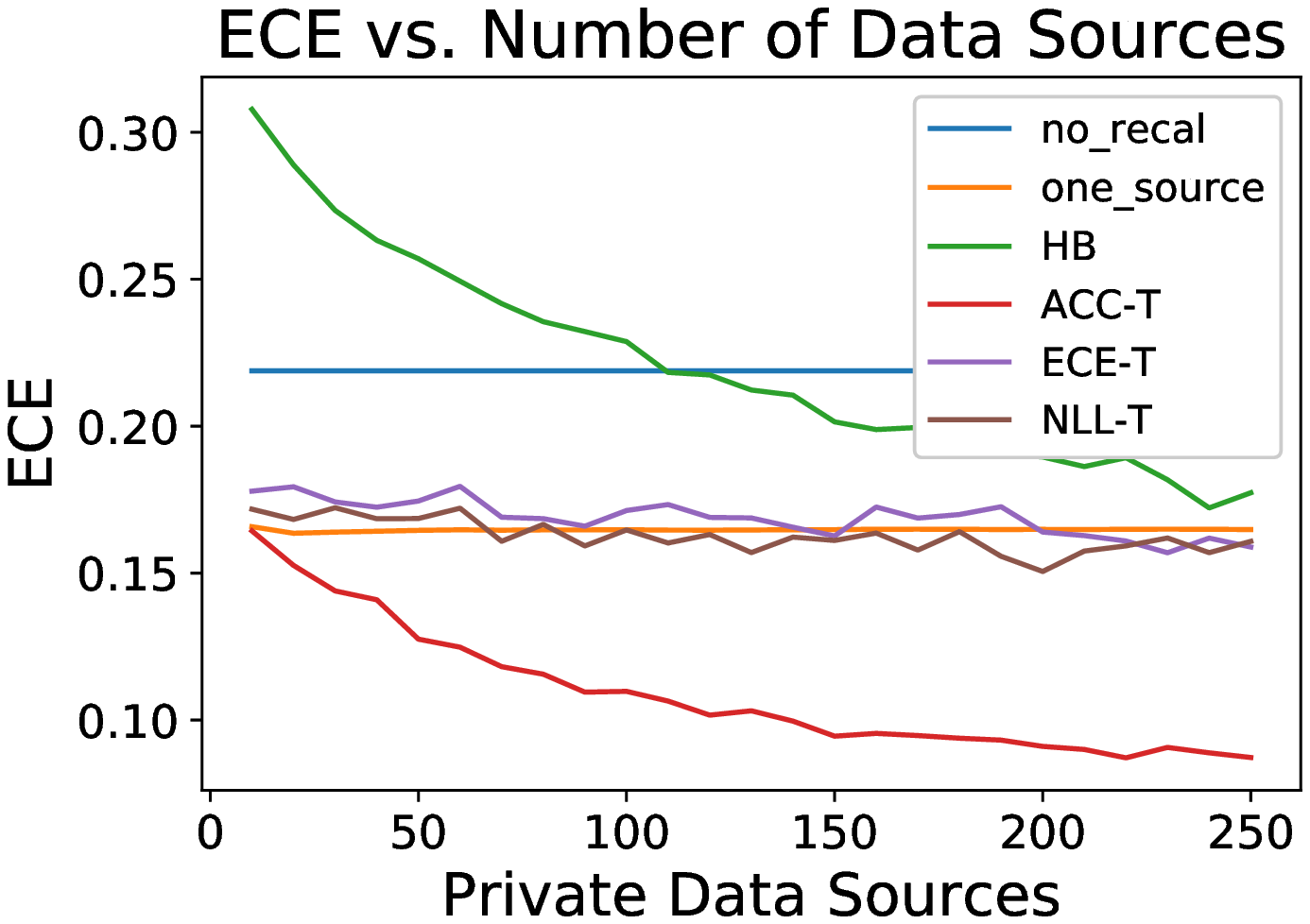}
        \caption{Samples = 10, $\epsilon = 1.0$}
    \end{subfigure}
    \hfill
    \begin{subfigure}[b]{0.325\textwidth}
        \centering
        \includegraphics[width=\textwidth]{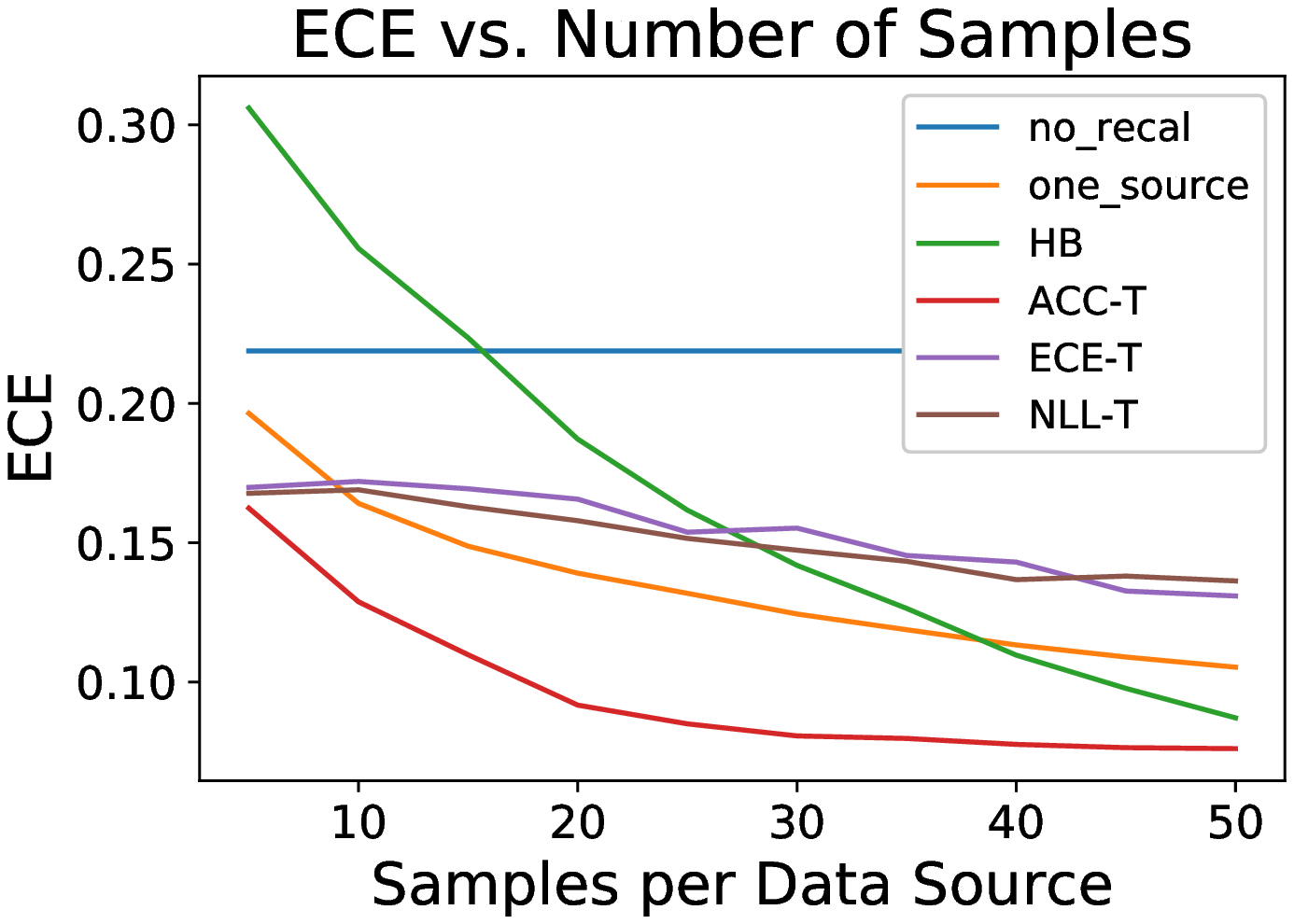}
        \caption{Sources = 50, $\epsilon = 1.0$}
    \end{subfigure}
    \hfill
    \begin{subfigure}[b]{0.325\textwidth}
        \centering
        \includegraphics[width=\textwidth]{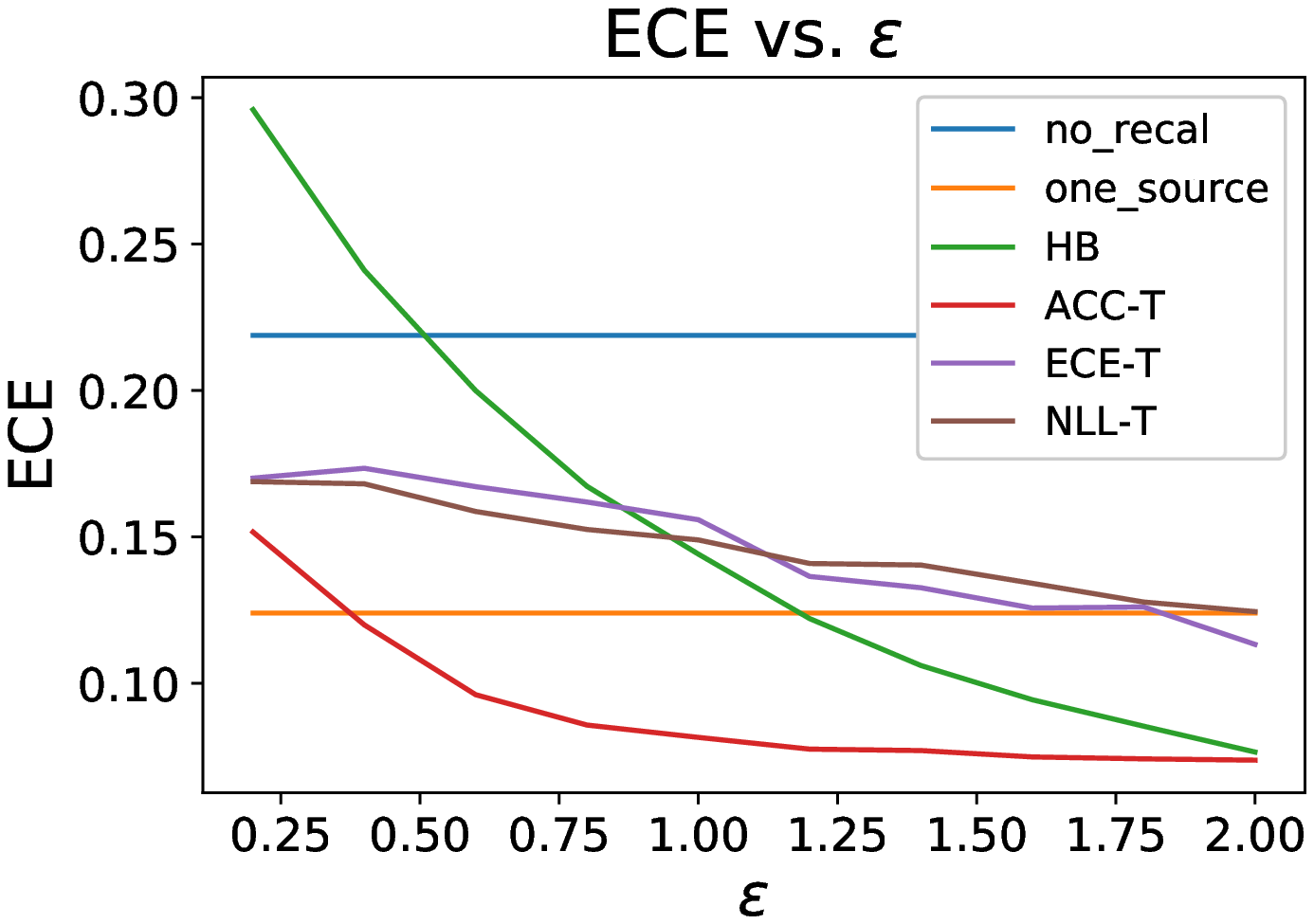}
        \caption{Samples = 30, Sources = 50}
    \end{subfigure}
\end{figure}

\subsubsection*{CIFAR-10 Results}

\begin{figure}[H]
    \centering
    \captionsetup{labelformat=empty}
    \caption{CIFAR-10, unperturbed}
    \begin{subfigure}[b]{0.325\textwidth}
        \centering
        \includegraphics[width=\textwidth]{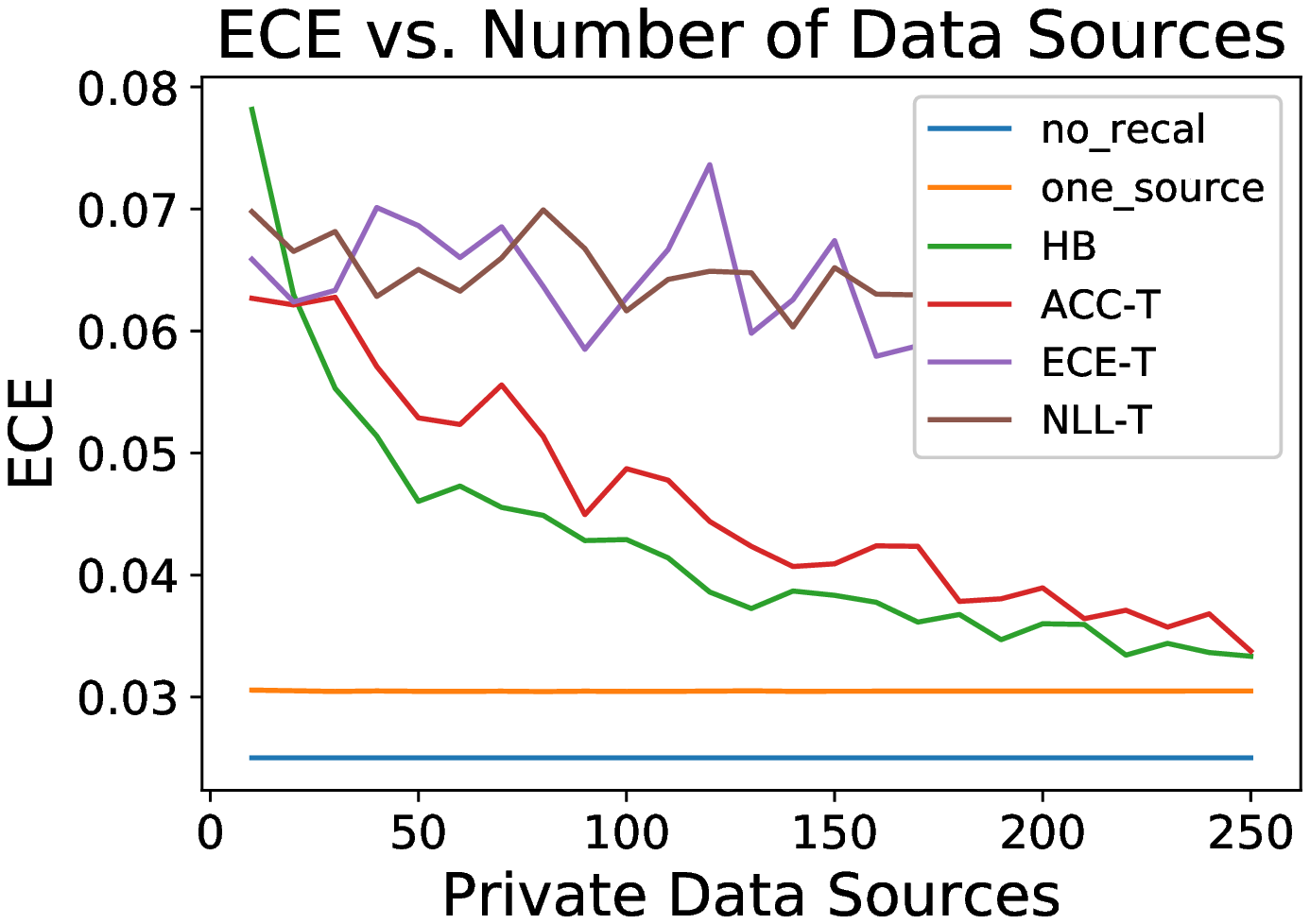}
        \caption{Samples = 10, $\epsilon = 1.0$}
    \end{subfigure}
    \hfill
    \begin{subfigure}[b]{0.325\textwidth}
        \centering
        \includegraphics[width=\textwidth]{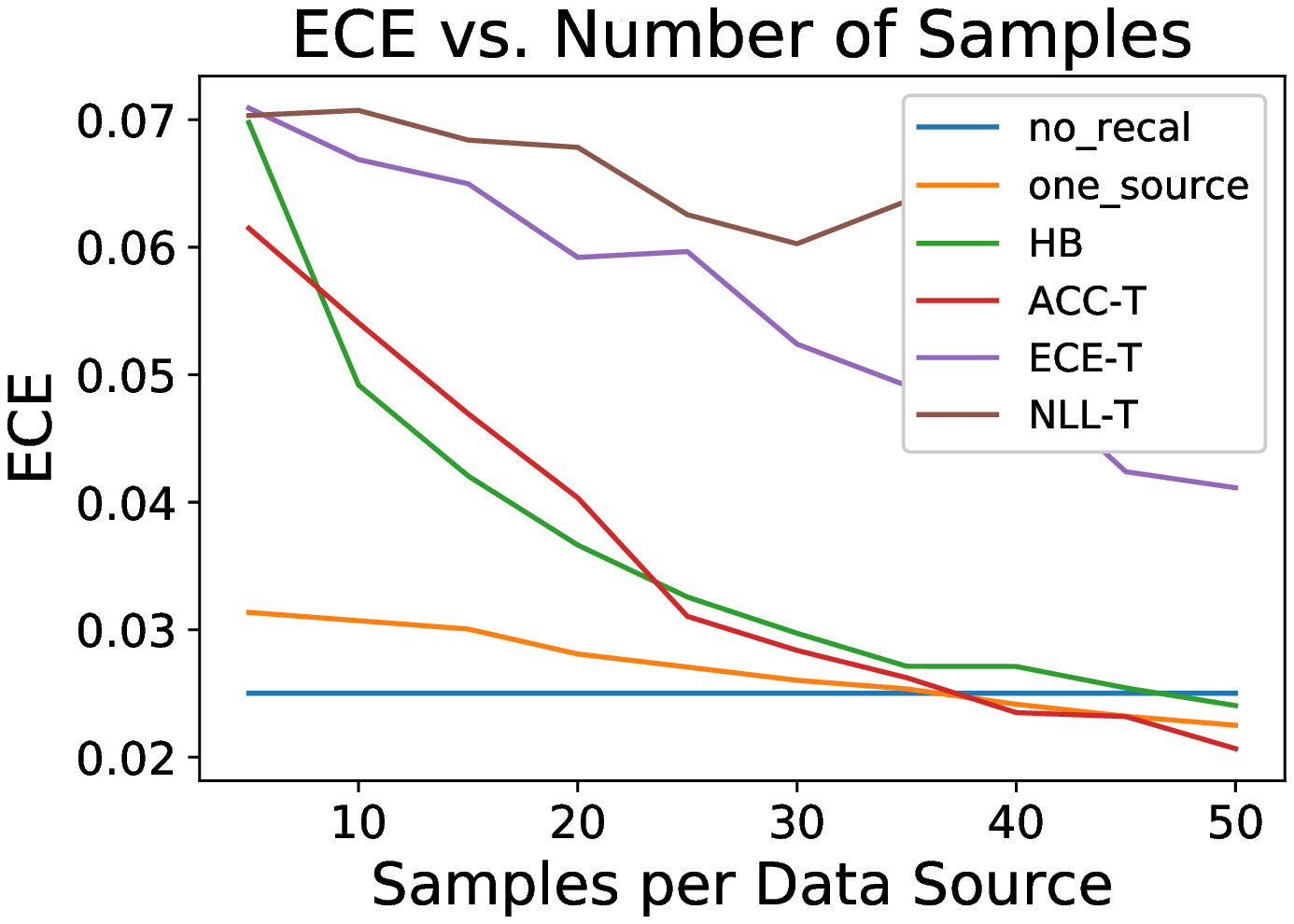}
        \caption{Sources = 50, $\epsilon = 1.0$}
    \end{subfigure}
    \hfill
    \begin{subfigure}[b]{0.325\textwidth}
        \centering
        \includegraphics[width=\textwidth]{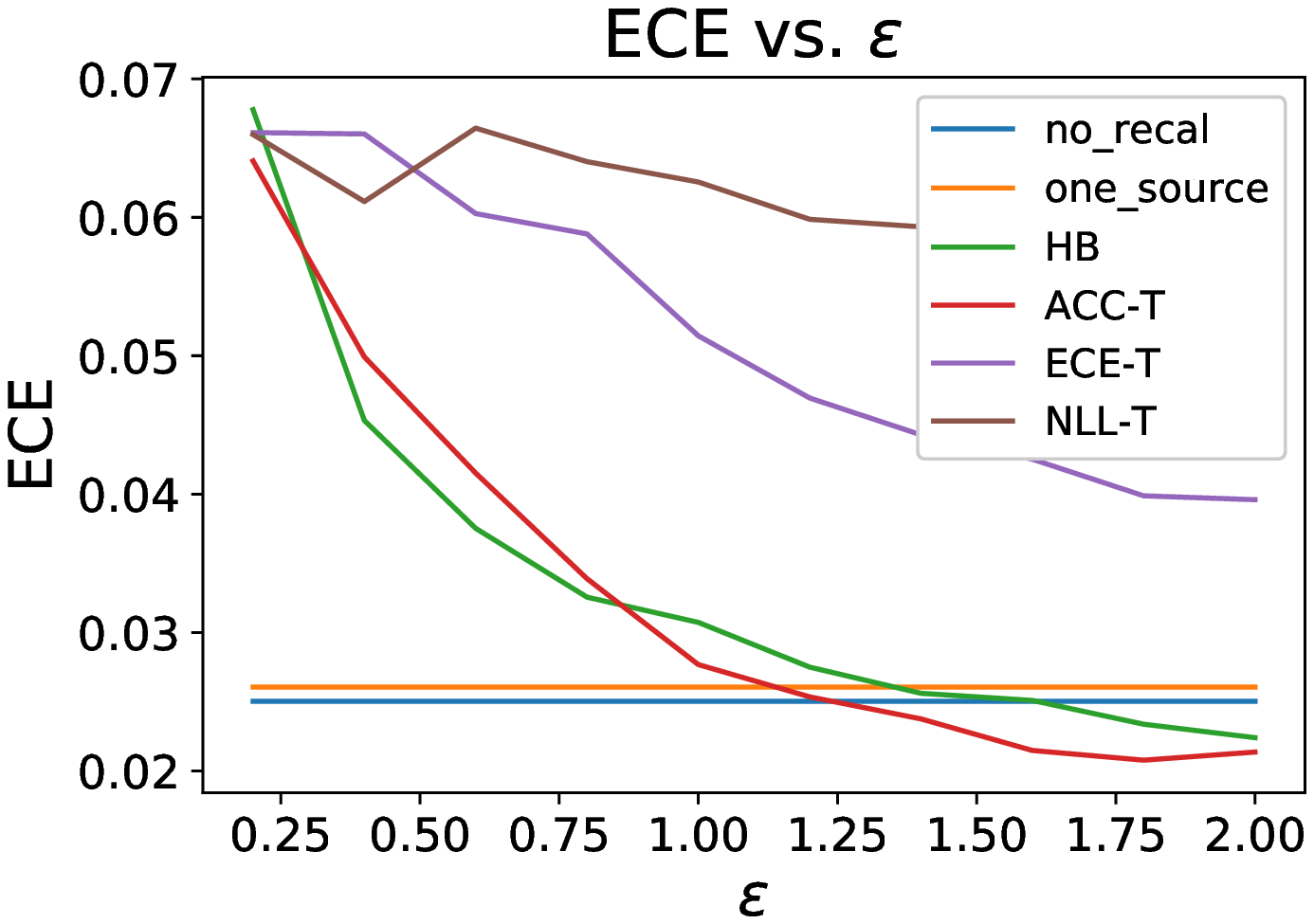}
        \caption{Samples = 30, Sources = 50}
    \end{subfigure}
\end{figure}

\begin{figure}[H]
    \centering
    \captionsetup{labelformat=empty}
    \caption{CIFAR-10, brightness perturbation}
    \begin{subfigure}[b]{0.325\textwidth}
        \centering
        \includegraphics[width=\textwidth]{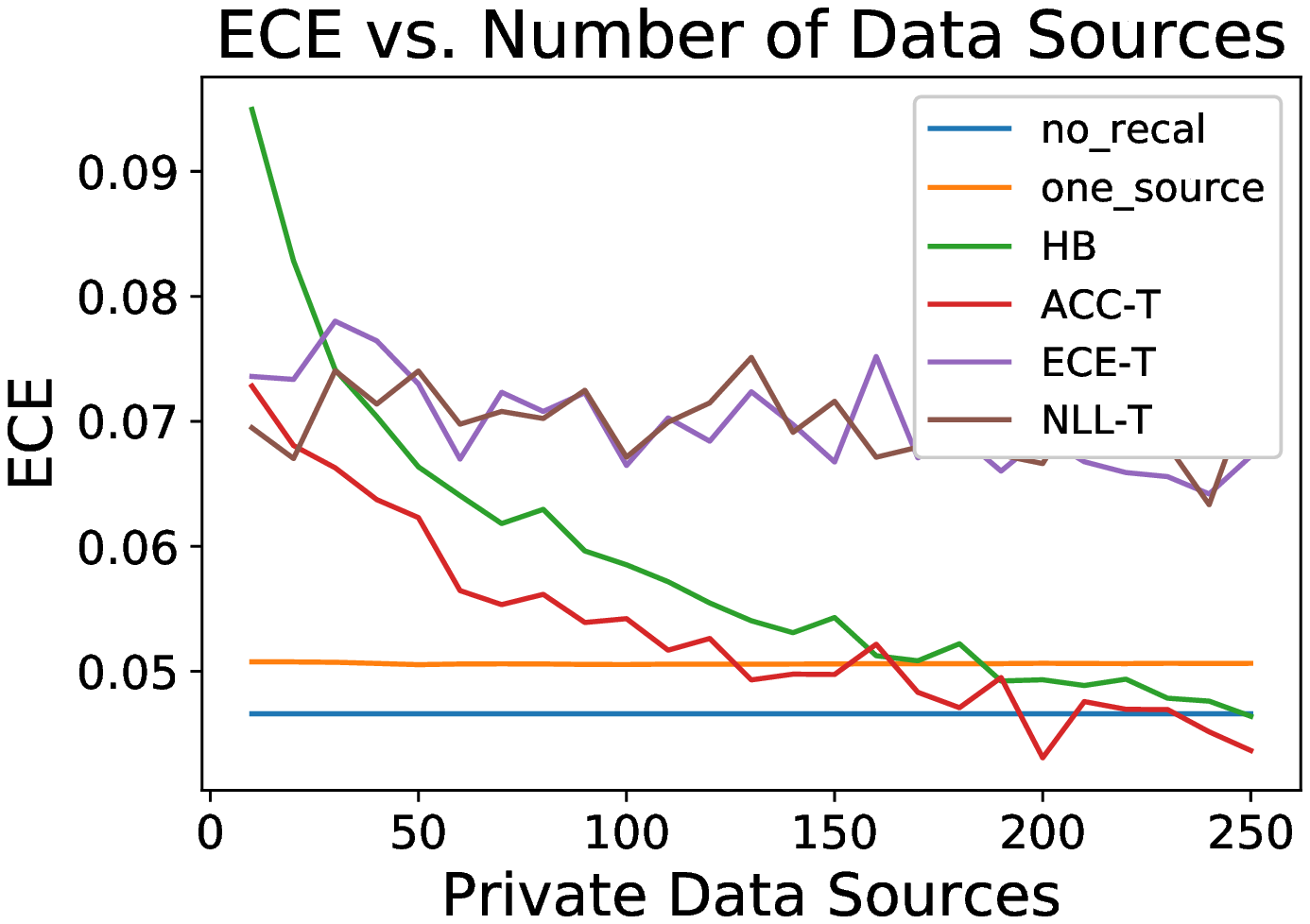}
        \caption{Samples = 10, $\epsilon = 1.0$}
    \end{subfigure}
    \hfill
    \begin{subfigure}[b]{0.325\textwidth}
        \centering
        \includegraphics[width=\textwidth]{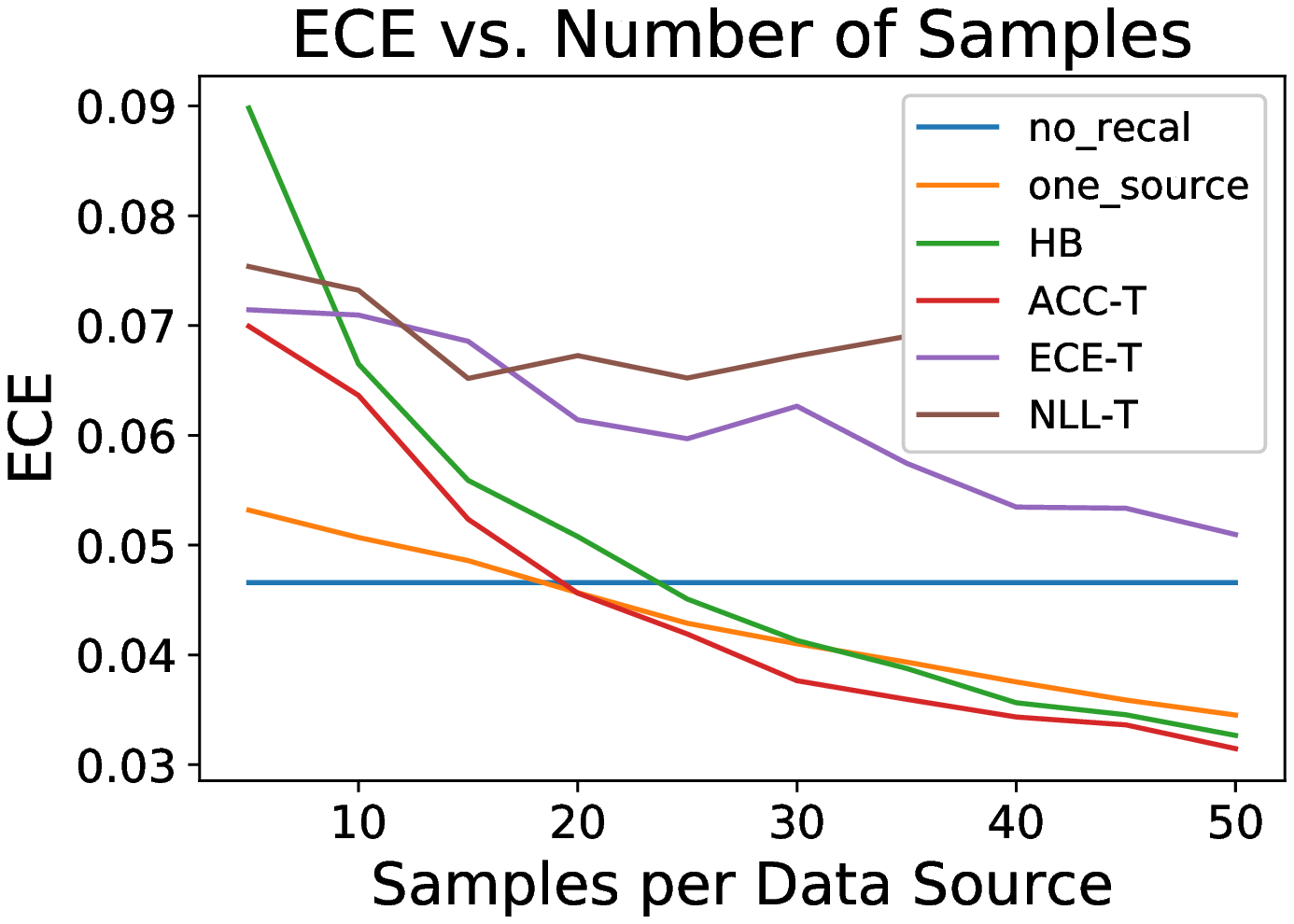}
        \caption{Sources = 50, $\epsilon = 1.0$}
    \end{subfigure}
    \hfill
    \begin{subfigure}[b]{0.325\textwidth}
        \centering
        \includegraphics[width=\textwidth]{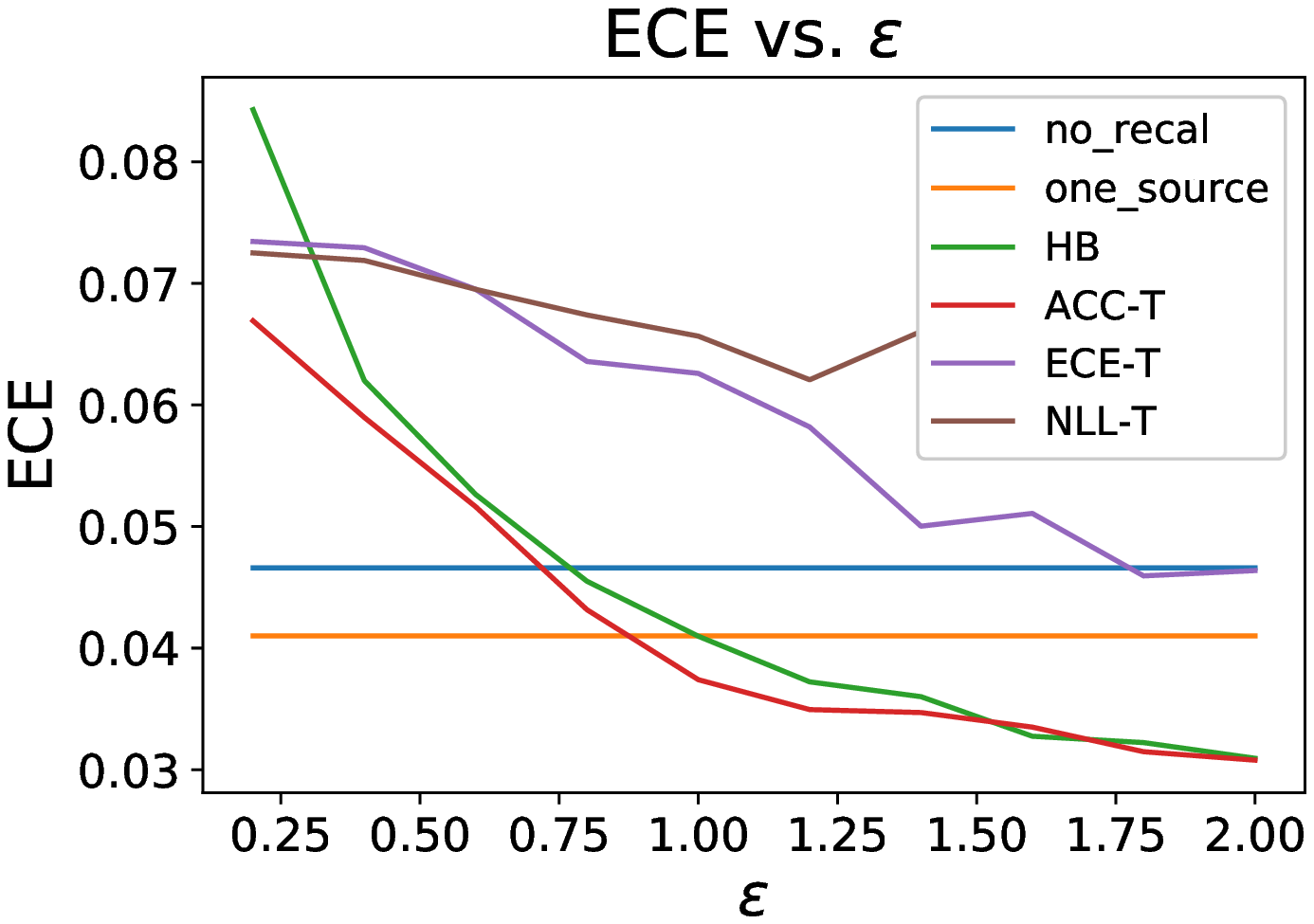}
        \caption{Samples = 30, Sources = 50}
    \end{subfigure}
\end{figure}

\begin{figure}[H]
    \centering
    \captionsetup{labelformat=empty}
    \caption{CIFAR-10, contrast perturbation}
    \begin{subfigure}[b]{0.325\textwidth}
        \centering
        \includegraphics[width=\textwidth]{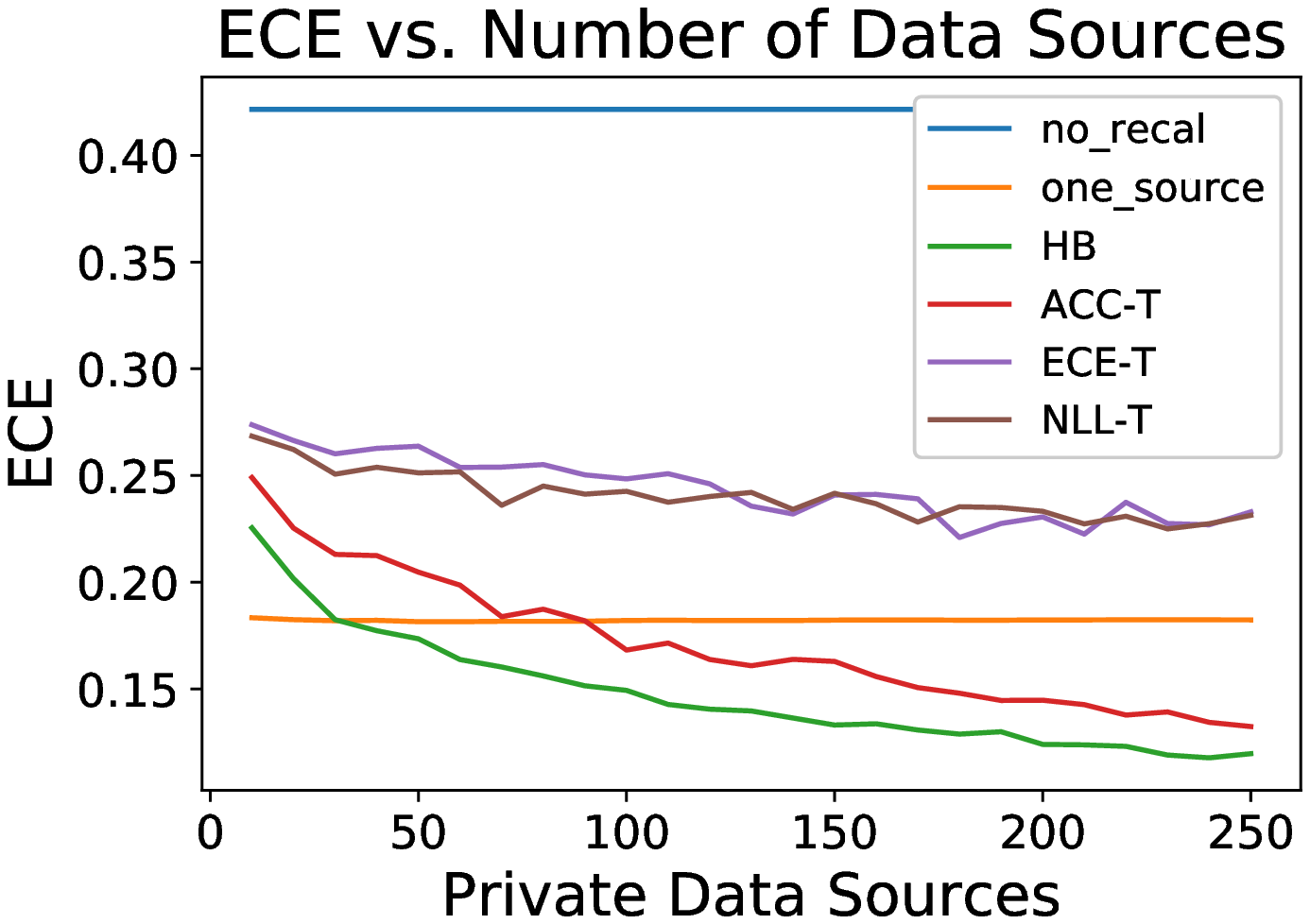}
        \caption{Samples = 10, $\epsilon = 1.0$}
    \end{subfigure}
    \hfill
    \begin{subfigure}[b]{0.325\textwidth}
        \centering
        \includegraphics[width=\textwidth]{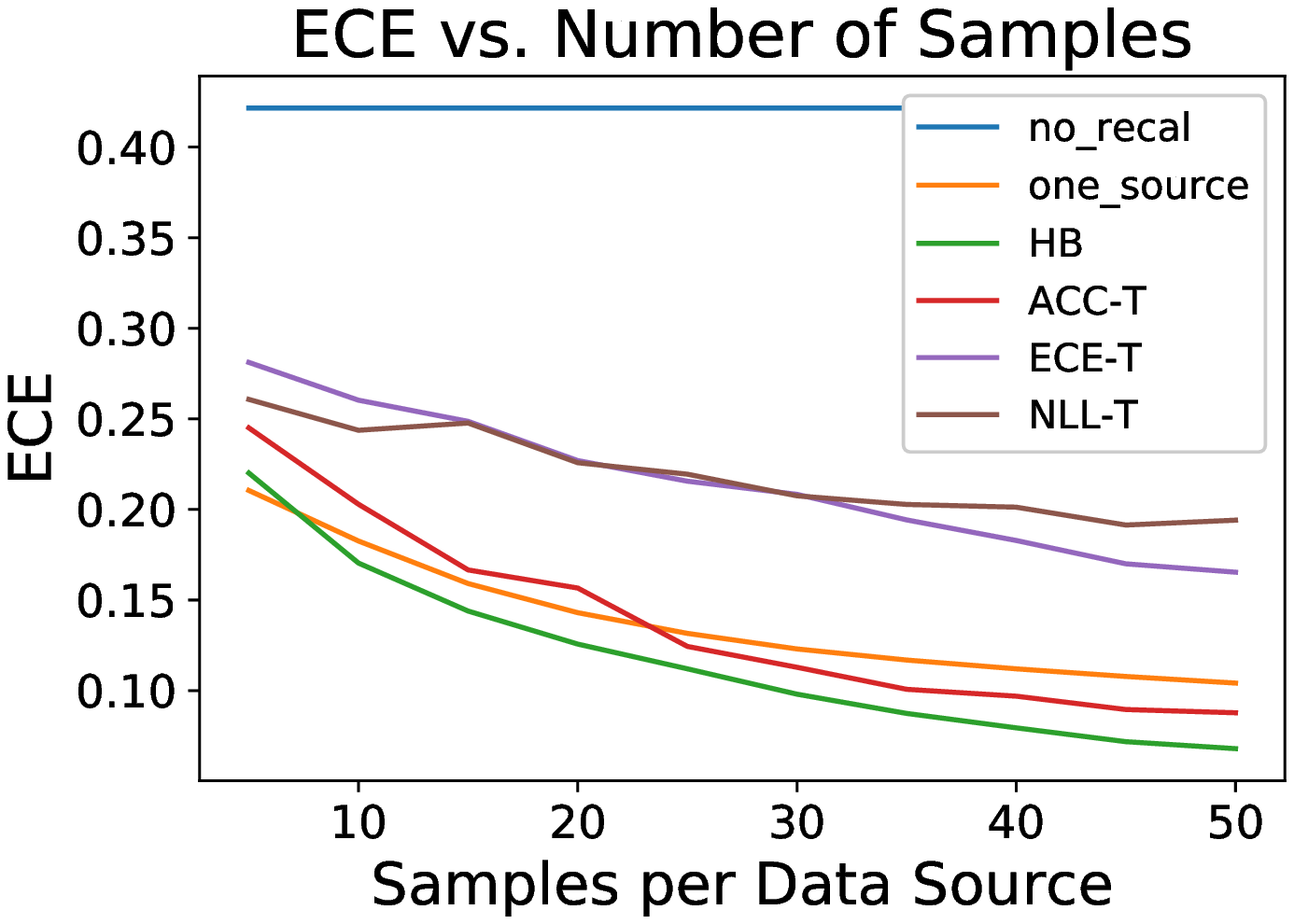}
        \caption{Sources = 50, $\epsilon = 1.0$}
    \end{subfigure}
    \hfill
    \begin{subfigure}[b]{0.325\textwidth}
        \centering
        \includegraphics[width=\textwidth]{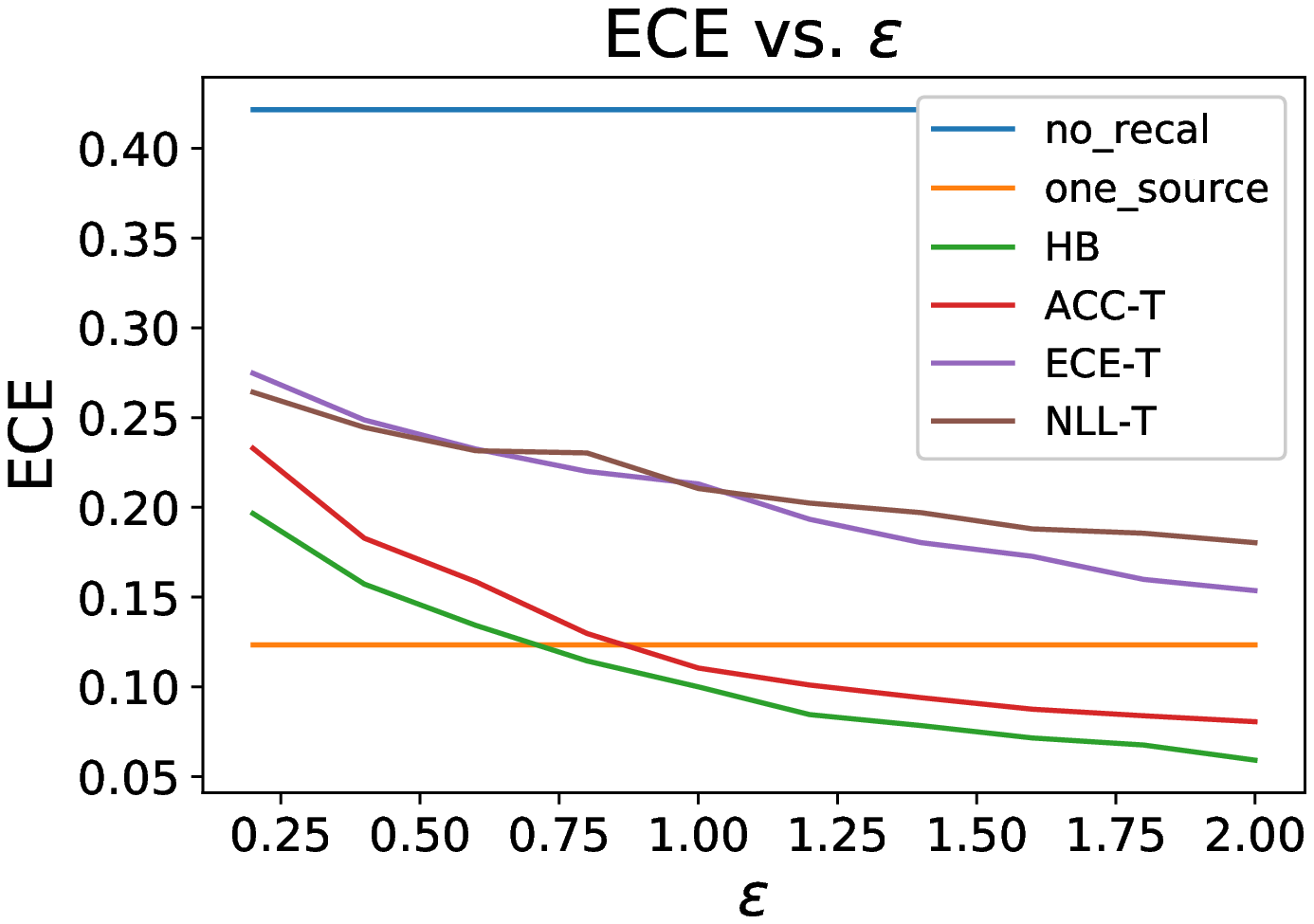}
        \caption{Samples = 30, Sources = 50}
    \end{subfigure}
\end{figure}

\begin{figure}[H]
    \centering
    \captionsetup{labelformat=empty}
    \caption{CIFAR-10, defocus blur perturbation}
    \begin{subfigure}[b]{0.325\textwidth}
        \centering
        \includegraphics[width=\textwidth]{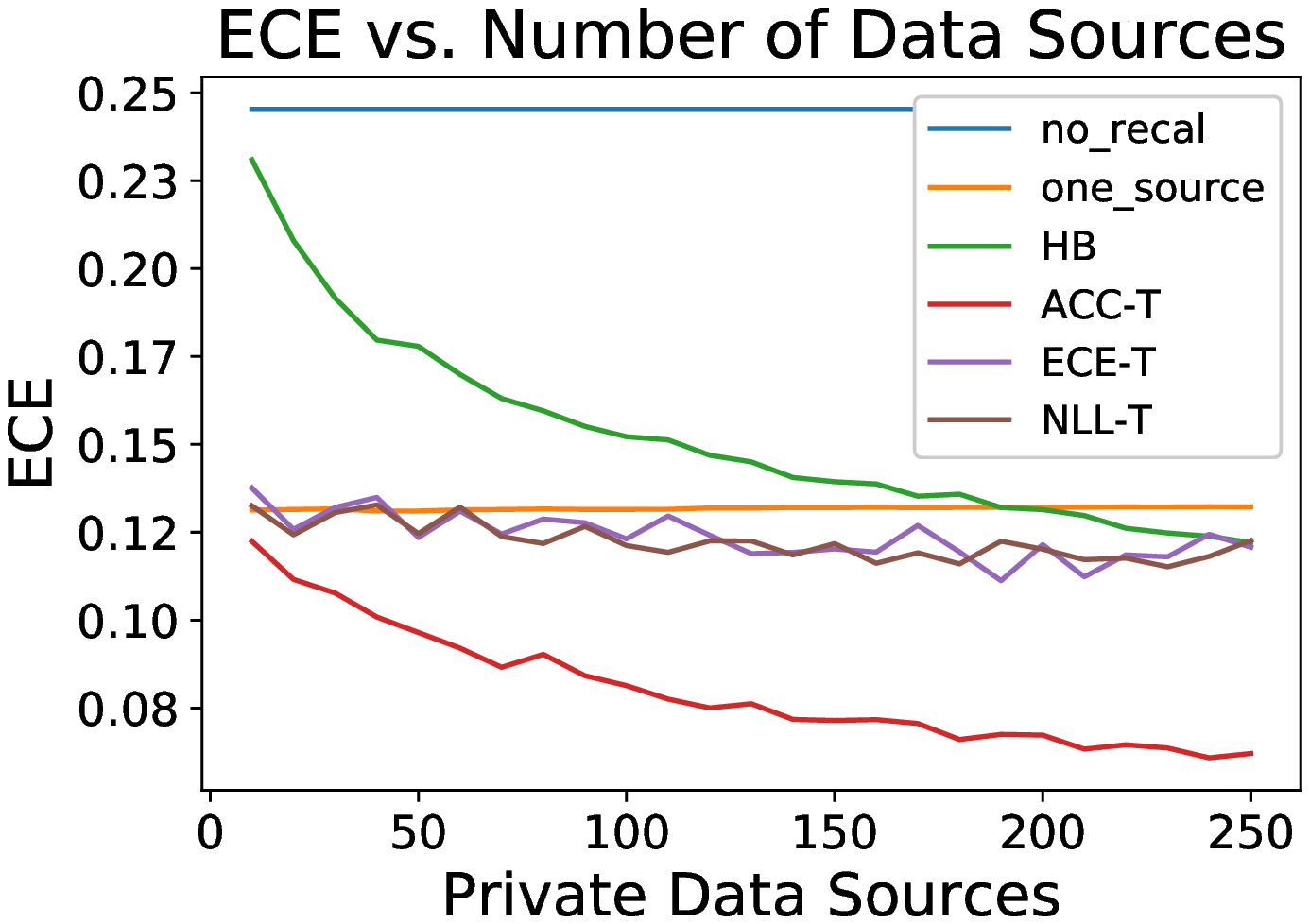}
        \caption{Samples = 10, $\epsilon = 1.0$}
    \end{subfigure}
    \hfill
    \begin{subfigure}[b]{0.325\textwidth}
        \centering
        \includegraphics[width=\textwidth]{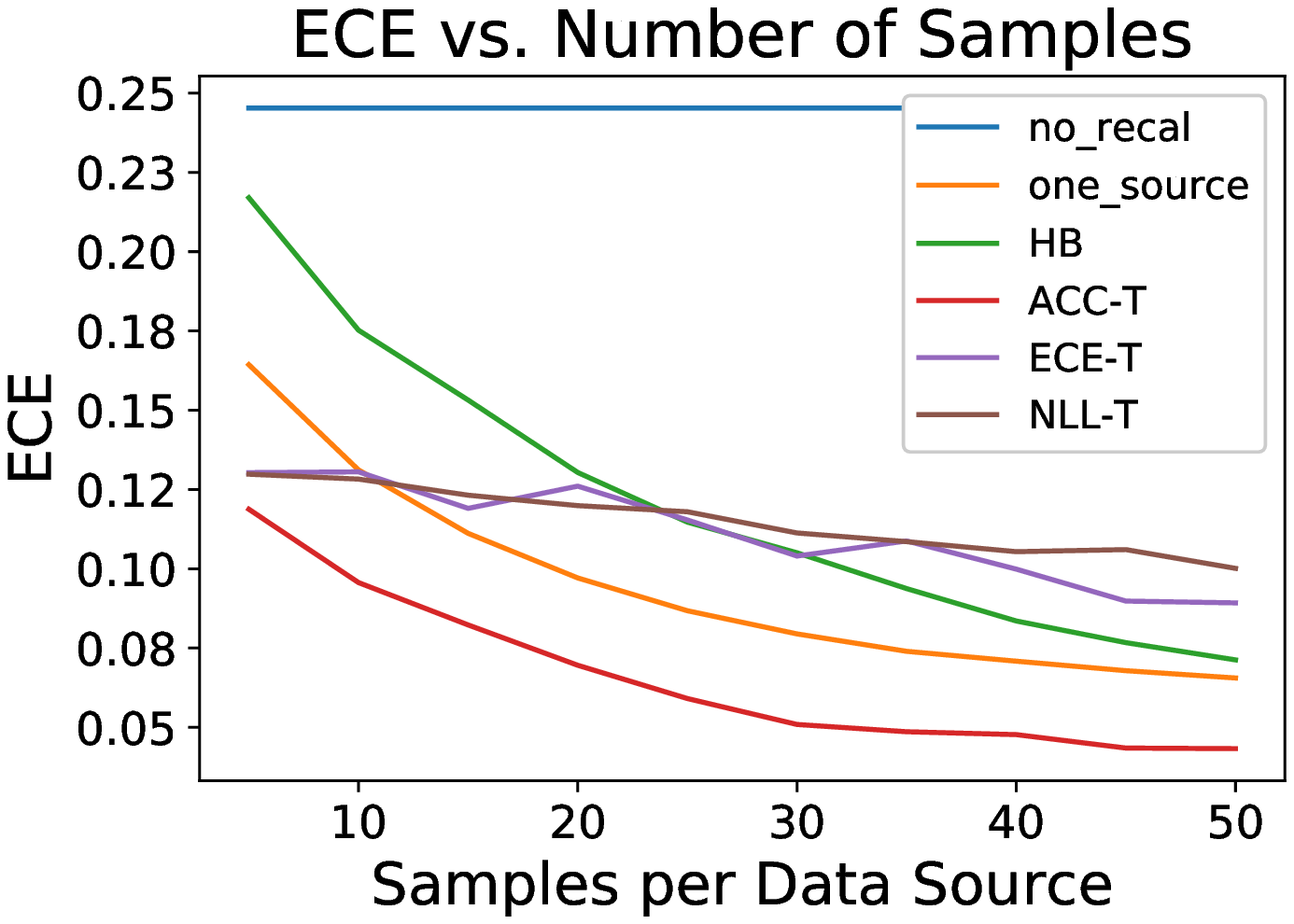}
        \caption{Sources = 50, $\epsilon = 1.0$}
    \end{subfigure}
    \hfill
    \begin{subfigure}[b]{0.325\textwidth}
        \centering
        \includegraphics[width=\textwidth]{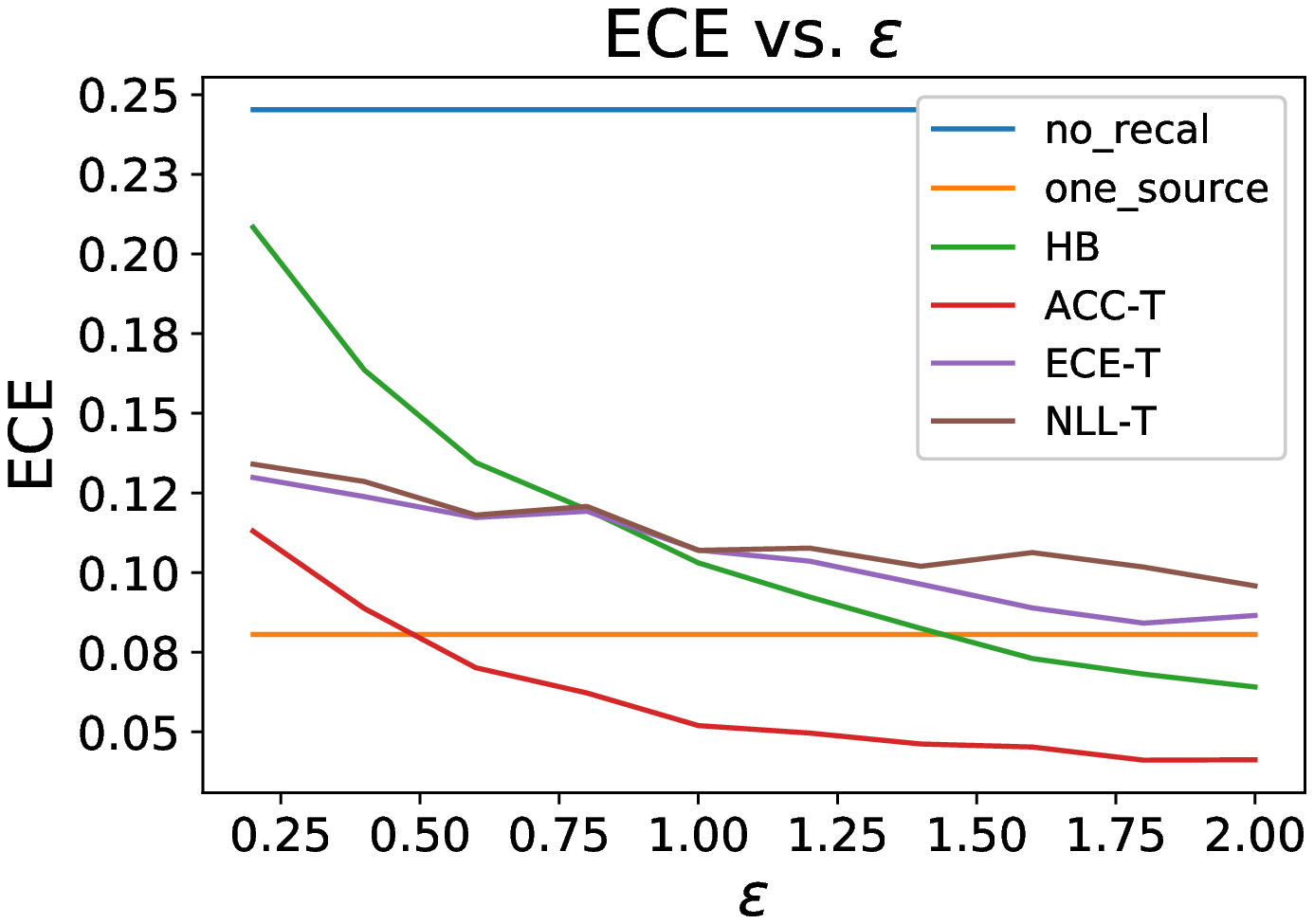}
        \caption{Samples = 30, Sources = 50}
    \end{subfigure}
\end{figure}

\begin{figure}[H]
    \centering
    \captionsetup{labelformat=empty}
    \caption{CIFAR-10, elastic transform perturbation}
    \begin{subfigure}[b]{0.325\textwidth}
        \centering
        \includegraphics[width=\textwidth]{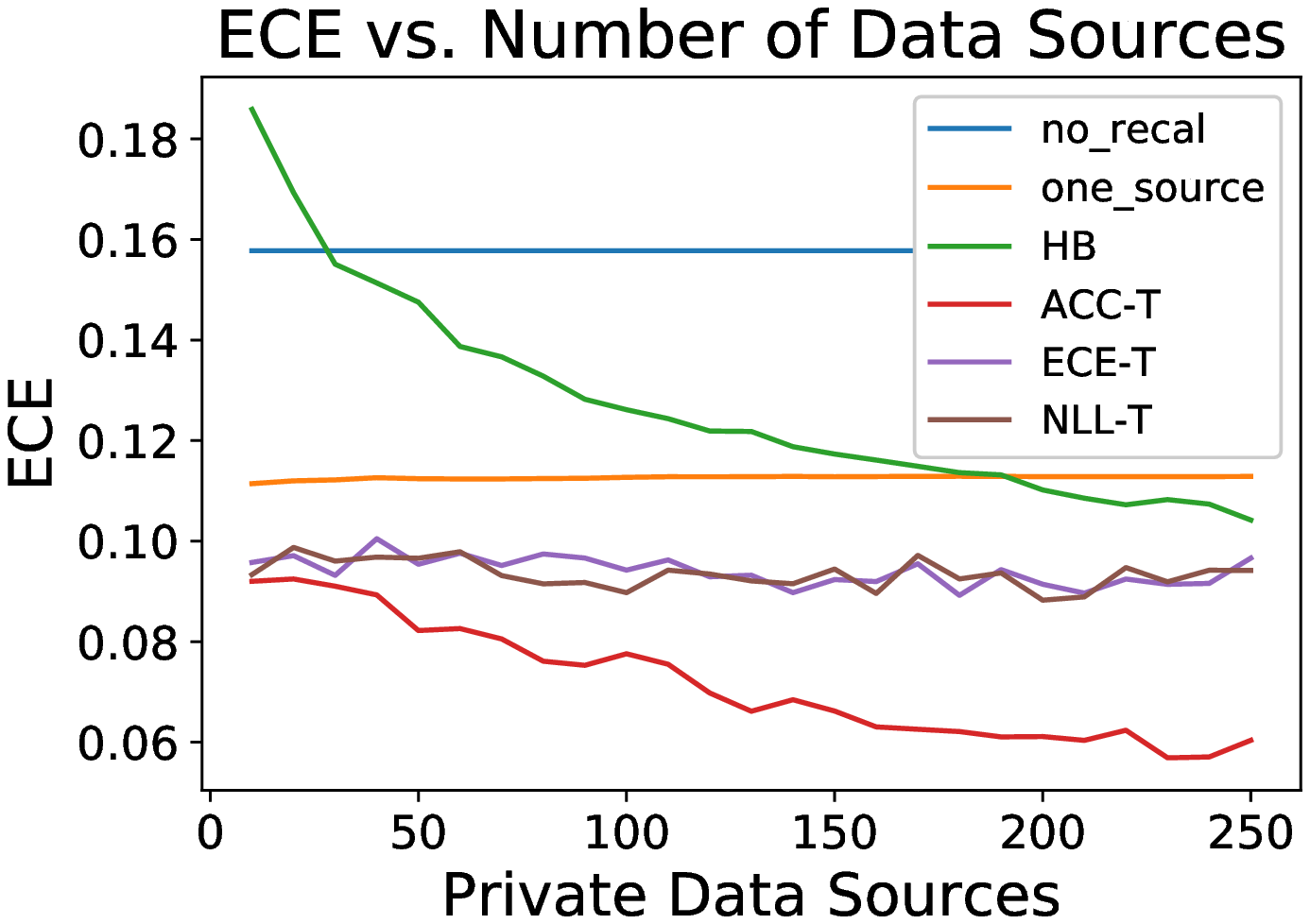}
        \caption{Samples = 10, $\epsilon = 1.0$}
    \end{subfigure}
    \hfill
    \begin{subfigure}[b]{0.325\textwidth}
        \centering
        \includegraphics[width=\textwidth]{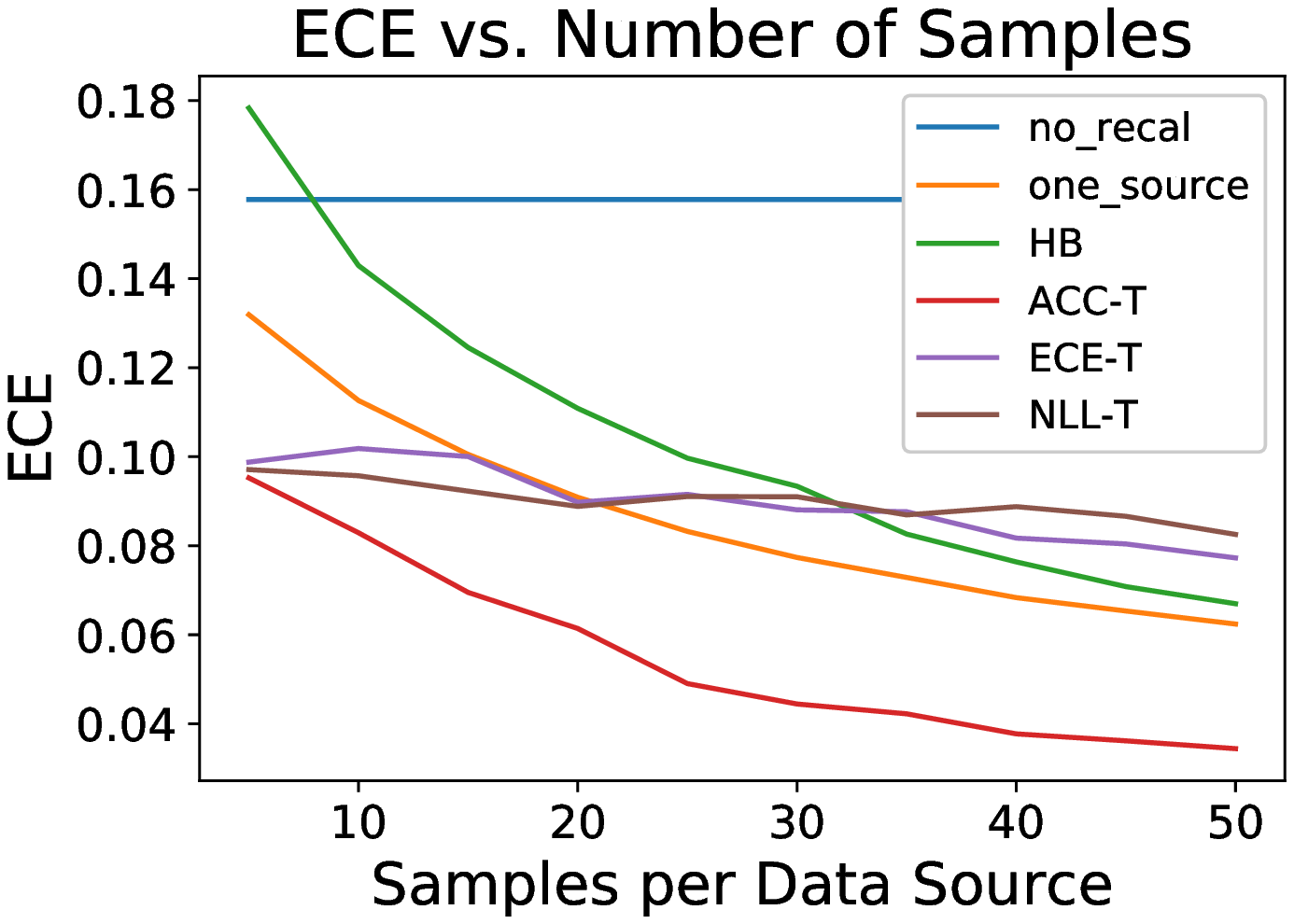}
        \caption{Sources = 50, $\epsilon = 1.0$}
    \end{subfigure}
    \hfill
    \begin{subfigure}[b]{0.325\textwidth}
        \centering
        \includegraphics[width=\textwidth]{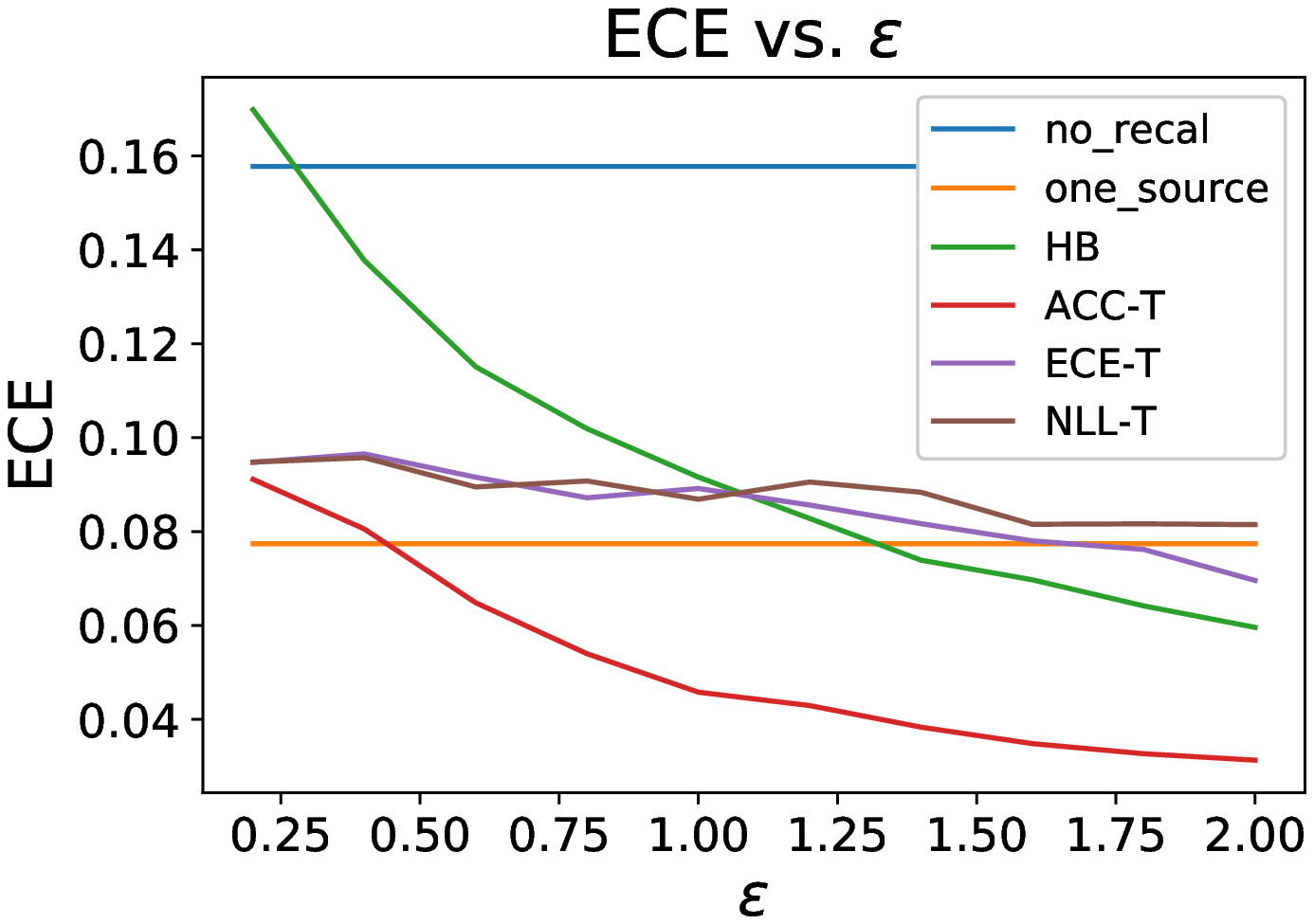}
        \caption{Samples = 30, Sources = 50}
    \end{subfigure}
\end{figure}

\begin{figure}[H]
    \centering
    \captionsetup{labelformat=empty}
    \caption{CIFAR-10, fog perturbation}
    \begin{subfigure}[b]{0.325\textwidth}
        \centering
        \includegraphics[width=\textwidth]{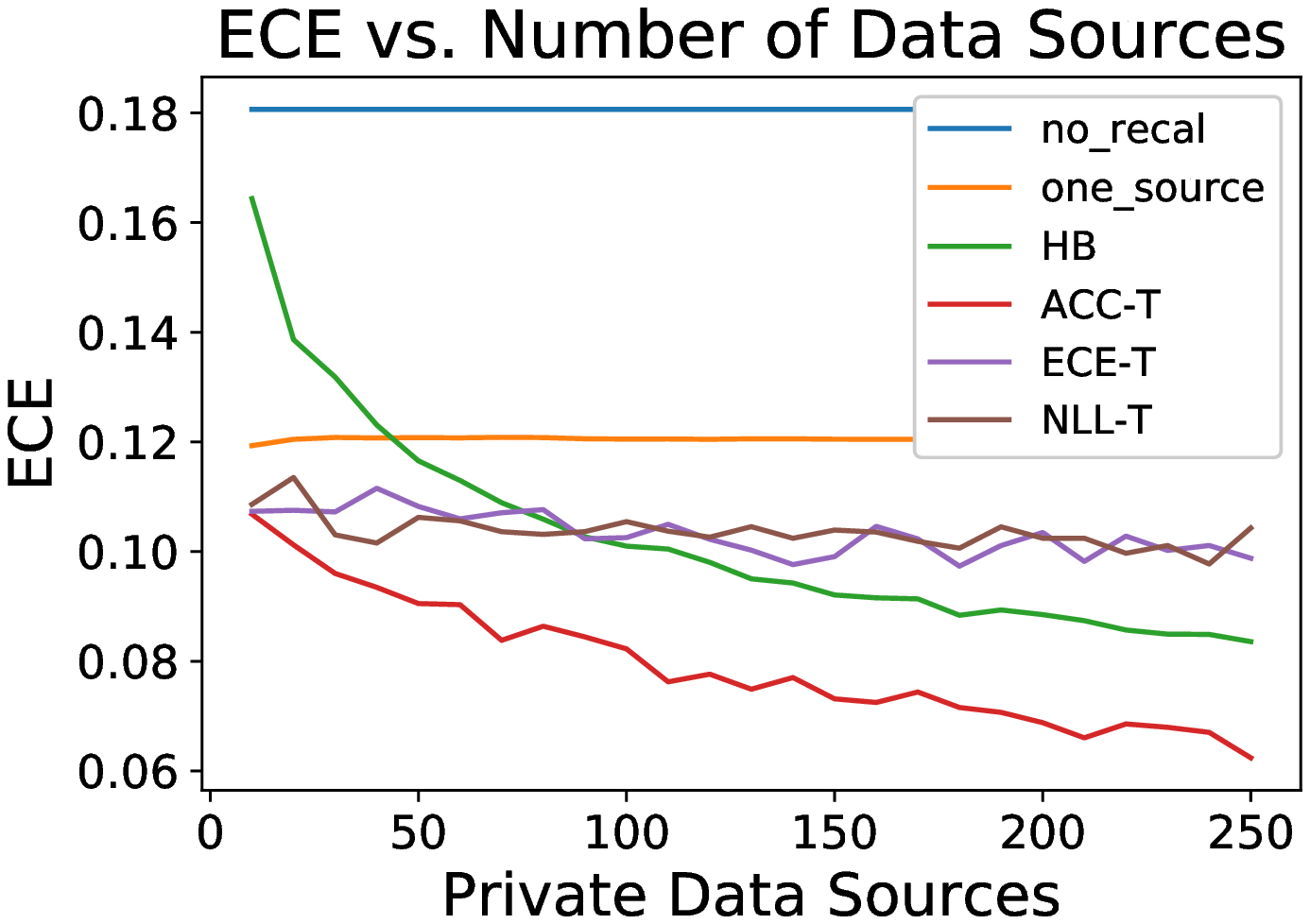}
        \caption{Samples = 10, $\epsilon = 1.0$}
    \end{subfigure}
    \hfill
    \begin{subfigure}[b]{0.325\textwidth}
        \centering
        \includegraphics[width=\textwidth]{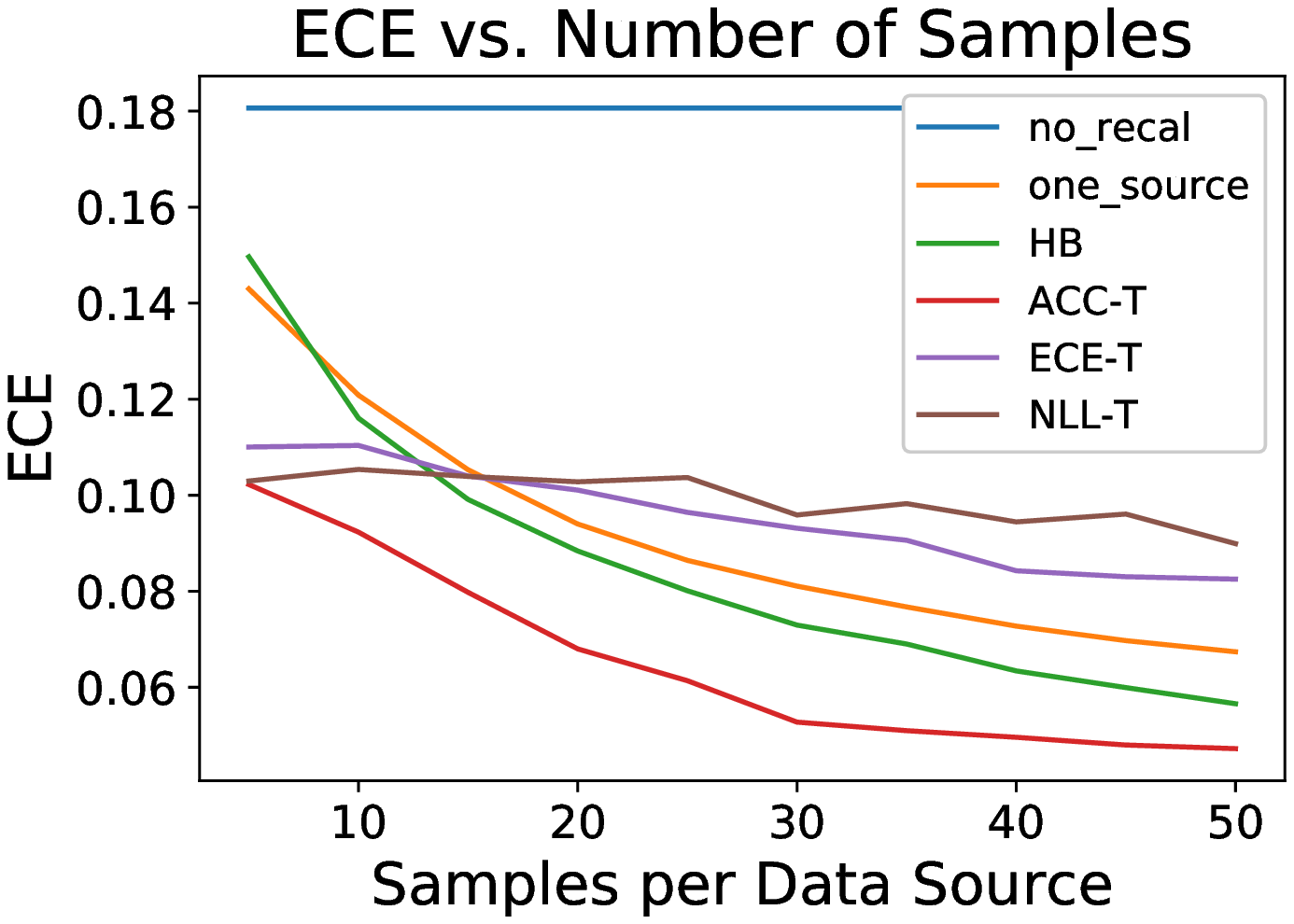}
        \caption{Sources = 50, $\epsilon = 1.0$}
    \end{subfigure}
    \hfill
    \begin{subfigure}[b]{0.325\textwidth}
        \centering
        \includegraphics[width=\textwidth]{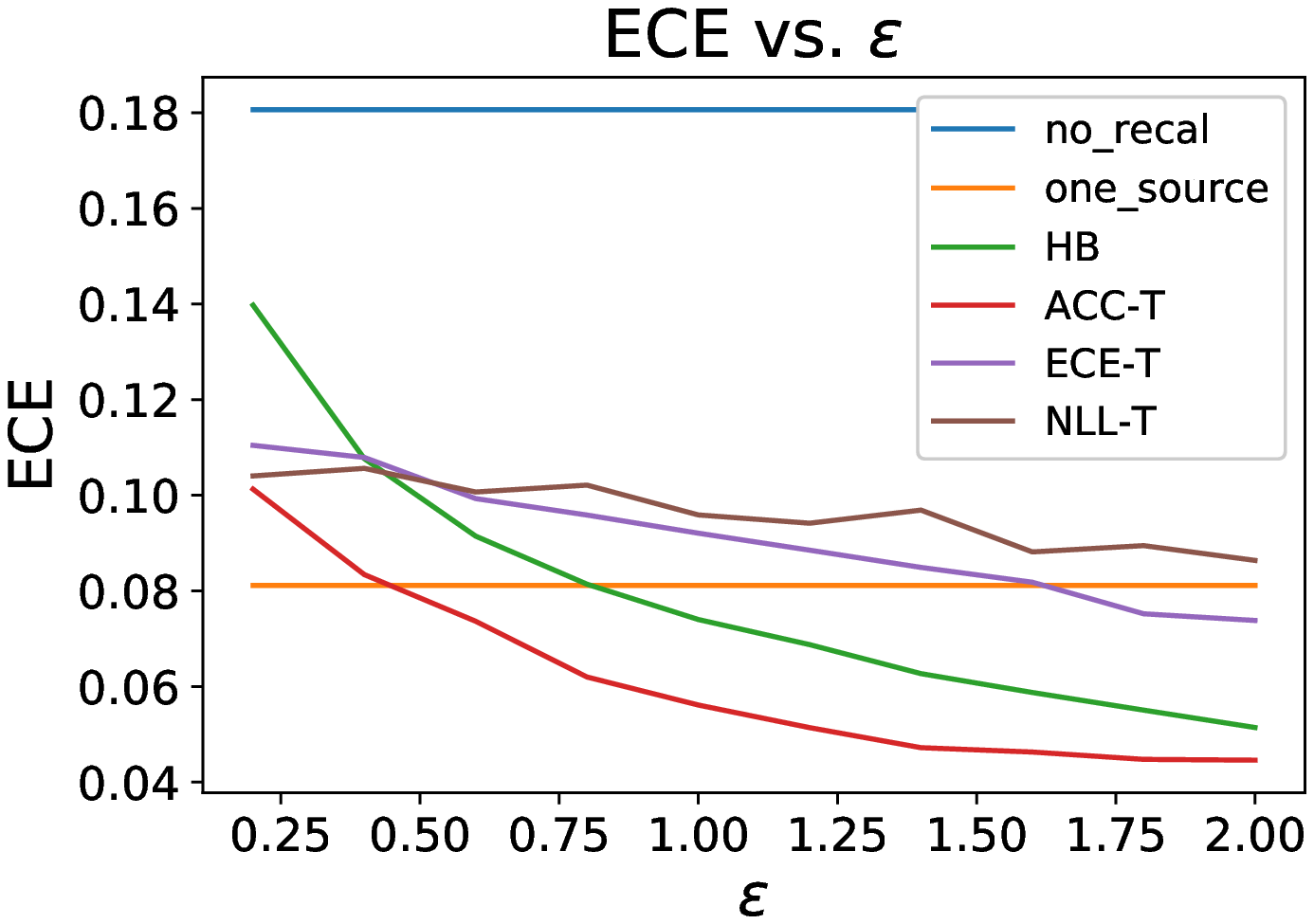}
        \caption{Samples = 30, Sources = 50}
    \end{subfigure}
\end{figure}

\begin{figure}[H]
    \centering
    \captionsetup{labelformat=empty}
    \caption{CIFAR-10, frost perturbation}
    \begin{subfigure}[b]{0.325\textwidth}
        \centering
        \includegraphics[width=\textwidth]{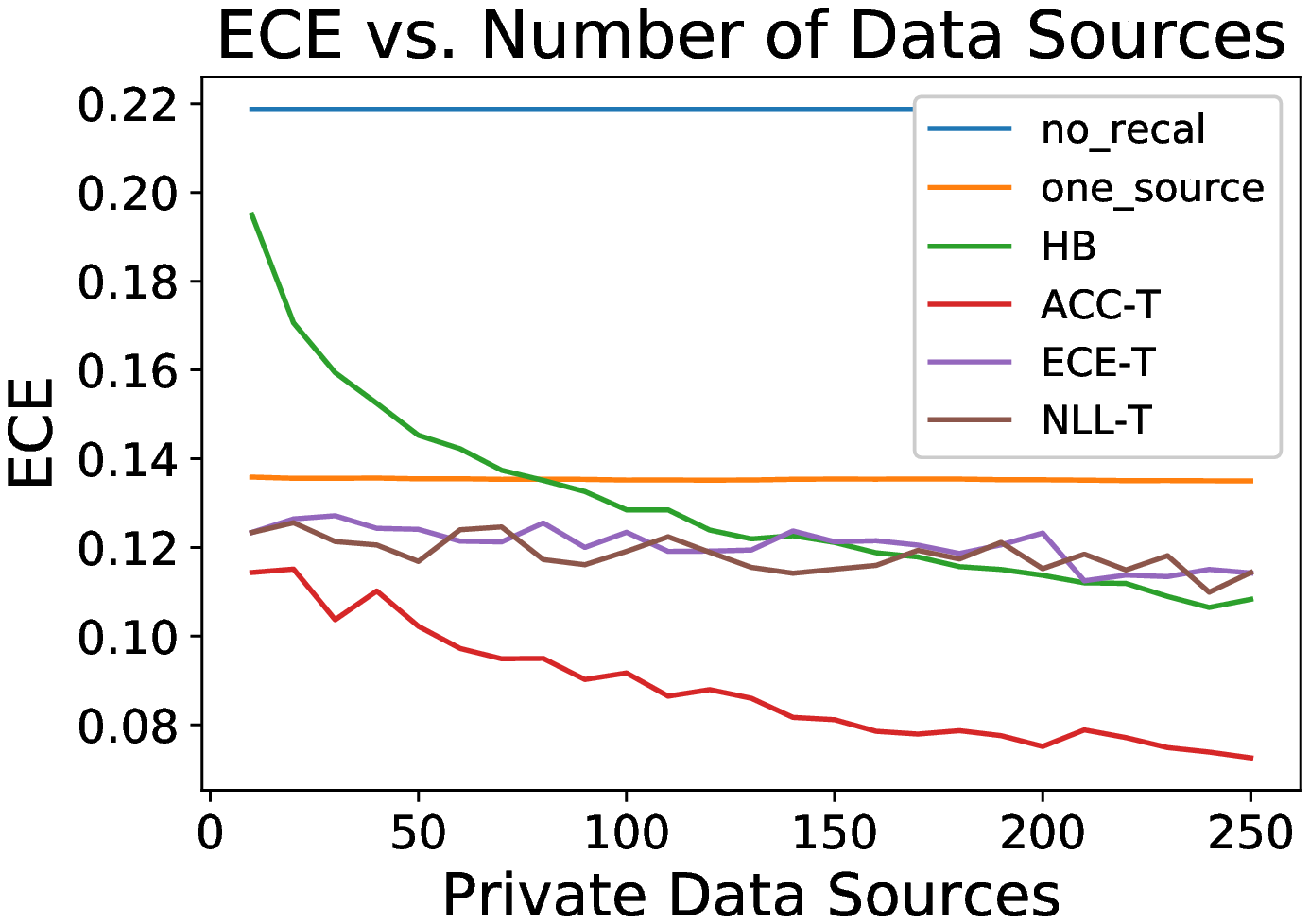}
        \caption{Samples = 10, $\epsilon = 1.0$}
    \end{subfigure}
    \hfill
    \begin{subfigure}[b]{0.325\textwidth}
        \centering
        \includegraphics[width=\textwidth]{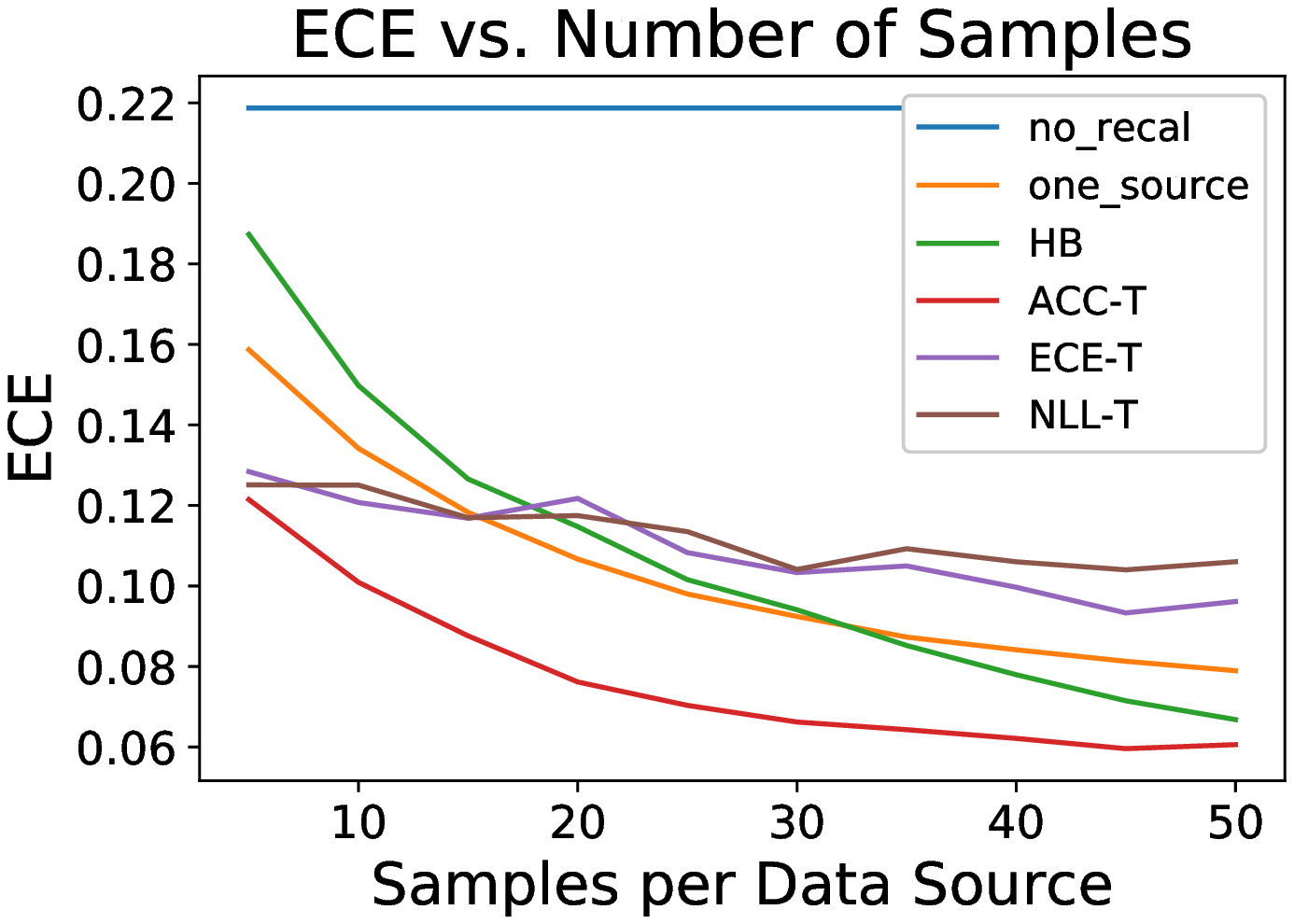}
        \caption{Sources = 50, $\epsilon = 1.0$}
    \end{subfigure}
    \hfill
    \begin{subfigure}[b]{0.325\textwidth}
        \centering
        \includegraphics[width=\textwidth]{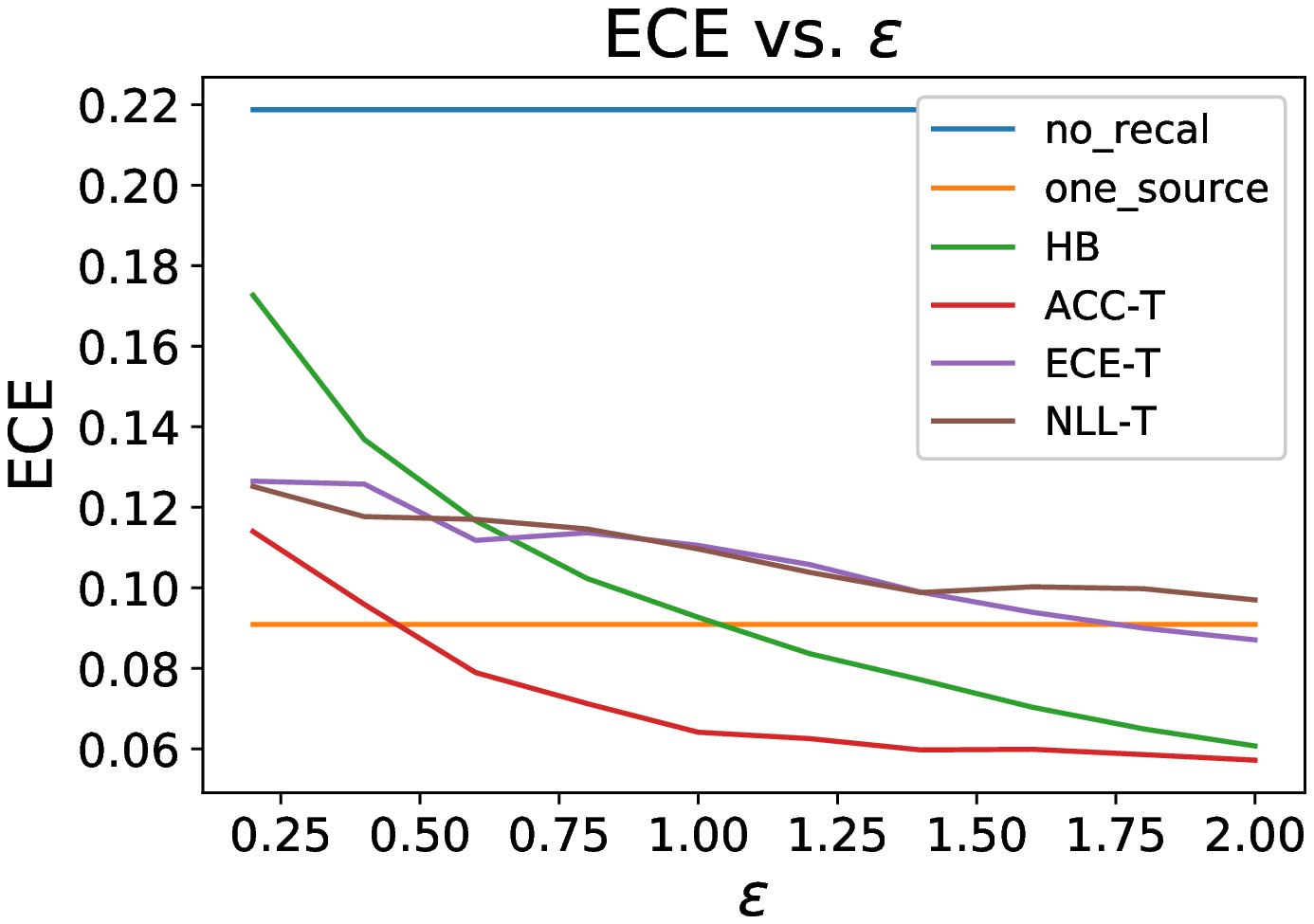}
        \caption{Samples = 30, Sources = 50}
    \end{subfigure}
\end{figure}

\begin{figure}[H]
    \centering
    \captionsetup{labelformat=empty}
    \caption{CIFAR-10, Gaussian noise perturbation}
    \begin{subfigure}[b]{0.325\textwidth}
        \centering
        \includegraphics[width=\textwidth]{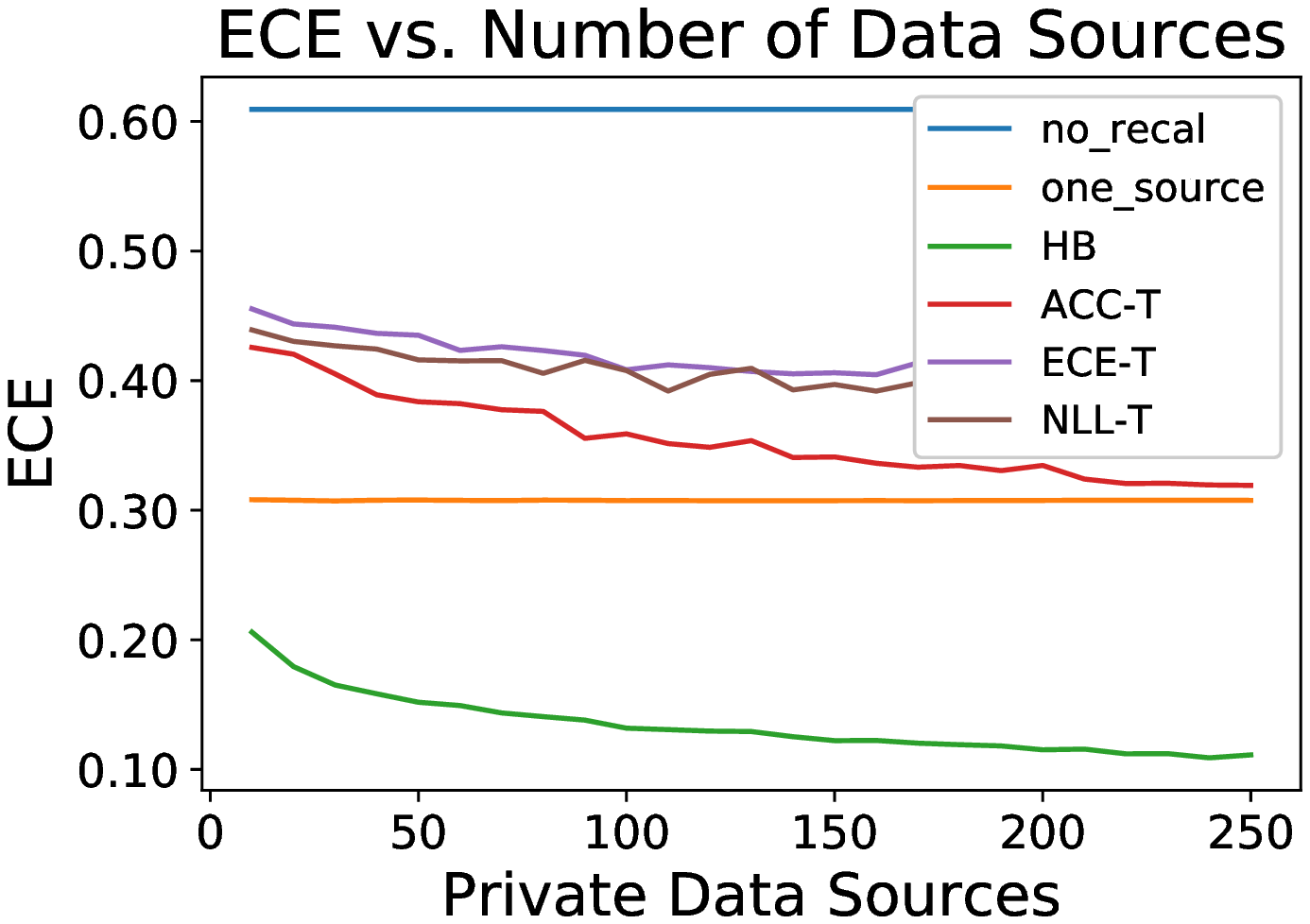}
        \caption{Samples = 10, $\epsilon = 1.0$}
    \end{subfigure}
    \hfill
    \begin{subfigure}[b]{0.325\textwidth}
        \centering
        \includegraphics[width=\textwidth]{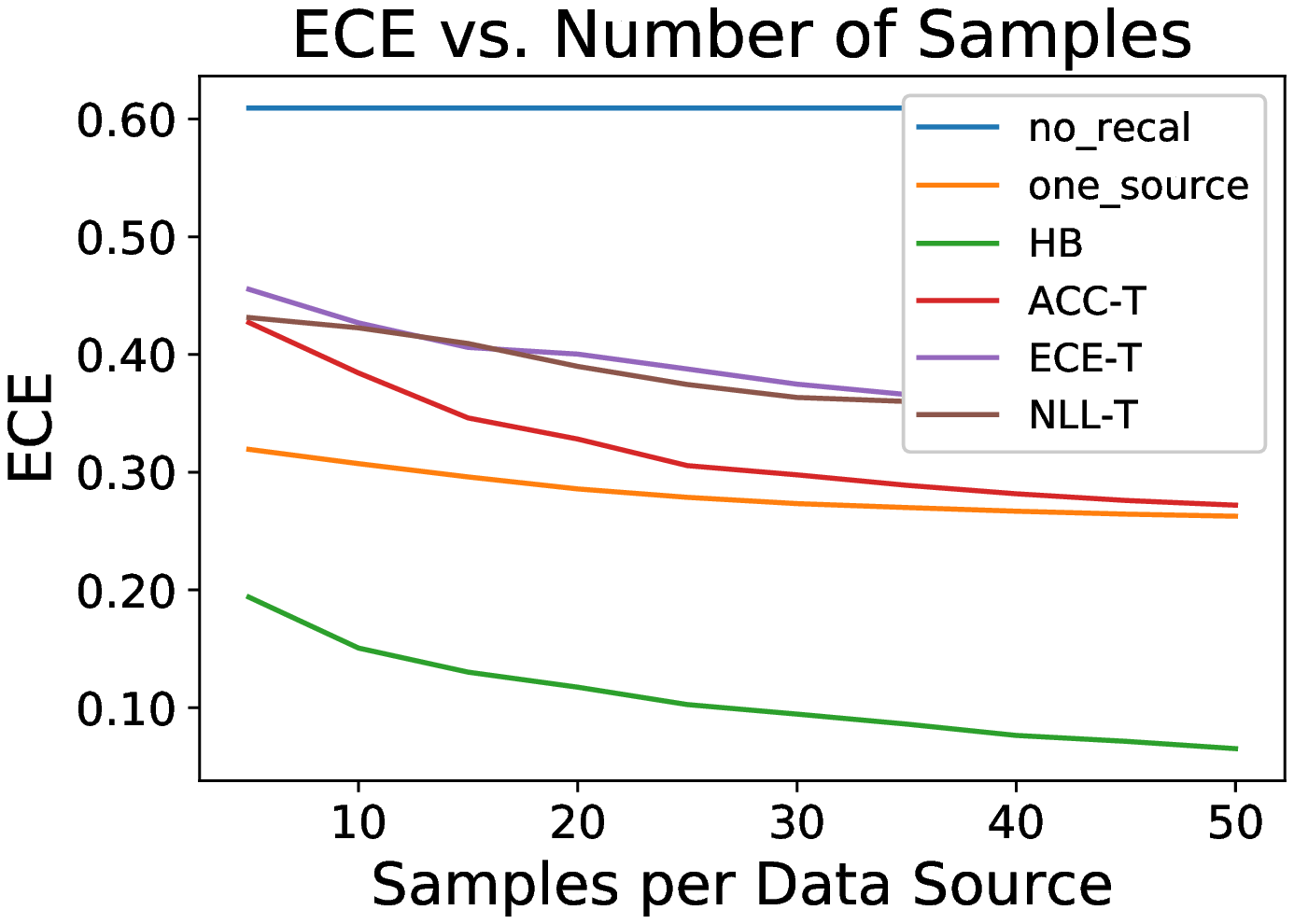}
        \caption{Sources = 50, $\epsilon = 1.0$}
    \end{subfigure}
    \hfill
    \begin{subfigure}[b]{0.325\textwidth}
        \centering
        \includegraphics[width=\textwidth]{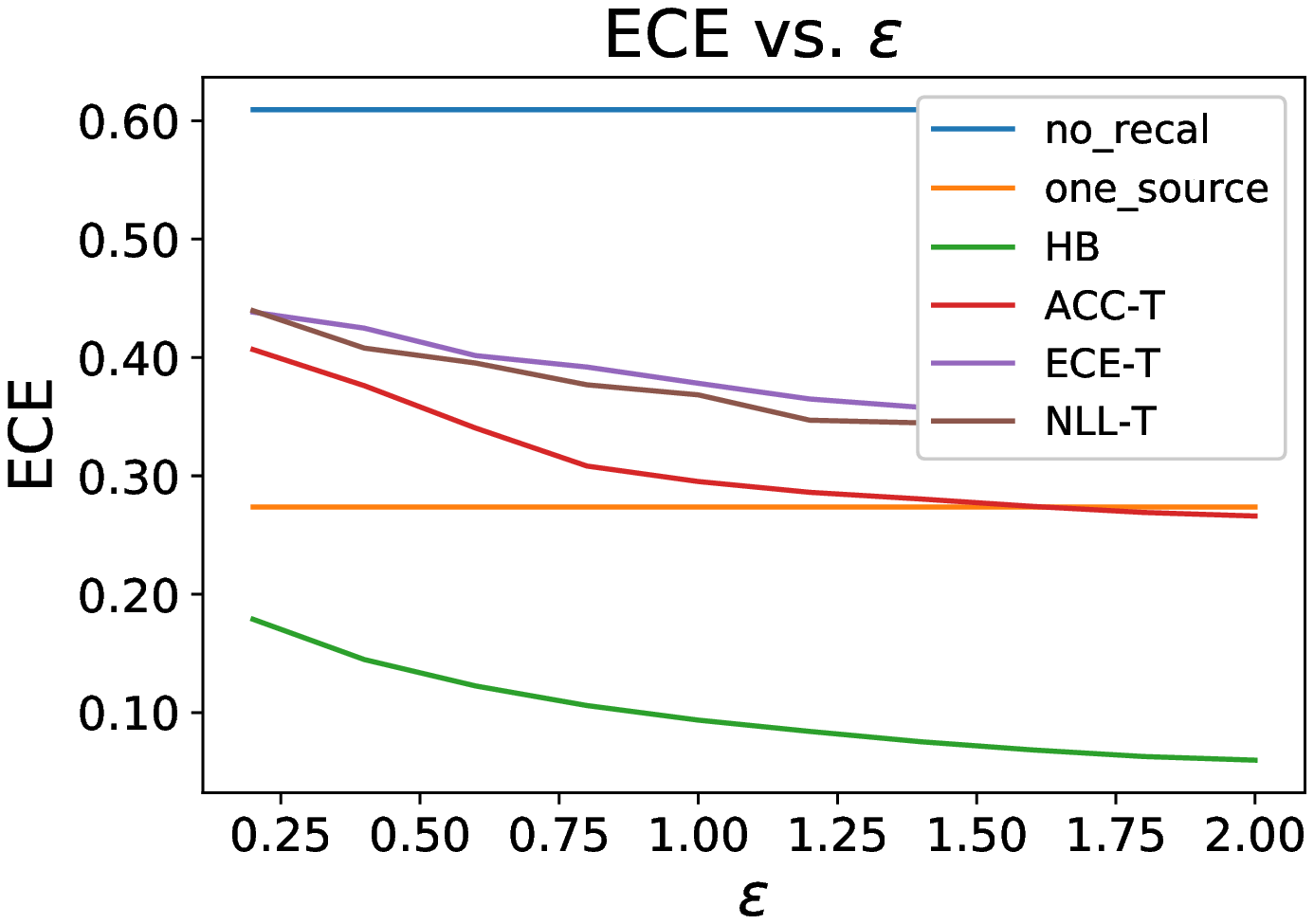}
        \caption{Samples = 30, Sources = 50}
    \end{subfigure}
\end{figure}

\begin{figure}[H]
    \centering
    \captionsetup{labelformat=empty}
    \caption{CIFAR-10, glass blur perturbation}
    \begin{subfigure}[b]{0.325\textwidth}
        \centering
        \includegraphics[width=\textwidth]{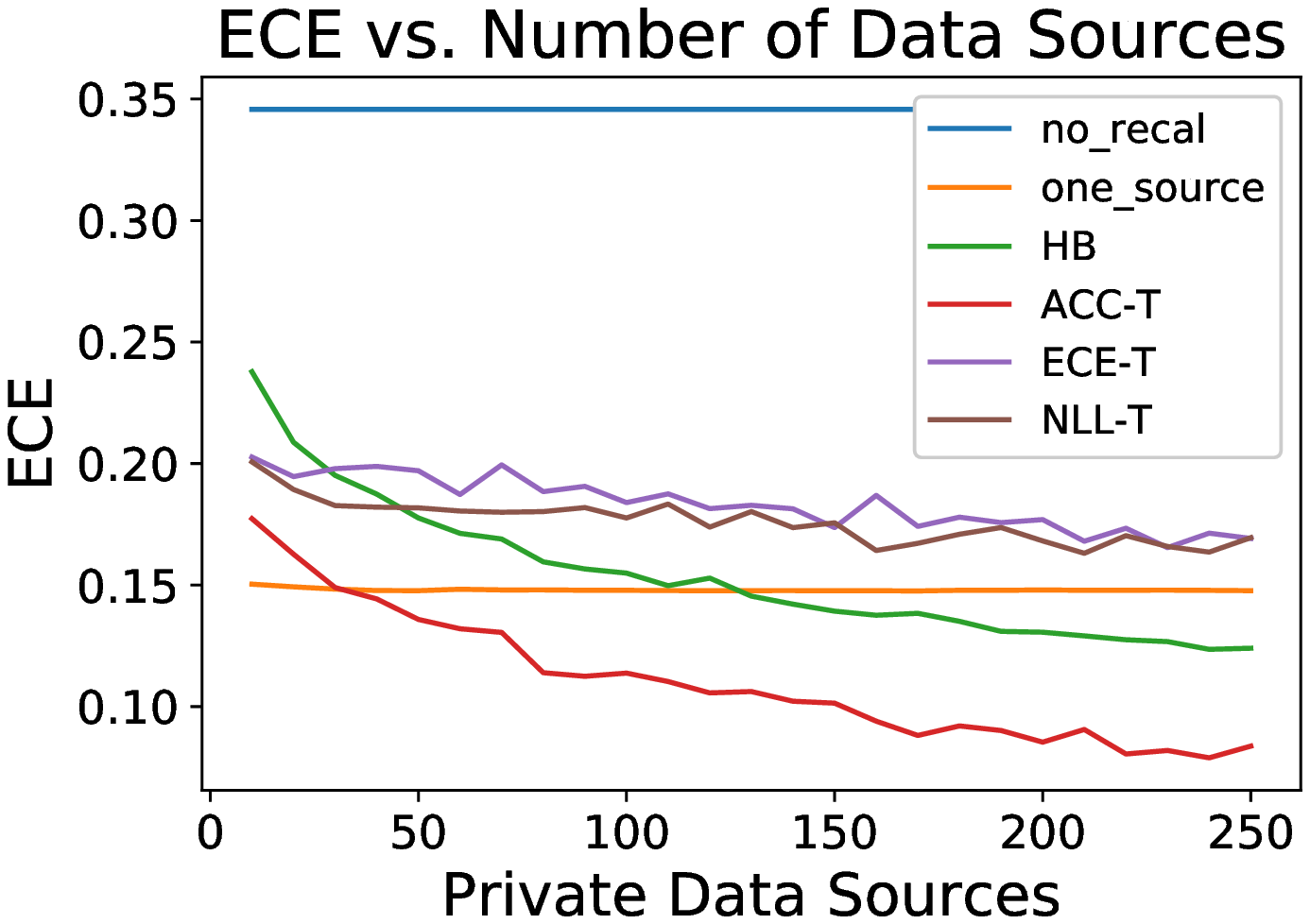}
        \caption{Samples = 10, $\epsilon = 1.0$}
    \end{subfigure}
    \hfill
    \begin{subfigure}[b]{0.325\textwidth}
        \centering
        \includegraphics[width=\textwidth]{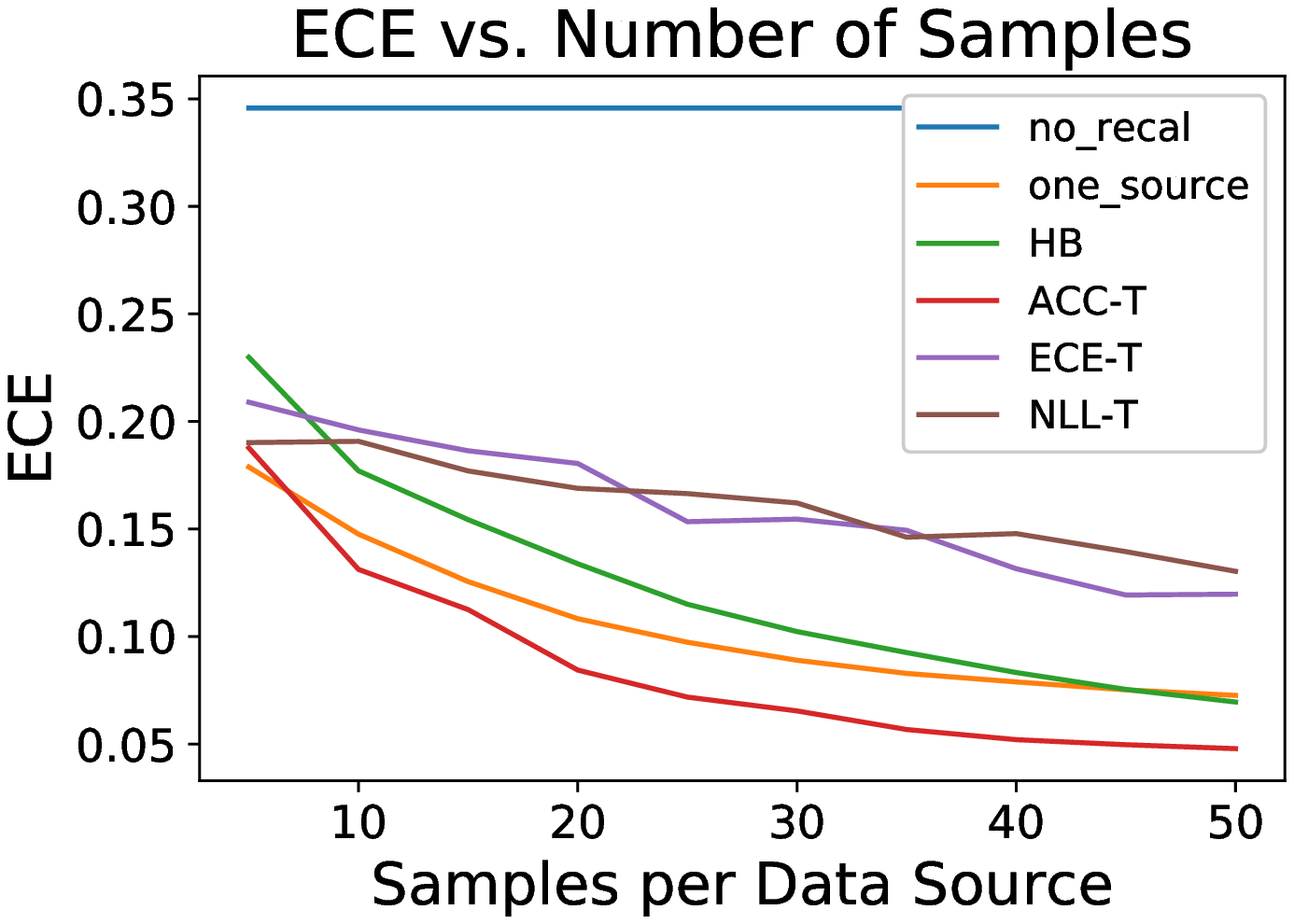}
        \caption{Sources = 50, $\epsilon = 1.0$}
    \end{subfigure}
    \hfill
    \begin{subfigure}[b]{0.325\textwidth}
        \centering
        \includegraphics[width=\textwidth]{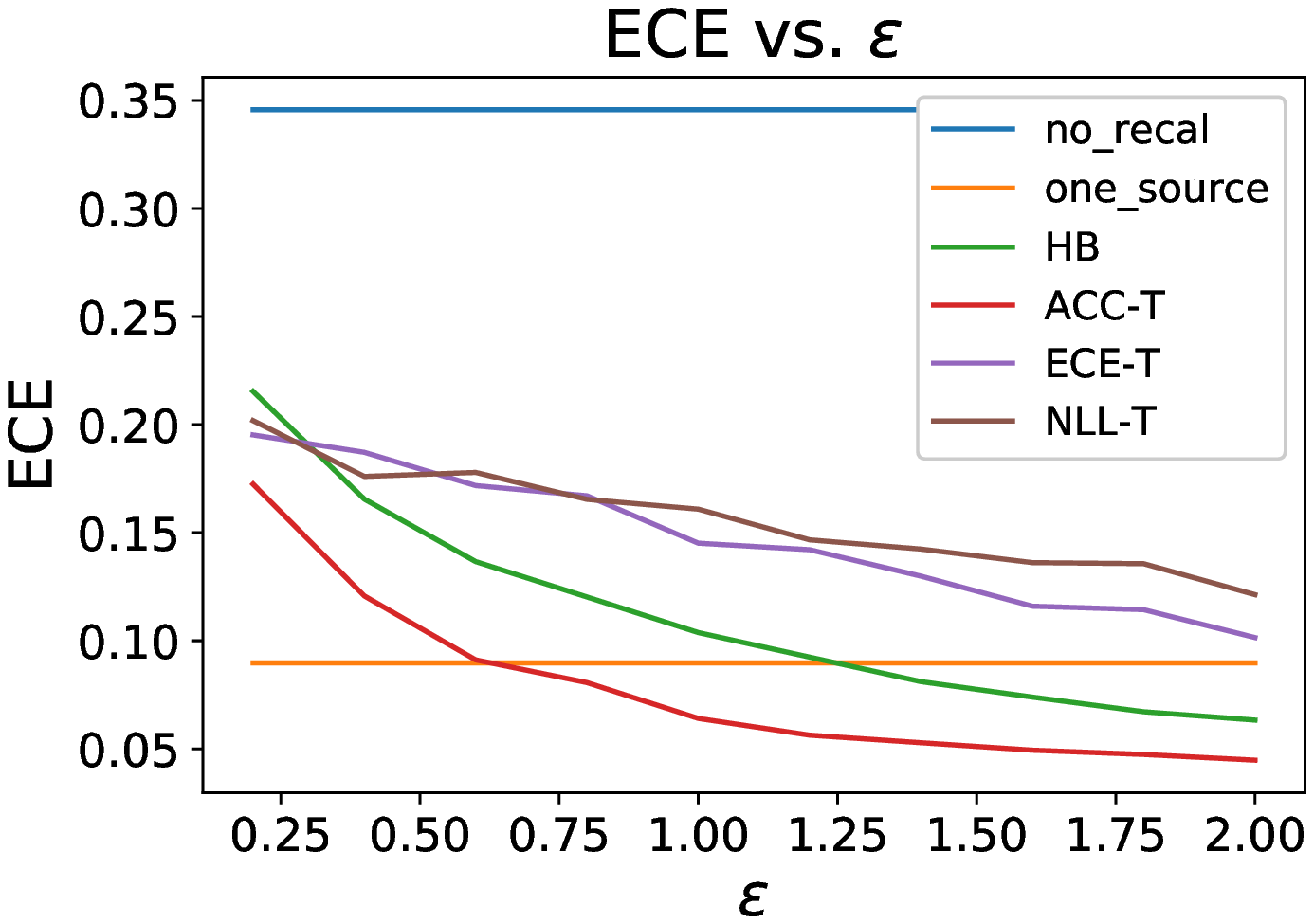}
        \caption{Samples = 30, Sources = 50}
    \end{subfigure}
\end{figure}

\begin{figure}[H]
    \centering
    \captionsetup{labelformat=empty}
    \caption{CIFAR-10, impulse noise perturbation}
    \begin{subfigure}[b]{0.325\textwidth}
        \centering
        \includegraphics[width=\textwidth]{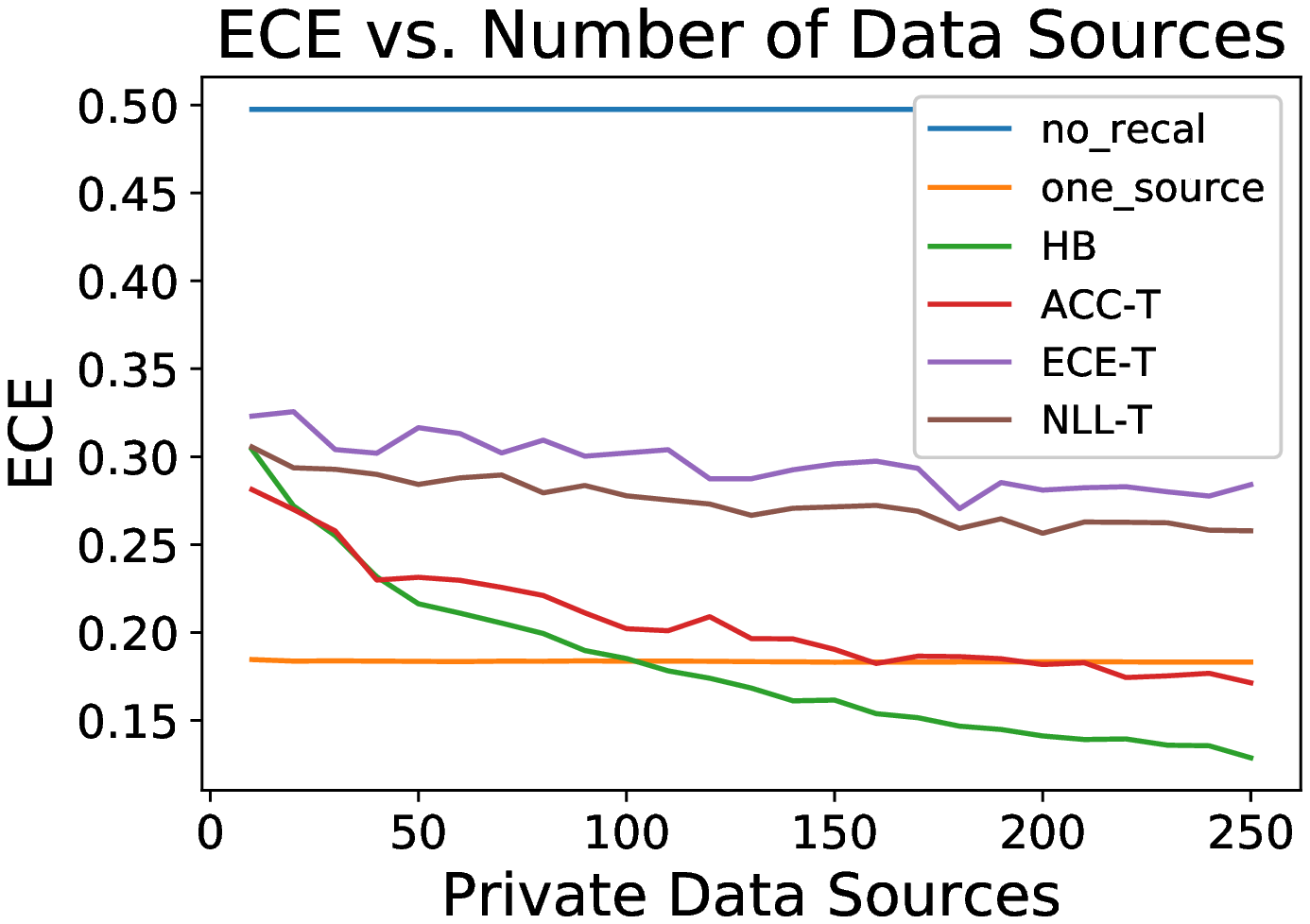}
        \caption{Samples = 10, $\epsilon = 1.0$}
    \end{subfigure}
    \hfill
    \begin{subfigure}[b]{0.325\textwidth}
        \centering
        \includegraphics[width=\textwidth]{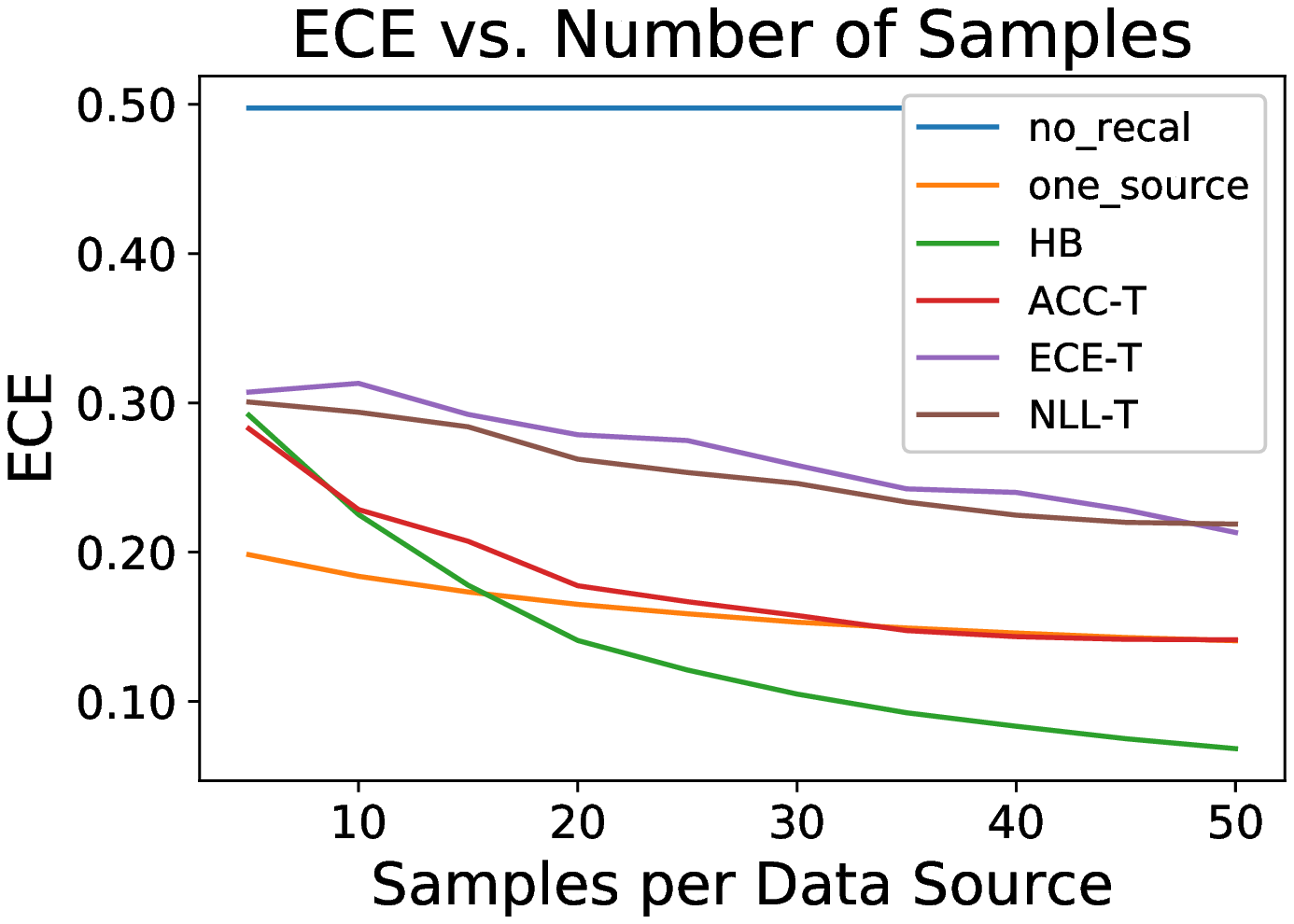}
        \caption{Sources = 50, $\epsilon = 1.0$}
    \end{subfigure}
    \hfill
    \begin{subfigure}[b]{0.325\textwidth}
        \centering
        \includegraphics[width=\textwidth]{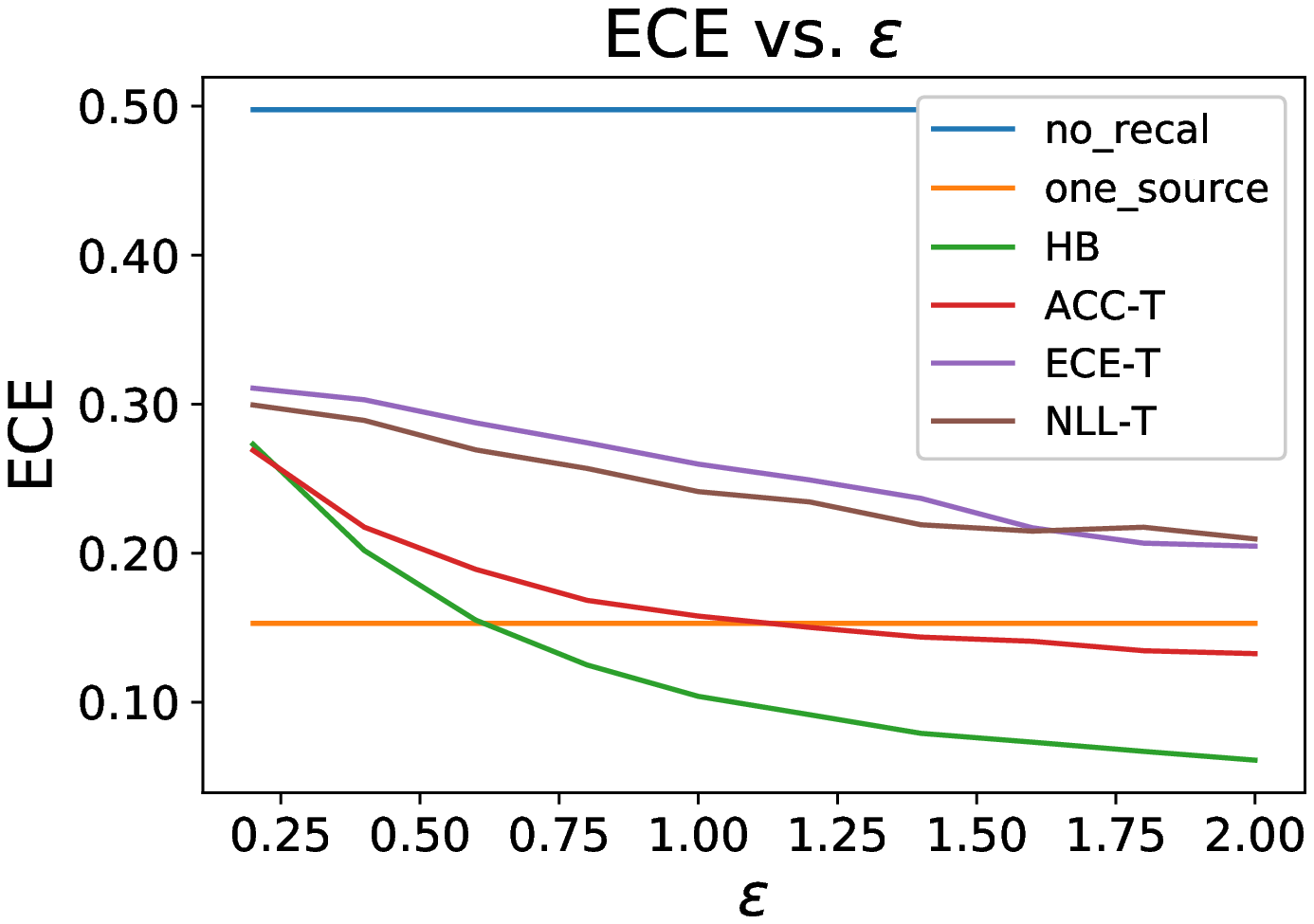}
        \caption{Samples = 30, Sources = 50}
    \end{subfigure}
\end{figure}

\begin{figure}[H]
    \centering
    \captionsetup{labelformat=empty}
    \caption{CIFAR-10, jpeg compression perturbation}
    \begin{subfigure}[b]{0.325\textwidth}
        \centering
        \includegraphics[width=\textwidth]{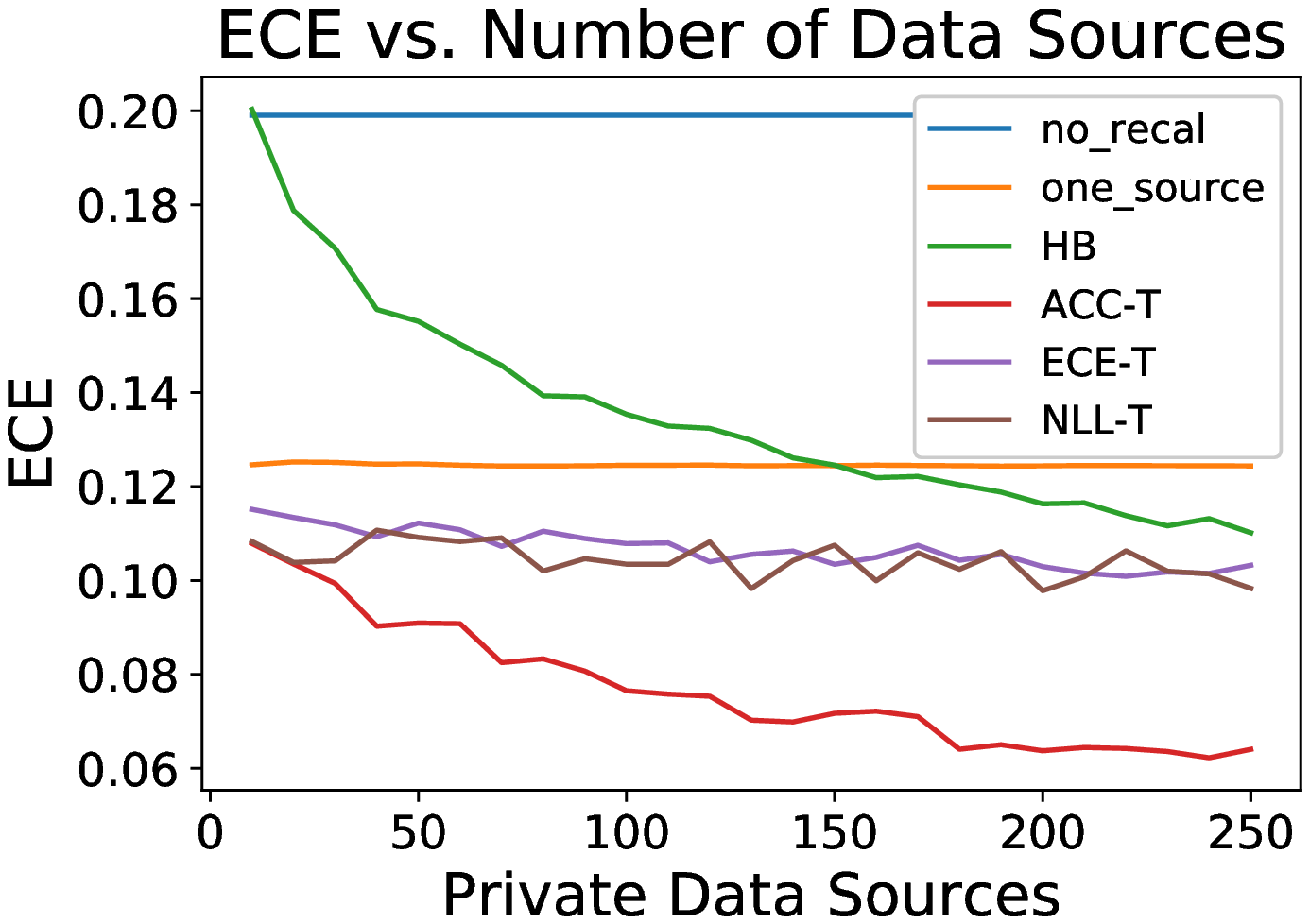}
        \caption{Samples = 10, $\epsilon = 1.0$}
    \end{subfigure}
    \hfill
    \begin{subfigure}[b]{0.325\textwidth}
        \centering
        \includegraphics[width=\textwidth]{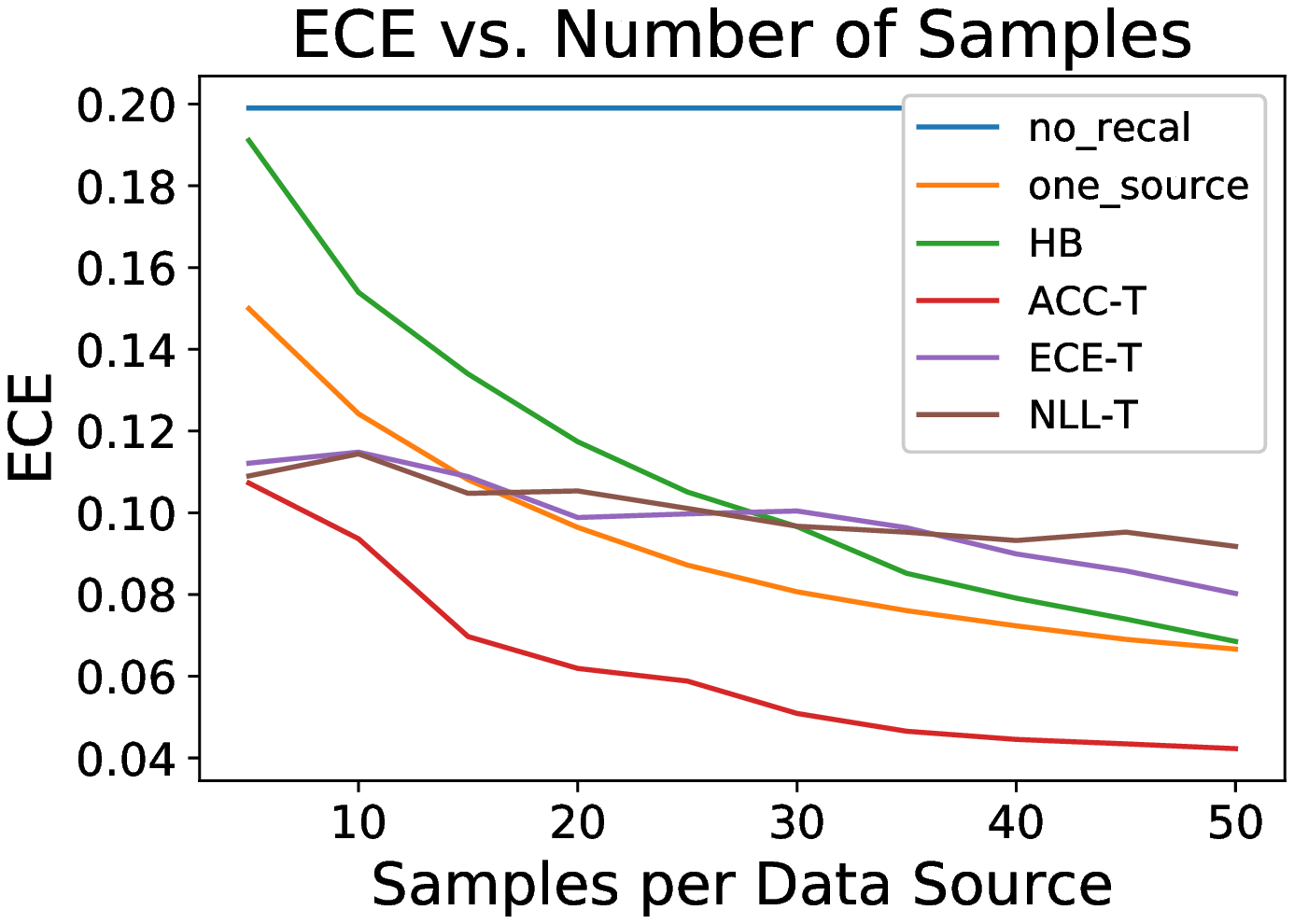}
        \caption{Sources = 50, $\epsilon = 1.0$}
    \end{subfigure}
    \hfill
    \begin{subfigure}[b]{0.325\textwidth}
        \centering
        \includegraphics[width=\textwidth]{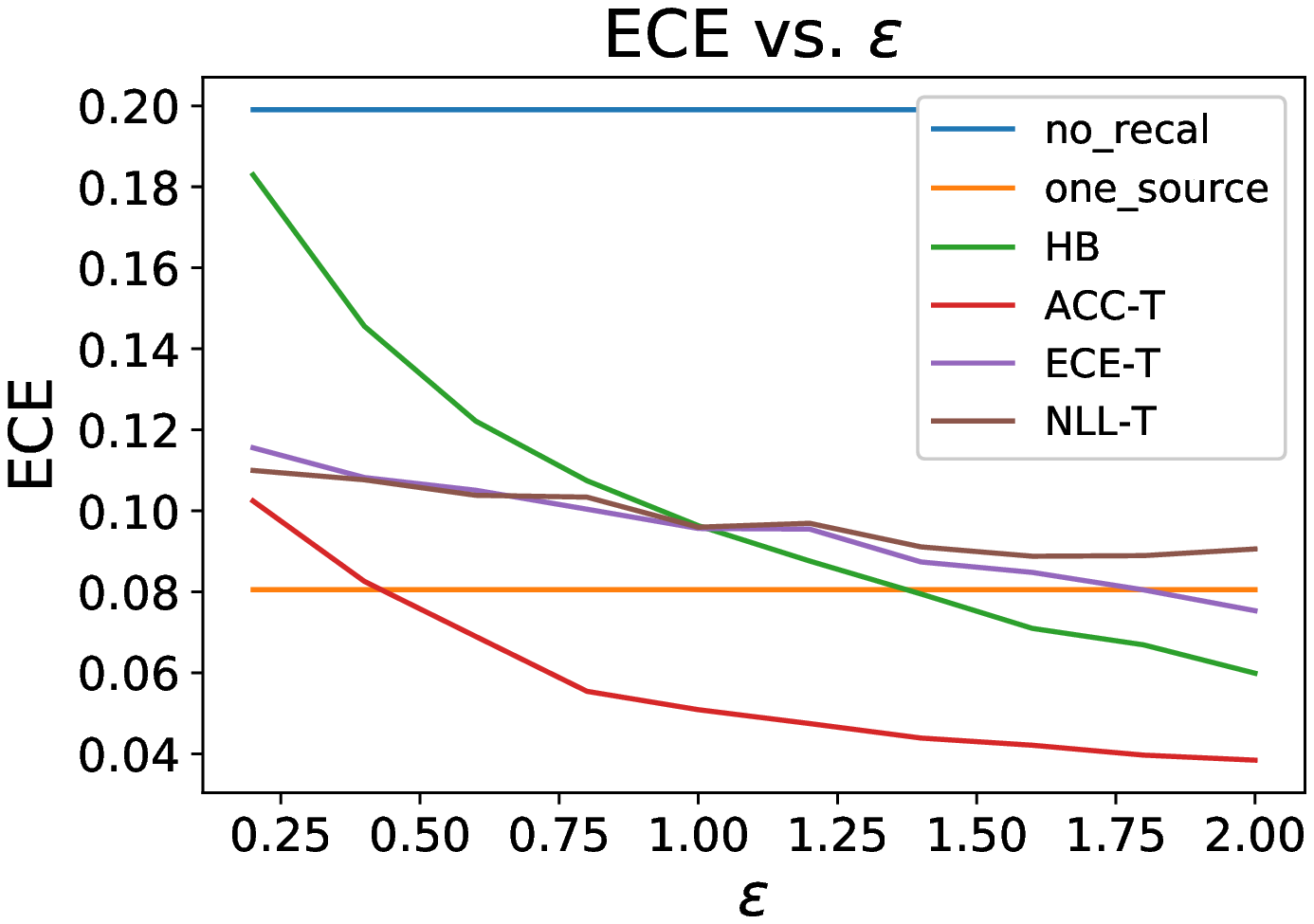}
        \caption{Samples = 30, Sources = 50}
    \end{subfigure}
\end{figure}

\begin{figure}[H]
    \centering
    \captionsetup{labelformat=empty}
    \caption{CIFAR-10, motion blur perturbation}
    \begin{subfigure}[b]{0.325\textwidth}
        \centering
        \includegraphics[width=\textwidth]{figures/CIFAR-10/CIFAR-10_vary-hospitals_motion_blur.eps}
        \caption{Samples = 10, $\epsilon = 1.0$}
    \end{subfigure}
    \hfill
    \begin{subfigure}[b]{0.325\textwidth}
        \centering
        \includegraphics[width=\textwidth]{figures/CIFAR-10/CIFAR-10_vary-samples_motion_blur.eps}
        \caption{Sources = 50, $\epsilon = 1.0$}
    \end{subfigure}
    \hfill
    \begin{subfigure}[b]{0.325\textwidth}
        \centering
        \includegraphics[width=\textwidth]{figures/CIFAR-10/CIFAR-10_vary-epsilon_motion_blur.eps}
        \caption{Samples = 30, Sources = 50}
    \end{subfigure}
\end{figure}

\begin{figure}[H]
    \centering
    \captionsetup{labelformat=empty}
    \caption{CIFAR-10, pixelate perturbation}
    \begin{subfigure}[b]{0.325\textwidth}
        \centering
        \includegraphics[width=\textwidth]{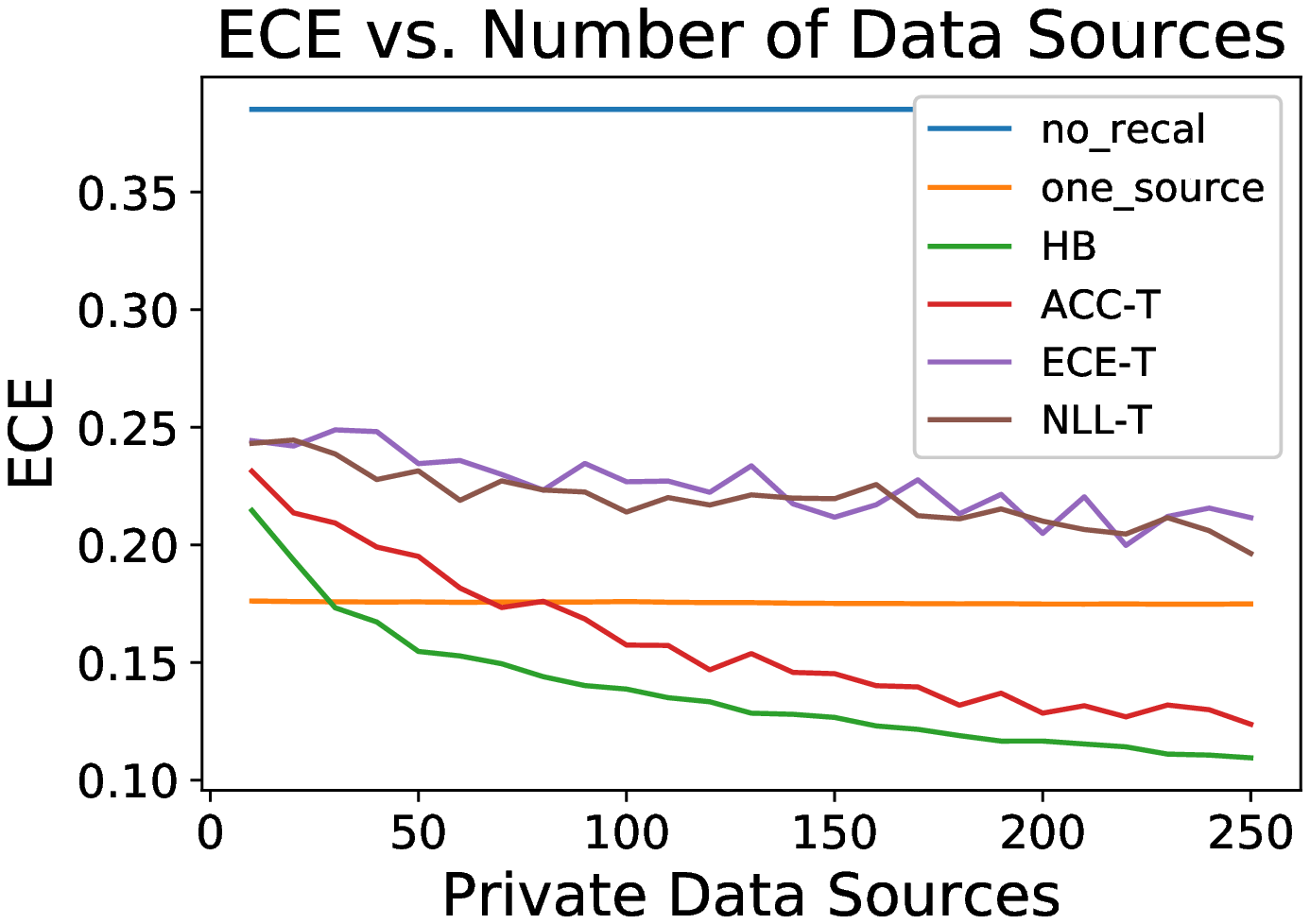}
        \caption{Samples = 10, $\epsilon = 1.0$}
    \end{subfigure}
    \hfill
    \begin{subfigure}[b]{0.325\textwidth}
        \centering
        \includegraphics[width=\textwidth]{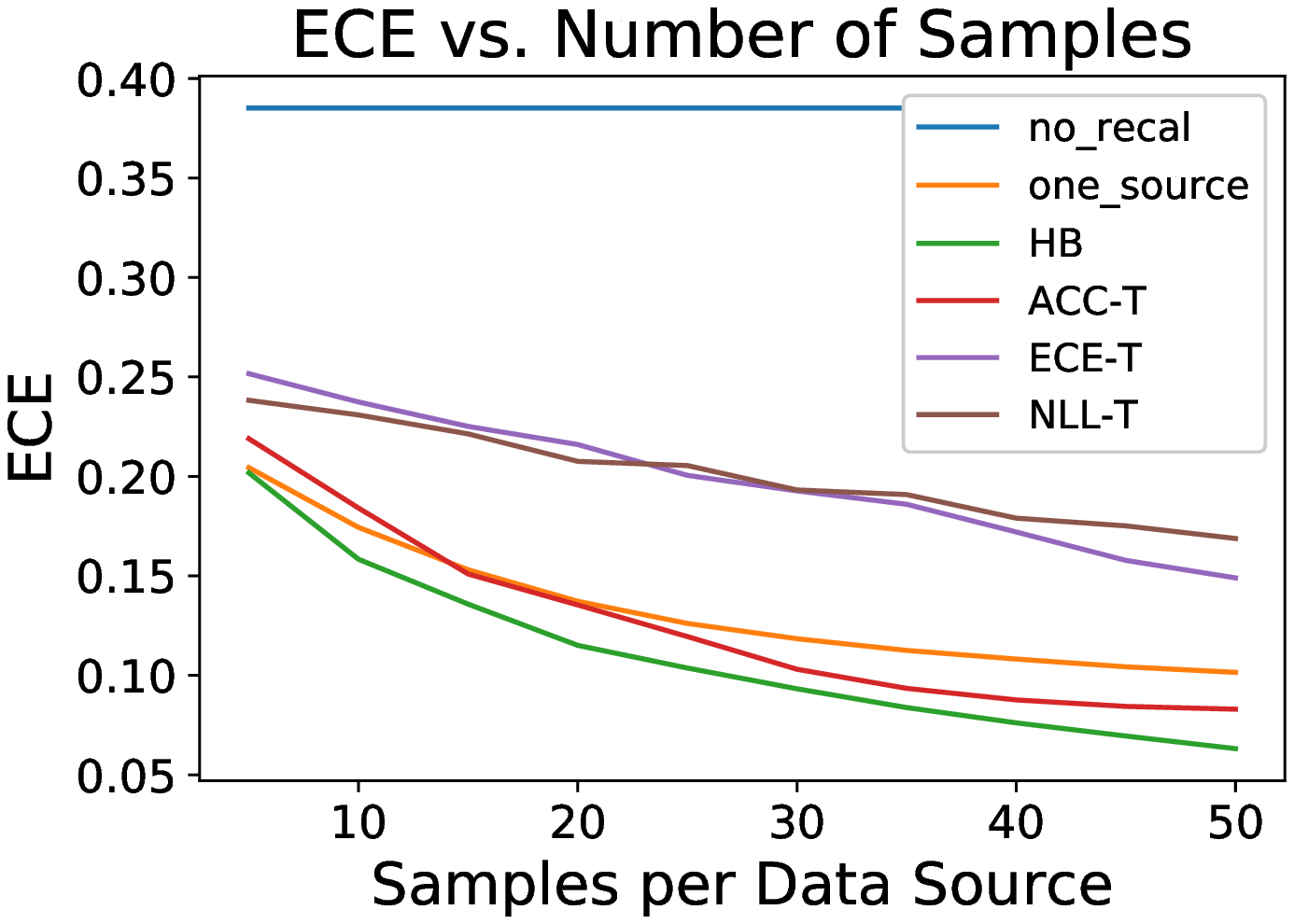}
        \caption{Sources = 50, $\epsilon = 1.0$}
    \end{subfigure}
    \hfill
    \begin{subfigure}[b]{0.325\textwidth}
        \centering
        \includegraphics[width=\textwidth]{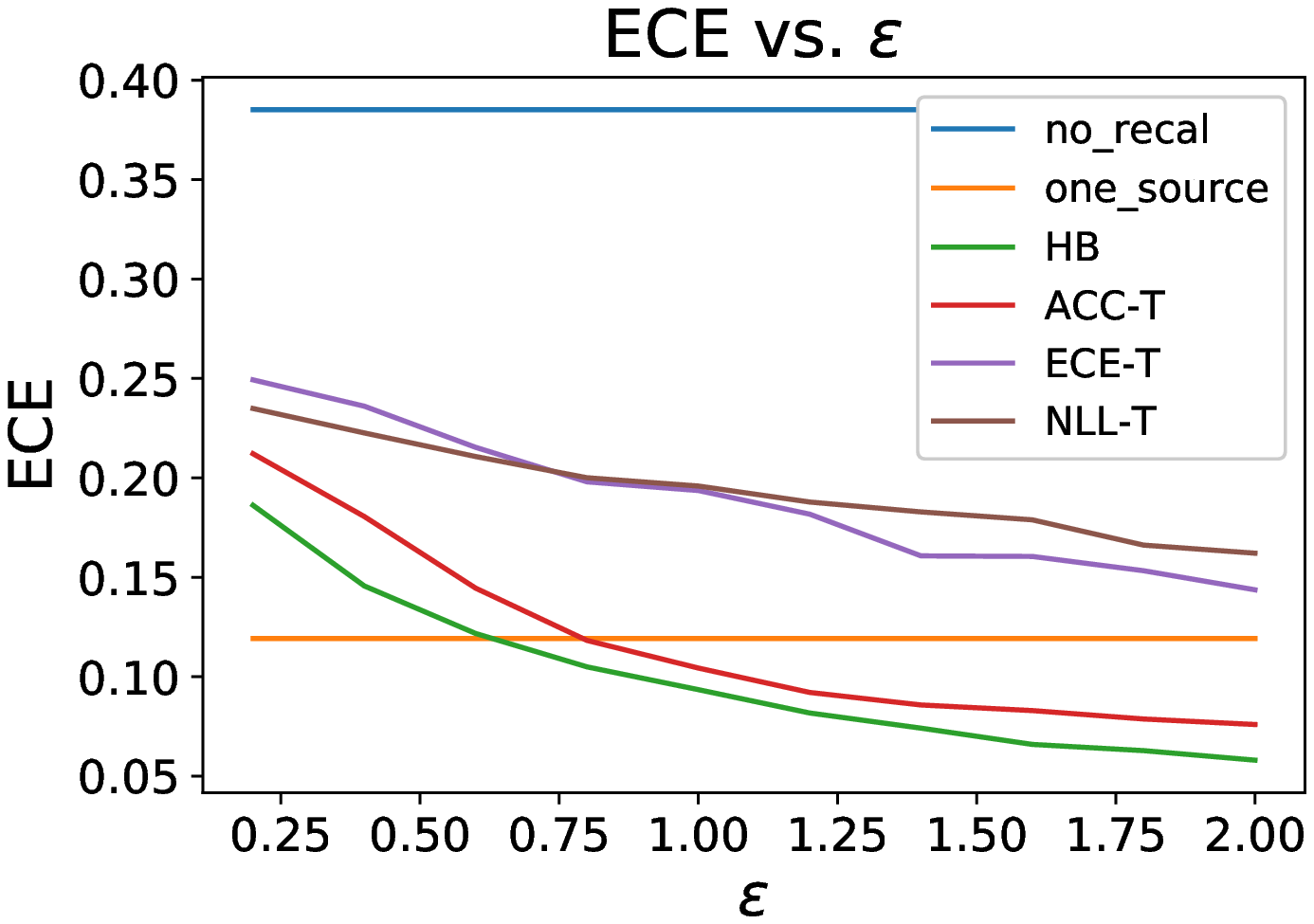}
        \caption{Samples = 30, Sources = 50}
    \end{subfigure}
\end{figure}

\begin{figure}[H]
    \centering
    \captionsetup{labelformat=empty}
    \caption{CIFAR-10, shot noise perturbation}
    \begin{subfigure}[b]{0.325\textwidth}
        \centering
        \includegraphics[width=\textwidth]{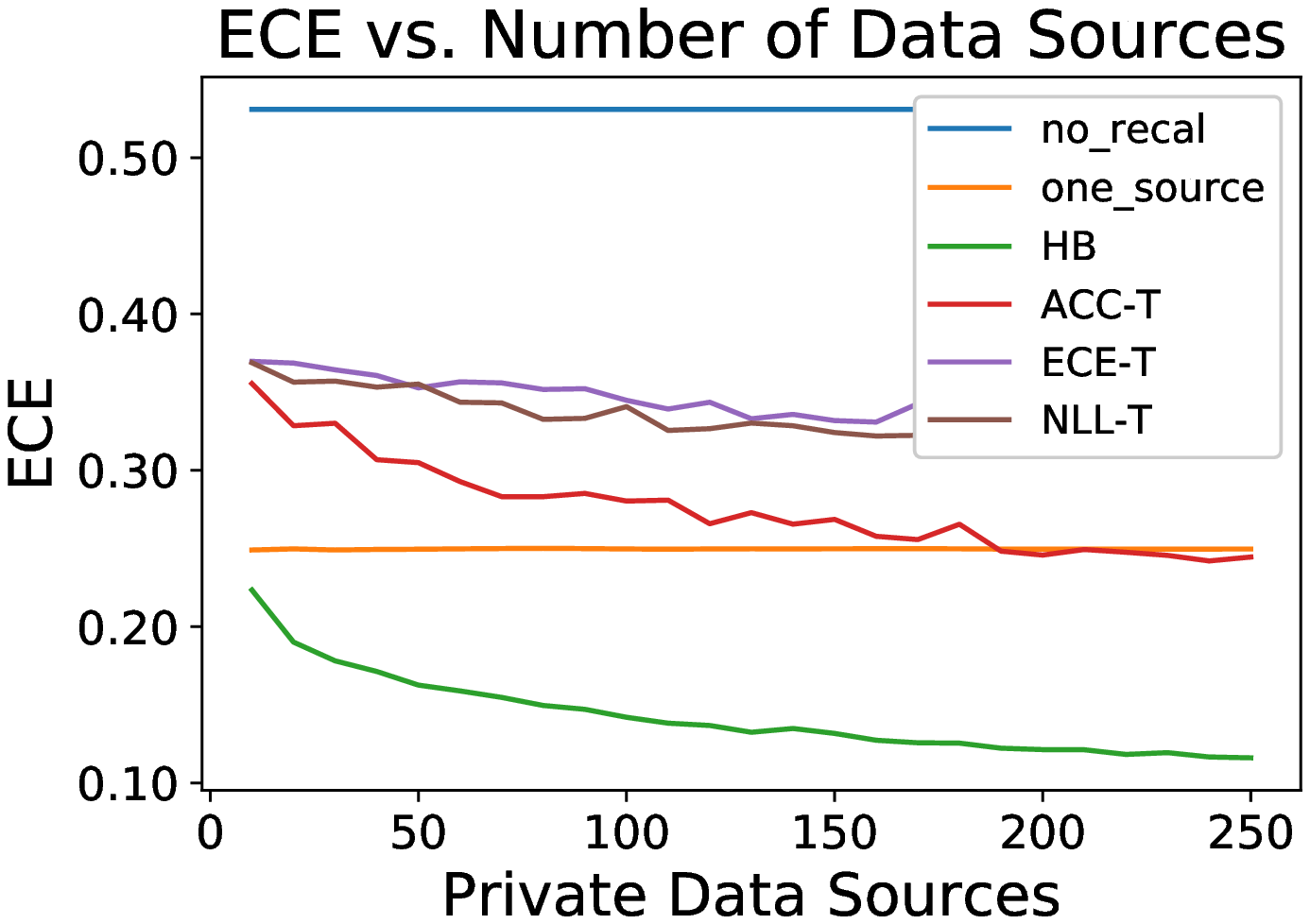}
        \caption{Samples = 10, $\epsilon = 1.0$}
    \end{subfigure}
    \hfill
    \begin{subfigure}[b]{0.325\textwidth}
        \centering
        \includegraphics[width=\textwidth]{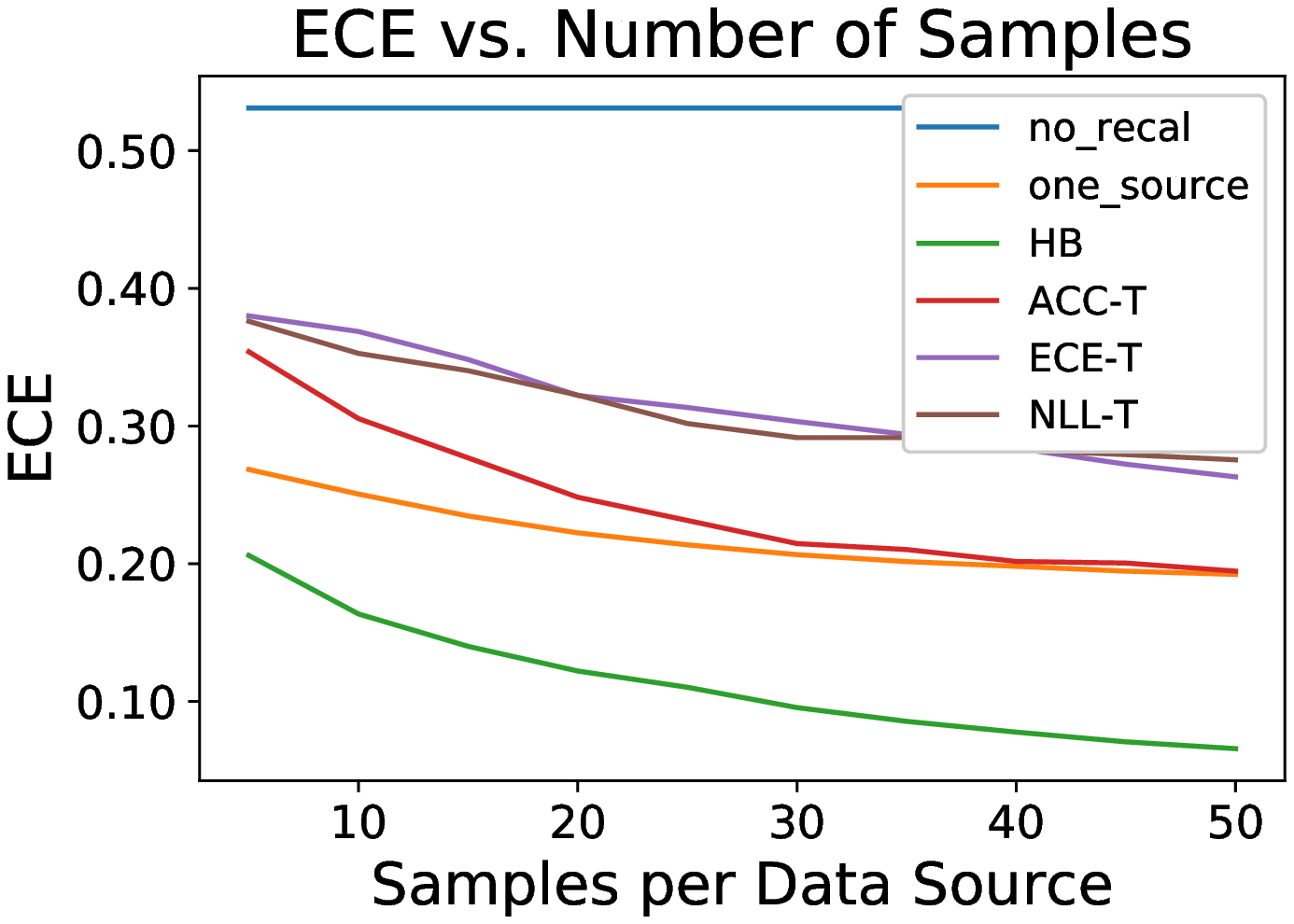}
        \caption{Sources = 50, $\epsilon = 1.0$}
    \end{subfigure}
    \hfill
    \begin{subfigure}[b]{0.325\textwidth}
        \centering
        \includegraphics[width=\textwidth]{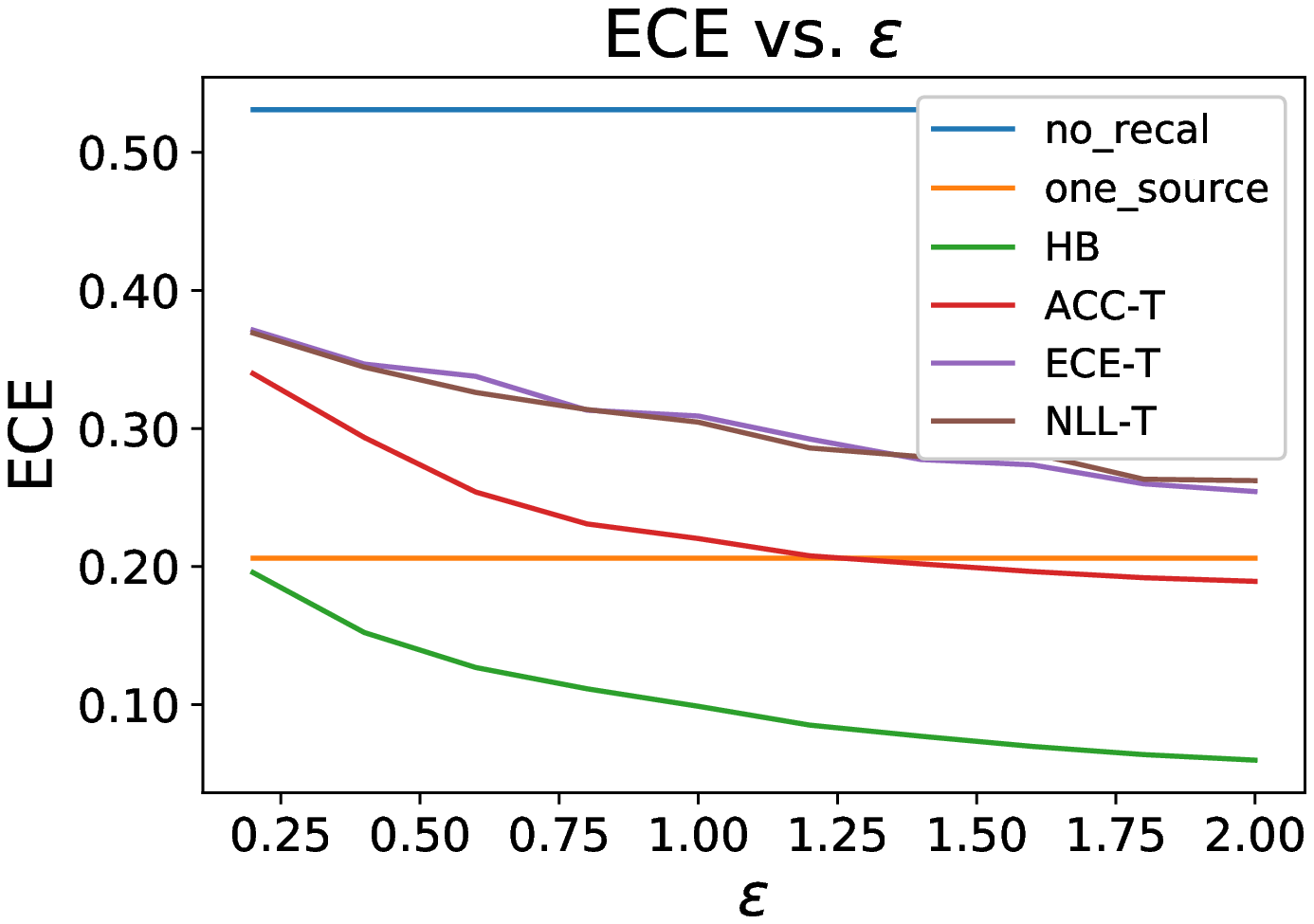}
        \caption{Samples = 30, Sources = 50}
    \end{subfigure}
\end{figure}

\begin{figure}[H]
    \centering
    \captionsetup{labelformat=empty}
    \caption{CIFAR-10, snow perturbation}
    \begin{subfigure}[b]{0.325\textwidth}
        \centering
        \includegraphics[width=\textwidth]{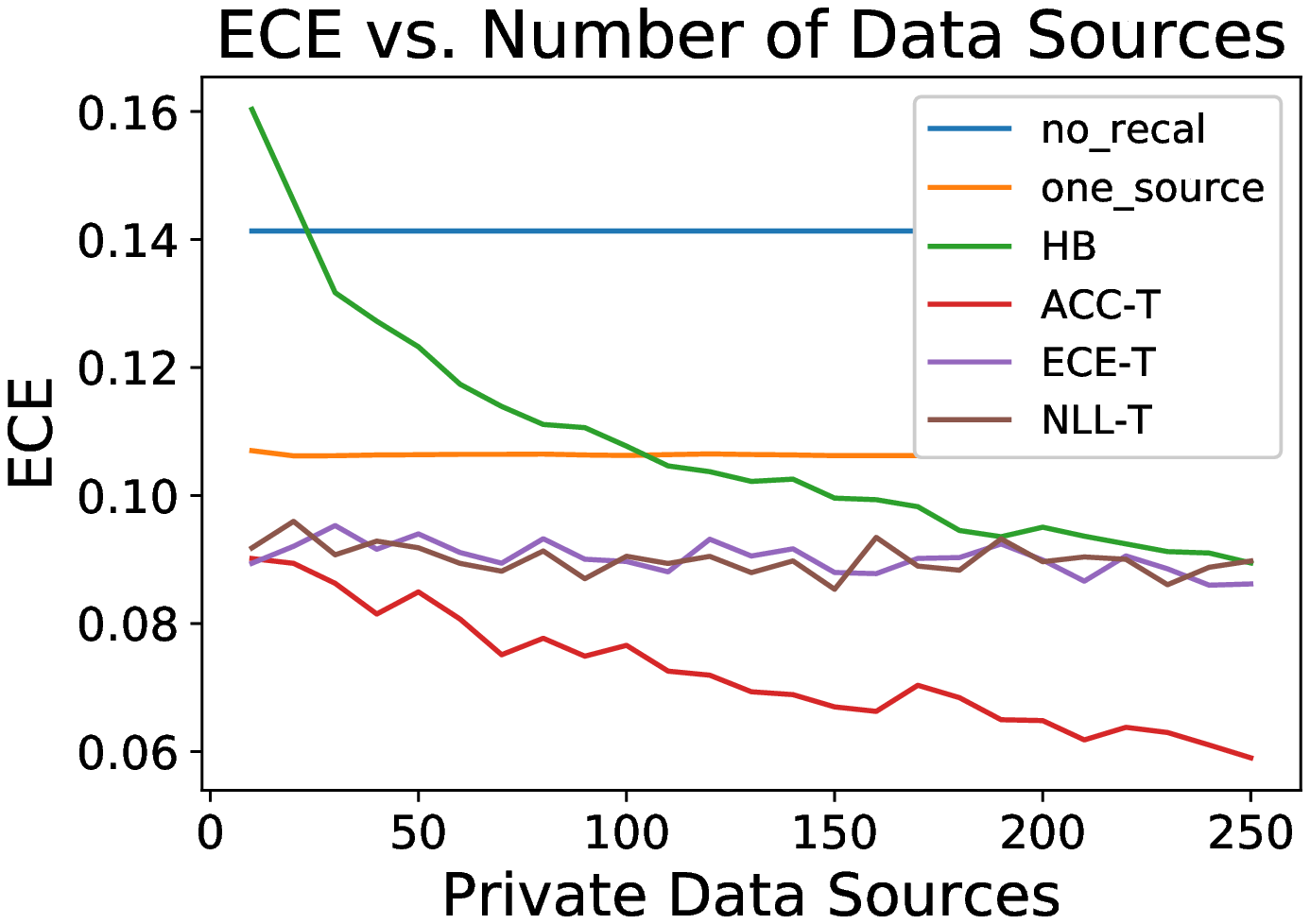}
        \caption{Samples = 10, $\epsilon = 1.0$}
    \end{subfigure}
    \hfill
    \begin{subfigure}[b]{0.325\textwidth}
        \centering
        \includegraphics[width=\textwidth]{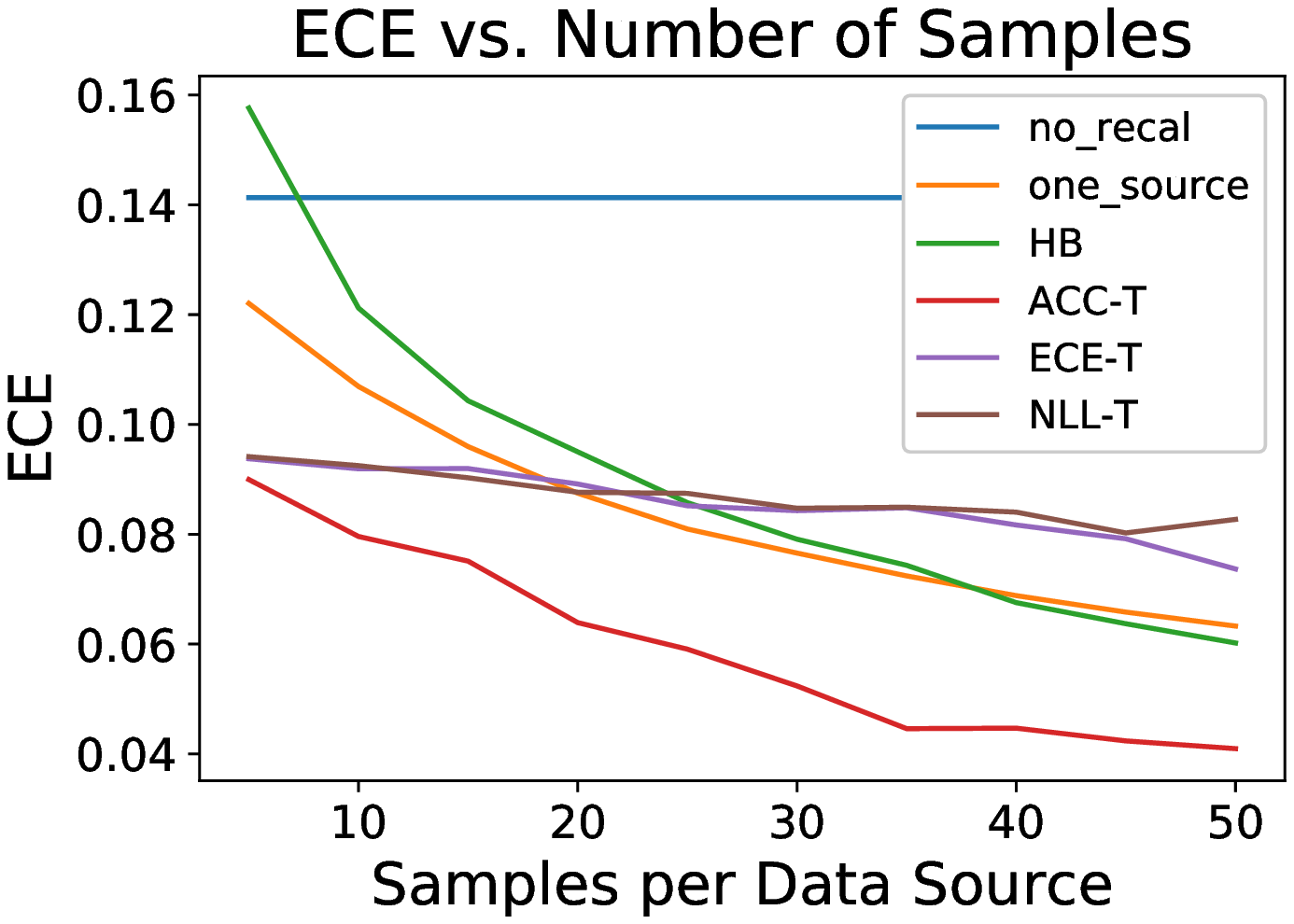}
        \caption{Sources = 50, $\epsilon = 1.0$}
    \end{subfigure}
    \hfill
    \begin{subfigure}[b]{0.325\textwidth}
        \centering
        \includegraphics[width=\textwidth]{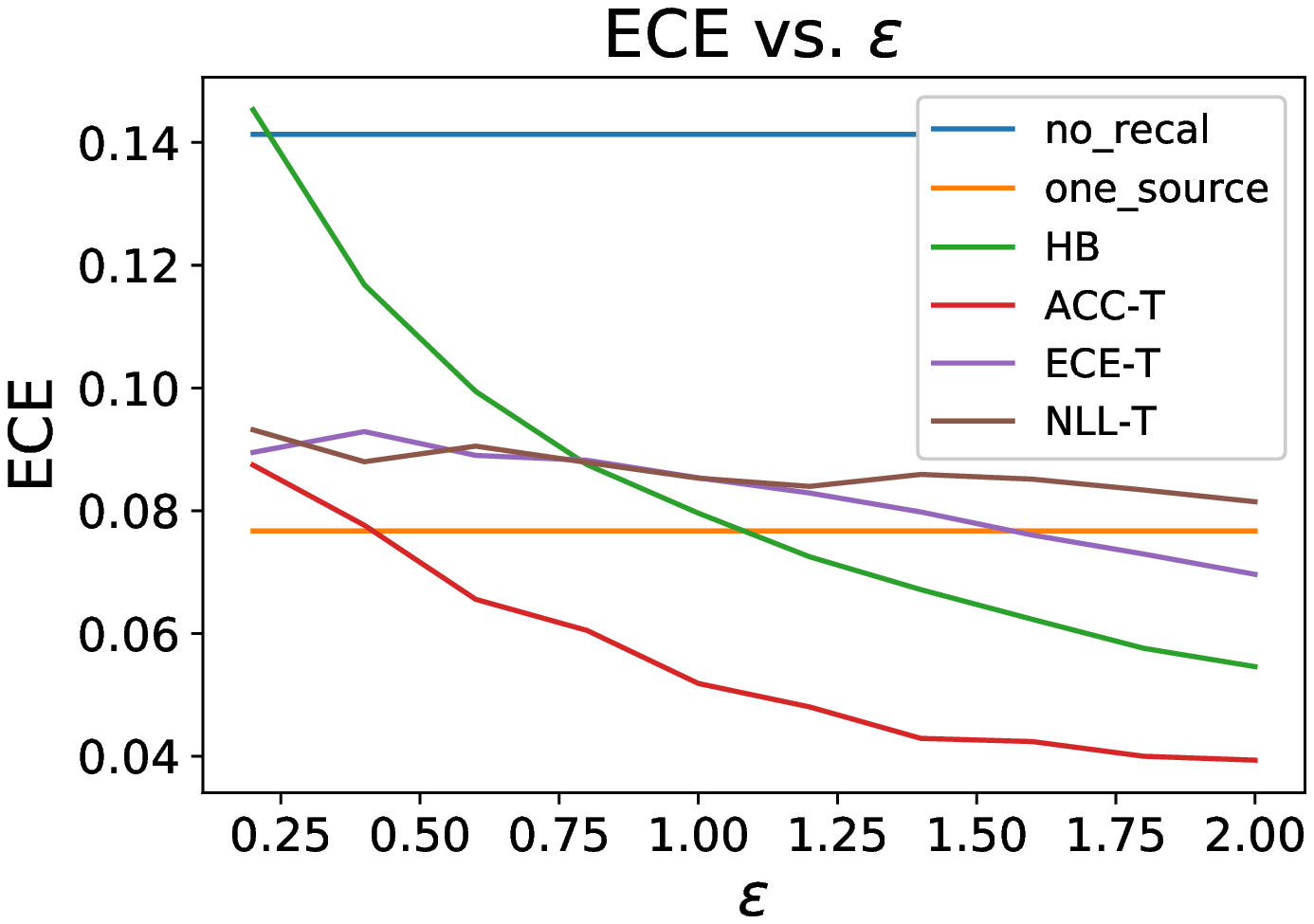}
        \caption{Samples = 30, Sources = 50}
    \end{subfigure}
\end{figure}

\begin{figure}[H]
    \centering
    \captionsetup{labelformat=empty}
    \caption{CIFAR-10, zoom blur perturbation}
    \begin{subfigure}[b]{0.325\textwidth}
        \centering
        \includegraphics[width=\textwidth]{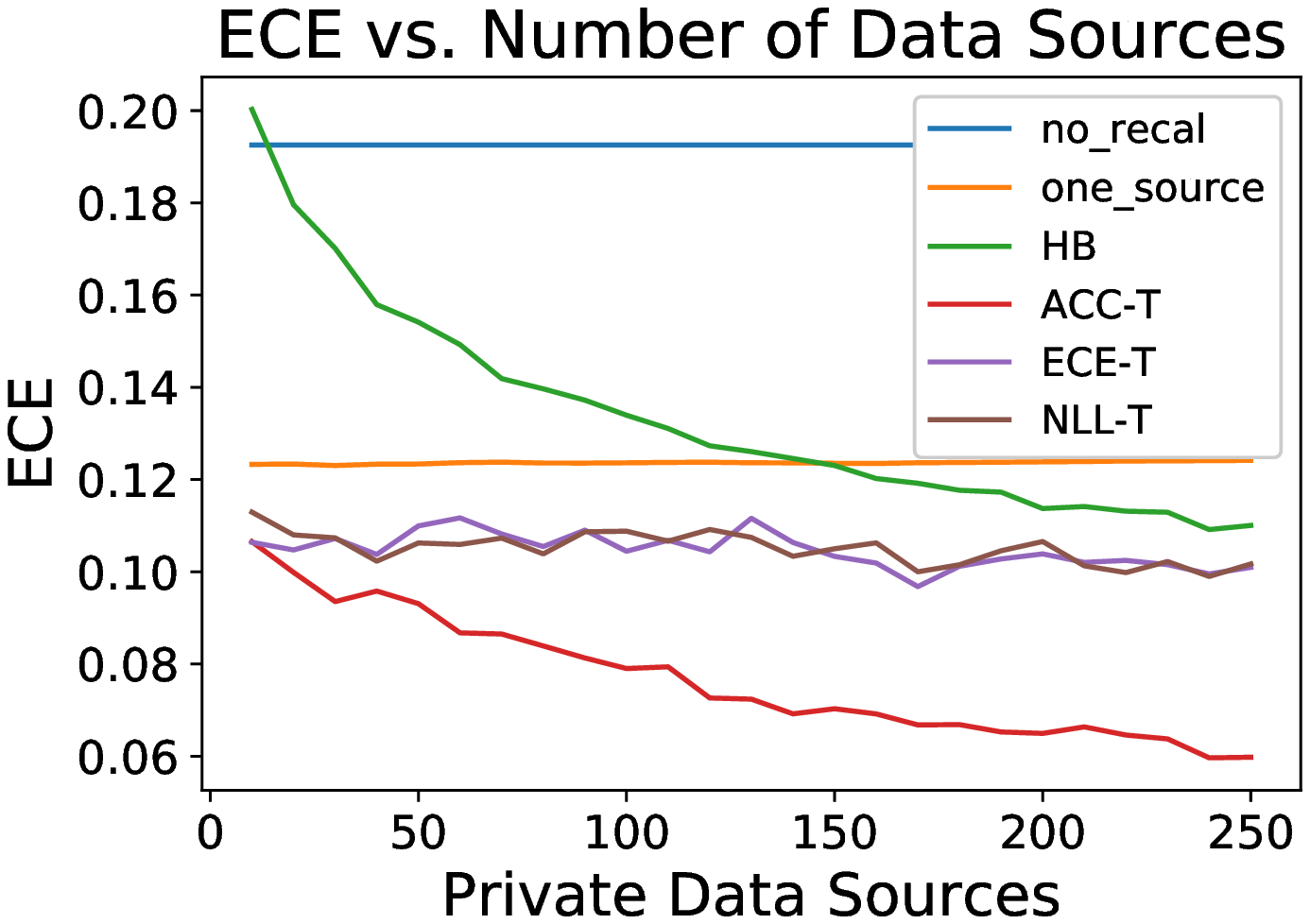}
        \caption{Samples = 10, $\epsilon = 1.0$}
    \end{subfigure}
    \hfill
    \begin{subfigure}[b]{0.325\textwidth}
        \centering
        \includegraphics[width=\textwidth]{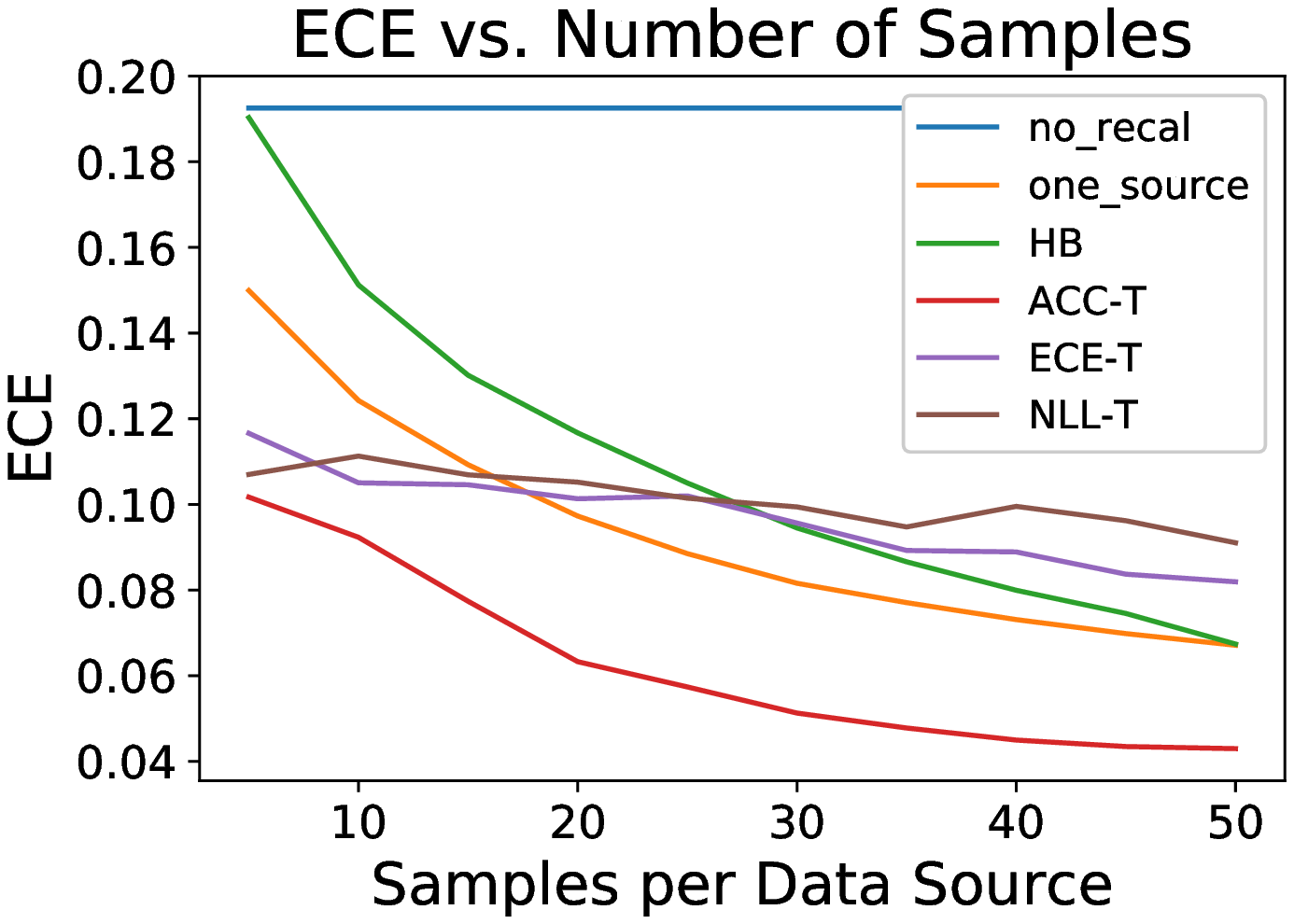}
        \caption{Sources = 50, $\epsilon = 1.0$}
    \end{subfigure}
    \hfill
    \begin{subfigure}[b]{0.325\textwidth}
        \centering
        \includegraphics[width=\textwidth]{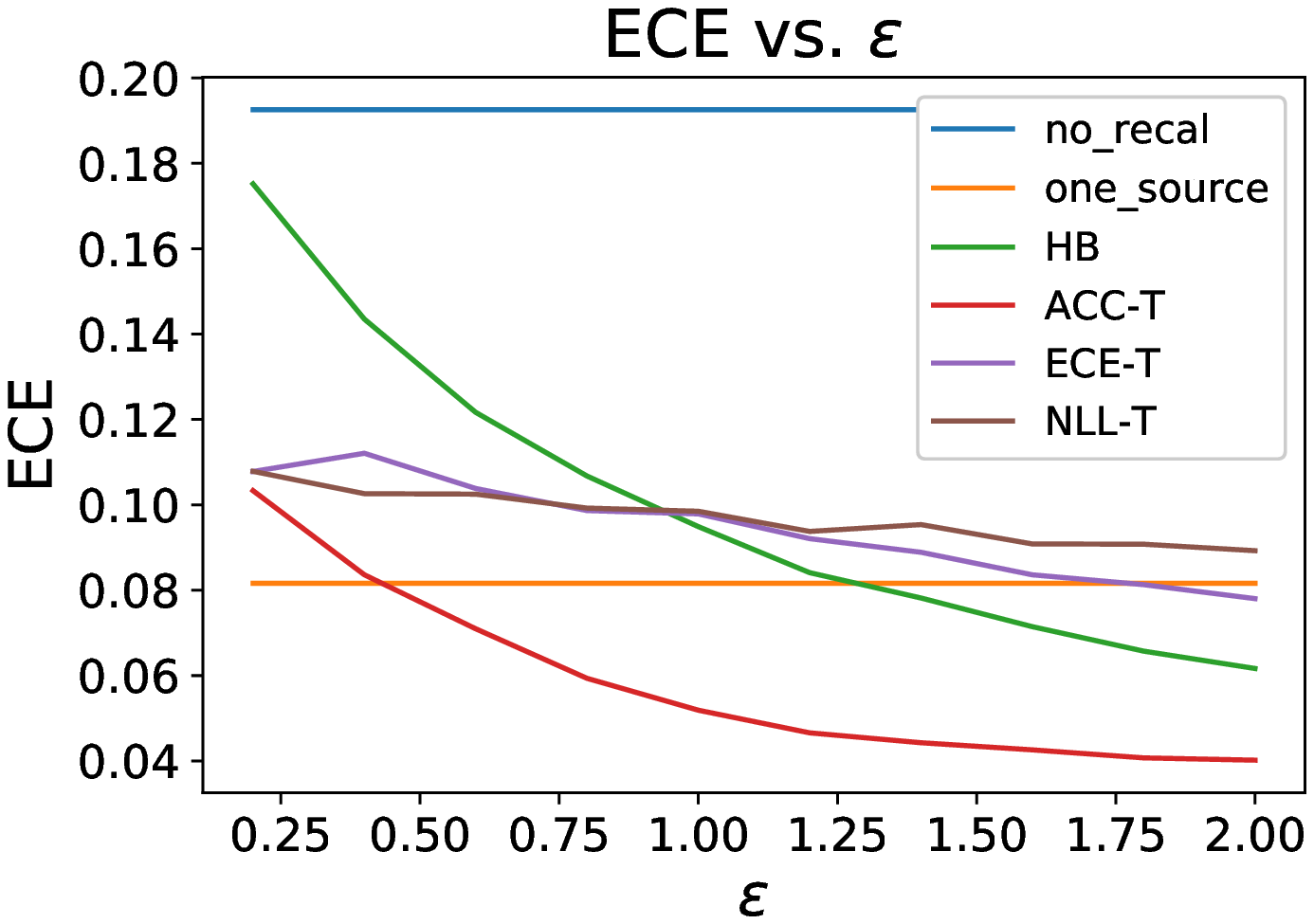}
        \caption{Samples = 30, Sources = 50}
    \end{subfigure}
\end{figure}

\end{document}